\documentclass{article}
\usepackage{kr}
\usepackage{pdfpages}
\usepackage{thm-restate}
\usepackage{svg}
\usepackage{multirow}
\usepackage{times}
\usepackage{soul}
\usepackage{url}
\usepackage[hidelinks]{hyperref}
\usepackage[utf8]{inputenc}
\usepackage[small]{caption}
\usepackage{graphicx}
\usepackage{amsmath}
\usepackage{amsthm}
\usepackage{booktabs}
\usepackage{algorithm}
\usepackage{algorithmic}
\usepackage{microtype}
\usepackage{mysc}
\usepackage{amssymb,amsmath,amsthm,mathtools}
\usepackage{todonotes}
\usepackage[parfill]{parskip}
\usepackage[shortlabels]{enumitem}
\usepackage{booktabs}
\usepackage[capitalize]{cleveref}
\usepackage{xspace}
\usepackage{url}
\usepackage{mathrsfs}
\usepackage{natbib}
\newtheorem{lemma}{Lemma}

\newtheorem{proposition}{Proposition}
\newtheorem{corollary}{Corollary}
\newcommand{\repeatcaption}[2]{%
  \renewcommand{\thefigure}{\ref{#1}}%
  \captionsetup{list=no}%
  \caption{#2}%
  \addtocounter{figure}{-1}% So that next figure after the repeat gets the right number.
}

\newtheorem{definition}{Definition}
\newcommand{\canonmodel}{\ensuremath{\Imc_\Kmc}\xspace}
\newcommand{\World}{{\Omega}}

\newcommand{\worldSize}{{\mathbf{s_\World}}}
\newcommand{\worldSizeScalar}{{s_\World}}

\newcommand{\sComp}[1]{{\overline{{#1}}}}

\newcommand{\headAssign}[1]{{{\sf Head}(#1)}}
\newcommand{\headAssignI}[2]{\ensuremath{{{\sf Head}_{#2}(#1)}}\xspace}
\newcommand{\inv}[1]{#1^-}
\newcommand{\tailAssign}[1]{{{\sf Tail}(#1)}}
\newcommand{\tailAssignI}[2]{\ensuremath{{{\sf Tail}_{#2}(#1)}}\xspace}
\newcommand{\bumpBoxAssign}[1]{{{\sf Bump}(#1)}}
\newcommand{\bumpBoxAssignI}[2]{\ensuremath{{{\sf Bump}_{#2}(#1)}}\xspace}
\newcommand{\posAssign}[1]{{{\sf pos}(#1)}}
\newcommand{\bumpAssign}[1]{{{\sf bump}(#1)}}

\newcommand{\cAssignI}[2]{\ensuremath{\cAssignSymb_{#1}(#2)}\xspace}

\newcommand{\cAssignSymb}{{\eta}}
\newcommand{\cAssign}[1]{{\cAssignSymb(#1)}}

\newcommand{\Lmcp}{\ensuremath{\mathcal{L}}\xspace}

\newcommand{\leqd}{\mathrel{{\leq_{d}}}}

\newcommand{\geqd}{\mathrel{{\geq_{d}}}}
\newcommand{\Rd}{{{\mathbb{R}}^{d}}}

\renewcommand{\NC}{\ensuremath{{\sf N_C}}\xspace}
\newcommand{\NCE}{\ensuremath{{\sf N_C^\exists}}\xspace}
\newcommand{\NCEn}{\ensuremath{{{\sf N}}^{\exists  }_{\sf C\sqcap}}\xspace}
\newcommand{\NCEneg}{\ensuremath{{{\sf N}}^{\exists  }_{\sf C\neg}}\xspace}
\renewcommand{\NR}{\ensuremath{{\sf N_R}}\xspace}
\newcommand{\NRm}{\ensuremath{{\sf N_R^-}}\xspace}
\renewcommand{\NI}{\ensuremath{{\sf N_I}}\xspace}
\newcommand{\convexSet}{{\ensuremath\mathcal{C}}\xspace}
\newcommand{\B}{{\sf Box}}

\newcommand{\hyperP}{\ensuremath{\boldsymbol{\lambda}}}
\newcommand{\hyperPs}{\ensuremath{\bar{\boldsymbol{\lambda}}}}

\newcommand{\modelName}{BoxLitE\xspace}
\newcommand{\geometric}{box\xspace}
\newcommand{\ontoo}{\Kmc\xspace}

\newcommand{\Sempty}{\ensuremath{I_{0}}\xspace} 
\newcommand{\Seq}{I_{=}} 
\newcommand{\Ssub}{I_{\subset}}
\newcommand{\Ssup}{I_{\supset }}
\newcommand{\Snsup}{I_{ \not\supset }}
\newcommand{\Sinter}{I_{ \cap }}

\newcommand{\PRa}{\mathcal{P}_{\mathsf{R}}}
\newcommand{\PRna}{\mathcal{P}^{\neg}_{\mathsf{R}}}
\newcommand{\BRa}{\mathcal{B}_{\mathsf{R}}}
\newcommand{\BRna}{\mathcal{B}^{\neg}_{\mathsf{R}}}
\newcommand{\BBRa}{\mathcal{B}_{\mathsf{R},\mathsf{I}}}
\newcommand{\BBRna}{\mathcal{B}^{\neg}_{\mathsf{R},\mathsf{I}}}
\newcommand{\BB}{\ensuremath{I_0}\xspace}
\newcommand{\PC}{\mathcal{P}_{\mathsf{C}}}
\newcommand{\BC}{\mathcal{B}_{\mathsf{C}}}
\newcommand{\PnC}{\mathcal{P}^{\neg}_{\mathsf{C}}}
\newcommand{\SC}{I }
\newcommand{\Qasub}{\mathcal{S}_{\subset}}
\newcommand{\Qnasub}{\mathcal{S}^{\neg}_{\subset}}

\newcommand{\BRavalue}{\ensuremath{0}\xspace}
\newcommand{\BRnavalue}{\ensuremath{-1}\xspace}

\newcommand{\PCvalue}{\ensuremath{-1}\xspace}
\newcommand{\PnCvalue}{\ensuremath{1}\xspace}
\newcommand{\BCvalue}{\ensuremath{0}\xspace}

\newcommand{\epsilonVec}{\ensuremath{\boldsymbol{\epsilon}}\xspace}
\newcommand{\epsilonMax}{\ensuremath{\epsilon_\mathit{max}}\xspace}

\newcommand{\POC}[2]{{\mathcal{K}_{#1}^{#2}}}
\newcommand{\intervalSkip}{1.25}
\newcommand{\subSkip}{0.25}
\def\dist{\mathop{\mathsf{dist}_e}}
\def\sdist{\mathop{\mathsf{sdist}}}

\newcommand{\norm}[1]{\|#1\|}
\renewcommand{\Re}{\mathbb{R}}
\newtheorem{claim}
{Claim}
\newcommand{\element}{\ensuremath{c}\xspace}
\newcommand{\anotherelement}{\ensuremath{e}\xspace}
\newcommand{\inProd}[2]{\langle #1 , \, #2 \rangle }
\newcommand{\PRavalue}{\ensuremath{-1 + \epsilon}\xspace}
\newcommand{\PRnavalue}{\ensuremath{1 - \epsilon}\xspace}

\title{BoxLitE: A Faithful Knowledge Base Embedding Based on Convex Optimization}
\author{%
Bruno F. Louren\c{c}o$^1$\and
Hesham Morgan$^2$ \and
Ana Ozaki$^{3}$\and
Aleksandar Pavlovi\'{c}$^4$\and
Emanuel Sallinger$^2$ \\
\affiliations
$^1$The Institute of Statistical Mathematics, Japan\\
$^2$TU Wien, Austria\\
$^3$University of Oslo, Norway\\
$^4$University of Applied Sciences Campus Vienna, Austria\\
\emails
bruno@ism.ac.jp,
hesham.morgan@tuwien.ac.at,
anaoz@uio.no,
aleksandar.pavlovic@hcw.ac.at,
sallinger@dbai.tuwien.ac.at
}

\newif\ifFullVersion
\FullVersiontrue %\FullVersionfalse

\begin{document}
%\includepdf{paper_title_page.pdf}
\maketitle

\begin{abstract}
%\todo{
%-1. Finish the main body first.
%0. Look through the main body and think about what needs to be reworked.
% 1. Look at KR reviews
% 1.3. Add path proof
%1.3. Rework convex formulation (loss terms, distance function, scoring function)
%1.5. Add negative sampling
%1.9. Convex Box: Describe which parts of BoxLitE need to be removed
%2. Rework Introduction
%3. Appendix Changes: 
%3.1 Add something about dimensionality constraints (disjointness for concepts per dimension - proof this based on the weak faithfulness proof, by showing that these settings may correspond to a dimension in the proof, thus it should not destroy anything) 
% 3.2 change Weak Faithfulness (proof Claim 16 in a proper way)
%3.2 Add proof of distance function 
%}
Knowledge base (KB) embeddings aim
at combining the  
capability of classical knowledge graph embeddings to generalize the information present in facts, the ABox, with conceptual knowledge represented  in an ontology language, the TBox. 
Several authors have recently explored the idea of mapping    concepts to \emph{convex regions} in a vector space. 
 This is  useful to represent    hierarchies, typically present in TBoxes,  since
more general concepts can be mapped to larger regions, containing those regions 
associated with more specific concepts.
However, the power of convexity is rarely leveraged during the actual learning tasks. 
Here, we 
introduce \modelName, a  {KB embedding}  {model}
for DL-Lite$^\Hmc$  that allows for  {convex optimization}.
We show  that for any satisfiable DL-Lite$^\Hmc$ KB, there is a \modelName embedding that is a weakly faithful model. As a proof of concept, we show how to formulate the KB embedding task as a convex optimization problem and how to obtain embeddings with such desirable faithfulness property. % in practice.  %using   CVXPY and Gurobi. %Our approach allows us to enforce axioms in the TBox while leaving 
%In contrast to commonly used unconstrained optimization approaches, our approach allows for the use of 
%
%
%we can constrain the model so as to enforce TBox axioms to hold.
%enforce strong faithfulness using convex constraints.
%(independently of having loss $0$)
%We consider the notion of weak and strong faithfulness
%and show that
%provides theoretical faithfulness properties, and guarantees some of these faithfulness properties in practice. 
\end{abstract}

\section{Introduction}\label{sec:intro}

%\todo{Rewise the intro to newer works}

%\todo{Maybe revise the intro based on the property that any satisfied fact has a negative score and any unsatisfied fact has a positive score aligning with the semantics of our geometric models.}
%already too much to say

%\todo{Storyline:
%0.5. KB embeddings are important
%0.7 Region-based KB embeddings are important
%0. DL-Lite is important for KGs
%1. There hasn't been any DL-Lite$^H$ KG embedding in general
%2. In addition, SGD does not allow for enforcing T-Box constraints %skip, some works enforce
%3. SGD is not used in a theoretically sound
%number 3 a bit too big to argue
%we now put number 5 earlier
%4. We propose to use convex optimization, where TBox axioms where TBox axioms can be naturally represented as constraints
%5. Problems: While many geometric embedding models use convex shapes, their objective functions are non-convex, due to: (1) negative sampling, (2) design of regularization terms, (3) ontology language.
%6. Finally, most KB embeddings methods do not have property XYZ (KR24), we close these gaps
%6. Contributions
%}

%%%0.5. KB embeddings are important
Knowledge base (KB) embeddings 
%are intended to 
combine the  
capability of %classical 
knowledge graph embeddings to  %generalize patterns present in facts 
perform inductive reasoning 
for link prediction 
with %the capability of %performing 
 deductive reasoning, using  logic expressions present in an ontology~\citep{DBLP:conf/kr/Bourgaux0KLO24}. 
%using ontologies.
%conceptual knowledge represented  in an ontology language, the TBox. 
%This results in
%That is, they are 
%a machine learning technique, called KB embedding model (KBE), that is particularly interesting to the KR community, with a strong focus on ontologies. 
%the ontology is respected by the embedding model.
%Faithfulness come
%Moreso as we as a community, and particularly in the \textit{reasoning, learning and decision making} track are targeting works combining aspects of KR and machine learning (ML) research, including the integration of learning and reasoning at modeling or solving. 
%aim
%at combining the inductive reasoning capability of classical knowledge graph embedding models with  conceptual knowledge, represented in an ontology language. 
Several authors have recently explored the idea of mapping concepts in a KB to  regions in a vector space~\citep{GemoetricKGEs,AMW2023,ReshufflE,FHK26}. %In a vector space,
Region-based embeddings are important for KBs as they can naturally represent hierarchies: more general concepts can be mapped to larger regions, containing those regions 
associated with more specific concepts. 

Although concepts  are usually mapped to \emph{convex regions} in KBEs, e.g.,  balls, boxes, and cones \citep{DBLP:conf/ijcai/KulmanovLYH19,BoxE,BoxEL,ExpressivE,OLW20}, the power of convexity is rarely leveraged during the actual learning tasks. 
%In fact, in the cone semantic approach~\citep{OLW20}, convex optimization is cited as one of the motivations for using cones.
%(see Section~4.2 therein).  
While being convex is not synonymous with ``easy'', convexity brings a number of theoretical and practical benefits, in particular: all local minima must be global. Under convexity, there are a number of efficient  algorithms for many classes of convex problems such as linear programs and second-order cone programs (SOCPs) \citep{NN94,BtN01}. 
%certain geometric quantities, e.g., \cite[Section~3.8]{Re01}. 

%DBLP:conf/aib/BouraouiGS22 can be added if there is space
For KBs expressed in description logic, most of the literature work on region-based embeddings focuses on the \EL ontology language
%s such as   \EL and %$\mathcal{ALC}$
~\citep{YCS25,Box2EL,FAITHEL,BoxEL,DBLP:conf/ijcai/KulmanovLYH19,EmEL,DBLP:journals/corr/abs-2202-14018,DBLP:journals/tgdk/LacerdaO024}, while some consider $\mathcal{ALC}$ \citep{OLW20,LOW22}.
%%%%%0. DL-Lite is important for KGs
However, in practice, ontologies present in large-scale KBs tend to use features %the constructs 
in the DL-Lite$^\Hmc$ ontology language~\citep{dllite-jair09}, due to its simple but versatile
expressivity, featuring low computational complexity~\citep{dllite-jair09}. 
%have simple logical constructs
%such as Wikidata
%1. There hasn't been any DL-Lite$^H$ KG embedding in general
Despite its broad use, % of DL-Lite$^\Hmc$, 
only a few works investigate logics in the DL-Lite family in a KB embedding context~\citep{app122010690,FamilyDataset,DBLP:conf/dlog/BourgauxOP21} and none provide a region-based embedding implementation.  % for DL-Lite. %,
% and quasi-chained rules, 
%where concepts are mapped into regions in a vector space. %not sure if we should cite box2el since it seems it is only on arxiv at the moment
%This is useful to represent    hierarchies, typically present in ontologies,  since
%3. SGD is not used in a theoretically sound
%our link prediction and theoretically sound
% \todo{wip}
%CosE 
%2. In addition, SGD does not allow for enforcing T-Box constraints
%The requirement for the regions to be    convex  is often motivated   %for practical purposes.
%\todo{add a paragraph motivating geometric embeddings directly}
%
%
%,  with concepts  mapped to \emph{convex regions}
%
%Convexity and geometric embeddings have an interesting story.
%\todo{wip}
%Convexity and region-based embeddings have an interesting story.
%However, the mere usage of convex shapes does not bring any practical benefit if one does not appropriately leverage the power of convexity during optimization and learning tasks. 
%To the best of our knowledge, % it is safe to say that
%Most  previous   embedding approaches using convex shapes use \emph{nonconvex} optimization problems during the learning phase and are   vague on how \emph{exactly} could convexity be leveraged. 
%This seems to be a missed opportunity.

In this paper, we  
introduce \modelName, a  {KB embedding}  {model} 
for DL-Lite$^\Hmc$ KBs  that allows for  {convex optimization}. 
In particular, %We also 
we study  the notion of  \textit{weakly faithful models}~\citep{OLW20,DBLP:conf/kr/Bourgaux0KLO24}
in the optimization task. This notion states that axioms that hold in the embedding are \emph{consistent} with the KB %(weakly faithful) 
and \emph{entailments of the KB are satisfied} in the embedding.
%(it is a model).   %\todo{add weak}
%, which can be seen as a counterpart to the  notion of canonical models.
%Faithfulness is a property of embedding models that comes with two flavours: \emph{strong} faithfulness,\todo{B: This parts needs to be adjusted as we no longer have strong faithfulness stuff.} which says that axioms that hold in the embedding are a logical consequence of the KB; and \emph{weak} faithfulness, which says axioms that hold in the embedding are consistent with the KB.  
In detail,
\begin{itemize}
    \item we select DL-Lite$^\Hmc$ an \textbf{ontology language} that allows to exploit the \textbf{advantages of convex optimization};
    %:   languages that allow conjunction on the left-hand side of concept inclusions, such as   \EL and $\mathcal{ALC}$ lead to issues (discussed in Secton~\ref{sec:discussion}), hence we consider ;
    %and its extension with role hierarchies, called DL-Lite$^\Hmc$;%(which extends  DL-Lite with role hierarchies);   
    \item we \textbf{design, implement, and evaluate}  \modelName, a KB embedding approach that allows for convex optimization; 
    \item we prove that every satisfiable DL-Lite$^\Hmc$ has a \modelName embedding  that is a \textbf{weakly faithful model};
    \item we introduce a novel and efficient form of \textbf{negative sampling based on existential concepts};
%    \todo{Add, we introduce a novel and efficient form of negative sampling on existential concepts}
    %and uses %well-established 
%and 
%robust 
%optimization tools such as  CVXPY \citep{DB16,AVDB18} and Gurobi \citep{Gurobi23};

%    \item to ensure that  {instances of \modelName (called \geometric interpretations $\cAssignSymb$)} reflect the knowledge of the KB,
%we consider the notion of  \textbf{weak and strong faithfulness}~\citep{OLW20}. 
%In a nutshell, a strongly faithful embedding model $\cAssignSymb$
%satisfies exactly
%the axioms entailed by a KB; more interestingly, the conceptual knowledge part of it,  %technically 
%called the `TBox'; while weak faithfulness allows new axioms to hold in  {$\cAssignSymb$}, as long as they do not contradict the KB.  %knowledge available. 
%This is important since we  want the  embedding models to generalize from the data in the KB, the `ABox';

%\item \todo{Say something about maybe entailed + weak faithfulness}
% \item we show that every DL-Lite$^\Hmc$  KB has a \textbf{strongly faithful} \modelName   model; 

%\todo{rethin the following}
%\item interestingly, we show how to extend \modelName's problem formulation such that for any satisfiable DL-Lite$_{core}$ KB, there is a \modelName model that is weakly KB faithful. %\todo{@Bruno: Do you have any feeling about ``òptimal \geometric interpretations''.} 
% is \textbf{strongly faithful} w.r.t. the TBox part of the KB and \textbf{weakly faithful} w.r.t.\ its ABox; % using convex optimization. 
%We use convex regions as this allows to optimize our model using convex optimization, which 
\item in contrast to commonly used unconstrained optimization approaches,
our \modelName approach can enforce the \textbf{TBox axioms using convex constraints} while leaving the ABox terms in the objective function.
%, for KB completion. %\todo{Somewhere in the last two paragraphs it would be nice to use the BoxLitE name}
%allowing for generalization.
%does not solely guarantee to find optimal solutions () %of the field) 
%but also 
%Briefly speaking, weak faithfulness requires the model to satisfy the KB and 
%also other axioms of the language as long as they are consistent with the KB, while
%strong faithfulness require that the model expresses exactly those axioms that are a logical consequence of the KB. 
%We first show that every DL-Lite$^\Hmc$ KB (which extend DL-Lite with role hierarchies) has a strongly faithful 
%model in the class of geometric models we introduce. 
\end{itemize}
\textbf{Main Contribution. }
Our main contribution lies on theoretical grounds, %, with far from trivial results, 
establishing the existence of  {a faithful model for DL-Lite$^\Hmc$
%box interpretations 
and the proposal of a novel approach that computes embeddings for DL-Lite$^\Hmc$ KBs.} \modelName's implementation and the empirical results serve as a proof of concept that our theoretical results translate into practical settings.
%\textbf{Goals.}
One of the motivations for this work is to
try to solve  link prediction  
%for DL-Lite$^\Hmc$ KBs 
using theoretically sounds techniques from a convex optimization perspective. 
%In particular, 
We 
would like to check how much performance can we get from a purely convex approach. In this way, our work stands in contrast to previous approaches that   relied {on nonconvexity,
yielding %which correspond to 
more complicated optimization problems. }
%We provide more details on \todo{...}

There are three primary sources of nonconvexity in contemporary KB embedding methods (see more details in Section~\ref{sec:discussion}): 
%\begin{itemize}
  %  \item 
    %$(a)$ 
    the way \emph{negative sampling} is %which is used in several works, e.g.,~
    used by most works \citep{TransE,RotatE,DistMult,ComplEx,SimplE,TuckER,BoxEL,BoxE,SpeedE};
%\item 
%$(b)$ 
the \emph{design of loss terms} involving operations that do not preserve convexity; 
%\item 
%$(c)$ 
and the choice of the \emph{ontology language}, in particular, languages that allow conjunction on the left-hand side of concept inclusions, such as   \EL and $\mathcal{ALC}$, 
%, and quasi-chained rules, 
are likely to lead to nonconvexity.
%\end{itemize}
This motivated us to consider   
%DL-Lite$_{core}$ and 
DL-Lite$^\Hmc$~\citep{dllite-jair09}, 
%and its extension with role hierarchies, called DL-Lite$^\Hmc$, 
which is a  simple but practically useful and well-known 
ontology language without conjunction on the left-hand side.

%\textbf{Differentiability.} 
On the algorithmic side, 
%we point out that 
{ADAM} \citep{KB15}, the method of choice of several works, is often used in theoretically unsound ways. 
For example, ADAM requires the objective function to be differentiable; however, it is not uncommon to see this requirement being ignored. 
For instance, in \cite[Sections~4.3, 4.5]{BoxEL} %\todo{We may need to update the reference to BoxEL} 
and \cite[Section~4] %and Appendix~F.2]
{BoxE}, the authors describe nondifferentiable objective functions that are optimized via ADAM\footnote{{ In \citep{BoxEL}, for example, loss terms for concepts assertions are expressed using the 2-norm and the regularization term considered therein use a maximum of functions. Both correspond to terms that are not differentiable in general, e.g., 
the 2-norm function is not differentiable at the origin, which can be checked directly by the definition of Fr\'echet differentiability.
}}. 
%\todo{argue why}
 The rationale for that is not explained in the papers, but a reasonable guess seems to be that these functions are typically differentiable almost everywhere (in a measure theory sense), so there may be an underlying belief that iterates  are unlikely to reach a point of nondifferentiability and the algorithm may still work in practice.   
 {Besides, widely used tools such as PyTorch may %will happily 
attempt to differentiate
%\todo{Is it ok to add ``'' or is it too spicy?} 
nondifferentiable functions for the user by, e.g., selecting subgradients/supgradients when available~\citep{pytorchgrad2026}. 
 } %which is often used in implementations, happily deals with nondifferentiability by selecting an an element of subgradient (if it exists) or having undefined behavior. }
 %{\color{red} Furthermore, pytorch, which is often used in implementations, happily deals with nondifferentiability by selecting an an element of subgradient (if it exists) or having undefined behavior. }
Unfortunately, there are well-known examples in the optimization literature   showing that when a gradient method is applied to a nondifferentiable function, it may fail to find an optimal solution, even if the function is differentiable at the points generated by the method, e.g., see \cite[Section~8.1.2]{Beck17}. In the optimization community, some works explore the convergence properties of SGD methods when applied to nondifferentiable functions \citep{PWE20,DDKL19}, but, as far as we know, similar results have not been proven for ADAM.   % by Wolfe.
%We should also mention %in passing 
%that it has been shown that the main convergence theorem in the original ADAM paper \cite[Theorem~4.1]{KB15}, which incidentally %by the way, 
%was stated for convex functions only, is flawed and there are known counterexamples~\cite[Section~3]{RKK18}.  
%In order to actually get convergence guarantees for a given problem, tuning %of 
%the moment parameters may be required %, as %was recently pointed out~
%\citep{ZCSSL22}.
%This is a stark reminder that, as popular as ADAM is, there is no such thing as ``proof by acclamation'' if we care about correctness.
%In summary, when it comes to the actual optimization and learning tasks, all of the nice convexity structure of KB embeddings typically goes unused. %, leaving room for improvement. % in the way optimization tools are used. 
% and  optimization tools  often end up being applied 
% %is often 
% on a shaky theoretical ground. 

{When performance is good, one may be tempted to overlook these issues, but when it is not, it may be hard to know what is to blame. 
Is it the method, the parameter choice, the optimization algorithm, or a combination of those?}
{Our proposed approach \emph{does} include nondifferentiable terms as well, but, in contrast, our method of choice for solving the underlying optimization problem is suitable for handling nondifferentiability (Section~\ref{sec:discussion}). In this sense, our approach is conceptually sound from an optimization point of view.}

\textbf{Organization}. 
 Our paper is organized as follows. Section~\ref{sec:basicDef} provides basic definitions. Section~\ref{sec:boxSemantics} defines the semantics of our \modelName approach. Section~\ref{sec:faithful} studies \modelName's faithfulness properties. Section~\ref{sec:opt} formulates \modelName's convex optimization problem and shows faithfulness properties that are 
%changed guarantee to ensure : we can discuss more if you prefer the word guarantee
ensured by the problem formulation. Section~\ref{sec:exp} discusses \modelName's proof of concept implementation and experiments. Section~\ref{sec:conclusion} provides a discussion on optimization in KB embeddings and concludes our paper.
Omitted proofs and additional details   are given in the appendix.

\section{Basic Definitions: DL-Lite  Ontologies}\label{sec:basicDef}
% \subsection{Lattice}

% \begin{definition}
% A triple $\langle L, \land, \lor \rangle$ is called a lattice if the following conditions are satisfied:

% \begin{enumerate}
%     \item $L$ is a non-empty set.
%     \label{lat:nonEmpty}
%     \item $\land$ (meet) and $\lor$ (join) need to:
%     \begin{enumerate}
%     \label{lat:operations}
%         \item be binary operations $L \times L \xrightarrow{} L$;
%         \label{lat:binary}
%         \item be idempotent: $a \circ a = a$, for $\circ \in \{\land, \lor\}$ and $a\in L$;
%         \label{lat:idem}
%         \item be commutative: $a \circ b = b \circ a$, for $\circ \in \{\land, \lor\}$  and $a,b\in L$;
%         \label{lat:commu}
%         \item be associative:  $(a \circ b) \circ c = a \circ (b \circ c)$, for $\circ \in \{\land, \lor\}$ and $a,b,c\in L$;
%         \label{lat:assoc}
%         \item satisfy the absorption law $a \land (a \lor b) = a \lor (a \land b) =a $, for $a,b\in L$.
%         \label{lat:absorp}
%     \end{enumerate}
% \end{enumerate}
% \label{def:lattice}
% \end{definition}

%\subsection{DL-Lite  Ontologies} %and Canonical Model}

%We first define the syntax and semantics of 
%DL-Lite$^\Hmc$~\citep{dllite-jair09} and the notion of canonical model.
%We then 
%define our class of convex geometric models for dealing with DL-Lite$^\Hmc$. % KBs.
%define canonical models,  used to establish strong faithfulness.

%\paragraph{Syntax of DL-Lite$^\Hmc$} 
Let $\NC$, $\NR$, and $\NI$ 
be \emph{finite}, non-empty, %countably infinite 
mutually disjoint sets of \emph{concept}, \emph{role}, and \emph{individual} 
names, respectively. We denote by $\NRm$ the set $\NR\cup\{R^-\mid R\in \NR\}$,  by $\NCE$ the set $\NC\cup \{\exists R \mid R \in \NRm\}$,   and by $\NCEneg$ the set $\NCE\cup\{\neg E\mid E\in \NCE\}$.
 DL-Lite$^\Hmc$ role and concept
inclusions are   of the form $S \sqsubseteq T$ and $B \sqsubseteq C$,
resp., where $S,T\in \NRm$ are roles\footnote{A \emph{role} is a role name or the inverse of a role name.} and  $B$, $C$ are
concepts built as follows: 
%through the   rules
\(S~::=~R~\mid~R^-,\;   \ B~::=~A~\mid~\exists S,\;\ C~::=~B~\mid~\neg B,\)
with $R \in \NR$ and $A \in \NC$. 
%A DL-Lite$^\Hmc$ axiom is a DL-Lite$^\Hmc$ role or concept inclusion. 
A DL-Lite$^\Hmc$  \emph{TBox} (we may also use the less technical term \emph{ontology}) is a (finite) set 
of DL-Lite$^\Hmc$ concept and role inclusions. 
%%%Commented def of DL-Lite$^\Hmc_{bool}$
%A DL-Lite$^\Hmc_{bool}$ ontology admits concept inclusions
%$C\sqsubseteq D$ with $C,D$ of the form
%\[C,D \coloneqq A\mid (C\sqcap D)\mid \neg{C}\mid \exists %S\mid (C\sqcup D)\]
%where $A\in\NC$ and $S$ is as above.
\emph{Assertions} are %axioms 
of the form $D(a)$ (called \emph{concept assertions}) or $R(a,b)$ (called \emph{role assertions}), 
where $D\in\NCE$, $R\in\NR$, and $a,b\in\NI$.
A DL-Lite$^\Hmc$ \emph{knowledge base} (KB)
 is a pair $(\Tmc, \Amc)$
 where \Tmc is a DL-Lite$^\Hmc$ TBox  and 
 \Amc is a (finite) set of assertions, called 
 \emph{ABox}.  
 %{DL-Lite$_{core}$ is the fragment of DL-Lite$^\Hmc$ that has only concept inclusions (no role inclusions). 
 %}
% We denote by ${\sf Ind}(\Amc)$ the set of all individual names occurring in an ABox \Amc.
Following the KBE literature (e.g. \citep{BoxEL}) and to simplify our presentation,
we assume %without further notice 
that  DL-Lite$^{\Hmc}$ TBoxes  are  in \emph{named form}, meaning that they contain only concept inclusions %moved def of abbrev above
where one of the concepts is a concept name.
% \todo{Ana: I left just \NC, \NR,\NI. we have little time and now it seems a bit too late to change without possibly leaving many typos}
 %and we denote by ${\sf sig}(\Kmc)$ the set of concept and role names occurring in \Kmc. %\todo[inline]{B: Is ${\sf sig}(\Kmc) = \NC \cup \NR$ or ${\sf sig}(\Kmc) = \NCE \cup \NRm$? A: ${\sf sig}(\Kmc)$ is a subset of concept names \NC and role names \NR}
The semantics is given as usual by interpretations $\Imc=(\Delta^\Imc,\cdot^\Imc)$ (see appendix). 
We call a DL-Lite$^\Hmc$ \emph{axiom} an expression that is
a role inclusion (RI), a concept inclusion (CI), or an assertion.
We write $\Imc\models \alpha$ if \Imc satisfies
an axiom $\alpha$. An interpretation \Imc satisfies a 
KB $\Kmc=(\Tmc, \Amc)$,   written $\Imc\models \Kmc$,  if it satisfies the axioms in \Tmc and \Amc. We say that \Kmc is \emph{satisfiable} if such an interpretation \Imc exists.
Also,  \Kmc \emph{entails} an axiom $\alpha$, written $\Kmc\models\alpha$, iff, for all interpretations \Imc, if %we have that 
$\Imc\models\Kmc$ then $\Imc\models\alpha$. 
We say that an axiom $\alpha$  is \emph{consistent} with a KB \Kmc if there is an interpretation \Imc such that $\Imc\models \Kmc\cup\{\alpha\}$.
%\todo{Complete this sentence?} thank you
%\todo{add entailment d5ef}
%\todo{B:rigorously speaking, I think ``axioms of \Amc'' were not defined. Are they just the assertions in \Amc?} They were defined but for tbox we did it twice so it was confusing, now I removed the redundancy
%Concept inclusions of the form $B\sqsubseteq\neg C$ can be equivalently written as
%$B\sqcap C\sqsubseteq\bot$ with
%$(B\sqcap C)^\Imc$ interpreted as %$B^\Imc\cap C^\Imc$ and %$\bot^\Imc:=\emptyset$.

%Also, we assume w.l.o.g. that inverse roles in  RIs  only appear on the right side (since $R^-\sqsubseteq S$ is equivalent to $R\sqsubseteq S^-$). 
%\todo[inline]{B:Although it may be clear, I suggest explicitly mentioning the CI and RI stand for concept inclusion and relation inclusion when these abbreviations are first used}

\textbf{Canonical Model.}  
%\todo{add ref to can model paper}
%We provide a polynomial size canonical model definition  for DL-Lite$^\Hmc$.
%This means that,
%if a KB $\Kmc$ in such language is satisfiable, then
%there is a polynomial size interpretation that is a model
%of $\Kmc$ and satisfies exactly those inclusions and assertions that are entailed by $\Kmc$, where the polynomial is based on  the size of
%the finite sets $\NC$, $\NR$, $\NI$ of
%symbols used to construct $\Kmc$.  
Our construction is based on previous definitions of canonical models for DL-Lite$^\Hmc$ (e.g.,~\citep{DBLP:conf/kr/KontchakovLTWZ10}). Though here, we ensure that \emph{inclusions} hold in the model \emph{only if} they are entailed by the ontology.
%canonical model for DL-Lite 
%\todo[inline]{B: I am confused about the phrasing of the definition of concept satisfiability. Should it be as follows? We say that a concept $C$ is satisfiable w.r.t \Kmc 
%if there exists $a_f\in \NI\setminus {\sf Ind}(\Amc)$ such that $(\Tmc,\Amc\cup\{C(a_f)\})$ is satisfiable}
Before we provide the definition of the canonical model, 
we introduce the following notions.
Assume $\Kmc=(\Tmc,\Amc)$ is a satisfiable DL-Lite$^\Hmc$ KB. %\todo{B: should we add what it means for $\Kmc$ to be satisfiable? } Done above
We say that a concept $C$ is \emph{satisfiable w.r.t.  \Kmc} 
if there is an interpretation \Imc such that $\Imc\models\Kmc$ and
$C^\Imc\neq\emptyset$. 
%Let ${\sf Ind}(\Amc)$ denote the 
%set of individual names occurring in 
%an ABox \Amc and 
We write concepts of the form $B\sqcap C$ (with the \emph{conjunction} operator)   only to falsify
 concept inclusions of the form $B\sqsubseteq\neg C$. In an interpretation \Imc, the meaning of $(B\sqcap C)^\Imc$  is  $B^\Imc\cap C^\Imc$.
Let  $\NCEn$ be the set $\NCE\cup\{D\sqcap E\mid D, E\in\NCE\}$.
Assume   
$\Delta_\Kmc := \{c_D\mid D\in\NCEn,\ D \text{ is satisfiable w.r.t. }\Kmc  \}$
 is   disjoint from $\NI$.
Given a role $S$, we write $\overline{S}$
as the result of switching between a role name and its inverse:
 $\overline{S}=S^-$ if $S\in\NR$ and
$\overline{S}=R$ if $S=R^-$ with $R\in\NR$. 
%We are   ready to define our notion of   canonical model for DL-Lite$^\Hmc$.
%Ana: I am commenting this now based on Bruno's response via skype
%\todo[inline]{B: I don't understand what is happening in the equation above. Is it a definition? Ana:Yes. What is {\sf sig}(\Kmc)? Ana: added
%What the index in $c_A$ indicates?Ana:   it was  just to say it is one per concept. 
%Is it some $c$ such that $(\Tmc,\Amc\cup\{A(c)\})$ is satisfiable? Ana: I changed the def of concept sat wrt to KB. the reason I changed is because when NI is infinite these two conditions are equivalent. but this is not clear in our case when NI is finite and it could be that all elements of NI occur in K. It is subtle
%}
%A role $R$ is  \emph{generating} in \Kmc if there exist $a \in {\sf Ind}(\Amc)$ and $R_0, \ldots, R_n = R$  such that the following conditions hold:
%\begin{itemize}
 %   \item[\textbf{(ag)}] $\Kmc \models \exists R_0(a)$ but $R_0(a, b) \not\in \Amc$ for all $b \in {\sf Ind}(\Amc)$ 
 %   (written $a \rightsquigarrow c_{R_0}$),
%\item[\textbf{(rg)}] for $i< n$, $\Tmc \models \exists R^-_i \sqsubseteq \exists R_{i+1}$ and $R^-_i\neq R_{i+1}$ (written $c_{R_i} \rightsquigarrow c_{R_{i+1}}$).
%\end{itemize}

 \begin{definition}[Canonical Model]
 \label{def:canonicalModel} 
    The canonical model $\Imc_{\Kmc}$ for a satisfiable DL-Lite$^\Hmc$ KB $\Kmc =(\Tmc,\Amc)$ is:
   \begin{itemize}
       \item $\Delta^{\Imc_\Kmc} := 
        \NI \cup \Delta_\Kmc$,  $ \qquad$ $a^{\Imc_\Kmc} := a$, for all $a \in \NI$,
       % \item 
\item $A^{\Imc_{\Kmc}} := \{a \in \NI \mid \Kmc \models A(a)\}\cup \\
\{c_D \in \Delta_\Kmc \mid \Kmc\models  D \sqsubseteq A\}$ for all $A\in\NC$,
\item $R^{\Imc_{\Kmc}}:= \{(a,b)\in \NI\times\NI\mid  \Kmc\models R(a,b)\}\cup  \\ 
\{(a,c_{\exists S})\in \NI\times \Delta_\Kmc \mid  \Kmc\models \exists \overline{S}(a),\ \Kmc\models \overline{S}\sqsubseteq R\}\cup  \\
\{(c_{\exists S},a)\in \Delta_\Kmc\times \NI \mid  \Kmc\models \exists \overline{S}(a),\ \Kmc\models S\sqsubseteq R\}\cup  \\
\{(c_{\exists S},c_{\exists \overline{S}})\in \Delta_\Kmc \times \Delta_\Kmc \mid \Kmc\models S\sqsubseteq R\} \cup \\  
%\{(c_{\exists S^-},c_{\exists S})\in \Delta_\Kmc \times \Delta_\Kmc \mid \Kmc\models S^-\sqsubseteq R\} \cup \\
\{(c_D,c_{\exists S})\in \Delta_\Kmc \times \Delta_\Kmc {\mid} \Kmc\models D\sqsubseteq \exists \overline{S},\ \Kmc\models \overline{S}\sqsubseteq R\}\cup \\ \{(c_{\exists S},c_{D})\in \Delta_\Kmc \times \Delta_\Kmc \mid \Kmc\models D\sqsubseteq \exists \overline{S},\ \Kmc\models S\sqsubseteq R\}$, for all $R\in\NR$.
    \end{itemize}

\end{definition}
%\todo{change below if signature is a subset of all symbols}
\begin{restatable}{theorem}{theoremcanonical}
\label{thm:canonicalModel}
            Let \Kmc be a satisfiable DL-Lite$^\Hmc$ KB  and let $\Imc_{\Kmc}$ be the canonical model of  \Kmc.
    Then,  for all DL-Lite$^\Hmc$ axioms $\alpha$, % over   ${\sf sig}(\Kmc)$, 
    we have that $\Imc_{\Kmc} \models \alpha$ iff $\Kmc \models \alpha$.
    %\begin{itemize}
    %\item $\Imc_{\Kmc}$ satisfies   $\Kmc$, 
    %\item if  $c_{D}\in C^{\Imc_{\Kmc}}$ then   $\Kmc\models D\sqsubseteq C$,
    %and 
    % \item   for all DL-Lite$^\Hmc$ axioms $\alpha$ over   ${\sf sig}(\Kmc)$,   $\Imc_{\Kmc} \models \alpha$ iff $\Kmc \models \alpha$.
    %\end{itemize}
%
\end{restatable}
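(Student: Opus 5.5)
The plan is to split the biconditional into two directions. For ($\Leftarrow$), once we show $\Imc_\Kmc\models\Kmc$, any $\alpha$ with $\Kmc\models\alpha$ holds in $\Imc_\Kmc$ by the definition of entailment. For ($\Rightarrow$), we argue contrapositively: if $\Kmc\not\models\alpha$, we exhibit an element, pair or individual of $\Imc_\Kmc$ witnessing the failure of $\alpha$. Throughout we use two auxiliary facts. The first is a \emph{characterisation lemma} for the extensions of basic concepts and roles in $\Imc_\Kmc$:
\begin{itemize}
\item for every $B\in\NCE$, $a\in B^{\Imc_\Kmc}$ iff $\Kmc\models B(a)$;
\item $c_D\in B^{\Imc_\Kmc}$ iff $\Kmc\models D\sqsubseteq B$;
\item for every role $S\in\NRm$, $(x,y)\in S^{\Imc_\Kmc}$ iff the corresponding entailment holds, e.g.\ $(a,b)\in S^{\Imc_\Kmc}$ iff $\Kmc\models S(a,b)$.
\end{itemize}
The second is the standard fact that, since $\Kmc$ is satisfiable, entailments of DL-Lite$^\Hmc$ are closed under the usual rules: transitivity of $\sqsubseteq$, $S\sqsubseteq T$ implies $\exists S\sqsubseteq\exists T$ and $\overline S\sqsubseteq\overline T$, and propagation of assertions along inclusions.

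The characterisation lemma is proved by cases following Definition~\ref{def:canonicalModel}. For concept names it is immediate from the definition. For roles it is a case check over the six kinds of pairs, using closure of role inclusions under inverse. For $\exists S$, the direction ``$\Kmc\models D\sqsubseteq\exists S$ implies $c_D\in(\exists S)^{\Imc_\Kmc}$'' follows from the fifth clause with $R$ and witness $c_{\exists\overline S}$; we need $c_{\exists\overline S}\in\Delta_\Kmc$, which holds because $D$ is satisfiable and entails $\exists S$, making $\exists\overline S$ satisfiable. The converse direction is checked clause by clause: every edge out of $c_D$ is generated by some $\Kmc\models D\sqsubseteq\exists T$ with $T\sqsubseteq S$, or by $D=\exists S'$ with $S'\sqsubseteq S$. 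Individuals are handled analogously. Conjunctions $D\sqcap E$ are treated as indices only: $c_{D\sqcap E}\in B^{\Imc_\Kmc}$ iff $\Kmc\models D\sqcap E\sqsubseteq B$.

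With the lemma in hand, $\Imc_\Kmc\models\Kmc$ follows directly:
\begin{itemize}
\item a CI $B\sqsubseteq B'$ in $\Tmc$ holds by transitivity of entailment;
\item an RI holds by role transitivity;
\item assertions hold trivially;
\item a negative CI $B\sqsubseteq\neg B'$ holds because an element $c_D$ in both $B$ and $B'$ would give $\Kmc\models D\sqsubseteq B\sqcap B'$, contradicting the satisfiability of $D$. The same argument applies to individuals, using satisfiability of $\Kmc$.
\end{itemize}

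For the contrapositive direction we treat each kind of $\alpha$.
\begin{itemize}
\item If $\alpha=B\sqsubseteq B'$ and $\Kmc\not\models\alpha$, then $B$ is satisfiable w.r.t.\ $\Kmc$; otherwise $B\sqsubseteq B'$ would be entailed vacuously. So $c_B\in\Delta_\Kmc$, and by the lemma $c_B\in B^{\Imc_\Kmc}\setminus B'^{\Imc_\Kmc}$.
\item If $\alpha=B\sqsubseteq\neg B'$ is not entailed, then $B\sqcap B'$ is satisfiable. This is exactly why $\Delta_\Kmc$ contains conjunctions: $c_{B\sqcap B'}$ lies in both extensions.
\item If $\alpha=S\sqsubseteq T$ is not entailed, the witness is the pair $(c_{\exists\overline S},c_{\exists S})$, using the fourth clause; note that $\exists S$ is satisfiable, else the RI is vacuously entailed. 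The lemma gives $\Kmc\models S\sqsubseteq T$ if the pair were in $T^{\Imc_\Kmc}$, a contradiction.
\item Assertions follow from the lemma's individual part.
\end{itemize}

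The main obstacle is the converse direction of the characterisation lemma for existentials and roles. Here one must show that the six clauses of the role definition never create an edge that is not justified by entailment, including edges between anonymous elements reached through inverse roles. The hardest sub-case is the ``$\Leftarrow$'' direction of the role case for mixed pairs $(c_D,c_{\exists S})$ versus $(c_{\exists S},c_D)$. There I would first normalise every clause into the form ``$\Kmc\models X\sqsubseteq\exists U$ and $\Kmc\models U\sqsubseteq R$'', and then use that $\exists U\sqsubseteq\exists R$ follows from $U\sqsubseteq R$.
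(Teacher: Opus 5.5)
\textbf{The refutation of role inclusions fails as written.} For $\alpha=S\sqsubseteq T$ you take the pair $(c_{\exists\overline S},c_{\exists S})$ ``using the fourth clause''. But the fourth clause, instantiated at $\overline S$, puts exactly this pair into $R^{\Imc_\Kmc}$ whenever $\Kmc\models\overline S\sqsubseteq R$. So it is an $\overline S$-edge, not an $S$-edge:

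it lies in $S^{\Imc_\Kmc}$ only if $\Kmc\models\overline S\sqsubseteq S$, and, for $T\in\NR$, it lies in $T^{\Imc_\Kmc}$ iff $\Kmc\models\overline S\sqsubseteq T$.

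For instance, take $\Tmc=\{R^-\sqsubseteq P\}$ and $\alpha=R\sqsubseteq P$. Your pair $(c_{\exists R^-},c_{\exists R})$ is not in $R^{\Imc_\Kmc}$ but is in $P^{\Imc_\Kmc}$, so neither half of the step holds.

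The repair is to use $(c_{\exists S},c_{\exists\overline S})$, as the paper does with $S=R$. It is an $S$-edge by the fourth clause; for inverse $S$, read the fourth-clause pair for $\overline S$ backwards. For $T\in\NR$, the only clauses that can produce this pair are the fourth, the fifth with $D=\exists S$, and the sixth with $D=\exists\overline S$. Each of these reduces to $\Kmc\models S\sqsubseteq T$, and inverse $T$ follows by the inversion symmetry of the six clauses.

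The slip comes from stating the role part of your lemma too loosely. For anonymous pairs, membership is a disjunction of clause conditions: a pair $(c_{\exists S},c_{\exists U})$ with $U\neq\overline S$ can arise from the fifth or the sixth clause under different conditions. What you actually use is only:
\begin{enumerate}
\item each clause condition is upward closed in the target role, and the clauses are closed under inversion (this handles RIs in $\Tmc$);
\item the exact characterisation of the one witness pair.
\end{enumerate}

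\textbf{With that fix the proof is correct, and it is organised differently from the paper's.} The paper proves separate claims for $A\sqsubseteq B$, $A\sqsubseteq\neg B$, $R\sqsubseteq S$ with $R,S\in\NR$, $\exists R\sqsubseteq A$, $A\sqsubseteq\exists R$, $A\sqsubseteq\neg\exists R$, $\exists R\sqsubseteq\neg A$ and the assertions. It establishes both directions of each by element-wise case analysis, so the ``entailed implies satisfied'' direction is redone shape by shape rather than read off from $\Imc_\Kmc\models\Kmc$.

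Your characterisation lemma for all $B\in\NCE$ and $S\in\NRm$ has three advantages:
\begin{enumerate}
\item it does the clause-by-clause work once and yields every entailed axiom at once;
\item it covers uniformly shapes the paper's claims do not treat explicitly, such as role inclusions with inverses and inclusions between existentials (both occur in the paper's own Family TBox);
\item it makes explicit that $c_{\exists\overline S}\in\Delta_\Kmc$ when $\Kmc\models D\sqsubseteq\exists S$ with $D$ satisfiable, which the paper's $A\sqsubseteq\exists R$ claim uses without comment.
\end{enumerate}
The refutation witnesses themselves ($c_B$, $c_{B\sqcap B'}$, and the corrected edge) coincide with the paper's.
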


%\section{Our Convex Geometric Models}
\section{{\modelName} Semantics} %to improve later
\label{sec:boxSemantics}
Here we introduce a semantics for DL-Lite$^\Hmc$  inspired by box-based geometric models~\citep{BoxE} and suitable for defining a KB embedding model that can be optimized using convex optimization.
 %,
 %, %DL-Lite$^\Hmc$ KBs in vector spaces, 
 %while keeping the optimization problem convex.}
% Our semantics is .
 %,  differing in some aspects that we discuss in the following. We %will %better to avoid future, just write in present
%In particular, 
%We define DL-Lite$^\Hmc$ concept and role embeddings 
We define a geometric model
that uses   axis-aligned hyper-rectangles, called~\emph{boxes}. %\emph{boxes}. 

%\todo{Add a footnote about needing epsilon to be fixed with $0<\epsilon<\epsilonMax$, where   $\epsilonMax = 0.5$ and $\epsilonMax \leq \worldSizeScalar/8$ and that it is necessary for representing the strictly less operator in CVXPY.} this is said later
Let $\epsilon$ and $\worldSizeScalar$ be fixed but arbitrary \footnote{In our work and, in particular, in the implementation, we take $0<\epsilon<\epsilonMax$, where   $\epsilonMax = 0.5$ and $\epsilonMax \leq \worldSizeScalar/8$. 
	Making $\epsilon$ and  $ \worldSizeScalar$ learnable parameters would not enhance the representation capabilities of our model but  allow for infinitely many equivalent solutions of our learned embeddings which only differ in scale. 
	% and that it is necessary for representing the strictly less operator in CVXPY
	%Thus, we choose $\worldSize$ to be a constant instead of a learnable parameter.
} positive constants with $\epsilon \leq \worldSizeScalar$. Let $\epsilonVec$ and $\worldSize$ denote the $d$-dimensional vectors whose value in each dimension is equal to $\epsilon$ and $\worldSizeScalar$ respectively.
%DONE // was: \todo{Make any vector bold and replace $\vec{0}$ by $\mathbf{0}$}
Henceforth, we denote elementwise comparison operators with $\leqd$ and $\geqd$. %, $\led$, and $\ged$. %\todo{Ana: do we ever use $\led$ and $\ged$?}
Using $\epsilonVec$ and $\worldSize$, we define 
%the semantics of DL-Lite$^\Hmc$-concepts on the area of 
an axis-aligned hyper-rectangle, called the \emph{universe box}:
%\todo{Remove epsilon, use strict inequality. Add footnote about CVXPY not being able to handle strong inequalities and this is why we can add an arbitrary small positive epsilon to emulate strict inequalities}
%we decided to keep as is so that it is closer to the implementation
% I (Ana) removed the epsilon from universe box because it cannot be empty. ind need to be in there
%\[\World = \{\mathbf{x} \in \Rd \mid -\worldSize + \epsilonVec \leqd \mathbf{x} \leqd \worldSize - \epsilonVec\}\]
\[\World = \{\mathbf{x} \in \Rd \mid -\worldSize   \leqd \mathbf{x} \leqd \worldSize  \}\]
%\todo{B:The only bounded vector subspace in a Banach space is the trivial one containing only the origin. Maybe bounded is not the right word? Also, $\Omega$ is not a vector space.}
in our $d$-dimensional Euclidean space,  where $d>0$. 
The universe box 
is useful to establish basic properties of boxes (Theorem~\ref{thm:properties}) that are needed in our faithfulness proofs. 
% \todo{check if this theorem is indeed used} used in theo 3
%allows 
%us to define complement
%boxes with favorable properties, which are %described in Theorem~\ref{thm:properties}.
\begin{definition}\label{def:box}
A \emph{box} is an element of the set %of 
%\emph{center boxes} 
$\B$, defined as:
%
%\todo[inline]{discuss: it may be better to leave epsilon and explain}
\begin{equation*}
\begin{aligned}
  \B\coloneqq \{&\{\mathbf{x} \in \Rd \mid \mathbf{L} + \epsilonVec \leqd \mathbf{x} \leqd \mathbf{U} - \epsilonVec\}\mid \\
  &\,\,\mathbf{0} \leqd (\mathbf{U} - \mathbf{L}) \leqd 2\worldSize,  \ \mathbf{L},\mathbf{U}\in \Rd\}.  
\end{aligned}
\end{equation*}
For a non-empty box, 
%We call $\mathbf{L}$ and $\mathbf{U}$ in the definition of a non-empty box $\mathbf{X}$  in $\B$ 
the \emph{lower} and \emph{upper} bounds are denoted $\mathbf{L}$ and $\mathbf{U}$ (resp.) of $\mathbf{X}$. 
If $\mathbf{X}$ is the empty box then we set $\mathbf{L}:=\mathbf{U}:=\mathbf{0}$.
We  denote with $\mathbf{x}[i]$ the $i$-th dimension of a vector $\mathbf{x}$. If $\mathbf{X}$ is a box with lower and upper bounds $\mathbf{L}$ and $\mathbf{U}$, resp., then    $\mathbf{X}[i]$ is the pair $(\mathbf{L}[i],\mathbf{U}[i])$. 
\end{definition}

%\color{red}
\textbf{Box Definition Intuition.} \color{black}
The definition of $\B$ allows boxes to have a width of   $\leq 2 \worldSize$ \color{black} and to span outside of the universe $\World$, while we require vectors associated with individual names to be within $\World$ (as we will see in Definition~\ref{def:boxinterpretation}). This design is motivated (in the spirit of \citet{BoxE}) by associating any individual name with a position and a bump vector, where the embeddings of a pair of individual names $(a, b)$ is retrieved by translating  (``\emph{bumping}'') the position of $a$ with the bump of $b$ and vice versa. Thus, if both the position and bump vectors are within the universe box, i.e., bounded by $\worldSize$, the embeddings of any individual pair is bounded by $2 \worldSize$. As we associate any concept and role name with a set of boxes that shall contain the embeddings of individual pairs, it is sufficient that box widths are bounded by $2 \worldSize$.
As for \emph{$\epsilonVec$}'s intuition, most convex optimization tools do not allow strict inequalities or handle them less efficiently. Yet, we need them to define empty boxes for concepts/roles that are unsatisfiable.
So we employ in \cref{def:box} \emph{nonstrict inequalities} and add a small positive \emph{$\epsilonVec$} to emulate strict ones.
%\begin{remark}\upshape

%We differ from~\citeauthor{BoxE}~\shortcite{BoxE}

%\textcolor{red}{while we require vectors associated with individual names to be in the universe (Definition~\ref{def:boxinterpretation}).}
% \todo{say why we require that?}
%\end{remark}
%\todo{Aleks: Note this definition does not allow for ``geometrically'' empty concepts/roles, which make sense from a practical perspective.}
%$$\B \coloneqq \{\{\mathbf{x} \mid \mathbf{L} \leqd \mathbf{x} \leqd \mathbf{U}, \mathbf{x} \in \Rd \}\mid  \mathbf{0} \leqd \mathbf{U} - \mathbf{L} \leqd 2 \cdot\worldSize \}.$$ 
%$$\B \coloneqq \{{\sf box} \mid {\sf box} = \{\mathbf{x} \mid \mathbf{L} \led \mathbf{x} \led \mathbf{U}, \mathbf{x} \in \Rd \}, \mathbf{0} \leqd \mathbf{U} - \mathbf{L} \leqd 2 \worldSize \}.$$ 
%with $\worldSize \in \Rd$ and $\cpoint \in \vec{0}$.

%\textbf{Discussion.} 

%, which allows for an elegant definition of the complement of a box, as we will see later in this section. %not clear what elegant means here

% \todo{Boxes can span out of the universe on purpose - add this here + validation for why we use a bounded universe (complement!) - without loss of generality we can define the universe to be finite, as the number of concepts and roles is finite! }

% \todo{Discuss motivation: We use boxes for the complement etc. because we can solve the problem via convex optimization.}

%
%Note that $\World$ is the largest element of $\B$.
\begin{definition}%[Box Interpretation]
\label{def:boxinterpretation}
A \emph{\geometric interpretation} $\eta$
is a function that maps:
\begin{itemize}
    \item 
%each concept name $A\in\NC$
%to a box $\cAssign{A}$, each role name $R\in\NR$ to three boxes $\cAssign{R} = (\headAssign{R}, \tailAssign{R}, \bumpBoxAssign{R})$ and each individual name $a\in\NI$ to two vectors $\cAssign{a} = (\posAssign{a}, \bumpAssign{a})$. We  discuss each of these cases in detail in the following.
%, defined as follows.
%A \emph{center box} $\cAssign{A_i}$ is defined as: 
%
%Let $L,U$ be 
%\textbf{Individuals.} %Let $\maxBump$ be a fixed but arbitrary element of $\Rd$ such that $\maxBump \in \mathbb{R}^d_+$. 
%A \emph{\geometric interpretation} $\eta$ %maps 
each individual name $a\in\NI$ to two vectors $\eta(a) = (\posAssign{a}, \bumpAssign{a})$, namely, a position $\posAssign{e} \in \World$ and a bump $\bumpAssign{e} \in \World$;
%\citep{BoxE,Box2EL}; commented ref since we already mention at beg of 2.2
%\todo{say for all inv pos, bump are in the universe box}
% with  The size of the bumps is restricted to be smaller than a maximal bump $\maxBump$, i.e., $|b(e)|_d \leqd \maxBump$ \todo{Reference Box2EL and BoxE}. % for any $e\in\NI$.
%
%\textbf{Concepts.}
%A \emph{\geometric interpretation} $\eta$ %maps 
\item each concept name $A\in\NC$ to a box $\cAssign{A} \in \B$; % \todo{check if the box of A needs to be in somega}
% \todo{check bump notation}
%\textbf{Roles.}
%A \emph{\geometric interpretation} $\eta$ %maps 
\item each role name  $R\in\NR$ to three boxes  $\cAssign{R} = (\headAssign{R}, \tailAssign{R}, \bumpBoxAssign{R}$), which we call $R$'s {head} $\headAssign{R} $, {tail} $\tailAssign{R} $ and {bump box} $\bumpBoxAssign{R} $. %\todo{Question to Ana: Is there an equivalent to top and bottom for roles?}
\end{itemize}
%We extend the mapping function $\eta$ to arbitrary DL-Lite$^\Hmc$  role expressions  
%Now, let us define the interpretation of $\sInter$, $\sComp{\cdot}$, and $\sUnion$ 
%as follows:
%\begin{align*}
%    \cAssign{\neg{R}} &\coloneqq \{\sComp{Head(R)}, \sComp{Tail(R)}, \roleWorldSize - \bumpBoxAssign{R}\} \\
 %   \cAssign{\inv{R}} &\coloneqq (\tailAssign{R}, \headAssign{R}, \bumpBoxAssign{R}), \\
  %  \cAssign{\exists R} &\coloneqq  \{\mathbf{L^H_R} - \mathbf{U^B_R} + \epsilonVec \leqd x \leqd \mathbf{U^H_R}  - \mathbf{L^B_R} - \epsilonVec \}. 
%\end{align*}
%\todo{Is it possible to define the negation of a box in a way that ensures that RQ1 is satisfied?}
% The concept $\top$ is mapped to $\World$ %,\setminus\{\cpoint\}$
% in symbols, $\eta(\top)=\World$, and $\eta(\bot)=\emptyset$. 
%The concept $\bot$ is mapped to the empty set, in symbols, $\eta(\bot)=\emptyset$. 
We extend the mapping function $\eta$ to arbitrary DL-Lite$^\Hmc$ concept and role expressions  
%Now, let us define the interpretation of $\sInter$, $\sComp{\cdot}$, and $\sUnion$ 
as follows:
\begin{align*}
    \cAssign{\neg{C}} &\coloneqq \sComp{\cAssign{C}}, \quad \cAssign{\inv{R}} \coloneqq (\tailAssign{R}, \headAssign{R}, \bumpBoxAssign{R}), \\
     %\\
    \cAssign{\exists R} &\coloneqq  \{\mathbf{x} \in \Rd \mid \mathbf{L^H_R} - \mathbf{U^B_R} + \epsilonVec {\leqd} \mathbf{x} {\leqd} \mathbf{U^H_R}  - \mathbf{L^B_R} - \epsilonVec \} 
\end{align*}
where $\mathbf{L}^X_\mathbf{R}$, $\mathbf{U}^X_\mathbf{R}$ with $X \in \{\mathbf{H}, \mathbf{T}, \mathbf{B}\}$ are the lower and upper bounds of $\headAssign{R}$, $\tailAssign{R}$, and $\bumpBoxAssign{R}$; and where $\sComp{\cAssign{C}}$ represents $\cAssign{C}$'s complement box. Furthermore, for any $C\in\NCE$, the complement box $\sComp{\cAssign{C}}$ is defined as follows: 
\begin{equation*}
     \sComp{\cAssign{C}} \coloneqq 
     \{ \mathbf{x} \in \Rd \mid (-\worldSize - \mathbf{L_C} + \epsilonVec) \leqd \mathbf{x} \leqd (\worldSize - \mathbf{U_C} - \epsilonVec)\}
\end{equation*}%
with $\mathbf{L_{C}}$ and $\mathbf{U_{C}}$ being $\cAssign{C}$'s lower and upper bounds.
%{(note that $\mathbf{L_{\exists R}}:=\mathbf{L^H_R} - \mathbf{U^B_R}$ and $\mathbf{U_{\exists R}}:=\mathbf{U^H_R}  - \mathbf{L^B_R}$)}.
\end{definition}
%\todo{Exists R is still wrongly defined: Head Bump with max bump.}

% Finally, we require that for all $\cAssign{C} \in \B$ we have that $\cAssign{C} \cap \sComp{\cAssign{C}} = \emptyset$ (\textbf{RQ1}). 

%\todo{Ana: it seems strange to have RQ1 without proving that a model that satisfies it exists and if it exists then it is strange to not already define the model in a way that ensures this is satisfied (?)}
% \todo{Aleks: Yes you are completely right, calling this a requirement is weird. I believe RQ1 is related to the following observation: There are first-order formulae that are inconsistent. Similarly, there are \geometric{} interpretations of our ``geometric'' language that are inconsistent. I tried below to formulate RQ1 not as a requirement, but as a consistency criteria below, which is hopefully more natural. What do you think? :)}
% \todo{Ana: agree, it is   better to express this as a condition for consistency. thanks  }
% \todo{Aleks: Add whether we can define a complement that is always consistent: This might be possible, but so far we did not manage to do that in a manner that allows for convex optimization. Recall that while cone semantics has only consistent interpretations, it cannot be optimized with any form of continuous optimization (be it gradient descent or convex optimization), but requires some form of nonconvex discrete optimization, which is much harder to optimize in practice (to my understanding of Bruno's explanations so far :) ).}
%ok then lets leave as is thanks

%\color{red}

\textbf{Box Interpretation Intuition.}
At first sight, an intuitive definition for the complement of a box would be the set complement. However, we cannot use this notion as it leads to non-convexity. Thus, a different convex-preserving definition of box complements is required that satisfies important properties of the usual complement, as shown in Theorem~\ref{thm:properties}. %A similar issue is discussed by \citet{OLW20}, where the authors also invented an alternative notion of complement. Our definition of a complement box has similar properties,  %satisfying the following key properties 
%proven in Theorem~\ref{thm:properties}. 
Next, the box interpretation of inverse roles $\cAssign{\inv{R}}$ swaps the head and tail boxes of $\cAssign{R}$. As for the existential boxes, 
we define $\cAssign{\exists R}$ as $\headAssign{R}$ enlarged by the boundaries of $\bumpBoxAssign{R}$, to ensure that it contains the embeddings of all subjects of $R$ assertions, following $\exists R$’s semantics.

\begin{restatable}{theorem}{thmprop}%[Complement]
\label{thm:properties} For any \geometric{} interpretation $\cAssignSymb$:
\begin{enumerate}[align=left,label=\roman*),leftmargin=*] %[$(i)$]
    \item for all $C \in \NCE$, 
    %there is a complement %Ana: commented since this is not just existential but something defined
    $\sComp{\cAssign{C}} \in \B$;
    \label{lat:comp1}
    \item for all $C \in \NCE$,   $\sComp{\sComp{\cAssign{C}}} = \cAssign{C}$;
    \label{lat:comp2}
    \item for all $C, D \in \NCE$, if $\cAssign{C} \subseteq \cAssign{D}$ then $\sComp{\cAssign{D}} \subseteq \sComp{\cAssign{C}}$.
    \label{lat:comp3}
%    \item For all $A, B \in \NC$: if $\cAssign{A} \subseteq \cAssign{B}$ and $\cAssign{A} \subseteq \cAssign{\neg B}$ then $\cAssign{A} = \emptyset$. 
%    \label{lat:comp4}
\end{enumerate}
\end{restatable}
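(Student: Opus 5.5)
The plan is to argue coordinatewise, using the explicit bound parametrisation of boxes. First I would fix conventions. A nonempty box $\mathbf{X}\in\B$ determines its bounds uniquely, because $\mathbf{L}+\epsilonVec$ and $\mathbf{U}-\epsilonVec$ are its coordinatewise minimum and maximum. For an empty box, \cref{def:box} fixes $\mathbf{L}=\mathbf{U}=\mathbf{0}$. For $C=\exists R$ the bounds are $\mathbf{L_C}=\mathbf{L^H_R}-\mathbf{U^B_R}$ and $\mathbf{U_C}=\mathbf{U^H_R}-\mathbf{L^B_R}$. With these conventions, $\sComp{\cAssign{C}}$ is the set described by the pair
\[
\mathbf{L}'=-\worldSize-\mathbf{L_C},\qquad \mathbf{U}'=\worldSize-\mathbf{U_C},
\]
so that $\mathbf{U}'-\mathbf{L}'=2\worldSize-(\mathbf{U_C}-\mathbf{L_C})$. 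All three items then reduce to elementary inequalities on these vectors.

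For \ref{lat:comp1}, when $C\in\NC$ we have $\mathbf{0}\leqd\mathbf{U_C}-\mathbf{L_C}\leqd 2\worldSize$, hence $\mathbf{0}\leqd\mathbf{U}'-\mathbf{L}'\leqd2\worldSize$, and the complement is literally of the form in \cref{def:box}. If the complement set is empty, it equals the box with $\mathbf{L}=\mathbf{U}=\mathbf{0}$, which is empty since $\epsilon>0$, so it lies in $\B$ either way. The case $C=\exists R$ needs one more step, because the width $\mathbf{U_C}-\mathbf{L_C}$ is the sum of the head and bump widths and can exceed $2\worldSize$. In any coordinate where it does, $\mathbf{U}'[i]<\mathbf{L}'[i]$, so the complement is empty and again equals the empty box in $\B$. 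The upper bound $\mathbf{U}'-\mathbf{L}'\leqd 2\worldSize$ holds because $\mathbf{U_C}-\mathbf{L_C}\geqd\mathbf{0}$.

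For \ref{lat:comp2}, in the generic case where $\sComp{\cAssign{C}}$ is nonempty, its bounds are exactly $(\mathbf{L}',\mathbf{U}')$. Applying the formula again gives $-\worldSize-\mathbf{L}'=\mathbf{L_C}$ and $\worldSize-\mathbf{U}'=\mathbf{U_C}$, which recovers $\cAssign{C}$ by uniqueness of the parametrisation. For \ref{lat:comp3}, take $\cAssign{C}\subseteq\cAssign{D}$ with $\cAssign{C}$ nonempty. Comparing coordinatewise minima and maxima gives $\mathbf{L_D}\leqd\mathbf{L_C}$ and $\mathbf{U_C}\leqd\mathbf{U_D}$. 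Then $-\worldSize-\mathbf{L_C}\leqd-\worldSize-\mathbf{L_D}$ and $\worldSize-\mathbf{U_D}\leqd\worldSize-\mathbf{U_C}$, so the interval of $\sComp{\cAssign{D}}$ in each coordinate lies inside that of $\sComp{\cAssign{C}}$.

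The main obstacle is the degenerate cases created by the empty-box convention. In \ref{lat:comp2} the complement may be empty, and its bounds are then reset to $\mathbf{0}$. In \ref{lat:comp3} $\cAssign{C}$ may be empty with bounds $\mathbf{0}$, so $\sComp{\cAssign{C}}=\{-\worldSize+\epsilonVec\leqd\mathbf{x}\leqd\worldSize-\epsilonVec\}$. Containment of $\sComp{\cAssign{D}}$ in this set then needs $-\worldSize-\mathbf{L_D}\geqd-\worldSize$ and $\worldSize-\mathbf{U_D}\leqd\worldSize$ whenever $\sComp{\cAssign{D}}$ is nonempty, i.e.\ $\mathbf{L_D}\leqd\mathbf{0}\leqd\mathbf{U_D}$. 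My first attempt would derive this from the fact that the box's width is at most $2\worldSize$ while the complement is nonempty. If that fails, I would reread the convention as assigning the empty box the canonical representative that makes its complement the universe-like box, and verify \ref{lat:comp2} and \ref{lat:comp3} by a separate short case split on which of the sets involved are empty.
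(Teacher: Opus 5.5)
Your generic-case computations are the paper's proof. The paper also works coordinatewise with $-\worldSize-\mathbf{L_C}$ and $\worldSize-\mathbf{U_C}$: it gets the width $2\worldSize-(\mathbf{U_C}-\mathbf{L_C})$ for (i), substitutes twice for (ii), and for (iii) uses $\cAssign{C}\subseteq\cAssign{D}\Leftrightarrow \mathbf{L_D}\leqd\mathbf{L_C}\wedge\mathbf{U_C}\leqd\mathbf{U_D}$ and then negates. Your handling of $C=\exists R$ in (i) is more careful than the paper's. The paper asserts $\mathbf{0}\leqd\mathbf{U_C}-\mathbf{L_C}\leqd 2\worldSize$ for every $C\in\NCE$ ``by the definition of $\B$'', but for $\exists R$ the width is head width plus bump width and can exceed $2\worldSize$. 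Your remark that the complement is then empty, and that the empty set lies in $\B$, closes that slip.

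The gap is your last paragraph: neither repair can work, because (ii) and (iii) are false in those cases under the literal reading (sets plus the empty-box convention). For (iii), take $d=1$, $\worldSizeScalar=4$, $\epsilon=0.1$, $\cAssign{C}=\emptyset$ (so $\sComp{\cAssign{C}}=[-3.9,3.9]$), and $\cAssign{D}$ with bounds $1,2$, i.e.\ $\cAssign{D}=[1.1,1.9]\in\B$.
\begin{itemize}
\item Then $\cAssign{C}\subseteq\cAssign{D}$, but $\sComp{\cAssign{D}}=[-4.9,1.9]\not\subseteq\sComp{\cAssign{C}}$.
\item Your condition $\mathbf{L_D}\leqd\mathbf{0}\leqd\mathbf{U_D}$ fails although $D$ has width $1$ and a nonempty complement, so your first attempt cannot succeed.
\item No other representative for the empty box helps either. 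Every complement has bounded width, while boxes in $\B$ are not confined to $\World$. Translating $\cAssign{D}$ therefore translates $\sComp{\cAssign{D}}$ arbitrarily far, and no fixed box contains all of them.
\end{itemize}
For (ii), take $\mathbf{L_C}=0$ and $\mathbf{U_C}=8$, so $\cAssign{C}=[0.1,7.9]\in\B$.
\begin{itemize}
\item Its complement has formula bounds $(-4,-4)$ and is empty.
\item The convention resets those bounds to $0$, giving $\sComp{\sComp{\cAssign{C}}}=[-3.9,3.9]\neq\cAssign{C}$.
\item Your ``universe-like'' representative produces the same set.
\end{itemize}

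The paper's proof never meets these cases because it makes two tacit assumptions:
\begin{itemize}
\item It treats $\sComp{\cdot}$ as the involution $(\mathbf{L},\mathbf{U})\mapsto(-\worldSize-\mathbf{L},\worldSize-\mathbf{U})$ on bound pairs, exactly as $\mathbf{L_{\neg C}}$ and $\mathbf{U_{\neg C}}$ are defined in Section~\ref{sec:ProblemFormulation}. Under that reading (ii) is an algebraic identity.
\item Its first equivalence in (iii) holds only when $\cAssign{C}\neq\emptyset$.
\end{itemize}
So instead of a case split, state the restriction explicitly:
\begin{itemize}
\item Prove (ii) on bound pairs, or under the assumption $\sComp{\cAssign{C}}\neq\emptyset$.
\item Prove (iii) for nonempty $\cAssign{C}$, or with the bound-level hypothesis $\mathbf{L_D}\leqd\mathbf{L_C}$, $\mathbf{U_C}\leqd\mathbf{U_D}$. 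That hypothesis is how the TBox constraints encode $C\sqsubseteq D$ in any case.
\end{itemize}
The empty cases have to be excluded, not proved.
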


\textbf{Box Consistency.} A \geometric{} interpretation is \emph{box consistent} if for all %$\cAssign{C} \in \B$ 
{$C\in\NCE$}
we have that $\cAssign{C} \cap \sComp{\cAssign{C}} = \emptyset$.  
%\color{red}
%In Section~\ref{sec:ProblemFormulation}, we define convex constraints that ensure this property. 
\color{black}

%\textbf{Truth of Assertions.} Definition \ref{def:BoxSatisfaction} defines the truth value of an assertion $\phi$ in a \geometric interpretation $\eta$ --- denoted as $\eta \models \phi$. 

% \textbf{Truth of Concept Assertions.} A concept assertion $A(a)$ is true in a \geometric interpretation $\eta$ --- denoted as $\eta \models A(a)$ --- if:

% \begin{align}
% \mathbf{L_A} \led \posAssign{a} \led \mathbf{U_A}
% \label{eq:ConceptTruth}
% \end{align}

% At an intuitive level, this means that $\eta \models A(a)$ if $\posAssign{a}$ is within the relative interior of $\eta(A)$, which we denote henceforth as $\posAssign{a} \in \eta(A)$. Note that $A(a)$ is false in $\eta$ --- denoted as $\eta \not\models A(a)$  --- if $\eta(a) \in \eta(\neg{A})$. Otherwise the truth state of $A(a)$ in $\eta$ is unknown. 

% \textbf{Truth of Role Assertions.} A role assertion $R(a,b)$ is true in $\eta$ --- denoted as $\cAssignSymb \models R(a,b)$ if the following constraints hold: 

% \begin{align}
% \posAssign{a} + \bumpAssign{b} &\in \headAssign{R} 
% \label{eq:roleTruthHead} \\
% \posAssign{b} + \bumpAssign{a} &\in \tailAssign{R} 
% \label{eq:roleTruthTail} \\
% \bumpAssign{a} &\in \bumpBoxAssign{R}
% \label{eq:roleTruthHeadBump} \\
% \bumpAssign{b} &\in \bumpBoxAssign{R}
% \label{eq:roleTruthTailBump}
% \end{align}

% In contrast to concept assertions, a role assertion $R(a,b)$ is false in $\cAssignSymb$ -- denoted as $\cAssignSymb \not\models R(a,b)$ --- if at least one of the constraints \ref{eq:roleTruthHead}-\ref{eq:roleTruthTailBump} is violated. Otherwise the truth state of $R(a,b)$ is unknown.

\begin{definition}
%[Box Satisfaction]
\label{def:BoxSatisfaction}
%We say that 
A \geometric{} interpretation $\cAssignSymb$ satisfies 
\begin{itemize}
    \item  %role assertion 
    \(R(a, b)\) with $R \in \NR$ and $a,b\in\NI$
    %\cup \{\exists R, \exists R^- \mid R \in \NR\}$ \todo{Ana: the part $\{\exists R, \exists R^- \mid R \in \NR\}$ is not needed here right?}
    iff 
    \begin{align*}
        i) \; \posAssign{a} + \bumpAssign{b} &\in \headAssign{R} 
        %\label{eq:roleTruthHead} 
        \\
        ii) \; \posAssign{b} + \bumpAssign{a} &\in \tailAssign{R} 
        %\label{eq:roleTruthTail} 
        \\
        iii) \; \bumpAssign{a},\bumpAssign{b} &\in \bumpBoxAssign{R}
        %\label{eq:roleTruthHeadBump}
        %\\
        %\bumpAssign{b} &\in \bumpBoxAssign{R}
        %\label{eq:roleTruthTailBump}
    \end{align*}

    \item   \(R \sqsubseteq S\) with $R, S \in \NRm$ iff $\; \bumpBoxAssign{R} \subseteq \bumpBoxAssign{S}$  
    \begin{align*}
            \headAssign{R} %&
            \subseteq \headAssign{S}\qquad 
        %\label{eq:RoleInclusionHead} 
        %\\
            \tailAssign{R} %&
            \subseteq \tailAssign{S}
        %\label{eq:RoleInclusionTail} 
        %\\
         %   
        %\label{eq:RoleInclusionBump}
    \end{align*}
        \item   \(C \sqsubseteq D\), $C \in \NCE $ and $D \in\NCEneg$ iff $\; \cAssign{C} \subseteq \cAssign{D}$.
    %\todo{Ana: we should also include the case $D=\neg\exists R^{(-)}$}
    %\begin{align}
     %   \cAssign{C} \subseteq \cAssign{D}
     %   \label{eq:ConceptInclusion}
    %\end{align}
\end{itemize}

%\todo[inline]{Add an explanation with a figure of the head tail and bump box, where without bump box exist R included in exist S and exist $R^-$ included in exist $S^-$.}
Regarding  concept assertions %\(C(a)\) 
with $C \in \NCE$ and $a\in\NI$, 
%we have that
$\eta$ satisfies \(C(a)\)  if  
$ \posAssign{a} \in \cAssign{C} $
   % \begin{align}
    %\posAssign{a} \in \cAssign{C} 
    %\label{eq:ConceptTruthone}
    %\end{align}
    and 
    $\eta$ falsifies \(C(a)\)  if  
    $\posAssign{a} \in \sComp{\cAssign{C}}$.
    %\begin{align}
    %\posAssign{a} \in \sComp{\cAssign{C}}. 
    %\label{eq:ConceptTruthtwo}
    %\end{align}
    Otherwise the   status of \(C(a)\)
    is \emph{unknown}\footnote{Geometric models for languages that have negation normally need to allow for the `unknown' truth status. This happens in the Cone Semantics~\citep{OLW20}, tailored for $\mathcal{ALC}$, which has negation. Since DL-Lite$^\Hmc$ allows for $B\sqsubseteq \neg C$ we use complement boxes and   the `unknown' truth state.}.
%Furthermore, we say that a \geometric{} interpretation does not %satisfy
%\todo{this will need some revision}
%\todo[inline]{next label is defined twice}
%\begin{itemize}
 %   \item a concept assertion \(C(a)\) with $C \in \NC \cup \{\exists R, \exists R^- \mid R \in \NR\}$ iff 
  %  \begin{align}
   % \posAssign{a} \in \cAssign{\neg C} 
    %\label{eq:NotConceptTruth}
    %\end{align}
    %\item a role assertion \(R(a, b)\) with $R \in \NR \cup \{\exists R, \exists R^- \mid R \in \NR\}$ iff at least one of Equations \ref{eq:roleTruthHead}-\ref{eq:roleTruthTailBump} is not satisifed.
    %\item a concept inclusion \(C \sqsubseteq D\) with $C \in \NC \cup \{\exists R, \exists R^- \mid R \in \NR\}$ and $D \in \{E, \neg E \mid E \in \NC\} \cup \{\exists R, \exists R^- \mid R \in \NR\}$ iff Equation \ref{eq:ConceptInclusion} is not satisifed.
    %\item a role inclusion \(R \sqsubseteq S\) with $R, S \in \{R, R^- \mid R \in \NR\}$ iff at least one of Equations \ref{eq:RoleInclusionHead}-\ref{eq:RoleInclusionBump} is not satisifed.
%\end{itemize}

We write \(\cAssignSymb \models \alpha\) to indicate that \(\cAssignSymb\) satisfies an axiom \(\alpha\), and write \(\cAssignSymb \not\models \alpha\) otherwise. 
Also, if \(\cAssignSymb \models \alpha\) for every axiom \(\alpha\) in a KB \(\ontoo\) then we say that \(\cAssignSymb\) satisfies \(\ontoo\) or, equivalently, that \(\cAssignSymb\) is a model of \(\ontoo\), in symbols, \(\cAssignSymb \models \ontoo\).  
\end{definition} 

\begin{figure}[]
    % \vskip 0.2in
    \begin{center}
    \centerline{\includegraphics[width=250pt]{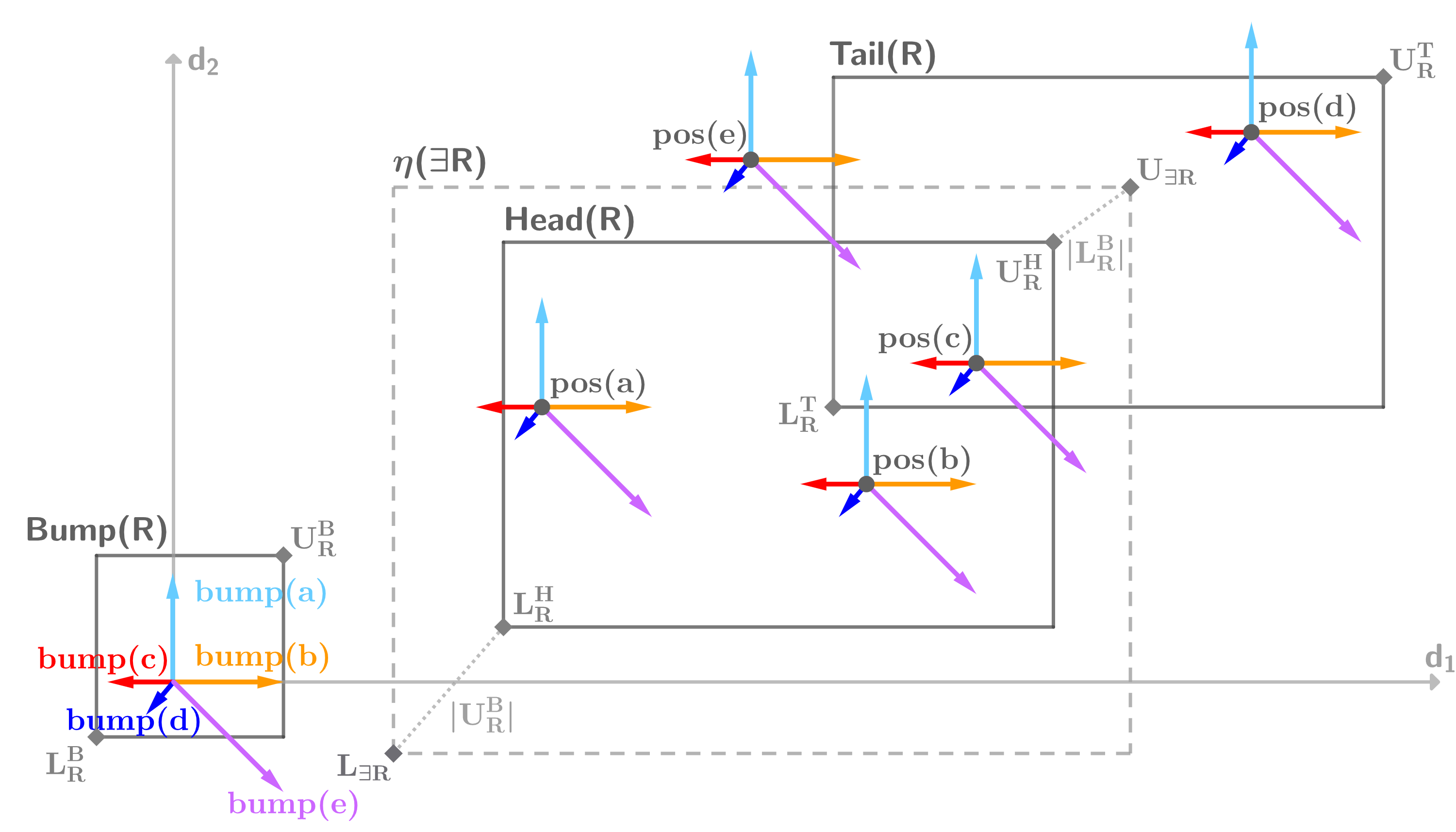}}
    \caption{A two-dimensional box interpretation $\eta$ is shown. It maps a role $R \in \NR$ to the $\headAssign{R}$,  $\tailAssign{R}$, and $\bumpBoxAssign{R}$ boxes. Additionally $\eta(\exists R)$ is visualized. The depicted box interpretation satisfies the assertions \{ \(R(a, b)\), \(R(b, d)\), \(R(b, c)\), \(R(c, d)\), \(R(c, c)\), \(\exists R(a)\), \(\exists R(b)\), \(\exists R(c)\) \} over $\{a,b,c,d,e\} \in \NI$.} 
    \label{fig:BoxLitE_Components}
    \end{center}
    \vskip -0.2in
\end{figure}

%\color{red}
%\textbf{Existential Box Intuition.}
The most intricate part of Definition~\ref{def:BoxSatisfaction} corresponds to role assertions, so we explain this in more details.
Items $i)$ and $ii)$ are as in~\cite{BoxE}. Item $iii)$ is helpful to embed existential concepts. % we add the membership of the bumps inside the bump boxes .
Recall that $\cAssign{\exists R}$ is %defined as the 
$\headAssign{R}$ enlarged by the boundaries of $\bumpBoxAssign{R}$. %This is the case for the following reasons. The satisfaction of $R(a,b)$ in \geometric interpretations is tied to the satisfaction of the Conditions~$i)$ and~$iii)$ of Definition~\ref{def:BoxSatisfaction}. Condition $i)$ states that $\posAssign{a}$ translated by $\bumpAssign{b}$ must lie within $\headAssign{R}$. By Condition $iii)$, the translation $\bumpAssign{b}$ is bounded by $\bumpBoxAssign{R}$. 
Thus, if $R(a,b)$ is satisfied in a \geometric interpretation, $\posAssign{a}$ may lie at most the size of $\bumpBoxAssign{R}$ away from $\headAssign{R}$ (as visualized in Figure~\ref{fig:BoxLitE_Components}).  %, motivating the definition of $\cAssign{\exists R}$.
\color{black}

With this, we have defined \modelName's semantics in terms of box interpretations. 
%\textbf{BoxLitE vs. BoxE.} \color{black}
In contrast to \citep{BoxE}, we $(i)$ define \emph{complement boxes}, which allow the formulation of convex constraints that guarantee the satisfaction of concept disjointness axioms in our embedding solutions and $(ii)$ associate roles with additional \emph{bump} boxes that constrain the maximal bump of an individual that is accepted by a role embedding. %  Point $(ii)$  
The latter 
is important %$(a)$
{to distinguish between axioms of the form $R\sqsubseteq S$ and $\{\exists R\sqsubseteq \exists S,\exists R^-\sqsubseteq \exists S^-\}$.} 
%\color{red} 
%Next, the intuition of these aspects is discussed in detail.
% (i) The complement box of $A$ is indeed a box. (ii) $\sComp{A}$'s complement box is $A$. % (i.e., the complement box operation is an involution). 
% (iii)  If a box $B$ is a subset of a box $A$, then the complement of box $A$ is a subset of the complement of box $B$. 
% \color{black}
In the next section, we will define faithfulness properties of geometric models. Furthermore, we will analyse which faithfulness properties for DL-Lite$^\Hmc$ can be satisfied by \modelName's box interpretations.

% \textcolor{red}{Note: If $\sComp{\cAssign{A}} \subseteq \cAssign{B}$ and $\cAssign{A} \subseteq \cAssign{B}$, then $\cAssign{A}$ is not necessarily $\World$! But this is not a problem, as such constraints are not in our language (negation can always be on the lefthand side!) - Also given ``unknown'' areas this sounds rather intuitive - we only know a subset of all negative facts for A and all positive ones for A, thus including both of them does not necessarily give the universe (the union operation however would do that!)}

\section{Model Faithfulness}\label{sec:faithful}
In this section, we study model faithfulness~\citep{OLW20}, which is a property that is useful to show that geometric models correctly represent the knowledge present in KBs. For KB completion, one is particularly interested in preserving the conceptual knowledge in TBoxes while allowing new assertions to hold.
%in the embedding. 

\begin{definition}\label{def:faithfulness}(Adapted~\citep{OLW20,DBLP:conf/kr/Bourgaux0KLO24})
    Let \(\Lmcp\) be an ontology language and  
    %description logic  
     \(\ontoo\)   a satisfiable KB in \(\Lmcp\).   A \geometric interpretation   \(\cAssignSymb\) is 
   % Then,
    \begin{itemize}
        \item 
        \emph{weakly KB faithful} for \Lmc and \Kmc if for every KB axiom $\alpha$ in \Lmc, \ $\cAssignSymb\models \alpha$ implies that $\alpha$ is consistent with \Kmc;
   %     \(\cAssignSymb\) is  \emph{weakly KB faithful   w.r.t. \(\ontoo\in \Lmcp\)} iff  
    %    for every  KB axiom  \(\alpha\) in \Lmc, if \(\cAssignSymb \models\alpha\) %\todo{$\models?$} 
     %   then \(\ontoo \cup \{\alpha\}\) is satisfiable in the     semantics of \(\Lmcp\);
        \item %$\cAssignSymb$ is 
        \emph{KB entailed}
       % \(\cAssignSymb \models\ontoo\)
         for \Lmc and \Kmc if for every KB axiom $\alpha$ in \Lmc that is entailed by \Kmc, \ $ \cAssignSymb\models \alpha$. 
        %\todo{remove strong faith}
        %and
        %\item \(\cAssignSymb\) is a \emph{strongly KB faithful model of \(\ontoo\) w.r.t.\ \(\Lmcp\)} iff \(\cAssignSymb \models\ontoo\) and, for every KB axiom   \(\alpha\) in \(\Lmcp\), \(\cAssignSymb \models  \alpha\) iff \(\ontoo \models \alpha\).
        %in the semantics of \(\Lmcp\).
    \end{itemize}
    %From now on, 
    We may omit ``weakly'' and just say ``faithful''. {We may also omit \Lmc and/or \Kmc if they are clear from the context.} If a \geometric $\eta$ interpretation satisfies both of the  properties we  say that $\eta$ is a KB \textbf{faithful model}. 
    These notions can be adapted  for the case in which \(\Lmcp\) is a  description logic 
    with TBox and ABox axioms {and for the case where}
    we consider only TBox axioms (or only  ABox axioms)   for    TBox (or ABox) faithful models.
    % (accordingly).
\end{definition}
%\citep{DBLP:conf/kr/Bourgaux0KLO24}.
%\todo{mention that this corresponds to weak and entailed from kr24 paper. since we only consider weak we could say here that from now on whenever we talk about faith we mean weak faith}
% \todo[inline]{How to show weak/strong faithfulness?}
% \todo[inline]{Does DL-Lite form some kind of lattice structure (or other algebraic structure), such that we solely have to prove that our geometric interpretation follows this structure? (as for e.g. boolean algebras)}

\newcommand{\posAssignI}[2]{\ensuremath{\mathsf{pos}_{#1}(#2)}\xspace}
\newcommand{\bumpAssignI}[2]{\ensuremath{\mathsf{bump}_{#1}(#2)}\xspace}
\newcommand{\lowerboundI}[2]{\ensuremath{\mathbf{L}_{#1}(#2)}\xspace}
\newcommand{\upperboundI}[2]{\ensuremath{\mathbf{U}_{#1}(#2)}\xspace}

%From the KB completion 
%\todo{Should we call this ``KB completion'' instead of KG completion? Did we introduce this term? I changed, thanks, I think (hope, because of space) it may be understandable what we mean by the references of embeddings with DL ontologies} 
%point of view, strong faithfulness can be too restrictive, since we would like the model to generalize and thus to  make additional axioms hold, as long as they do not contradict the conceptual knowledge present in the KB.
%This holds in particular for ABox axioms, and that is why it also makes sense to differentiate between KB faithfulness and the more specific notions that include only TBox or only ABox axioms. 
\cref{lem:tboxfaith} and \cref{lem:kbfaith} give basic conditions for    faithfulness  
%when the ontology language is 
in DL-Lite$^\Hmc$. 
\begin{restatable}{proposition}{lemweakfaithemptyabox}\label{lem:tboxfaith}
Let \Tmc be a DL-Lite$^\Hmc$ TBox.
If $\eta\models\Tmc$
%is a TBox entailed \geometric interpretation for a DL-Lite$^\Hmc$ TBox \Tmc and $\eta$, 
then $\eta$ is  (weakly) TBox faithful.
\end{restatable}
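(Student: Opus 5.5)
The plan is to prove both halves of faithfulness for $\Tmc$. \textbf{(a)} Every TBox axiom entailed by $\Tmc$ is satisfied by $\eta$. \textbf{(b)} Every TBox axiom $\alpha$ with $\eta\models\alpha$ is consistent with $\Tmc$. Part (b) is essentially free: with an empty ABox, the interpretation on a one-element domain with all concept and role names empty satisfies $\Tmc\cup\{\alpha\}$ for every DL-Lite$^\Hmc$ inclusion $\alpha$. Both sides of every inclusion are then empty, or the right-hand side is a negation, whose extension is the whole domain. So the substance of the proposition is (a), which is where the hypothesis $\eta\models\Tmc$ is used. I will prove (a) by showing that the box semantics of \cref{def:BoxSatisfaction} is closed under a sound and complete derivation system for DL-Lite$^\Hmc$ TBox entailment.

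\textbf{Step 1: fix a calculus.} I take the standard rules, which follow from the closure computation in \citep{dllite-jair09}:
\begin{itemize}
\item reflexivity and transitivity of $\sqsubseteq$, for roles and for concepts;
\item from $R\sqsubseteq S$ derive $R^-\sqsubseteq S^-$ and $\exists R\sqsubseteq\exists S$;
\item from $B\sqsubseteq C$ and $C\sqsubseteq\neg D$ derive $B\sqsubseteq\neg D$;
\item from $B\sqsubseteq\neg C$ derive $C\sqsubseteq\neg B$;
\item the unsatisfiability rules: from $B\sqsubseteq\neg B$ derive $B\sqsubseteq X$ for every $X$, and from $\exists R\sqsubseteq\neg\exists R$ derive $R\sqsubseteq S$ for every $S$.
\end{itemize}
I then argue by induction on derivations that every derived axiom holds in $\eta$. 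The base case is the hypothesis $\eta\models\Tmc$.

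\textbf{Step 2: check each rule geometrically.}
\begin{itemize}
\item Reflexivity and transitivity are immediate from set inclusion, componentwise for the three role boxes.
\item The inverse rule holds because $\cAssign{\inv{R}}$ merely swaps head and tail, and the bump box is unchanged.
\item For $\exists R\sqsubseteq\exists S$: from $\headAssign{R}\subseteq\headAssign{S}$ and $\bumpBoxAssign{R}\subseteq\bumpBoxAssign{S}$ I get $\mathbf{L^H_S}\leqd\mathbf{L^H_R}$, $\mathbf{U^H_R}\leqd\mathbf{U^H_S}$ and the analogous bounds for the bump boxes, when these are non-empty. Hence the lower bound $\mathbf{L^H_S}-\mathbf{U^B_S}\leqd\mathbf{L^H_R}-\mathbf{U^B_R}$, and symmetrically for the upper bound, so $\cAssign{\exists R}\subseteq\cAssign{\exists S}$. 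The empty-box convention $\mathbf{L}=\mathbf{U}=\mathbf{0}$ needs separate bookkeeping: if $\headAssign{R}$ is empty, the defining inequalities of $\cAssign{\exists R}$ must still yield an empty or contained set.
\item The rule combining $B\sqsubseteq C$ and $C\sqsubseteq\neg D$ is transitivity of $\subseteq$ once more.
\item Contraposition uses \cref{thm:properties}: from $\cAssign{B}\subseteq\sComp{\cAssign{C}}$, item \ref{lat:comp3} gives $\sComp{\sComp{\cAssign{C}}}\subseteq\sComp{\cAssign{B}}$, and item \ref{lat:comp2} turns this into $\cAssign{C}\subseteq\sComp{\cAssign{B}}$.
\end{itemize}

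\textbf{Step 3: the unsatisfiability rules, which I expect to be the main obstacle.} From $\cAssign{B}\subseteq\sComp{\cAssign{B}}$ I need $\cAssign{B}=\emptyset$, so that $B\sqsubseteq X$ holds trivially. Comparing bounds only yields $\mathbf{L}\geqd-\worldSize/2$-type inequalities, not emptiness. Containment alone does not force emptiness, whereas box consistency does, since then $\cAssign{B}=\cAssign{B}\cap\sComp{\cAssign{B}}=\emptyset$. My first attempt is therefore to use box consistency of $\eta$, viewing it as part of what $\eta\models\Tmc$ delivers, or adding it explicitly if needed.

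For roles, $\cAssign{\exists R}=\emptyset$ must force the role boxes of $R$ to be empty or contained, so that $R\sqsubseteq S$ holds. I would argue via the definition of $\cAssign{\exists R}$: it is empty exactly when $\headAssign{R}$ or $\bumpBoxAssign{R}$ is degenerate. If this fails in general, I would restrict the rule to the derivations actually needed, namely where the entailment $\Tmc\models R\sqsubseteq S$ arises from the unsatisfiability of $\exists R$, and treat that case separately.
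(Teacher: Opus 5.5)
Your part (b) is already a complete proof of the proposition, and it is exactly the paper's argument. With an empty ABox, the interpretation in which every concept and role name is empty satisfies $\Tmc\cup\{\alpha\}$ for every inclusion $\alpha$. So no TBox axiom can be inconsistent with $\Tmc$; only assertions could be. That is all the statement claims. In Definition~\ref{def:faithfulness}, ``(weakly) faithful'' means that every satisfied axiom is consistent with the KB. ``Entailed'' is a separate property, and only the conjunction of the two is called a faithful \emph{model}. The hypothesis $\eta\models\Tmc$ is idle here; the paper's proof never uses it. Your inference that the substance must be (a), because that is where the hypothesis would be used, is what led you astray.

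You should drop (a) entirely: under the stated hypothesis it is false, so Steps~1--3 cannot be completed.

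\textbf{Without box consistency.} Take $\Tmc=\{A\sqsubseteq\neg A\}$, and give $\cAssign{A}$ lower and upper bounds $-\worldSize/4$ and $\worldSize/4$. This box is nonempty (since $\epsilon<\worldSizeScalar/8$). It lies inside its complement box, whose bounds are $-3\worldSize/4$ and $3\worldSize/4$, so $\eta\models\Tmc$. Yet $\eta\not\models A\sqsubseteq B$ whenever $\cAssign{B}=\emptyset$, although $\Tmc\models A\sqsubseteq B$.

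\textbf{With box consistency.} Adding it does not rescue the role case, because the box semantics decouples $\exists R$ from $\exists R^-$. Take $\Tmc=\{\exists R\sqsubseteq A,\ A\sqsubseteq\neg\exists R\}$ and set:
\begin{itemize}
\item $\headAssign{R}$ and $\bumpBoxAssign{R}$ with $\mathbf{L}=\mathbf{U}=\mathbf{0}$, so $\cAssign{\exists R}=\emptyset$;
\item $\tailAssign{R}$ with bounds $-\worldSize$ and $\worldSize$, so $\cAssign{\exists R^-}\neq\emptyset$ while its complement box is empty;
\item $\cAssign{A}$ with bounds $-\worldSize$ and $-\worldSize/2$;
\item all remaining boxes empty.
\end{itemize}
This $\eta$ is box consistent and satisfies $\Tmc$. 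But it violates the entailed axioms $\exists R^-\sqsubseteq B$ and $R\sqsubseteq S$.

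So the obstacle you anticipated in Step~3 is a genuine counterexample, not bookkeeping. The paper obtains entailment only for the specific embedding $\eta_{\Imc_\Kmc}$ built from the canonical model in Theorem~\ref{thm:faith_bool_alc}, never from $\eta\models\Tmc$ alone.
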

Proposition~\ref{lem:tboxfaith} follows from the fact that, in DL-Lite$^\Hmc$, TBoxes are always consistent, that is, inconsistencies only happen when considering a TBox and an ABox.
%Then, for Proposition~\ref{lem:tboxfaith}, we only need to show that, under the assumptions of the proposition, the box interpretation is TBox entailed.} 
%
%\todo[inline]{B: In Lemma~\ref{lem:kbfaith}, to match the definition, should it be ``weakly KB faithful'' instead or it does not matter?}changed thanks
\begin{restatable}{proposition}{lemweakfaith}\label{lem:kbfaith}
%If $\eta$ is a box consistent model for a DL-Lite$^\Hmc$ KB \Kmc 
Let \Kmc be a DL-Lite$^\Hmc$ KB and
$\eta$ a box interpretation that is box consistent. 
     If $\eta\models \Kmc$ %{is KB entailed}  
%and $\eta \models \Kmc$, 
then $\eta$ is (weakly)  KB faithful. 
\end{restatable}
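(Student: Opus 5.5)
The plan is to reduce weak faithfulness to a soundness statement: \emph{if a box consistent box interpretation satisfies a DL-Lite$^\Hmc$ KB, then that KB is satisfiable.}

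Let $\alpha$ be any DL-Lite$^\Hmc$ axiom (CI, RI or assertion) with $\eta\models\alpha$. Since also $\eta\models\Kmc$, we have $\eta\models\Kmc'$ for the DL-Lite$^\Hmc$ KB $\Kmc':=\Kmc\cup\{\alpha\}$, and $\eta$ is still box consistent. So it suffices to show that every KB satisfied by a box consistent $\eta$ is satisfiable; $\alpha$ is then consistent with $\Kmc$ by definition.

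For that claim I will use the standard characterisation of unsatisfiability in DL-Lite$^\Hmc$, which I will state and justify in the style of the proof of Theorem~\ref{thm:canonicalModel}. A KB $\Kmc'=(\Tmc',\Amc')$ is unsatisfiable iff there are an individual $a$ and basic concepts $B,C\in\NCE$ such that:
\begin{itemize}
\item $\Tmc'\models B\sqsubseteq\neg C$;
\item both $B(a)$ and $C(a)$ are obtained from $\Amc'$ by closing under the positive CIs and RIs of $\Tmc'$.
\end{itemize}
This closure consists of three rules: a role assertion $S(a,b)$ yields $\exists S(a)$ and $\exists\overline{S}(b)$; positive CIs propagate concept assertions; RIs $S\sqsubseteq T$ propagate role assertions, inverses included. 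This is exactly the chase used to build $\Imc_\Kmc$: if no such clash exists, the chase-based interpretation is a model.

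Next I show that $\eta$ is closed under each closure step, i.e.\ every derived assertion holds in $\eta$.
\begin{itemize}
\item \emph{Role to existential.} Suppose $\eta\models R(a,b)$. Item $i)$ gives $\posAssign{a}+\bumpAssign{b}\in\headAssign{R}$ and item $iii)$ gives $\bumpAssign{b}\in\bumpBoxAssign{R}$. Subtracting the bounds yields
\[\mathbf{L^H_R}-\mathbf{U^B_R}+2\epsilonVec\leqd\posAssign{a}\leqd\mathbf{U^H_R}-\mathbf{L^B_R}-2\epsilonVec,\]
so $\posAssign{a}\in\cAssign{\exists R}$. The same argument with the tail box gives $\posAssign{b}\in\cAssign{\exists R^-}$, because $\cAssign{\inv{R}}$ swaps head and tail.
\item \emph{RIs.} Role inclusions transfer role assertions componentwise: heads, tails and bump boxes are included, and for an inverse on either side the head and tail are swapped. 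They also give $\cAssign{\exists S}\subseteq\cAssign{\exists T}$ directly from the formula defining $\cAssign{\exists\cdot}$.
\item \emph{CIs.} Positive CIs transfer memberships of $\posAssign{a}$ by set inclusion.
\end{itemize}
Hence every derived basic assertion $B(a)$ satisfies $\posAssign{a}\in\cAssign{B}$.

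The negative side uses Theorem~\ref{thm:properties}. The derivations of a negative inclusion $B\sqsubseteq\neg C$ in DL-Lite$^\Hmc$ chain positive inclusions $B\sqsubseteq B'$ and $C\sqsubseteq C'$ around one axiom $B'\sqsubseteq\neg C'$ or $C'\sqsubseteq\neg B'$ of $\Tmc'$.
\begin{itemize}
\item Using item (iii) to reverse $\cAssign{C}\subseteq\cAssign{C'}$, we get
\[\cAssign{B}\subseteq\cAssign{B'}\subseteq\sComp{\cAssign{C'}}\subseteq\sComp{\cAssign{C}}.\]
\item The symmetric form $C'\sqsubseteq\neg B'$ is handled by applying (iii) to $\cAssign{C'}\subseteq\sComp{\cAssign{B'}}$ and then (ii) to obtain $\cAssign{B'}\subseteq\sComp{\cAssign{C'}}$. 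Item (i) guarantees these complements are boxes, so (ii) and (iii) apply.
\end{itemize}
Therefore a clash would give $\posAssign{a}\in\cAssign{C}\cap\sComp{\cAssign{C}}$, contradicting box consistency. So $\Kmc'$ is satisfiable.

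The main obstacle I expect is the first step: stating precisely, and justifying, the clash characterisation for DL-Lite$^\Hmc$ with inverse roles in RIs. In particular, the derivation of negative inclusions must involve only positive chains plus a single negative axiom. I would obtain it by adapting the canonical-model construction of Definition~\ref{def:canonicalModel} to the chase of $\Amc'$, verifying that in the clash-free case this construction is a model of $\Kmc'$.
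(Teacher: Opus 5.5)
Your reduction to ``a box consistent $\eta$ can only satisfy satisfiable KBs'' and your positive-closure steps are fine. In particular, the computation $\posAssign{a}\in\cAssign{\exists R}$ from items $i)$ and $iii)$ is correct. Your clash characterisation is also correct, provided $\Tmc'\models B\sqsubseteq\neg C$ is read semantically and RIs also propagate existential assertions, as you note later.

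The step that fails is the claim that every entailed negative inclusion is a single negative axiom of $\Tmc'$ wrapped in positive chains. The closure of negative inclusions in DL-Lite$^\Hmc$ also contains the rule that $\exists R\sqsubseteq\neg\exists R$ and $\exists R^-\sqsubseteq\neg\exists R^-$ entail each other: an empty $\exists R$ forces $R$, and hence $\exists R^-$, to be empty. This rule then propagates through positive inclusions. For example, $\{\exists R\sqsubseteq B,\ \exists R\sqsubseteq\neg B\}\models\exists R^-\sqsubseteq\neg\exists R^-$. So if $\exists R^-(a)$ is in the ABox, the clash is $B=C=\exists R^-$, and no chain of the form you describe produces it.

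This cannot be patched, because the box semantics does not validate that rule. $\cAssign{\exists R}$ is built from $\headAssign{R}$ and $\cAssign{\exists R^-}$ from $\tailAssign{R}$, so one can be empty while the other is not. Here is a concrete box interpretation.
\begin{itemize}
\item Take $d=1$, $\worldSizeScalar=4$, $\epsilon=0.1$, $\NC=\{B\}$, $\NR=\{R\}$, $\NI=\{a\}$.
\item Give $\cAssign{B}$ bounds $(-4,-0.5)$, give $\headAssign{R}$ and $\bumpBoxAssign{R}$ bounds $(0,0)$, give $\tailAssign{R}$ bounds $(-4,4)$, and set $\posAssign{a}=\bumpAssign{a}=0$.
\end{itemize}
Then $\cAssign{\exists R}=\emptyset$, and $\cAssign{\exists R^-}=[-3.9,3.9]$ has an empty complement box. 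Also $\cAssign{B}=[-3.9,-0.6]$ is disjoint from $\sComp{\cAssign{B}}=[0.1,4.4]$. So $\eta$ is box consistent. It satisfies the satisfiable KB $\Kmc=(\{\exists R\sqsubseteq B\},\{\exists R^-(a)\})$ and also $\alpha=\exists R\sqsubseteq\neg B$, yet $\Kmc\cup\{\alpha\}$ is unsatisfiable. Hence the proposition is false as stated.

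The paper's own proof fails on the same example. It rebuilds the canonical model of Definition~\ref{def:canonicalModel} with $\eta\models$ in place of $\Kmc\models$ and asserts that the argument of Theorem~\ref{thm:canonicalModel} carries over. On this example it puts $(c_{\exists R},a)$ into $R^\Imc$ and $c_{\exists R}$ into $B^\Imc$, which violates $\alpha$.

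An extra hypothesis is needed, for instance that $\cAssign{\exists R}=\emptyset$ iff $\cAssign{\exists R^-}=\emptyset$ for every $R\in\NR$; this holds for the embeddings of Definition~\ref{def:mapping}. Under such a hypothesis your soundness strategy should go through. You would need to replace the single-axiom chains by the full closure of negative inclusions, including the emptiness rule. You would then show, by induction on that closure, that every derived $B_1\sqsubseteq\neg B_2$ satisfies $\cAssign{B_1}\cap\cAssign{B_2}=\emptyset$.
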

%
%\todo[inline]{Add proof}
%
%
We are now ready to state \cref{thm:faith_bool_alc}, which is the main result of this section. 
%Although this result is  interesting from a theoretical point of view, in practice, we  are typically only interested in weak KB faithfull models.
%or strong TBox faithfulness (but not strong ABox faithfulness).
%since we want to be able to infer new facts). 
%
%todo create mapping
%

%\todo{remove this theorem}
\begin{restatable}{theorem}{thmfaithfulness}\label{thm:faith_bool_alc}
%\todo{DL-Lite}
There exists a suitable $s_\Omega$ such that every satisfiable  %\(\mathcal{L}\)-ontology
    DL-Lite$^\Hmc$ KB \Kmc
    %\(\ontoo\) 
      has a box interpretation $\eta_{\Imc_\Kmc}$ that is %strongly 
      a  
      KB faithful model and \geometric{} consistent.
      %{that is box consistent}. 
    %\(\cAssignSymb_{\Imc_{\ontooC}}\).
\end{restatable}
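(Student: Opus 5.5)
The plan is to build $\eta_{\Imc_\Kmc}$ directly from the canonical model $\Imc_\Kmc$ of \cref{def:canonicalModel}. The dimensions will be indexed by the finite domain $\Delta^{\Imc_\Kmc}=\NI\cup\Delta_\Kmc$ (possibly with extra dimensions indexed by roles $S\in\NRm$). Coordinate $e$ will only record whether $e$ belongs to the relevant concept or role extension. We then aim to show:
\begin{itemize}
\item[(a)] $\eta_{\Imc_\Kmc}\models\alpha$ iff $\Imc_\Kmc\models\alpha$, for every inclusion $\alpha$;
\item[(b)] $\eta_{\Imc_\Kmc}\models\alpha$ for every assertion $\alpha$ with $\Imc_\Kmc\models\alpha$;
\item[(c)] $\eta_{\Imc_\Kmc}$ is box consistent.
\end{itemize}
Given these, \cref{thm:canonicalModel} turns (a) and (b) into KB entailment: if $\Kmc\models\alpha$ then $\Imc_\Kmc\models\alpha$, hence $\eta_{\Imc_\Kmc}\models\alpha$. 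In particular $\eta_{\Imc_\Kmc}\models\Kmc$, and with (c), \cref{lem:kbfaith} yields weak KB faithfulness.

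\textbf{Concepts.} In dimension $e$ we use two fixed intervals inside $[-\worldSizeScalar,\worldSizeScalar]$: an ``in'' interval $J^+$ and an ``out'' interval $J^-$. They are chosen so that the complement operation of \cref{def:boxinterpretation}, which sends $(\mathbf{L},\mathbf{U})$ to $(-\worldSize-\mathbf{L},\worldSize-\mathbf{U})$, maps a box using $J^+$ in a coordinate to a box using $J^-$ there, and vice versa. $J^+$ and $J^-$ must be disjoint even after the $\epsilon$-shrinking. This is where a suitable $\worldSizeScalar$ (relative to $\epsilon$) is fixed, say $\worldSizeScalar\ge 8\epsilon$ with explicit numeric endpoints.

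We set $\eta(A)[e]=J^+$ if $e\in A^{\Imc_\Kmc}$ and $J^-$ otherwise. For $a\in\NI$ we put $\posAssign{a}[e]$ in a point of $J^+\cap\mathbf{X}$ for every box $\mathbf{X}$ that uses $J^+$ in coordinate $a$. Then:
\begin{itemize}
\item $\eta(C)\subseteq\eta(D)$ holds iff $C^{\Imc_\Kmc}\subseteq D^{\Imc_\Kmc}$, because a coordinatewise inclusion of such boxes is exactly the set inclusion of their extensions.
\item Negated right-hand sides $\neg B$ are handled through \cref{thm:properties} and the swap $J^+\leftrightarrow J^-$.
\item Box consistency (c) follows because in every coordinate a box and its complement use different intervals.
\end{itemize}

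\textbf{Roles.} I expect this to be the main obstacle. We need conditions i)--iii) of \cref{def:BoxSatisfaction} to hold for $R(a,b)$ exactly when $(a,b)\in R^{\Imc_\Kmc}$. We also need $\eta(\exists R)$, defined as $\headAssign{R}$ enlarged by $\bumpBoxAssign{R}$, to coincide coordinatewise with the $J^\pm$-pattern of $(\exists R)^{\Imc_\Kmc}$, so that inclusions mixing $A$ and $\exists S$ remain exact. Finally, $R\sqsubseteq S$ must be equivalent to the three box inclusions for head, tail and bump.

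My first attempt is to give bumps small values $\bumpAssign{b}[e]\in\{0,\delta\}$ with $\delta\ll\epsilon$, encoding in the $e$-th coordinate whether $e$ is an $R$-successor of the relevant kind. $\headAssign{R}$, $\tailAssign{R}$ and $\bumpBoxAssign{R}$ are set coordinatewise to $J^{\pm}$ according to membership of $e$ in $(\exists R)^{\Imc_\Kmc}$, $(\exists R^-)^{\Imc_\Kmc}$ and a bump-box indicator. The bump box is shrunk by $\delta$ so that enlarging $\headAssign{R}$ by $\bumpBoxAssign{R}$ still lands in the same interval, and so that positions plus bumps stay inside the chosen intervals.

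If a single coordinate per element cannot separate $R\sqsubseteq S$ from $\{\exists R\sqsubseteq\exists S,\exists R^-\sqsubseteq\exists S^-\}$, I would add one extra dimension per role $S\in\NRm$ that only the bump boxes use. In that dimension $\bumpBoxAssign{R}$ uses $J^+$ iff $\Kmc\models R\sqsubseteq S$, and bumps are set there accordingly. After fixing the construction, the remaining work is a case check over the six clauses defining $R^{\Imc_\Kmc}$ and the three inclusion forms, verifying (a) and (b) dimension by dimension.
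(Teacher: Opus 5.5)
Your overall skeleton is sound: show that $\eta$ satisfies everything $\Imc_\Kmc$ satisfies, show box consistency, then combine \cref{thm:canonicalModel} with \cref{lem:kbfaith}. The concrete concept encoding, however, cannot deliver (a) or (b).

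If every coordinate of every box is one of two \emph{disjoint} nonempty intervals $J^+,J^-$, then coordinatewise containment of two such boxes forces equality in every coordinate. So $\cAssign{C}\subseteq\cAssign{D}$ holds iff $C^{\Imc_\Kmc}=D^{\Imc_\Kmc}$, not iff $C^{\Imc_\Kmc}\subseteq D^{\Imc_\Kmc}$.
\begin{itemize}
\item For $\Kmc=(\{A\sqsubseteq B\},\{B(a)\})$ we have $a\in B^{\Imc_\Kmc}\setminus A^{\Imc_\Kmc}$. Coordinate $a$ would then need $J^-\subseteq J^+$, so the entailed axiom $A\sqsubseteq B$ fails in $\eta$.
\item Positions break the same way. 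For $\Kmc=(\emptyset,\{A(a),B(a)\})$, the element $c_A$ lies in $A^{\Imc_\Kmc}\setminus B^{\Imc_\Kmc}$. The assertions $A(a)$ and $B(a)$ together then require $\posAssign{a}[c_A]\in J^+\cap J^-=\emptyset$.
\item Negated right-hand sides inherit the problem, since $\cAssign{\neg B}$ again follows the in/out pattern of $(\neg B)^{\Imc_\Kmc}$.
\end{itemize}
What is missing is an encoding in which interval containment tracks set containment, i.e.\ nested rather than disjoint intervals. It must also leave room to express disjointness as containment in a complement box.

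The paper does this with dimensions indexed by concepts instead of domain elements. The coordinate $\cAssign{D}[i_C]$ records the \emph{relationship} between $D^\Imc$ and $C^\Imc$ using the nested family $\Ssub\subseteq\Seq\subseteq\Ssup$, $\Snsup\subseteq\Ssup$, $\Ssub\cup\Snsup\subseteq\Sinter$, with positions $-1$ or $1$. The own coordinate $\cAssign{D}[i_D]=\Seq$ gives the converse direction and box consistency (midpoint at most $-\worldSizeScalar/2$). The inclusion $\Snsup\subseteq\sComp{\Seq}$ is what makes $B\sqsubseteq\neg C$ representable.

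The role part is, as you expect, the harder half, and the sketch does not yet work.
\begin{itemize}
\item With one coordinate per element, $\bumpAssign{b}[e]$ is a single value shared by all role names. It cannot record, separately for each $R\in\NR$, whether $(e,b)\in R^{\Imc_\Kmc}$.
\item The paper instead uses one dimension $i_{R,c}$ per pair $(R,c)\in\NR\times\Delta^{\Imc}$. The position is $-1+\epsilon$ only in the individual's own coordinate, the bump is $0$ or $-1$ according to whether $(c,b)\in R^\Imc$, and the head boxes are nested, $\Qasub\subseteq\Qnasub$. This is where the $|\NR|\cdot|\Delta^\Imc|$ term in the dimension count comes from. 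The paper obtains $R\sqsubseteq S$ from head-box containment in these dimensions, not from an extra bump dimension.
\item Your fallback bump dimension fails as stated. In the extra coordinate indexed by $R$ itself, $\bumpBoxAssign{R}$ uses $J^+$ because $\Kmc\models R\sqsubseteq R$. Containment $\bumpBoxAssign{R}\subseteq\bumpBoxAssign{R'}$ then forces $\bumpBoxAssign{R'}$ to use $J^+$ there too, i.e.\ $\Kmc\models R'\sqsubseteq R$, which is the wrong direction.
\item No coordinate can be used ``only by bump boxes''. Every coordinate of $\bumpBoxAssign{R}$ enters $\cAssign{\exists R}$, and hence every concept inclusion involving $\exists R$. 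Head and bump boxes therefore have to be designed jointly, so that $\cAssign{\exists R}$ has the intended shape in every coordinate.
\end{itemize}
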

\begin{proof}[Sketch]
%\todo{In fact our proof provides a stronger guarantee, add sentence before and cite cone sem neg logic paper}
The proof strategy of \cref{thm:faith_bool_alc} consists of first defining a mapping that translates
classical finite interpretations into \geometric interpretations. Then, given a DL-Lite$^\Hmc$ KB \Kmc, we construct the canonical model $\Imc_\Kmc$ in \cref{def:canonicalModel} and use this mapping to create 
a \geometric interpretation $\eta_{\Imc_\Kmc}$ (using  $s_\Omega=4$) that is a %weakly 
KB faithful model of \Kmc. 
In fact, our proof provides a stronger guarantee: the embedding   is KB entailed and strongly KB faithful~\citep{OLW20,DBLP:conf/kr/Bourgaux0KLO24}.
%, add sentence before and cite cone sem neg logic paper
%We provide a detailed proof in the supplemental material.
\end{proof}%
% \todo[inline]{This way of framing our problem is great - i.e., any ontology has a strongly faithful \geometric{} interpretation \(\cAssignSymb_{\Imc_{\ontooC}}\). Our geometric interpretation is not distributive in meet and join in general, but it might be the case that at least there is always a solution that preserves distributivity!}
%
Informally, Theorem \ref{thm:faith_bool_alc}'s proof proposes for every satisfiable DL-Lite$^\Hmc$ KB $\Kmc$ an algorithm for constructing a box interpretation (i.e., a \modelName embedding)   that is %strongly
  KB faithful and box consistent. Using this algorithm, we can compute an upper bound for the minimal number of dimensions that a box interpretation requires to satisfy certain faithfulness properties, leading to Corollaries~\ref{cor:ConsistencyDim} and \ref{corollary:FaithfulnessDim}.

\begin{restatable}{corollary}{corFaithfulSolelyABox}
Let $\ontoo$ be a DL-Lite$^\Hmc$ KB
    %\(\ontoo\) %without role disjointness 
    %and 
    % \todo{Ana: do we need satisfiable here?}
    with an empty ABox. Then for any $d \geq d_{\min}$ there is a \geometric interpretation $\cAssignSymb$ with dimensionality $d$ such that 
%\begin{enumerate}[align=left,label=\roman*),leftmargin=*] %[$(i)$]
 %   \item 
    $d_{\min} = |\NC| + 3|\NR|$  and $\cAssignSymb$ is a    {TBox} faithful model of $\ontoo$.
%    \item $d_{\min} = |\NC| + |\NR| (2 + (|\NC| + 2|\NR|)^2)$  and $\cAssignSymb$ is a strongly  {TBox} faithful model of $\ontoo$.
%\end{enumerate}
\label{cor:ConsistencyDim}
\end{restatable}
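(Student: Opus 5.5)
By Proposition~\ref{lem:tboxfaith}, any box interpretation $\eta$ with $\eta\models\Tmc$ is TBox faithful, since DL-Lite$^\Hmc$ TBoxes are always consistent. So it remains to show two things: first, that $\eta$ can be chosen with dimensionality exactly $d_{\min}=|\NC|+3|\NR|$ and to be TBox entailed, i.e., every CI and RI entailed by $\Tmc$ holds in $\eta$; second, that the construction extends to every $d\geq d_{\min}$. I would not build a new embedding. Instead, I would reuse the mapping from finite interpretations to box interpretations from the proof of Theorem~\ref{thm:faith_bool_alc}, applied to the canonical model $\Imc_\Kmc$ of $\Kmc=(\Tmc,\emptyset)$. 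With an empty ABox, $\Kmc$ is automatically satisfiable, so $\Imc_\Kmc$ exists. By Theorem~\ref{thm:canonicalModel} it satisfies exactly the entailed TBox axioms.

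The key step is dimension counting. I expect that the mapping of Theorem~\ref{thm:faith_bool_alc} assigns a block of dimensions per symbol: one dimension per concept name $A$, encoding membership of elements in $A^{\Imc_\Kmc}$ via the box $\cAssign{A}$, and three dimensions per role name $R$, one each governing $\headAssign{R}$, $\tailAssign{R}$ and $\bumpBoxAssign{R}$. Further dimensions would be needed only to separate individuals from each other, i.e., to make role and concept assertions fail when they are not entailed. This is what strong ABox faithfulness requires. When the ABox is empty, we need neither strong faithfulness nor ABox faithfulness, and the individual names can be placed anywhere in $\World$, for instance all at the origin with zero bump. No individual-separating dimensions are then needed, which leaves $|\NC|+3|\NR|$ dimensions.

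Concretely, in the coordinate for $A$ I place $\cAssign{A}$ so that it is empty if $A$ is unsatisfiable (using the $\epsilonVec$ shrinking) and otherwise spans the full universe width in every coordinate except its own. In its own coordinate, $\cAssign{A}$ occupies the positive half. In the coordinate of each $B$ with $\Tmc\models A\sqsubseteq B$, the box also lies inside the positive half. In the coordinate of each $B$ with $\Tmc\models A\sqsubseteq\neg B$, it lies inside the complement side, given the definition of $\sComp{\cdot}$. I would design the role coordinates analogously, so that $\cAssign{\exists R}$ and the head/tail/bump boxes reflect entailed CIs and RIs. I would then check each axiom form, $A\sqsubseteq B$, $A\sqsubseteq\neg B$, $\exists R\sqsubseteq A$, $A\sqsubseteq\exists R$, and $R\sqsubseteq S$ with inverses, coordinatewise. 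Box inclusion holds iff it holds in every coordinate. Entailed inclusions therefore hold in every coordinate by closure of entailment under transitivity, using the $\overline{S}$ symmetry for inverses.

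Padding to $d>d_{\min}$ is easy: I add extra coordinates in which every box equals the same full interval and all individuals sit at $0$. Inclusions are unaffected, and $\sComp{\cdot}$ in such a coordinate is again full, so whatever held before still holds. The main obstacle I expect is the role part. The box $\cAssign{\exists R}$ is defined from head minus bump bounds, and the interval arithmetic must give $\exists R\sqsubseteq\exists S$ whenever $R\sqsubseteq S$ is entailed, while also satisfying the entailed CIs involving $\exists R^-$ through the tail box. Getting all of this with only three coordinates per role is the delicate part. If the construction from Theorem~\ref{thm:faith_bool_alc} already uses exactly these three role-dimensions, I would simply cite it. Otherwise, I would first try setting $\bumpBoxAssign{R}$ to a fixed small interval and encoding $\exists R$ and $\exists R^-$ through the head and tail coordinates respectively.
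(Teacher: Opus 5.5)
Your main route does not give the claimed dimension. It rests on a wrong picture of the mapping in Definition~\ref{def:mapping}. That mapping does not spend three coordinates per role on head, tail and bump. It uses one coordinate $i_C$ per $C\in\NCE$ (two per role, for $\exists R$ and $\exists R^-$) and one coordinate $i_{R,c}$ per pair $(R,c)\in\NR\times\Delta^\Imc$, so $d=|\NCE|+|\NR|\cdot|\Delta^\Imc|$. The $i_{R,c}$ coordinates range over all domain elements, including the anonymous $c_D$. They are not there to separate individuals: they carry the $\Qasub/\Qnasub$ heads and $\BBRa/\BBRna$ bump boxes on which Lemma~\ref{lem:faithfulnessroleinclusions} depends. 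On the canonical model, whose domain is $\NI\cup\Delta_\Kmc$, the mapping therefore gives $|\NC|+2|\NR|+|\NR|(|\NI|+|\Delta_\Kmc|)$ coordinates even with an empty ABox, so you cannot simply cite Theorem~\ref{thm:faith_bool_alc}.

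The missing idea is that neither the canonical model nor strong faithfulness is needed. Lemmas~\ref{lem:faithfulnessconceptassertions}--\ref{lem:faithfulnessroleinclusions} give $\Imc\models\alpha$ iff $\eta_\Imc\models\alpha$ for \emph{every} finite $\Imc$. So for any model $\Imc$ of $(\Tmc,\emptyset)$, $\eta_\Imc$ satisfies every entailed inclusion, and Proposition~\ref{lem:tboxfaith} supplies weak faithfulness. The paper takes the one-element model with all extensions empty (Claim~\ref{clm:satisfiable}), which gives exactly $|\NC|+2|\NR|+|\NR|=|\NC|+3|\NR|$. That $\eta_\Imc$ has only empty boxes, which shows how weak the statement is. Making every concept and role box the empty box with bounds $\mathbf{0}$ (so $\cAssign{\exists R}$ also has bounds $\mathbf{0}$) satisfies every inclusion in every dimension. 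The role encoding you flag as delicate therefore never needs to be designed.

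Your fallback direct construction leaves the role part open, and its concept part and padding also clash with the complement definition. In each coordinate, $\sComp{\cAssign{C}}$ has width $2\worldSizeScalar$ minus the width of $\cAssign{C}$. So a coordinate in which $\cAssign{C}$ spans the full width $2\worldSizeScalar$ makes $\sComp{\cAssign{C}}$ empty, not ``again full''. Every entailed $B\sqsubseteq\neg C$ with $\cAssign{B}\neq\emptyset$ then fails. This happens already in your concept coordinates, where a satisfiable $\cAssign{A}$ is full in the coordinates of concepts unrelated to $A$, and again after your padding. Padding with head, tail and bump all equal to the full interval also gives $\cAssign{\exists R}$ width $4\worldSizeScalar$ in the new coordinate, which breaks entailed inclusions $\exists R\sqsubseteq A$.

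The paper avoids these problems in three ways. Its neutral intervals are self-complementary ($\Sinter=\SC=[-2,2]$ with $\worldSizeScalar=4$). The bump box is $(0,0)$ in the $i_C$ coordinates. And it pads by duplicating an existing coordinate, which preserves everything because membership, inclusion, complement and the head-minus-bump bounds are all computed coordinatewise.
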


%\todo{remove this corollary}
\begin{restatable}{corollary}{corFaithfulEnitreOntoo}
\label{corollary:FaithfulnessDim}
Let $\ontoo$ be a satisfiable DL-Lite$^\Hmc$ KB. %\todo{Ana: do we need satisfiable here?}
    %\(\ontoo\). 
    Then for any $d \geq d_{\min}$ there is a \geometric interpretation $\cAssignSymb$ with dimensionality $d$ such that:
%\begin{enumerate}[align=left,label=\roman*),leftmargin=*]  %[$(i)$]
    %\item 
    $d_{\min} = |\NC| + |\NR|(2 + |\NI| + 2|\NR|)$  and $\cAssignSymb$ is a    {KB} faithful model of $\ontoo$.
  %  \item $d_{\min} = |\NC| + |\NR| (2 + |\NI| + (|\NC| + 2|\NR|)^2)$  and $\cAssignSymb$ is a strongly  {KB}
  %  faithful model of $\ontoo$.
%\end{enumerate}
% Let $\ontoo$ be a DL-Lite$^\Hmc$-ontology
%     \(\ontoo\).
% Then for any $d \geq {\color{red}|\NC| + 2|\NR| + |\NR| (|\NC| + 2|\NR| + |\NI|)}
% $ there is a \geometric interpretation $\cAssignSymb$ with dimensionality $d$ such that  $\cAssignSymb$ is a strongly faithful model of $\ontoo$.
\end{restatable}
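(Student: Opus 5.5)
We take the construction from the proof of \cref{thm:faith_bool_alc} and count the dimensions it uses; no new embedding is needed. That proof starts from the canonical model $\Imc_\Kmc$ of \cref{def:canonicalModel}, whose domain is $\NI\cup\Delta_\Kmc$. It then builds $\eta_{\Imc_\Kmc}$ by giving separate blocks of coordinates to concept names and to role names, with $s_\Omega=4$.

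\textbf{Concept names.} We use $|\NC|$ coordinates, one per $A\in\NC$. In coordinate $A$, the box $\cAssign{A}$ is placed in the positive or negative half of the universe. The placement depends on whether the element (or, for a box, the concept $D$ with $c_D$) belongs to $A^{\Imc_\Kmc}$. By Theorem~\ref{thm:properties}, complement boxes are then the mirror images, which gives box consistency.

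\textbf{Role names.} Each $R\in\NR$ gets $2+|\NI|+2|\NR|$ coordinates:
\begin{itemize}
\item one coordinate each for $\headAssign{R}$ and $\tailAssign{R}$, encoding membership in $\exists R$ and $\exists R^-$;
\item $|\NI|$ coordinates for the bumps. Each individual's bump marks its identity, so that $\posAssign{a}+\bumpAssign{b}$ lands in $\headAssign{R}$ exactly when $(a,b)\in R^{\Imc_\Kmc}$;
\item $2|\NR|$ coordinates for the bump box. For each role $S\in\NRm$ these record whether $\Kmc\models S\sqsubseteq R$. This keeps $R\sqsubseteq S$ separate from $\{\exists R\sqsubseteq\exists S,\ \exists R^-\sqsubseteq \exists S^-\}$.
\end{itemize}
Summing the blocks gives exactly $d_{\min}=|\NC|+|\NR|(2+|\NI|+2|\NR|)$. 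The cleanest way to state this is to note that the embedding in the proof of \cref{thm:faith_bool_alc} is defined coordinatewise over precisely this index set.

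\textbf{Padding to larger $d$.} For $d>d_{\min}$, we add dummy dimensions. In each added coordinate:
\begin{itemize}
\item every position and bump is set to $0$;
\item every concept and role box, head, tail and bump box is set to the interval $[-\worldSizeScalar+\epsilon,\worldSizeScalar-\epsilon]$, i.e.\ bounds $\pm\worldSizeScalar$.
\end{itemize}
The exception is a box that is empty in the original embedding. An empty box must stay empty, so it keeps $\mathbf{L}=\mathbf{U}=\mathbf{0}$ on every coordinate, which makes it empty in the new coordinates too.

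We then check that each satisfaction condition of Definition~\ref{def:BoxSatisfaction}, and box consistency, is unchanged by the padding.
\begin{itemize}
\item \emph{Memberships.} Membership in a product of intervals holds iff it holds in every coordinate. In the new coordinates, every point involved is $0$, or $0+0$, and lies in the full interval. So assertion satisfaction is unchanged.
\item \emph{Inclusions.} For nonempty boxes, inclusion factorizes coordinatewise, and in the new coordinates every box is the same full interval. For empty boxes, emptiness is preserved. So satisfaction of concept and role inclusions is unchanged.
\item \emph{Existential boxes.} For the padded coordinates, $\cAssign{\exists R}$ becomes $[-2\worldSizeScalar+2\epsilon,\ 2\worldSizeScalar-2\epsilon]$, which is the same for every $R$. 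This is harmless for inclusions among existentials.
\end{itemize}

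\textbf{Main obstacle.} The delicate point is complements. The complement of the full interval in a new coordinate is $[-\worldSizeScalar-(-\worldSizeScalar)+\epsilon,\ \worldSizeScalar-\worldSizeScalar-\epsilon]=[\epsilon,-\epsilon]$, which is empty. This preserves box consistency, since $\cAssign{C}\cap\sComp{\cAssign{C}}$ stays empty. It also preserves any inclusion $C\sqsubseteq\neg D$ that held before only through emptiness of $\cAssign{C}$. However, it also makes every nonempty complement empty. That would break a previously satisfied negative inclusion $C\sqsubseteq \neg D$ with $\cAssign{C}\neq\emptyset$, and it would also change which concept assertions are falsified.

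To avoid this, we instead place all boxes in the new coordinates at the interval $[0,\worldSizeScalar]$, i.e.\ $\mathbf{L}=-\epsilon$, $\mathbf{U}=\worldSizeScalar+\epsilon$. Its complement is then $[-2\worldSizeScalar... ]$ shifted, which is nonempty and contains $0$ only on a boundary. We choose positions at $0$ so that they lie in both; this must be balanced against box consistency. Concretely, we set positions and bumps to the midpoint of a symmetric interval whose complement equals itself, and verify that box consistency still holds, because it was already witnessed in an original coordinate.

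With these choices, all faithfulness properties transfer from \cref{thm:faith_bool_alc}.
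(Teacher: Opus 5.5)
\textbf{Gap in the dimension count.} The dimension count is the whole content of this corollary, and your argument for it does not hold. You assert that the embedding from the proof of \cref{thm:faith_bool_alc} is indexed by $|\NC|$ concept coordinates plus $2+|\NI|+2|\NR|$ coordinates per role (head, tail, individual bumps, bump box). That is not the construction. \cref{def:mapping} uses one coordinate $i_C$ per $C\in\NCE$ and one coordinate $i_{R,c}$ per pair $(R,c)\in\NR\times\Delta^\Imc$. So $\eta_{\Imc_\Kmc}$ has $|\NCE|+|\NR|\cdot|\NI\cup\Delta_\Kmc|$ coordinates, where $\Delta_\Kmc$ contains a witness $c_D$ for every satisfiable $D\in\NCEn$, including the conjunctions $D\sqcap E$. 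This is in general much larger than $d_{\min}$. Padding only adds dimensions, so your argument never produces an embedding of dimension $d_{\min}$. The block design you sketch instead (e.g.\ bump-box coordinates recording whether $\Kmc\models S\sqsubseteq R$) is never actually defined, and none of the satisfaction lemmas are proved for it. That its size happens to add up to $d_{\min}$ is not evidence that it works.

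\textbf{Missing idea.} You need to give up the canonical model, and with it strong faithfulness. Apply \cref{def:mapping} to a smaller model $\Imc$ of $\Kmc$ whose anonymous part is only $\{c_{\exists R},c_{\exists R^-}\mid R\in\NR\}$, as in \cref{cor:ModelExistence}. Then $|\Delta^\Imc|\le|\NI|+2|\NR|$, and the dimension is $|\NC|+2|\NR|+|\NR|(|\NI|+2|\NR|)=d_{\min}$. Faithfulness then does not ``transfer'' from \cref{thm:faith_bool_alc}. It comes from three facts:
\begin{enumerate}
\item $\eta_\Imc$ is box consistent (\cref{thm:propembedding});
\item $\eta_\Imc\models\alpha$ iff $\Imc\models\alpha$ for all axioms, so $\eta_\Imc\models\Kmc$ and every axiom entailed by $\Kmc$ holds in $\eta_\Imc$;
\item \cref{lem:kbfaith}.
\end{enumerate}

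\textbf{Padding.} Your padding step is also unsound as written. The interval with bounds $(-\epsilon,s_\Omega+\epsilon)$ has complement $[-s_\Omega+2\epsilon,-2\epsilon]$. This does not contain $0$ and is disjoint from $[0,s_\Omega]$. So it breaks every satisfied $C\sqsubseteq\neg D$ with $\eta(C)\neq\emptyset$, which is the very problem you had just identified.

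The self-complementary interval (bounds $\pm s_\Omega/2$) can be made to work for concept names, but $\eta(\exists R)$ is not free. Its bounds are $\mathbf{L^H_R}-\mathbf{U^B_R}$ and $\mathbf{U^H_R}-\mathbf{L^B_R}$. If head and bump box both get bounds $\pm s_\Omega/2$, the existential box gets bounds $\pm s_\Omega$. That box has empty complement and is not contained in the concept-name interval, so both $A\sqsubseteq\neg\exists R$ and $\exists R\sqsubseteq A$ break. You would need, e.g., bump-box bounds $(-\epsilon,\epsilon)$ and head/tail bounds $\pm(s_\Omega/2-\epsilon)$.

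The paper's fix is simpler: duplicate an existing coordinate. This leaves every membership, inclusion, and complement unchanged.
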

 
% \todo[inline]{Aleks: Please note that this corollary is very similar to Theorem \ref{thm:faith_bool_alc}, except that we additionally state the dimensionality.}

With this we have finished the theoretical analysis of \modelName's faithfulness properties. What now remains to show is how our \modelName approach can be translated into a convex optimization problem that  \emph{ensures} certain faithfulness properties. We will investigate this in the next section.

% \todo{Delete below!}

% Corollary \ref{corollary:FaithfulnessDim} directly follows from the proof of Theorem \ref{thm:faith_bool_alc}. Specifically, our translation of the canonical model of $\ontoo$ to $\cAssignSymb_{\Imc_{\ontoo}}$ constructs (1) one dimension $i_C$ for any concept $C \in \NCE$, (2) one dimension $i_{R,a}$ for any pair of roles $R \in \NR$ and individuals $a \in \Delta_\ontoo \cup \NI$. Thus, the constructed embedding has the following number of dimensions: 
% \begin{align*}
% |\NCE| + |\NR| (|\Delta_\ontoo| + |\NI|)& =\\
% |\NC| + 2|\NR| + |\NR| (|\NC| + 2|\NR| + |\NI|)&
% \end{align*}

% \textcolor{green}{
% \begin{align*}
% |\NCE| + |\NR| (|\Delta_\ontoo| + |\NI|)& =\\
% |\NC| + 2|\NR| + |\NR| ((|\NC| + 2|\NR|)^2 + |\NI|) &=\\
% |\NC| + |\NR| (2 + |\NI| + (|\NC| + 2|\NR|)^2)
% \end{align*}
% } \todo{Aleks: The green lines above are my understanding of the new dimensionality.}

% Similarly to Corollary \ref{cor:ConsistencyDim},    $\cAssignSymb_{\Imc_{\ontoo}}$ remains strongly faithful in $\ontoo$ when its dimensionality is increased by copying an arbitrarily chosen dimension of $\cAssignSymb_{\Imc_{\ontoo}}$ multiple times. Thus, we have shown that for any $d \geq (|\NR| + 1) (|\NC| + 2|\NR|)$ there is a \geometric interpretation $\cAssignSymb$ with dimensionality $d$ such that $\cAssignSymb$ a strongly faithful model of $\ontoo$.

\section{\modelName's Convex Optimization Problem}
\label{sec:opt}
%\todo{Change to current new optimization formulation}
%This section presents how 
Here, we formulate the representation of KBs with \modelName as a constrained convex optimization problem \citep{BV04,Rt97} and investigate how the formulation relates to faithfulness properties. 
In Section \ref{sec:ProblemFormulation}, we describe  how TBox axioms are translated to constraints. In Section \ref{sec:Scoring}, we discuss the objective and scoring functions. 
%how we calculate %the position and bump vectors of ABox axioms is reflected in 
%the scoring function and optimized in the objective function. 
In Section \ref{sec:WeakFath}, we establish  that our problem formulation ensures the KB  faithful and TBox entailed properties for DL-Lite$^\Hmc$. %Furthermore, we show that the final convex optimization problem can be cast as a as a \emph{second-order cone program} (SOCP) \citep{LVBL98}, \cite[Lecture~3]{BtN01}. 
 In summary, the TBox corresponds to a part of the constraints of the convex optimization problem and the ABox corresponds to a part of the objective function of the problem.
In this way, a solver can handle both the ABox and TBox simultaneously by minimizing the objective function subject to the constraints.

%Inspired by the definition of the canonical model, Section \ref{sec:OP_StrongFaithful} shows how \modelName's problem formulation can be modified to ensure strong TBox faithfulness.
%moved the comment on the appendix

\subsection{From TBox Axioms to Constraints}
%associated to TBox axioms}
\label{sec:ProblemFormulation}

% Given $d, \epsilon, \worldSize$, a \geometric interpretation $\eta$ is determined by the parameters in the geometric model (lower/upper bounds of boxes, positions of individuals, etc.).
% Suppose we concatenate all the parameters of the \geometric interpretation into a single vector $z$ of dimension 
% $n := (2|\NI|+2|\NC|+6|\NR|)d$, where $|S|$ denotes the cardinality of a set $S$. 
% In this way, we can reconstruct $\eta$ from $z$ and vice-versa.
Given a \geometric interpretation $\eta$, we   concatenate all of $\eta$'s parameters into a single vector $z$ of $n := (2|\NI|+2|\NC|+6|\NR|)d$ dimensions\footnote{For each individual in $\NI$ we have two vectors, one for the position and one for the bump. For each concept in $\NC$ we also have two vectors, one for the upper corner and one for the lower corner of the box. For each role in $\NR$ we have 3 boxes: head, tail, and bump; each box needs the upper and lower corner vectors.}. We call this vector $z \in \Re^n$ an \emph{embedding solution}. Conversely, given such a $z \in \Re^n$, we can reconstruct $\eta$. However, not all $z$ will lead to an $\eta$ that satisfies Definition~\ref{def:boxinterpretation}. Therefore, we devise constraints that $z$ must satisfy, such that the corresponding $\eta$ has desirable properties. %For example,  $\eta$ must be box consistent and the positions of individuals must be within the universe box. We call these constraints the \emph{box consistency} and \emph{universe constraints}.
%and %define them formally 
%provide details in the supplemental material. 

In particular, given an arbitrary DL-Lite$^\Hmc$ TBox $\Tmc$, %without role disjointness, 
we want to ensure that any feasible solution for a \geometric{} interpretation (i) is box consistent, i.e., for all $C \in \NCE$ it holds that $\cAssign{C} \cap \cAssign{\neg C} \neq \emptyset$, (ii) ensures that any box width is positive and bounded by $2\worldSize$,  %consistent (i.e., $\cAssignSymb \not\models \bot$)
(iii) ensures that any individual embedding is within the universe box, and (iv) guarantees that any TBox axiom is satisfied in the embedding solution. In the following, we define convex constraints for Points (i)--(iv). % and ensure that, given a $\Tmc$, Points (i) and (ii) are satisfied in any feasible solution for the \geometric{} interpretation.

\textbf{Box Consistency Constraints.} Intuitively, Point (i) ensures that the set of feasible solutions only contains those solutions that do not predict any contradictions. 
However, enforcing box consistency directly is non-convex, due to the need of computing intersection boxes (see Section~\ref{sec:discussion}). Thus, we need to define a convex alternative that guarantees non-overlap of $\cAssign{C}$ and $\cAssign{\neg C}$. \color{black}
%, e.g., $A(a)$ and $\neg A(a)$. %commented since $\neg A(a)$ is not part of our current syntax
%; and $(ii)$ ensures that these consistency constraints together with the translated TBox inequalities (see Section~\ref{sec:TBox_Translation}) do not make the set of feasible solutions empty. 
%Property (2) states that we do not constrain our embedding to much with the consistency inequalities, i.e., for any axiom $\phi$ that is consistent with $\Tmc$, we have that our embedding solution together with the consistency inequalities and the inequalities derived by $\phi$ remains satisfiable. 
We ensure Point (i) by   reserving one dimension $i_C$ per $C \in \NCE$ and   defining one constraint per dimension $i_C$, formally described in (\ref{eq:concept_cons}). By reserving one dimension $i_C$ per $C \in \NCE$, the minimal number of dimensions of our method depends on the TBox. %ontology. 
In particular, our method requires at least $|\NC| + 2|\NR|$ dimensions \footnote{We have $2|\NR|$   because of  $\exists R$ and $\exists R^-$ for each $R\in\NR$.} for a TBox 
%an ontology 
with $|\NC|$ concept names and $|\NR|$ role names.
%\todo{Aleks: Make the dimensions more explicit here: This means that the minimal number of dimensions for an ontology is equal to ... }
\begin{equation}
        \frac{\mathbf{L_{C}}[i_C] + \mathbf{U_{C}}[i_C]}{2} \leq-\frac{\worldSizeScalar}{2}
        \label{eq:concept_cons}
\end{equation}
% \todo[inline]{B: I put the part below in red because it should be carefully checked.}
% \todo[inline]{Aleks: This looks very good :) - I am just wondering whether we should put this at another place or make it to a small lemma.}
The inequality in \eqref{eq:concept_cons}  ensures box consistency (see  Theorem~\ref{thm:propembedding} in the appendix).

\textbf{Box Width Constraints.} %\todo[inline]{Ana: we said above that concept boxes span outside the universe, this is confusing}
As mentioned in Point (ii), our method assumes that the width of any box  
%$b \in \B$ with , 
with lower and upper bounds $\mathbf{L}$ and $\mathbf{U}$ is %$(i)$
positive and %$(ii)$ 
bounded by $2\mathbf{\worldSize}$, i.e., $\mathbf{0} \leqd \mathbf{U} - \mathbf{L} \leqd 2\mathbf{\worldSize}$. % We can assume without loss of generality the more restrictive property that boxes of positive concepts are within the universe box $\World$. 
The following inequalities ensure this requirement:
\begin{equation}\label{eq:conUn}
\begin{aligned}
      \mathbf{0} \leqd \mathbf{U_C} - \mathbf{L_C} \leqd 2 \mathbf{\worldSize} \\
      \mathbf{0} \leqd \mathbf{U}^X_\mathbf{R} - \mathbf{L}^X_\mathbf{R} \leqd 2 \mathbf{\worldSize}, 
    % \forall C \in \NCE\colon &\mathbf{L_C} \geqd -\mathbf{\worldSize} \\
    % \forall C \in \NCE\colon &\mathbf{U_C} \leqd \mathbf{\worldSize},
\end{aligned}
\end{equation}
% \todo{Should I use $\NC$ or $\NCE$ above?}
where $C \in \NC$, $R \in \NR$, 
and $X \in \{\mathbf{H}, \mathbf{T}, \mathbf{B}\}$.
% \todo[inline]{check if here it is \NRm or \NR}
%where $\NCE \coloneqq \NC \cup \{\exists R, \exists R^- | R \in \NR\}$.
% Additionally, we can assume without loss of generality that all bump boxes are bounded by $\worldSize$:

% \begin{align*}
%     \forall R \in \NR\colon &\mathbf{L^B_R} \geqd -\mathbf{\worldSize} \\
%     \forall R \in \NR\colon &\mathbf{U^B_R} \leqd \mathbf{\worldSize}
% \end{align*}
%
\textbf{Universe Constraints.} %\todo{say this before}
As mentioned in Point (iii), our method assumes that positions and bumps of individual embeddings are within the universe box $\World$ (see \cref{def:boxinterpretation}). The following inequalities ensure this property: %The following inequalities ensure this basic property:
% \begin{align*}
%     \forall a \in \NI:& -\mathbf{\worldSize} \leqd \posAssign{a} \leqd \mathbf{\worldSize}\\
%     \forall a \in \NI:& -\mathbf{\worldSize} \leqd \bumpAssign{a} \leqd \mathbf{\worldSize}
% \end{align*}
\begin{equation}
\begin{aligned}\label{eq:indUn}
    %\forall a \in \NI\colon
    & 
    -\mathbf{\worldSize} + \boldsymbol{\epsilon} \leqd \posAssign{a} \leqd \mathbf{\worldSize} - \boldsymbol{\epsilon}\\
    %\forall a \in \NI\colon
    &
    -\mathbf{\worldSize} + \boldsymbol{\epsilon} \leqd \bumpAssign{a} \leqd \mathbf{\worldSize} - \boldsymbol{\epsilon},
\end{aligned}
\end{equation}
where $a\in\NI$.
% Let us first recall the embeddings of complex concepts. 

\textbf{TBox Axiom Constraints.} 
Finally, to guarantee the satisfaction of TBox axioms in feasible solutions (Point (iv)), first we need to define the boundaries of concept and role embeddings. Based on the definitions in  Section~\ref{sec:boxSemantics}, the boundaries of  $\eta(\neg{C})$, $\eta(\exists R)$, and $\eta(\inv{R})$ are defined as follows: 
\begin{align*}
    \mathbf{L_{\neg{C}}} &\coloneqq -\worldSize - \mathbf{L_{C}},\,\, \mathbf{U_{\neg{C}}} \coloneqq \worldSize - \mathbf{U_{C}},\\
    \mathbf{L_{\exists R}} &\coloneqq  \mathbf{L^H_R} - \mathbf{U^B_R}, \,\,   \mathbf{U_{\exists R}} \coloneqq \mathbf{U^H_R} - \mathbf{L^B_R}, \\
    \headAssign{\inv{R}} &\coloneqq \tailAssign{R},\,\,   \tailAssign{\inv{R}} \coloneqq \headAssign{R}, \\
    \bumpBoxAssign{\inv{R}} &\coloneqq \bumpBoxAssign{R}.
\end{align*}

Let $B\in \NCE$, $C \in \NCEneg$, and $R,S\in \NRm$. To satisfy concept and role inclusions within any embedding solution $z$ (Definition~\ref{def:BoxSatisfaction}), we employ the following linear inequalities: 
\begin{itemize}
    \item $B \sqsubseteq C$ corresponds to $\mathbf{L_C} \leq_d \mathbf{L_B}$ and $\mathbf{U_B} \leq_d \mathbf{U_C}$, 
    \item $R \sqsubseteq S$ corresponds to $\mathbf{L}^X_\mathbf{S} \leq \mathbf{L}^X_\mathbf{R}, \mathbf{U}^X_\mathbf{R} \leq \mathbf{U}^X_\mathbf{S}$ with $X \in \{\mathbf{H}, \mathbf{T}, \mathbf{B}\}$. 
\end{itemize}

\textbf{Problem Size of the Translation.}
We point out that the size of \modelName's problem formulation increases linearly w.r.t. the size of the KB \Kmc and  $|\NC\cup\NR\cup\NI|$. Let $\mathcal{T}$ be $\Kmc$'s TBox,   the number of inequalities, as described in this section, gives us 
%$O(4|\NI| + 3|\NC| + 8|\NR| + 2|\mathcal{T}|)$
$O((|\NI| + |\NC| + |\NR| + |\mathcal{T}|)d)$
constraints.

\subsection{Optimization: Ranking Assertions}
\label{sec:Scoring}
% \todo{Think about calling this section "Scoring: Truth of ABox axioms"}
%\todo{talk about likelihood instead of plausibility? - e.g., more likely ABox axioms}
When ranking assertions, we would like the distance between the position of an individual % $a$
w.r.t to a box associated with a concept name %$A$ 
to affect the score of the corresponding assertion: %$A(a)$: 
the smaller the distance to the elements of the box, the higher the score. The   same intuition also applies for roles, but needs to take into account the way role assertions are satisfied in the embedding model~(Definition~\ref{def:BoxSatisfaction}).
%Similarly, for roles, we would like 
%and bump vectors of  individuals to affect the scores 
%give higher scores to those assertions where the distance  
%that if those individuals mapped to vectors 
%Here we explain how 
%the plausibility of missing ABox axioms is reflected by the scoring function and how this leads to the objective of our optimization problem.
%In KB completion, the goal is typically to rank assertions by plausibility. Merely checking whether an assertion is satisfied by an embedding, however, yields only a binary outcome and does not allow for fine‑grained ranking. To solve this,
%
%problem, we have carefully designed the following 

To express this intuition,
%For this, 
we employ a \emph{signed distance} function.  
The signed distance assigns nonpositive values to the assertions satisfied in the embedding, and positive values to those that are violated. Moreover, it reflects \emph{how}   an assertion is satisfied (or violated): for concept assertions, individuals with embeddings deeper inside the   box have a smaller negative value than those closer to the border, while   individuals with  embeddings outside the box  have positive values, and those values are higher the farther away the embeddings of the individuals are from the box. 

%To express this intuition,
%we design a distance function that varies the score based on . 

%\subsection{Optimization: Plausibility of Assertions}
%\label{sec:Scoring}
% \todo{Think about calling this section "Scoring: Truth of ABox axioms"}
%\todo{talk about likelihood instead of plausibility? - e.g., more likely ABox axioms}
%Here we explain how the plausibility of missing ABox axioms is reflected by the scoring function and how this leads to the objective of our optimization problem.
%In KB completion, the goal is typically to rank assertions by plausibility. Merely checking whether an assertion is satisfied by an embedding, however, yields only a binary outcome and does not allow for fine‑grained ranking. To solve this,
%
%problem, we have carefully designed the following 
%we design a distance function that provides graded plausibility scores. 

% base our scoring function on two distance functions, one for computing whether an assertion is satisfied (box distance) and one for ranking the plausibility of  assertions that are satisfied in the embedding solution (center distance). % --- that we use for both optimization and scoring. 
% However, the concept and role assertion loss terms above do not provide a \emph{complete} ranking but assign a loss of zero to the embeddings of any assertion that is (according to Definition \ref{def:BoxSatisfaction}) satisfied in our embedding solution.

% \todo[inline]{say below L and U are bounds for box B}

\textbf{Signed Distance.} {  We move on to the formal definition of signed distance to a set.} %The signed distance to a  set assigns a positive value to elements that are not in the set and a nonpositive value to elements that are in the set.
%In contrast to the usual Euclidean distance, its value may be negative.
%Let $S \subseteq \Re^n$ be a set. 
First, the \emph{Euclidean distance   to a set $S\subseteq \Re^n$} is defined as the function $\dist:\Re^n \to \Re$ such that 
$\dist(y,S) \coloneqq \inf\{\norm{x-y}_2 \mid x \in S\}, \ \forall y \in \Re^n.$
Let $S^c$ be $\Re^n\setminus S$.
%be the complement of $S$ in $\Re^n$. 
Then, the \emph{signed distance}  (also called \emph{oriented distance})   to $S$ is %defined as
\begin{equation}\label{eq:sdist_def}
\sdist(y,S) \coloneqq 
\begin{cases}
\dist(y,S) & \text{ if } y \not \in S \\
-\dist(y,S^c) & \text{ if } y  \in S,
\end{cases}  
\end{equation}
e.g., see Chapter~7 of \citet{DZ11}.
%In what follows, 
Given $y \in \mathbb{R}^n$, we 
denote by $y^+ \in \Re^n$ the vector that corresponds to replacing the negative components of $y$ by $0$. %so that $y^+ \in \Re^n$ 
%satisfies $(y^+)_i = \max(y_i,0)$ for every $i \in \{1,\ldots,n\}$.
%Similarly, we denote by $y^-$ the nonpositive part, 
 E.g., $(1,-2,3,-4)^+ = (1,0,3,0)$.
 %for $y = (1,-2,3,-4)$ we have 
%$y^+ = (1,0,3,0)$.
%With that, we have the following proposition. 
%Here, we make use of the signed distance w.r.t. %with respect  
%Let $\Re^n_-$ be $\{y \in \Re^n \mid y_i \leq 0, 1\leq i \leq n\}$.
\begin{restatable}{proposition}{thmsign}
The signed distance to  $\Re^n_-$ satisfies
\begin{equation}\label{eq:sdist_def_rn}
\sdist(y,\Re^n_-) = 
\begin{cases}
\norm{y^+}_2 & \text{ if } y \not \in \Re^n_- \\
\max_{i\in \{1,\ldots,n\}} y_i & \text{ if } y  \in \Re^n_-,
\end{cases}  
\end{equation}
where $\Re^n_-=\{y \in \Re^n \mid y_i \leq 0, 1\leq i \leq n\}$.
\end{restatable}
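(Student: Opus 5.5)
The proof splits along the two cases in \eqref{eq:sdist_def}, writing $K:=\Re^n_-$.

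\textbf{Case $y\notin K$.} Here $\sdist(y,K)=\dist(y,K)$. Since $K$ is a closed convex set, the infimum is attained at the Euclidean projection. The projection onto $K$ acts componentwise: minimising $\sum_i (x_i-y_i)^2$ subject to $x_i\le 0$ decouples, and each coordinate is solved by $x_i=\min(y_i,0)$. The minimiser is therefore $x^\ast=y-y^+$, so $\dist(y,K)=\norm{y-x^\ast}_2=\norm{y^+}_2$. This case is essentially routine.

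\textbf{Case $y\in K$.} Here $\sdist(y,K)=-\dist(y,K^c)$, where $K^c=\{x\mid \exists i,\ x_i>0\}$. Since $K^c=\bigcup_i H_i$ with $H_i=\{x\mid x_i>0\}$, we have $\dist(y,K^c)=\min_i \dist(y,H_i)$. For an open half-space, $\dist(y,H_i)=\max(0,-y_i)$. When $y_i\le 0$ this equals $-y_i$: the points $y+(\delta-y_i)e_i$ lie in $H_i$ and approach distance $-y_i$ as $\delta\to 0^+$. Conversely, any $x\in H_i$ satisfies $\norm{x-y}_2\ge x_i-y_i> -y_i$. Hence
\[
\dist(y,K^c)=\min_i(-y_i)=-\max_i y_i,
\]
and so $\sdist(y,K)=\max_i y_i$.

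\textbf{Main obstacle.} The step needing the most care is the second case, where $K^c$ is open and nonconvex. There I would check the infimum argument explicitly: the lower bound $\norm{x-y}_2\ge |x_i-y_i|$ applied coordinatewise, together with the approximating sequence for the matching upper bound.

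Boundary points ($y\in K$ with some $y_i=0$) are consistent with the formula: the distance to $K^c$ is $0$, and $\max_i y_i=0$ as well.
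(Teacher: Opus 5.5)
Your proof is correct, and your first case is the same as the paper's, which also uses separability, $\inf_{x_i\le 0}(y_i-x_i)^2=\max(y_i,0)^2$. Your second case uses a different decomposition.

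The paper first replaces $K^c$ by the boundary. It asserts, without proof, that $\dist(y,K^c)=\dist(y,\partial K)$ for $y\in K$. It then writes $\partial K$ as the union of the closed faces $\{x\in K\mid x_i=0\ \forall i\in\mathcal{Q}\}$ over nonempty $\mathcal{Q}\subseteq\{1,\ldots,n\}$. The distance to each face is $(\sum_{i\in\mathcal{Q}}y_i^2)^{1/2}$, obtained by keeping $x_i=y_i$ in the free coordinates. Finally, a singleton $\mathcal{Q}$ that picks a coordinate of smallest absolute value is optimal.

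You instead decompose the open complement directly as $K^c=\bigcup_i\{x\mid x_i>0\}$. You then get each half-space distance from the one-coordinate bound $\norm{x-y}_2\ge x_i-y_i>-y_i$ together with an approximating sequence.

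Your route is more self-contained. It needs neither the complement-to-boundary identity nor the enumeration over all subsets $\mathcal{Q}$. It also handles the non-attainment caused by the openness of $K^c$ explicitly.

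The paper's route works with closed faces, where the infimum is attained. It therefore exhibits an actual nearest boundary point: zero out a largest coordinate of $y$. The cost is the extra, unproved boundary step.
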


%which satisfies
%\begin{equation}\label{eq:sdist_def_main}
%\sdist(y,\Re^n_-) = 
%\begin{cases}
%\norm{y^+}_2 & \text{ if } y \not \in \Re^n_- \\
%\max_{i\in \{1,\ldots,n\}} y_i & \text{ if } y  \in \Re^n_-,
%\end{cases} 
%\end{equation}
%see Appendix~\ref{app:signed} for more details.  

Let $\mathbf{B}$ be a box with bounds $\mathbf{L}$ and $\mathbf{U}$. 
We define ${\sf dist}(\mathbf{B}, \mathbf{x})$ 
as the signed distance of 
the concatenated vector 
$(\mathbf{L}+\epsilonVec-\mathbf{x})\oplus (\mathbf{x}-\mathbf{U}+\epsilonVec)$ to 
$\Re^{2d}_{-}$.
That is, 
\[
{\sf dist}(\mathbf{B}, \mathbf{x}) \coloneqq 
\sdist((\mathbf{L}+\epsilonVec-\mathbf{x})\oplus(\mathbf{x}-\mathbf{U}+\epsilonVec),\Re^{2d}_-).
\]
We now define the loss 
%and regularization
terms using ${\sf dist}$.

\textbf{Concept Assertion Loss.} %Based on $dist$, 
We define the loss for concept assertions $D(a)$ as follows:
\begin{equation*}
\mathcal{L}_{\sf concept}(D, a) \coloneqq {\sf dist}(\cAssign{D}, \posAssign{a}).
\end{equation*}

Intuitively,  minimizing the  loss $\mathcal{L}_{\sf concept}(D, a)$ of a concept assertion $D(a)$ pushes an individual's position embedding $\posAssign{a}$ into $D$'s concept embedding $\cAssign{D}$.

\textbf{Role Assertion Loss.} Similarly, the loss $\mathcal{L}_{\sf role}(S, a, b)$ of a role assertion $S(a, b)$ pushes the translated individual embedding $\posAssign{a} + \bumpAssign{b}$ and, respectively,  
$\posAssign{b} + \bumpAssign{a}$ into $S$'s head box $\headAssign{S}$ and $S$'s tail box $\tailAssign{S}$. Furthermore, it pushes the bumps of $a$ and $b$ into $S$'s bump box $\bumpBoxAssign{S}$. Based on the distance function ${\sf dist}$, we define the loss for role assertions $S(a,b)$ as follows:
%
% \todo{Change to maximum}
\begin{equation*}
\begin{aligned}
\mathcal{L}_{\sf role}(S, a, b) \coloneqq %\frac{1}{2} \big( 
\max\big(&{\sf dist}(\headAssign{S}, \posAssign{a} + \bumpAssign{b}), \\
&{\sf dist}(\tailAssign{S}, \posAssign{b} + \bumpAssign{a}), \\
&{\sf dist}(\bumpBoxAssign{S}, \bumpAssign{a}), \\
&{\sf dist}(\bumpBoxAssign{S}, \bumpAssign{b}) \big).
\end{aligned}
\end{equation*}%

% \todo{Aleks: Add regularization terms here}

% \textbf{Regularization Terms.} 
% To allow for convex optimization our model cannot use standard negative sampling procedures, such as self-adversarial negative sampling \citep{RotatE}, since they are neither convex nor determistic. Thus, \modelName needs a different way to favour predicting low scores for false assertions and high scores for true assertaions. 

% Thus, given a box $\mathbf{B} \in \B$, we define a prediction sparsity term $\mathcal{R}_{\sf width}(\mathbf{B})$ that penalizes large widths for box embeddings $\mathbf{B}$, shrinking the boxes and making it less likely that high scores are predicted for an assertion by chance. 

%\todo{Ana please check this sentence:}
%\todo{small change, removed plausib, likelyhood}
In practice, KBs typically contain only positive assertions and no explicit negative ones. Therefore, if we were to minimise only the loss terms of the assertions, \modelName would tend to assign high scores to all potential assertions,  not distinguishing between true and false assertions. A common way to address this issue in KB embedding methods is \emph{negative sampling} \citep{TransE,RotatE,BoxE}. In that approach, given a role assertion $R(h, t)$ contained in the KB, one constructs a \emph{corrupted} assertion by exchanging either the head $h$ or tail $t$ of the role assertion by a randomly chosen individual from $\NI$. Since the number of assertions that hold is usually much smaller than the number of all possible assertions that can be made, most corrupted assertions tend to be false. 
%Since the ABox of KBs is typically sparse, these corrupted assertions are assumed to be false with high probability. 
KB embedding models are then trained to assign high scores to ABox assertions in the training data and low scores to corrupted ones.
%\todo{this may change, depending on neg samp}
However, negative sampling %has  drawbacks:
%(i) it is inherently stochastic, since enumerating all negative assertions is computationally infeasible;
%(ii) 
%it 
leads to nonconvex loss terms, which are often incompatible with convex optimization (see Section~\ref{sec:discussion}); and
%(iii) 
it is computationally expensive, as competitive performance often requires sampling hundreds or even thousands of negative assertions per ABox assertion \citep{BoxE,DensE,ExpressivE}. Next, we adapt negative sampling to concepts in \NCE, leading to convex regularization terms that are fast to compute and 
%addresses issues (i) and (iii). We address (ii) by 
push individual embeddings into complement boxes as required.
%\todo{this point ii we may want to move it out from here and put in another part talking about convex opt}
%Instead, we enforce sparsity in the set of predicted   assertions by introducing two convex regularization terms.

% Given the set of assertions known to be true (i.e., those within the ABox) and the set of corrupted assertions, KB embedding models are typically optimized to assign high scores to ABox assertions and low scores to corrupted ones. However, negative sampling $(i)$ is stochastic (as sampling any possible assertion would be computationally infeasible), $(ii)$ leads to nonconvex loss terms, meaning that we cannot use them directly for convex optimization, $(iii)$ lowers the efficiency of current KB embedding models drastically, as typically $100$ to $1000$ negative assertions need to be sampled per positive assertion to achieve competitive KB completion performance \citep{BoxE,DensE,ExpressivE}. We enforce sparsity in our predicted plausible assertions by adding the following two convex regularization terms. 

\textbf{Negative Concept Regularization.} 
%First, observe that the number of assertions entailed by a KB is typically much smaller than the set of all possible assertions that can be formed from elements of $\NI$, $\NC$, $\NR$. 
%As discussed above, without additional measures, the objective function would likely optimize our embedding model to assign high plausibility scores to many unlikely assertions. To counteract this effect, we introduce a regularization term that penalizes overly plausible concept assertions. 
Our approach builds on three observations: (1) directly pushing individual embeddings outside a box (i.e., into its geometric complement) leads to nonconvex optimization terms; (2) \modelName introduces convex boxes representing the  complement of concept embeddings, which we can use to regularize the scores; and (3) \modelName does not offer convex representations for the complement of role embeddings, yet a role's domain and range can be expressed as existential concept embeddings. Based on these observations, we introduce a negative concept regularization term that for any potential concept assertion $D(a) \not \in \Amc$ with $D \in \NCE$ and $a \in \NI$ pushes $\posAssign{a}$ into $\sComp{\cAssign{D}}$, reducing the score of $D(a)$. This regularization term keeps plausibility scores for arbitrary assertions low, while the assertion loss terms selectively increase the scores of ABox assertions explicitly contained in the KB:
% any concept $D \in \NCE$ we regularize any individual embeddings to be pushed into its complement box. 
% Thereby, we introduce an additional force in our objective function that tries to push all
% \todo{Work on this tomorrow}
% \todo{Remove Abox from negatively sampled assertions}
\begin{align}
    \label{eq:negSampling}
    \mathcal{L}_{\sf negative}(D,a)\hspace{2pt}\coloneqq \mathcal{L}_{\mathsf{concept}}(\neg D,a).\hspace{2pt}\hspace{2pt}
\end{align}
\textbf{Box Width Regularization.} 
A second strategy to keep scores for arbitrary assertions within a reasonable range is to regularize the box size of concept and role embeddings.
Specifically, for any concept name $D \in \NC$ we regularize the size of its concept box $\cAssign{D}$; and for any role name  $S \in \NR$, we regularize the size of its head $\headAssign{S}$, tail $\tailAssign{S}$, and bump box $\bumpBoxAssign{S}$. Formally, we define the box regularization term $\mathcal{R}_{\sf width}(\mathbf{B})$ of a box $\mathbf{B}$ with bounds $\mathbf{U}$ and $\mathbf{L}$, as:
\begin{align}\label{eq:rb}
\mathcal{R}_{\sf width}(\mathbf{B}) \coloneqq \| \mathbf{U} - \mathbf{L} \|_2.
\end{align}
\textbf{Objective Function.} We now assemble these loss and regularization terms into our objective function: % $\mathcal{L}(\eta)$:
\begin{equation*}
\begin{split}
 %\coloneqq 
 &\hspace{-14pt}\max\left(\max_{D(a) \in \Amc}\mathcal{L}_{\sf concept}(D, a) , %\hspace{-10pt} 
\max_{S(a,b) \in \Amc}\mathcal{L}_{\sf role}(S, a,b)\right) \hspace{2pt}+ \\
\lambda_1 &\hspace{-7pt}\max_{D(a)\in  N_C^\exists\setminus \Amc} \mathcal{L}_{\sf negative}(D,a)\hspace{2pt}\hspace{2pt}+ \\
\lambda_2 \Big(&\sum_{D \in \NC } \mathcal{R}_{\sf width}(\cAssign{D})  \hspace{2pt}+ \\
&\sum_{S \in \NR} \mathcal{R}_{\sf width}(\headAssign{S})+ 
\mathcal{R}_{\sf width}(\tailAssign{S}) \Big)
\hspace{2pt}+\\
\lambda_3 
&\sum_{S \in \NR}\mathcal{R}_{\sf width}(\bumpBoxAssign{S}).
%&\lambda_i \Big(\sum_{D(a) \in \Amc} \mathcal{R}_I(\cAssign{D}, \posAssign{a})\hspace{2pt}+ \\
%&\hspace{13pt}\sum_{S(a,b) \in \Amc}\big( \mathcal{R}_I(\headAssign{S}, \posAssign{a} + \bumpAssign{b})\hspace{2pt}+\\ 
%&\hspace{55pt}\mathcal{R}_I(\tailAssign{S}, \posAssign{b} + \bumpAssign{a})\hspace{2pt} %+
% &\hspace{55pt}\mathcal{R}_{\sf width}(\bumpBoxAssign{S}, \bumpAssign{a})\hspace{2pt}+\\
% &\hspace{55pt}\mathcal{R}_{\sf width}(\bumpBoxAssign{S}, \bumpAssign{b})
%\big)\Big)
\end{split}
\end{equation*}
%\todo[inline]{add sentence about width}
%The terms in the first line encode the penalty of not satisfying the assertions in the ABox used as training data.

%\todo[inline]{the comment below raises more questions than answers.}
%We use two separate hyperparameters, $\lambda_2$ and $\lambda_3$, to regularize box widths. This distinction is theoretically motivated, as the head and tail boxes lie in the same embedding space (entity position embeddings translated by bump embeddings), whereas the bump box is defined in a different space (bump embeddings only). In addition, our experiments provide empirical evidence that using different values for $\lambda_2$ and $\lambda_3$ leads to prediction performance gains.

\textbf{Scores.}
To rank the assertions, we define two scoring functions, one for concept and one for role assertions. 
%Following the literature \citep{BoxE,ExpressivE,SpeedE,ReshufflE}, we assign higher scores for more plausible and lower scores for less plausible assertions.

%\textbf{Concept Assertion Score.} 
% \todo{Remove the center distance terms}
%Based on the loss, we define 
The score $s(D, a)$ of concept assertions $D(a)$ is:
\begin{align*}
  s(D, a) \coloneqq & 
  -  \mathcal{L}_{\sf concept}(D, a)
%  -{\sf dist}(\cAssign{C}, \posAssign{a}).    
\end{align*}
%
%
%\todo{@Aleks: Change Score - The nonconvex score seems to perform better!}
%\textbf{Role Assertion Score.} Based on the loss, we define 
%
and the score $s(S, a, b)$ of role assertions $S(a,b)$ is: 
% \todo{Remove the center distance terms}
%
\begin{align*}
  s(S, a, b) \coloneqq & -\mathcal{L}_{\sf role}(S, a, b).
 % -\max\big( &{\sf dist}(\headAssign{R}, \posAssign{a} + \bumpAssign{b}),\\ 
  %&{\sf dist}(\tailAssign{R}, \posAssign{b} + \bumpAssign{a}), \\
  %&{\sf dist}(\bumpBoxAssign{R}, \bumpAssign{a}),\\
  %&{\sf dist}(\bumpBoxAssign{R}, \bumpAssign{b}\big).
\end{align*}
The scoring functions of the concept and role assertions are the negative of the corresponding loss functions. 
The intuition for this is that solving the optimization problem, i.e., minimizing the concept and role assertion loss for ABox assertions, maximizes their score.

\subsection{DL-Lite$^\Hmc$   KB Faithfulness}
\label{sec:WeakFath}
%As we shall see in Section \ref{sec:WeakFath}, 
Translating TBox axioms to linear inequalities that are used as convex constraints in the optimization problem (see Section~\ref{sec:ProblemFormulation}), guarantees any \modelName embedding solution $z$ to satisfy the concept inclusions in the  TBox  and to be KB faithful for DL-Lite$^\Hmc$.
Let $\convexSet_{\Kmc} \subseteq \Re^n$ be the set of $z$'s such that the constraints of Section~\ref{sec:ProblemFormulation} are satisfied. That is, $z \in \convexSet_{\Kmc}$ if and only if:
$(a)$ for each TBox axiom the corresponding inequalities are satisfied; and $(b)$ the box consistency and universe constraints are satisfied, see the appendix for details.
%\begin{enumerate}[$(a)$]
  %  \item 
 %   \item the box consistency and universe constraints are satisfied (see Section~\ref{sec:consistency}), i.e., individuals within the universe (see \eqref{eq:indUn} and \eqref{eq:conUn}), {and concepts satisfy the inequality in~\eqref{eq:concept_cons}}.
%\end{enumerate}
Also, let
$f_{\hyperP}:\Re^n\to \Re$ be the function that maps $z$ to the objective value  for a given choice of 
nonnegative hyperparameters $\hyperP = (\lambda_1,\lambda_2 ,\lambda_3) \in \Re^3_+$. 
%\todo{B: I believe we no longer have the hyperparameter $\mu$, so I removed it}%, where $\lambda_1$ is the hyperparameter associated to the loss terms and $\lambda_2,\lambda_3,\lambda_4, \lambda_5, \mu$ are associated to regularization terms.
%With that, we have the following theorem.

%\todo{@Bruno: I would greatly appreciate your feedback on whether this is enough detail for Theorem 4. I placed your highly detailed explanation in the supplemental material for the interested reader and for comparisons to this section.}

% Before we state the next theorem, we recall that a set in $\Re^n$ is said to be \emph{polyhedral} if it  can be written as the set of solutions of finitely many linear equalities/inequalities.
%In particular, polyhedral sets are convex.

\begin{restatable}{theorem}{theoremConvex}\label{theo:convex}
Let $\Kmc$ be a satisfiable DL-Lite$^\Hmc$ KB.
For nonnegative $\hyperP$ the following optimization problem is convex.
%\begin{equation}
%\min_{z \in \Re^n} \quad  f_{\hyperP}(z), \qquad  \textup{subject to}\quad z \in \convexSet_{\Kmc}.  \label{eq:opt_ont}
%\end{equation}
\begin{align}
\min_{z \in \Re^n} \;  f_{\hyperP}(z), \quad
\textup{subject to} \; z \in \convexSet_{\Kmc}. %\notag
  \label{eq:opt_ont}
\end{align}
In particular, $f_{\hyperP}$ is a convex function,  $\convexSet_{\Kmc}$ is a polyhedral set and the following items hold.
\begin{enumerate}[align=left,label=\roman*),leftmargin=*]  %[$(i)$]
    % \item For $d \geq |\NC|+2|\NR|$, $\convexSet_{\Kmc}$ is nonempty.
    % \item For $d \geq |\NI|+|\NC|+2|\NR|+|\NR||\NI|$ and $\lambda_{i}=\lambda_{n} = 0$, the optimal value is zero.
    \item For $d$ as in Corollary~\ref{cor:ConsistencyDim}, and $\worldSizeScalar$ as in Theorem~\ref{thm:faith_bool_alc},
%    $\worldSize \coloneqq (4,\ldots, 4) \in \Rd$ and $\epsilon \in (0,\epsilonMax]$ (see Section~\ref{sec:boxSemantics}),
$\convexSet_{\Kmc}$ is nonempty.
   \item Any embedding solution $z \in \convexSet_{\Kmc}$ corresponds to a box consistent  interpretation that is  TBox faithful.
   %\geometric interpretation.    
    \item If $\lambda_1=0$ and there is an optimal solution $z^*$ such that $f_{\hyperP}(z^*) \leq 0$  then the \geometric interpretation corresponding to $z^*$ is  KB faithful. 
    
    \item Suppose that $\lambda_{1}=\lambda_{2}=\lambda_{3} = 0$.
    For $d$ as in Corollary~\ref{corollary:FaithfulnessDim},
    %$d \geq |\NC| + 2|\NR| + |\NR| (|\NC| + 2|\NR| + |\NI|)$,
    and $\worldSizeScalar$ as in Theorem~\ref{thm:faith_bool_alc}, there is an optimal solution $z^*$ s.t. 
   $f_{\hyperP}(z^*) \leq 0$ holds.
    
%    \item {\color{red}If the optimal value is zero, then the geometric model is TBox strongly faithful. \todo{This requires adding witnesses in model that come from the canonical model}}
\end{enumerate}
\end{restatable}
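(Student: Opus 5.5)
Convexity comes first, then items i)–iv) in order, each reduced to an earlier result.

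\emph{Convexity and polyhedrality.} Every constraint defining $\convexSet_{\Kmc}$ is linear in $z$: the box consistency constraints \eqref{eq:concept_cons}, the width constraints \eqref{eq:conUn}, the universe constraints \eqref{eq:indUn}, and the TBox inequalities. For the TBox inequalities this relies on the bounds of $\eta(\neg C)$, $\eta(\exists R)$ and $\eta(R^-)$ being affine functions of $z$, namely $-\worldSize-\mathbf{L_C}$, $\mathbf{L^H_R}-\mathbf{U^B_R}$, and so on. Hence $\convexSet_{\Kmc}$ is polyhedral.

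For $f_{\hyperP}$, I first show that $y\mapsto \sdist(y,\Re^m_-)$ is convex. Via the preceding Proposition, I would write it as $\max\bigl(\norm{y^+}_2,\max_i y_i\bigr)$ and check that this agrees with both branches:
\begin{itemize}
\item if $y\notin\Re^m_-$, then $\max_i y_i\le \norm{y^+}_2$;
\item if $y\in\Re^m_-$, then $\norm{y^+}_2=0\geq \max_i y_i$, and the nonpositive $\max_i y_i$ is the only candidate value, so the two-case formula should be expressed as $\max_i y_i$ whenever the norm is $0$.
\end{itemize}
This interplay at $\norm{y^+}_2=0$ is the step I would verify most carefully. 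The cleanest route is convexity of signed distance to a convex set, since $\sdist(\cdot,K)$ is convex for convex $K$ (standard, \citet{DZ11}). $\norm{y^+}_2$ is convex (a norm composed with the componentwise convex, monotone map $y\mapsto y^+$), and $\max_i y_i$ is convex, so the maximum is convex.

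Each ${\sf dist}(\mathbf{B},\mathbf{x})$ is this convex function composed with an affine map of $z$. The loss $\mathcal{L}_{\sf role}$ is a max of such terms, and $\mathcal{L}_{\sf negative}$ is ${\sf dist}$ of the complement box, whose bounds are again affine. The width terms are norms of affine maps. Maxima and nonnegative combinations preserve convexity.

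\emph{Item i).} I invoke Corollary~\ref{cor:ConsistencyDim}: with $\worldSizeScalar$ as in Theorem~\ref{thm:faith_bool_alc}, the TBox admits a box interpretation built from the canonical-model translation. From that proof's construction I would check that it satisfies \eqref{eq:concept_cons}, \eqref{eq:conUn} and \eqref{eq:indUn} with the reserved dimensions $i_C$. It satisfies the TBox, and box inclusions coincide with the linear inequalities, since inclusion of nonempty boxes is equivalent to comparison of bounds. The subtle point is empty boxes, where the inequalities are only sufficient. Thus its $z$ lies in $\convexSet_{\Kmc}$.

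\emph{Item ii).} For $z\in\convexSet_{\Kmc}$, box consistency follows from \eqref{eq:concept_cons} via the appendix result (Theorem~\ref{thm:propembedding}). The linear inequalities imply the box inclusions of Definition~\ref{def:BoxSatisfaction}, so $\eta\models\Tmc$, and Proposition~\ref{lem:tboxfaith} gives TBox faithfulness.

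\emph{Item iii).} If $\lambda_1=0$ and $f_{\hyperP}(z^*)\le 0$, then, since the regularizers are nonnegative, the max over ABox losses is $\le 0$. A nonpositive ${\sf dist}$ means $\mathbf{L}+\epsilonVec\leqd \mathbf{x}\leqd \mathbf{U}-\epsilonVec$, i.e.\ membership in the box, so every assertion holds. With ii), $\eta\models\Kmc$ and $\eta$ is box consistent, and Proposition~\ref{lem:kbfaith} yields KB faithfulness.

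\emph{Item iv).} With all $\lambda_i=0$, the objective is only the assertion-loss max. Corollary~\ref{corollary:FaithfulnessDim} provides a box consistent model $\eta$ of $\Kmc$, which I check lies in $\convexSet_{\Kmc}$ as in i), with objective $\le 0$. It remains to show the infimum is attained. The feasible set is closed, and boundedness follows from the width and universe constraints, except that box locations are not a priori bounded. I would handle this by noting that the objective value is $\le 0$ at a feasible point, so the minimum can be taken over a level set. If that level set is unbounded, I would instead observe that the problem is equivalent to an SOCP/LP with finite value bounded below by $-\worldSizeScalar$ and argue attainment by polyhedral structure. This attainment argument is the main obstacle I expect.
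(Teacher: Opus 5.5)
Your route is the paper's: linear constraints give polyhedrality, composition rules give convexity, and the items are reduced to Corollaries~\ref{cor:ConsistencyDim} and \ref{corollary:FaithfulnessDim}, Theorem~\ref{thm:propembedding} and Propositions~\ref{lem:tboxfaith} and \ref{lem:kbfaith}. Items ii) and iii) are argued exactly as in the paper.

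The genuine gap is the one you flag in i) and then step over (``Thus its $z$ lies in $\convexSet_{\Kmc}$''), and it recurs in iv). Box inclusion yields the bound inequalities only when the included box is nonempty, and \eqref{eq:concept_cons} is not an inclusion at all. The interpretations you invoke do contain empty concepts:
\begin{itemize}
\item Definition~\ref{def:mapping} sends every $D$ with $D^\Imc=\emptyset$ to the box with all coordinates $I_0=(0,0)$.
\item Then \eqref{eq:concept_cons} fails in dimension $i_D$ (midpoint $0>-\worldSizeScalar/2$), contrary to what the proof of Theorem~\ref{thm:propembedding} asserts.
\item A TBox axiom $D\sqsubseteq E$ with $E^\Imc\neq\emptyset$ gives $\mathbf{U_D}[i_E]=0>-0.5=\mathbf{U_E}[i_E]$, violating its linear translation.
\item The one-element model behind Corollary~\ref{cor:ConsistencyDim} must leave a concept empty as soon as $\Tmc$ contains a disjointness such as $\exists{\sf hasFather}^-\sqsubseteq\neg\exists{\sf hasMother}^-$.
\item The model of Corollary~\ref{cor:ModelExistence}, used for iv), leaves empty every concept name $A$ with no $\Kmc\models A(a)$ and no $\Kmc\models\exists S\sqsubseteq A$.
\end{itemize}
The paper's proof infers the constraints from faithfulness in the same way, so there is no argument to borrow. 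You need one of two fixes. The first is an interpretation in which every $C\in\NCE$ is nonempty; this requires all such $C$ to be satisfiable, and using the canonical model gives a larger $d$ than the corollaries. The second is to choose bounds for the empty boxes by hand and verify \eqref{eq:concept_cons} and the TBox inequalities for them.

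Two smaller repairs are also needed. First, $\sdist(y,\Re^m_-)=\max\bigl(\norm{y^+}_2,\max_i y_i\bigr)$ is false when all entries of $y$ are negative, because the right side is then $0$. So ``the maximum is convex'' proves nothing about $\sdist$. Keep only the standard convexity of the signed distance to a convex set, as the paper does, or the support-function form of Proposition~\ref{prop:sup_sign}.

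Second, you are right, and more careful than the paper, that iv) needs attainment: the paper only exhibits a feasible point with $f_{\hyperP}\le 0$ and names it $z^*$. Your last step does not deliver this, since a bounded-below SOCP need not attain its infimum. What works is the following.
\begin{itemize}
\item On $\{z\in\convexSet_{\Kmc}: f_{\hyperP}(z)\le 0\}$, every argument $y$ of $\sdist(\cdot,\Re^{2d}_-)$ lies in $\Re^{2d}_-$, where $\sdist(y,\Re^{2d}_-)=\max_i y_i$.
\item So this sublevel set is a polyhedron on which $f_{\hyperP}$ is a maximum of finitely many affine functions. Minimizing over it is an LP, feasible once the first gap is closed.
\item The LP is bounded below. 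Each ${\sf dist}$ term is at least $\epsilon$ minus half a box width, and widths are at most $2\worldSizeScalar$ ($4\worldSizeScalar$ for $\exists R$). This gives the bound $-2\worldSizeScalar+\epsilon$, not your $-\worldSizeScalar$.
\item Hence the LP attains its minimum, and that minimizer is global because $f_{\hyperP}>0$ off the sublevel set.
\end{itemize}
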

Informally, Theorem~\ref{theo:convex} states that we can find a \geometric interpretation for a {satisfiable  $\Kmc$}  via convex optimization over a polyhedral set.  
Any $z \in \convexSet_{\Kmc}$ (whether optimal or not)   corresponds to a \geometric interpretation $\eta$ that is     TBox faithful and  box consistent. 
Item~$i)$ gives a bound on the minimum required $d$ to ensure that $\convexSet_{\Kmc}$ is nonempty, but this estimate seems to be conservative.
Items~$iii)$ and $iv)$ imply that  if we wish to find a solution that is   KB faithful, this can be done by setting the hyperparameters associated to regularization terms to $0$, setting $d$ to be sufficiently large and solving \eqref{eq:opt_ont}. 

Finally, it turns out that %we remark that more can be said about \eqref{eq:opt_ont} and, in fact, Theorem \ref{theo:socp} shows that 
\eqref{eq:opt_ont}  can be formulated as a \emph{second-order cone program} (SOCP) \citep{LVBL98}, \cite[Lecture~3]{BtN01}.

\begin{restatable}{theorem}{theoremSOCP}\label{theo:socp}
For nonnegative $\hyperP$, the problem in \eqref{eq:opt_ont} can be reformulated as an equivalent SOCP.
\end{restatable}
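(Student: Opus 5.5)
The plan is to use the epigraph reformulation standard in conic optimization. We introduce auxiliary scalar variables, replace every max by linear inequalities and every Euclidean norm by a second-order cone constraint, and then argue that the new problem has the same optimal value, with optimal solutions corresponding via projection onto $z$. Since Theorem~\ref{theo:convex} already gives that $\convexSet_{\Kmc}$ is polyhedral (described by the finitely many linear inequalities of Section~\ref{sec:ProblemFormulation}), only the objective $f_{\hyperP}$ needs treatment. Throughout, we use the standard fact that for nonnegative weights, minimizing $\sum_j \lambda_j g_j(z)$ is equivalent to minimizing $\sum_j \lambda_j t_j$ subject to $g_j(z)\le t_j$, as long as each $g_j$ is convex and monotone composition is respected.

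The first step is to show that each signed distance term ${\sf dist}(\mathbf{B},\mathbf{x})$ is SOCP-representable. By the Proposition on $\sdist(y,\Re^n_-)$, we have $\sdist(y,\Re^{m}_-)\le t$ if and only if there exist $u$ and $s$ with
\[ u \geqd y,\quad u \geqd \mathbf{0},\quad \norm{u}_2\le s,\quad y_i\le t \ (\forall i),\quad s\le t \text{ whenever } y\notin\Re^m_-. \]
The case distinction must be removed. I would show the identity $\sdist(y,\Re^m_-)=\max(\norm{y^+}_2,\max_i y_i)$. If $y\notin\Re^m_-$, then $\norm{y^+}_2\ge \max_i y_i>0$. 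If $y\in\Re^m_-$, then $y^+=0$ and $\max_i y_i\le 0$. This formula is checked directly and also gives convexity. Its epigraph is then $\{(y,t): \exists u,\ u\geqd y,\ u\geqd 0,\ \norm{u}_2\le t,\ y_i\le t\ \forall i\}$. The reason is that $\norm{\cdot}_2$ is monotone on the nonnegative orthant, so the smallest feasible $\norm{u}_2$ is $\norm{y^+}_2$. Since $y$ is an affine function of $z$ (here $y=(\mathbf{L}+\epsilonVec-\mathbf{x})\oplus(\mathbf{x}-\mathbf{U}+\epsilonVec)$, with the bounds of $\neg D$ and $\exists R$ also affine in $z$), every $\mathcal{L}_{\sf concept}$ and $\mathcal{L}_{\sf negative}$ term becomes an affine variable $t$ bounded by a linear inequality together with one second-order cone constraint. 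I expect this step, obtaining a case-free and conic description of the signed distance, to be the main obstacle; everything else is routine.

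Next, $\mathcal{L}_{\sf role}$ is a max of four such terms, so its epigraph is the intersection of the four epigraphs with a common $t$. The outer maxima over $\Amc$ and over $\NCE\times\NI$ (excluding $\Amc$) are handled the same way, using one scalar $\tau_0$ with $\tau_0\ge t_{D(a)}$ and $\tau_0\ge t_{S(a,b)}$, and one scalar $\tau_1$ with $\tau_1\ge t'_{D(a)}$ for the negatives. Each width term $\mathcal{R}_{\sf width}(\mathbf{B})=\norm{\mathbf{U}-\mathbf{L}}_2$ gets a variable $r_{\mathbf{B}}$ with the cone constraint $\norm{\mathbf{U}-\mathbf{L}}_2\le r_{\mathbf{B}}$.

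The reformulated problem minimizes the linear objective $\tau_0+\lambda_1\tau_1+\lambda_2\sum r+\lambda_3\sum r$ subject to the linear constraints defining $\convexSet_{\Kmc}$, the linear epigraph inequalities, and finitely many second-order cone constraints; this is an SOCP. Equivalence follows because $\hyperP\ge0$: for any fixed feasible $z$, the auxiliary variables can be lowered to their tight values (the max, or the norm), which minimizes the objective. Hence the optimal value over the auxiliary variables equals $f_{\hyperP}(z)$. Conversely, every feasible point of the SOCP has objective at least $f_{\hyperP}(z)$. So the optimal values coincide, and $z$-components of SOCP optimizers are optimizers of \eqref{eq:opt_ont}. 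When some $\lambda_j=0$, the corresponding variables are unconstrained from above but irrelevant to the objective, and the argument is unchanged.
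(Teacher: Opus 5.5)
There is a genuine gap, located exactly at the step you singled out as the main obstacle. Your case-free identity $\sdist(y,\Re^m_-)=\max(\norm{y^+}_2,\max_i y_i)$ fails on the interior of $\Re^m_-$. If $y_i<0$ for all $i$, then $y^+=\mathbf{0}$ and the right-hand side equals $\max(0,\max_i y_i)=0$, whereas $\sdist(y,\Re^m_-)=\max_i y_i<0$. In fact $\max_i y_i\le\norm{y^+}_2$ for every $y$, so your right-hand side is simply $\norm{y^+}_2=\dist(y,\Re^m_-)$, the \emph{unsigned} distance. Correspondingly, your epigraph description forces $t\ge\norm{u}_2\ge 0$ and never admits $t<0$; dropping the qualifier ``whenever $y\notin\Re^m_-$'' from your (correct) case-based description is precisely what breaks it. The resulting program is still an SOCP, but it minimizes a different function. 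Every $\mathcal{L}_{\sf concept}$, $\mathcal{L}_{\sf role}$ and $\mathcal{L}_{\sf negative}$ term is clipped at $0$, so your objective is nonnegative on the whole feasible set. By contrast, $f_{\hyperP}(z)<0$ for instance when $\lambda_1=\lambda_2=\lambda_3=0$ and $z\in\convexSet_{\Kmc}$ places every ABox embedding in the interior of the corresponding boxes. Optimal values and optimal sets therefore differ in general, and the graded negative scores inside boxes, which are the reason for using the signed distance, are lost.

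The missing idea is that the two regimes must be glued additively, not by a maximum. The paper uses Proposition~\ref{prop:sup_sign}, which expresses $\sdist(\cdot,\Re^m_-)$ as the support function of $B_m\cap P_m$ and, via an infimal convolution, as
\[
\sdist(y,\Re^m_-)=\min_{z\ge y}\Bigl\{\norm{z}_2+\max_{i}(y_i-z_i)\Bigr\},
\]
with the minimum attained ($z=\mathbf{0}$ if $y\in\Re^m_-$, $z=y^+$ otherwise). Because the minimum is attained, this partial minimization yields a valid conic description: $\sdist(y,\Re^m_-)\le t$ if and only if there are $z,t_1,t_2$ with $z\ge y$, $\norm{z}_2\le t_1$, $y_i-z_i\le t_2$ for all $i$, and $t_1+t_2\le t$. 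Here $t_2$, and hence $t$, may be negative. Substitute this for your epigraph. The remainder of your argument is then correct and coincides with the paper's proof: common upper bounds for the maxima, one second-order cone per width term, composition with the affine maps in $z$, and tightening the auxiliary variables using $\hyperP\ge 0$.
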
 
Theorem \ref{theo:socp} is important because it shows that \eqref{eq:opt_ont} can be
 solved efficiently via high-quality open-source solvers such as SeDuMi \citep{Sedumi99} and SDPT3 \citep{TTK03} or commercial solvers such as Gurobi \citep{Gurobi23} and MOSEK \citep{mosek}. 
 The conversion of a problem as in \eqref{eq:opt_ont} to a SOCP that can be handled by the aforementioned solvers, although tedious, can be automated by modelling tools for convex optimization such as CVXPY \citep{AVDB18}. %, which we use in our experiments in Section~\ref{sec:exp}.

%In particular, \emph{interior-point methods} (IPMs) offer a blend of practical performance and theoretical guarantees \citep{NN94,Re01} and are  implemented in both commercial \citep{Gurobi23,mosek} and open-source solvers \citep{Sedumi99,TTK03}. 
%The precise statement of the theoretical guarantees ensured by IPMs is beyond the scope of this paper, but in simplified terms, under certain mild technical conditions,  IPMs can find solutions that are \emph{globally optimal} up to a user-defined accuracy. 
%Furthermore, the number of iterations is a polynomial of the logarithm of the desired accuracy \textcolor{red}{and the size of the problem~\cite[Section~3.8]{Re01}. }

\section{Proof of Concept}\label{sec:exp}
%\todo{values need to be updated}
Here, we present empirical evidence for the theoretical foundations established so far.
% in the previous sections. 
% In detail, i
%In Section \ref{sec:Exp_Strong_Faithfulness}, we experimentally test whether \modelName's problem formulation ensures strong DL-Lite$_{core}$ TBox faithfulness on a prototypical KB. Next, 
%In Section \ref{sec:Exp_Assertion_Experiments}, 
We evaluate \modelName's performance 
on subsets of the Family dataset \citep{FamilyDataset}, providing first results for its  scalability
and reasoning capabilities. 
In our experiments, we only consider KBs with satisfiable concepts.
%TODO: Add the Github link

%for simplicity of code.
\textbf{Reproducibility.} We implemented \modelName's %(convex) second-order cone 
optimization problem 
% (see Section \ref{sec:opt} for details) 
in Python 3.12 using CVXPY \citep{DB16,AVDB18} for modeling and MOSEK \citep{mosek} for solving it. 
%\todo{update spec with Hesham computer spec}
{In our evaluation we include experiments with other KBE approaches using classical stochastic gradient descent (SGD). We use PyKEEN 1.11.1 \citep{ali2021pykeen} in these experiments.}
We ran each of our experiments on an
Apple Mac Mini Desktop Computer with M4 Chip with 10 Core CPU and 10 Core GPU: 16GB (Shared Memory).
%an Intel(R) Xeon(R) Silver 4314 CPU @ 2.40GHz of our internal cluster, using up to $32$ threads. 
More details can be found  %and list additional details on our hardware, libraries, experimental setup, hyperparameters, and definitions of metrics 
in the appendix and the code is available at \url{https://github.com/AleksVap/BoxLitE}.
%Finally, we include our code in the supplementary material to facilitate the reproducibility of the results presented next.
%\subsection{
%{Performance}
%Scalability, Representation, % can we comment this?
%and Reasoning
%}
%\label{sec:Exp_Assertion_Experiments}
%\todo{suggestion: Performance (one line titles usually look better)}
We
%briefly analyze the scalability, representation, and reasoning capabilities of our \modelName model. In particular, we want to answer the 
focus on the following questions.

\begin{enumerate}[(Q1),leftmargin=*]
   
    \item 
    %How well does our model learn to 
    %reason ontologically, i.e., 
    %predict assertions that can be inferred from the KB 
    What is the reasoning performance
    and 
    what is the effect of the regularization terms on the results?
    \item How does increasing the dataset size affect the prediction performance, compilation, and solving time?
     \item How well does our method compare with classical  embedding methods based on stochastic gradient descent?
\end{enumerate}

%\todo{ % Note that we have strong inequalities "<" for box consistency and "<=" for box definitions
%}
\textbf{Experimental Setup.} To answer each of these questions, we have created a set of datasets (F\_v1-4) of varying sizes from the family dataset \citep{FamilyDataset}. We derived these datasets by sampling $k$ assertions of the family dataset's ABox with forest fire sampling \citep{ForestFireSampling}, a popular sampling technique for large graphs. Furthermore, since the family dataset solely provides role assertions in its ABox, we selected all concept inclusions in the family dataset that only include roles and extended the TBox by the disjointness axiom $\exists {\sf hasFather^-} \sqsubseteq \neg \exists {\sf hasMother^-}$. We list the TBox of the created datasets in Figure~\ref{fig:familyTBox}.

% \textbf{Properties.} % Furthermore, Table \ref{tab:FamilyRoleProperties} provides the number of assertions per role in the ABox of the Family10k benchmark.

\textbf{Evaluation Setup.} To evaluate \modelName's %link prediction 
performance, we created a set of inferred role assertions by $(i)$ adding any role assertion that logically follows from each dataset and $(ii)$ removing any assertion that occurs in the ABox.
%training part of the dataset.
%ABox\todo[inline]{not clear   what is the ABox, is it all the facts in the dataset or just the part used for training?}. 
We randomly split this set %of inferred assertions 
into a validation set 
% \todo[inline]{in table 1 we have train, val, test, say explicitly the split between train, val, test?}
(20\%), used for model selection, and a test set (80\%), used for evaluating the performance of the selected model. % We evaluate \modelName's performance on %$(i)$ the ABox, revealing how well the embedding solutions fit the training data (for Q1) and $(ii)$ the link prediction task. %testing how well the embedding solutions learn to reason over the ontology. 
We use the standard evaluation setting for KB completion \footnote{%We optimize our model on the train, select the best model using the validation, and evaluate its performance on the test set. 
The evaluation of a KB embedding model typically needs a set of true and corrupted role assertions. True role assertions $R(a,b)$ of the KB are corrupted by replacing $a$ or $b$ by any $c \in \NI$ such that the corrupted assertion is not within the KB. %KB embeddings are optimized to score true assertions higher than false ones, thereby estimating a given assertion's plausibility. 
The performance of KB embedding models is typically measured using the filtered versions \citep{TransE} of the \emph{mean reciprocal rank} (MRR)
%, the average of inverse ranks ($1/\textit{rank}$) 
and H@k, the proportion of true assertions within the predicted assertions whose rank is at maximum k.} \citep{BoxE,AMW2024,BoxEL}.

\textbf{Dataset Properties.} Table \ref{tab:FamilyDatasetProperties} lists the number of assertions and individuals of the train, validation, and test sets of F\_v1-4. We sampled each of these datasets individually. Thus, although datasets F\_v1-4 gradually increase in size, they are   different from each other. %, which may lead to non-monotonic performance changes, as we will see next.

\begin{figure}[h!]
    \centering
    \begin{align*}
    &{\sf relative^-} \sqsubseteq {\sf relative} &{\sf hasSibling} \sqsubseteq {\sf relative}\\
    &{\sf hasChild} \sqsubseteq {\sf relative} &{\sf hasParent} \sqsubseteq {\sf relative}\\
    &{\sf hasFather} \sqsubseteq {\sf hasParent} &{\sf hasMother} \sqsubseteq {\sf hasParent}\\
    &{\sf spouse^-} \sqsubseteq {\sf spouse} &{\sf hasSibling^-} \sqsubseteq {\sf hasSibling} \\
    &{\sf spouse} \sqsubseteq {\sf relative} &\exists {\sf hasFather^-} \sqsubseteq \neg \exists {\sf hasMother^-}
    \label{onto:extension1}    
    \end{align*}
    \caption{TBox of datasets F\_v1-4.}
    \label{fig:familyTBox}
\end{figure}

\begin{table}[t]
%\resizebox{\columnwidth}{!}{%
\centering
\begin{tabular}{crrrr}
\toprule
Name & \#Train & \#Val & \#Test & \#Individuals \\
\midrule
% F\_v1 & 109 & 44 & 176 & 67   \\
F\_v1 & 300 & 110 & 440 & 155  \\
F\_v2 & 500 & 209 & 837 & 233 \\
F\_v3 & 1001 & 432 & 1731 & 368 \\
F\_v4 & 3014 & 1226 & 4908 & 895 \\
%F\_v5 & 5002 & 1694 & 6776 & 1061 \\
% F\_v6 & 5002 & 1694 & 6776 & 1061 \\
\bottomrule
\end{tabular}
%}
\caption{Dataset properties: Number of train, validation, and testing assertions, and individuals.}
\label{tab:FamilyDatasetProperties}
\end{table}

\begin{table}[t]
\begin{tabular}{cccccc}
\toprule
Dataset & Model    & \multicolumn{1}{c}{MRR}  & \multicolumn{1}{c}{H@1}  & \multicolumn{1}{c}{H@3}  &
%\multicolumn{1}{c}{H@5}  &
\multicolumn{1}{c}{H@10} \\
\midrule
\multirow{7}{*}{F\_v1}   & BoxLitE1  & \multicolumn{1}{c}{.632}     & \multicolumn{1}{c}{.435}     & \multicolumn{1}{c}{.800}     & \multicolumn{1}{c}{.962}     \\
        & BoxLitE2 & \multicolumn{1}{c}{.249}     & \multicolumn{1}{c}{.134}     & \multicolumn{1}{c}{.284}     & \multicolumn{1}{c}{.475}     \\
        & BoxLitE3 & \multicolumn{1}{c}{.698}     & \multicolumn{1}{c}{.524}     & \multicolumn{1}{c}{.842}     & \multicolumn{1}{c}{.984}     \\
        %\hline
        & BoxLitE & \multicolumn{1}{c}{.720}     & \multicolumn{1}{c}{.545}     & \multicolumn{1}{c}{.870}     & \multicolumn{1}{c}{.979}     \\
       % \hline
        & BoxE     & \multicolumn{1}{c}{.826} & \multicolumn{1}{c}{.719} & \multicolumn{1}{c}{.918} & \multicolumn{1}{c}{.986} \\
        & RotatE   & \multicolumn{1}{c}{.474}     & \multicolumn{1}{c}{.295}     & \multicolumn{1}{c}{.584}     & \multicolumn{1}{c}{.805}     \\
        & ComplEx  & \multicolumn{1}{c}{.322} & \multicolumn{1}{c}{.206} & \multicolumn{1}{c}{.359} & \multicolumn{1}{c}{.535} \\
        \midrule
\multirow{7}{*}{F\_v2}   & BoxLitE1  &    \multicolumn{1}{c}{.428} & \multicolumn{1}{c}{.268} & \multicolumn{1}{c}{.519} & \multicolumn{1}{c}{.726}  \\
        & BoxLitE2 &      \multicolumn{1}{c}{.144} & \multicolumn{1}{c}{.072} & \multicolumn{1}{c}{.142} & \multicolumn{1}{c}{.273}  \\
        & BoxLitE3 &     \multicolumn{1}{c}{.541} & \multicolumn{1}{c}{.342} & \multicolumn{1}{c}{.668} & \multicolumn{1}{c}{.897}  \\
       % \hline
        & BoxLitE &    \multicolumn{1}{c}{.549} & \multicolumn{1}{c}{.352} & \multicolumn{1}{c}{.685} & \multicolumn{1}{c}{.905}  \\
       % \hline
        & BoxE     & \multicolumn{1}{c}{ .432} & \multicolumn{1}{c}{.337} & \multicolumn{1}{c}{.461} & \multicolumn{1}{c}{.604} \\
        & RotatE  & \multicolumn{1}{c}{.209} & \multicolumn{1}{c}{.147} & \multicolumn{1}{c}{.218} & \multicolumn{1}{c}{.314} \\
        & ComplEx  & \multicolumn{1}{c}{.341} & \multicolumn{1}{c}{.265} & \multicolumn{1}{c}{.357} & \multicolumn{1}{c}{.488} \\
                \midrule
\multirow{7}{*}{F\_v3}   & BoxLitE1  &  \multicolumn{1}{c}{.364}                        &    \multicolumn{1}{c}{.239}                      & \multicolumn{1}{c}{.396}                         &        \multicolumn{1}{c}{.626}                  \\  %0 0.3 0
        & BoxLitE2 &    \multicolumn{1}{c}{.116}                        &  \multicolumn{1}{c}{.055}                        &  \multicolumn{1}{c}{.108}                        &   \multicolumn{1}{c}{.224}                       \\ %0.1	0	0.003
        & BoxLitE3 &     \multicolumn{1}{c}{.414}                     &       \multicolumn{1}{c}{.280}                   &        \multicolumn{1}{c}{.472}                  &         \multicolumn{1}{c}{.681}                 \\ %0.003 0.1 0
        & BoxLitE &    \multicolumn{1}{c}{.433}                      &  \multicolumn{1}{c}{.288}                        & \multicolumn{1}{c}{.509}                         &  \multicolumn{1}{c}{.708}                        \\
        & BoxE     & \multicolumn{1}{c}{.894} & \multicolumn{1}{c}{.827} & \multicolumn{1}{c}{.946} & \multicolumn{1}{c}{.998} \\
        & RotatE   & \multicolumn{1}{c}{.177} & \multicolumn{1}{c}{.104} & \multicolumn{1}{c}{.199} & \multicolumn{1}{c}{.298} \\
        & ComplEx  & \multicolumn{1}{c}{.184}                     & \multicolumn{1}{c}{.119}                     & \multicolumn{1}{c}{.205}                     & \multicolumn{1}{c}{.284}                     \\
                \midrule
\multirow{7}{*}{F\_v4}   & BoxLitE1  &  \multicolumn{1}{c}{.339} &  \multicolumn{1}{c}{.204}  &  \multicolumn{1}{c}{.415}  &  \multicolumn{1}{c}{.592} \\
        & BoxLitE2 &   \multicolumn{1}{c}{.059}  &  \multicolumn{1}{c}{.023} &  \multicolumn{1}{c}{.54} &  \multicolumn{1}{c}{.110} \\
        & BoxLitE3 &   \multicolumn{1}{c}{.409} &  \multicolumn{1}{c}{.272}  & \multicolumn{1}{c}{.483}&  \multicolumn{1}{c}{.638} \\
        & BoxLitE &    \multicolumn{1}{c}{.444} & \multicolumn{1}{c}{.249} &  \multicolumn{1}{c}{.571}  & \multicolumn{1}{c}{.763}\\
        & BoxE     & \multicolumn{1}{c}{.626} & \multicolumn{1}{c}{.444} & \multicolumn{1}{c}{.772} & \multicolumn{1}{c}{.929} \\
        & RotatE  & \multicolumn{1}{c}{.245} & \multicolumn{1}{c}{.129} & \multicolumn{1}{c}{.300} & \multicolumn{1}{c}{.450} \\
        & ComplEx  & \multicolumn{1}{c}{.134} & \multicolumn{1}{c}{.087} & \multicolumn{1}{c}{.156} & \multicolumn{1}{c}{.203} \\
        \bottomrule
\end{tabular}
\caption{Test Results on F\_v1-4. Average of 3 runs for SGD methods. Standard deviation nearly $0$ in all cases.}
\label{tab:results}
\end{table}

%\textbf{(Q1) Representation Performance.} Table \ref{tab:ExpResults} reveals the results of our experiments on F\_v1-4 for embedding solutions under low-dimensional settings ($d=32$), as commonly done in the KB embedding literature \citep{MuRP,RotH,HConE,Rot2L}. The table reveals that the prediction performance on the train set is quite high, given the low embedding dimensionality. In particular, given an individual and a role, the best embedding solutions found by Gurobi predict in $80\%$ of the cases the correct individual within the first $10$ results (see $H@10$). Thus, although the embedding dimensionality is quite low, our model learns to represent the training data quite well. 

\textbf{(Q1)  Performance.} %Furthermore, 
In Table \ref{tab:results} we present the link prediction results on the test set for \modelName with the three regularization terms that appear in the objetive function (Section~\ref{sec:Scoring}). To study the effect of each of these terms, we also performed experiments in which we remove them. We denote by BoxLitE$i$ the version of \modelName without the regularization term multiplied by $\lambda_i$.
The removal of each of the regularization terms  decreases the overall performance of the model, with the removal of the term associated with $\lambda_2$ being the one that   most negatively impacts the results. %\todo{say something to try to explain this}  %, with few exceptions
%On the test set, the performance 
%on link prediction in more than $70\%$ of the cases
%within the first $10$ results. 

% $(b)$ in contrast to traditional benchmarks such as \citep{WN18RR,FB15k237,YAGO3-10}, our test set size is much larger than our training set size (Table ), leading to a harder evaluation setting than in traditional benchmarks.

%\todo{Q2 is Bruno's homework.}
\textbf{(Q2) Scalability.} The prediction performance on the test set, reported in Table \ref{tab:results}, reduces slowly with increasing dataset sizes. 
%In particular, $H@10$ decreases by an absolute difference of less than $1\%$ on the train set and less than $4\%$ on the test set between F\_v1 and the over ten times larger F\_v4 dataset. 
%Thus, these results provide a first empirical evidence for the scalability of \modelName's prediction performance. 
Regarding the time required for each instance, we recall that 
a problem modelled through CVXPY is first compiled and then sent to a solver of the user's choice, which in our case is MOSEK. 
Given a specific choice of hyperparameters $\lambda_1,\lambda_2,\lambda_3$, Table~\ref{tab:timeRes} displays the compilation and solving time required for obtaining an optimal solution to the problem in  Theorem~\ref{theo:convex}, for each of the datasets F\_v1-4. In our implementation, we tested 354 hyperparameter configurations for each dataset. While changing the hyperparameters requires solving the optimization problem again, it does   not require a recompilation. Overall, compilation does not take more than a couple of seconds and all solution times were less than $30$ seconds, which is quite reasonable considering that the final SOCP corresponding to $F\_v4$ has around $3.5$ million variables and $2.3$ million constraints, which has size comparable to some of the instances that appear in  Mittelmann's benchmark of large SOCPs  \citep{Mittelmann2026}. 
%\color{red}
 Even more, the compilation time in Table~\ref{tab:timeRes} grows linearly with the number of training axioms, while the solving time increases only sublinearly. %Extrapolating from these trends, a large real‑world KB such as the Gene Ontology \citep{Ashburner2000GeneOT} with roughly $100k$ axioms, i.e., containing about $35$ times more training axioms than $F\_v4$, should likely still be compiled and solved into a \modelName embedding within roughly $20$~minutes.\todo{B: To be completely honest, although it was fine for an answer to the referee, I think it might be better to not include speculation on the performance on other data sets. My suggestion would be to just keep the 1st sentence of the red block.}
% \todo[inline]{Ana: I agree to just keep the first sentence and remove "Extrapolating..."}
\color{black}
%\footnote{Mittelmann has, throughout the years, provided impartial benchmarks for solvers of different classes of optimization problems.}. 
%\todo{talk about training time}
% Here, we analyze how the compilation time in CVXPY and the solving time in MOSEK of \modelName are affected by changing dataset sizes.
%\todo{update fv4}
\begin{table}[h!]
\centering
\begin{tabular}{ccc}
\toprule
Dataset & Compilation Time & Solving Time \\
\midrule
F\_v1        &  0.34   &  11.91   \\
F\_v2        &  0.54   &  13.14   \\
F\_v3        &  1.05   &   16.78   \\
F\_v4        &  3.12   &   26.24  \\
%F\_v5        &  8h   &   815s  \\
\bottomrule
\end{tabular}
\caption{Time in seconds  split by dataset for \modelName.}
\label{tab:timeRes}
\end{table}

\textbf{(Q3) Comparison.} 
%\todo{explain that we are not giving the tbox to these models}
%\todo{in table 2 we present average results for 3 runs. in all cases, the standard dev was near to zero.   }
Comparing \modelName with other KBEs in a direct way is tricky since KBEs in the literature consider other languages and are mostly optimized using SGD.
%stochastic gradient descent (SGD). 
\modelName is the first KBE for
DL-Lite$^\Hmc$ ontologies and the first that solves link prediction via   convex optimization. In the discussion, we include an argument for why it is challenging to design convex optimization approaches for languages with conjunctions, which appear in other papers. 
%
%and the languages that KBEs can represent differ greatly. This  fair comparison hard, as \modelName is the first KBE designed to embed  DL-Lite$^\Hmc$ ontologies. 
To illustrate how our approach roughly compares with
%We include a comparison of  \modelName with the 
classical  embedding methods such as BoxE, RotatE, and ComplEx, based on SGD, we run those methods on {the ABox part of} F\_v1-4. We see that the results of 
\modelName are  better than RotatE and ComplEx, but still behind BoxE. One exception is 
F\_v2 where our method performs better. 
{
None of the SGD methods had any rule injections to reflect the DL-Lite$^\Hmc$ TBox axioms used in   BoxLitE. This is a disadvantage for the SGD methods. On the other hand, %BoxE outperformed 
BoxLitE has negative sampling  applied only to existential assertions, which is a disadvantage for our case.}
%in its current state for several reasons, including that negative sampling was applied only to existential assertions in our case.
%None of the SGD models had any rule injections to reflect the TBox axioms used in the BoxLitE. This is a disadvantage for the SGD models. Nonetheless, BoxE outperformed BoxLitE in its current state for several reasons, including that negative sampling was applied only to existential assertions.
% {\color{red} Our assessment is that an area in which our current implementation is lacking is the hyperparameter optimization part, which is currently done by brute force search over pre-defined range of parameters. We believe this is handled more efficiently in pykeen for the compared approaches. }
{
A possible main reason for the performance gap is \modelName's hyperparameter optimization (HPO). It currently relies on grid search, which ensures full control over the explored parameter space, vital for ablations. Yet, grid search does not adaptively guide the hyperparameter search. By contrast, the SGD approaches, implemented in PyKEEN \citep{ali2021pykeen}, make use of % the Optuna framework, which implements 
more efficient, model‑based optimization techniques. Incorporating such adaptive HPO methods in future work could lead to more effective exploration of \modelName's hyperparameter landscape and %potential  %it is already saying "could "before
performance gains.
% Old: % Our assessment is that an area in which our current implementation is lacking is the hyperparameter optimization part, which is currently done by grid search. We believe this is handled more efficiently in PyKEEN for the compared approaches.
%We believe that the hyperparameter optimization procedure of the SGD methods is more efficient than ours and improving our search would increase the performance of \modelName.  
}
\section{Discussion on Optimization in KBEs  }\label{sec:discussion}

In this section, 
we discuss  some aspects related to 
differentiability and %convexity in the literature. 
%sheds light on the 
primary causes for nonconvexity in previous KB embedding works. {We also recall how we address 
each challenge in our approach.}

%\todo{explain why nondiff ok}

\textbf{Nondifferentiability.}
In our approach nondifferentiability is not an issue because in view of Theorem~\ref{theo:socp} the underlying optimization problem can be cast as a second-order cone program. Informally, 
the nondifferentiable part of the problem 
gets embedded into the conic constraints and our solver of choice (MOSEK) \citep{mosek} can handle 
this kind of problem without theoretical 
issues. %\todo{Improve this part}

%\subsection{Nonconvexity in previous works}\label{sec:nonconvex}
\textbf{Negative sampling.} 
{Negative} sampling as described in \cite[Section~3.3]{RotatE} minimizes terms of the form:
\begin{equation*}%\label{eq:negsample}
-\log(\sigma(\gamma - d(x))) - \sum _{i=1}^n w_i(\log(\sigma(d(x_i)-\gamma))),
\end{equation*}
where $\sigma$ is a sigmoid function (e.g., $1/(1+e^{-x})$), $w_i$ are nonnegative weights, $\gamma$ is a margin parameter, the $x_i$'s are ``negative samples'' and $d$ is a distance-like function which may include a $p$-norm term. 
Generally speaking, a function of the form $-\log(\sigma(d(z)-\gamma )$ is neither convex (nor concave) nor differentiable everywhere as a function of $z$.
This can be seen by considering the 1-dimensional case, where $d$ is the absolute value function and $z$ is scalar, so that we obtain the function $-\ln(\sigma(|z|-\gamma)) = \ln(1+e^{\gamma-|z|})$, which, albeit continuous, is nonconvex and nondifferentiable at $z = 0$. 
%From an optimization point of view, { besides being a source of nonconvexity}, applying ADAM to an objective function containing terms of this form is theoretically unsound.
{ In contrast, \modelName adopts negative sampling in a convex way, using the negative concept regularization terms (Section~\ref{sec:Scoring}) that push individuals into complement boxes.}
%{\color{red} Even the first half of \eqref{eq:negsample} is troubling because the function $-\log(\sigma(\gamma-d(z) )$ is not typically differentiable at $d(z)=0$ which can be seen again by looking at the 1D case which corresponds to the  function $\ln(1+e^{|z|-\gamma})$. Since the goal is typically to drive the loss to zero (which is contained in the region of nondifferentiability),  a pure gradient method may struggle to accomplish that.}
%\todo[inline]{B: I put the part above in red because we may want to comment it out later.}

\textbf{Nonconvex loss terms.} 
%Even without negative sampling, 
The loss terms of previous works often contain operations that do not preserve convexity in general. We briefly take a look at this issue in two closely related works.
To be fair, none of these two works claim that their optimization problems are convex. 
%\begin{itemize}
 %   \item 
%    \textbf{BoxE.} 
    For BoxE, it is not clear whether the distance function considered in \cite[Section~4]{BoxE} is convex as a \emph{function of the parameters that need to be optimized during learning}, since it includes {division} and multiplication by a width term that represents box sizes. Division and multiplication are in general not operations that preserve convexity. 
    %\item 
    %\textbf{BoxEL.} 
    In BoxEL, the authors consider loss terms that are quotients of volumes of boxes (or approximations thereof), e.g., see \cite[Section~4.4]{BoxEL}. Again, division does not preserve convexity in general.
%\todo{Check reference to BoxEL}
{
In contrast, all of \modelName's loss terms together with the distance function in our approach are convex.
}
   % {\color{red}That said, the concept and role assertion loss terms in BoxEL (\cite[Section~4.3]{BoxEL}) are indeed convex, but once, say, a concept assertion is satisfied the loss value is always zero, thus points inside the box always have the same loss, no matter their position. In contrast,  assertion loss terms in \citep{BoxE} are not constant inside  boxes, which is desirable. However, BoxE makes use of a distance function which, as mentioned previously, is unclear whether it is convex. In our approach, because of our usage of the signed distance function, concept and role assertion  are {both} convex and nonconstant in the interior of the box. It also encourage points to be closer to the center of the box. }
 %  A typical difficulty is that for, say, a loss term that model a concept assertion, it is hard to devise a convex function that The design of an appropriate distance function is nontrivial. While it is simple to design a convex loss term that is zero if and only if, say, a concept is included in another
%    \todo{talk about sign distance in other works and how we solved in ours}
%\end{itemize}

\textbf{The logic fragment.} 
A final source of nonconvexity %for certain approaches 
seems to be the choice of {description logic fragments}. Approaches based on minimizing loss terms together with logical languages that include, say, conjunction on the left-hand side typically lead to nonconvexity. 
Suppose that the conjunction of concepts $C,D$ is interpreted as the set intersection of the corresponding boxes $\eta(C), \eta(D)$, {as observed in the works for the \EL ontology language}.
%{\color{red}For example,  consider an \EL axiom of the form ``$C \sqcap D \sqsubseteq E$'' which is allowed, e.g., in {Box2EL \citep{Box2EL}} and  TransBox:\EL$^{++}$\citep{YCS25}}. %all el not just these two
In this case, the axiom $C \sqcap D \sqsubseteq E$ translates to the constraints
 {$\mathbf{L}_E \leq \max(\mathbf{L}_C,\mathbf{L}_D)$} 
% $\max(\mathbf{L}_C[i],\mathbf{L}_D[i]) \leq \mathbf{L}_E[i]$
%\todo{Aleks: I beieve this should be $\mathbf{L}_E[i] \leq \max(\mathbf{L}_C[i],\mathbf{L}_D[i])$ instead.} 
and  $\min(\mathbf{U}_C,\mathbf{U}_D) \leq \mathbf{U}_E$, where $\mathbf{L}_X, \mathbf{U}_X$ indicate the lower and upper bounds of the box associated to a concept $X$.
Because the set $\Smc \coloneqq \{(a,b,c) \in \Re^3 \mid \min(a,b) \leq c \}$ is not convex\footnote{It suffices to observe that $(1,0,0), (0,1,0) \in \Smc  $, but $0.5(1,0,0) + 0.5(0,1,0) = (0.5,0.5,0)\not \in \Smc$.}, these constraints are not convex in general. 
Since the optimal sets of convex functions are convex, it is not possible to devise a convex $f:\Re^3\to \Re$ such that ``$(a,b,c) \in \Smc \Leftrightarrow (a,b,c)$ is optimal for $f$''.
In particular, absent extenuating circumstances, if the bounds of $C,D,E$ are parameters to be optimized during learning, it is impossible to construct a nonnegative \emph{convex} loss function that is zero if and only if $C \sqcap D \sqsubseteq E$ holds.
%We also comment briefly on 
Regarding \emph{cone semantics}, although convex optimization is mentioned as one motivation for using cones \citep{OLW20}, it is not explained how exactly convex optimization fits in the picture of their approach.
In another work, the authors show how to use axis-aligned cones and pairs of unions of convex cones to solve certain multi-label classification problems \citep{LOW22} via SVMs. 
The approach described in \citep{LOW22} is significantly different from the loss function minimization approach described in other papers. Moreover, the Propositional $\mathcal{ALC}$ language used in \citep{LOW22} is   different from DL-Lite$^{\Hmc}$, which we consider in this work, since DL-Lite$^{\Hmc}$ features roles and inverses. %, and role inclusions. 
\section{Conclusion and Future Work}
\label{sec:conclusion}

%\textbf{Concluion.} In this work, % commenting repetition
We propose \modelName, a KB embedding method that allows for convex optimization and ensures the satisfaction of TBox axioms.
%by translating them to convex constraints. %In addition, 
We prove that for any DL-Lite$^\Hmc$ KB, there is a   faithful embedding solution that is  a KB model.
%, and  for any DL-Lite$_{core}$ KB, an embedding solution exists, whose loss is equal to zero, ensuring it to be strongly TBox faithful. 
We %practically 
implement and evaluate \modelName's convex problem formulation for KB embeddings in CVXPY and MOSEK. 
The results reveal that MOSEK   finds a solution that is KB faithful and that satisfies the concept inclusions in the  TBox   for a prototypical ontology. %\todo{I think this part about strong faithfulness needs to be removed.} 
Within   a few seconds of solving time, MOSEK   finds embedding solutions on subsets (F\_v1 to F\_v4) of a real-world KB that achieve good link prediction results. %for KB representation and reasoning tasks.
{In the future, we would like to %investigate more \modelName's practical side and  
{consider more efficient methods for the HPO and evaluation.} 
Also, we want to study how to ensure other theoretical properties in convex KBE methods. 
As languages that allow for conjunctions on the left of inclusions often %are likely to 
lead to nonconvexity, 
%\todo{add this in the main paper}
another %interesting 
line lies in studying sound nonconvex KBE approaches that can ensure the construction of faithful models. 
Finally, we would like to investigate, using nonconvex tools, the effect of pushing negative samples outside the concept box into an `unknown' truth state, and how this affects prediction results.
%compared to 
%the effect observed in 
%our convex setting.

\section*{Acknowledgements}
This work was supported by the ``Strategic Research Projects'' grant from ROIS
(Research Organization of Information and Systems). Bruno F. Louren\c{c}o's work was partially supported by the JSPS Grant-in-Aid for Early-Career Scientists 23K16844. Ana Ozaki was supported by the Research Council of Norway, projects (316022, 322480) and Integreat - Norwegian Centre for knowledge-driven machine learning (332645).
Emanuel Sallinger's and Hesham Morgan's work on this paper was funded
by the Vienna
Science and Technology Fund (WWTF) [Grant ID: 10.47379/VRG18013, 10.47379/ICT25032, 10.47379/NXT22018, 10.47379/ ICT2201, 10.47379/DCDH001], and by the
Austrian Science Fund (FWF) 10.55776/COE12.
\section*{AI Declaration}
Gen AI tools were only used to help find grammatical mistakes and to aid in the  process of debugging  the code.
\bibliographystyle{kr}
\bibliography{kr}

\ifFullVersion
\appendix
%\newpage
%\section{Organization}
\section{{Supplemental Material}}
%\todo[inline]{B: I updated the  intro below, but we we should check whether there are things that I missed.}
This supplemental material contains additional information on the experimental setup,   theoretical and empirical results, and complete proofs for each corollary, proposition, and theorem. 
Specifically,~\cref{sec:appendixbasic} introduces the semantics of DL-Lite$^\Hmc$ for the convenience of the reader. Next,~\cref{app:basicDef} provides all proofs for the theoretical results in~\cref{sec:basicDef}. Afterwards,~\cref{app:faifthul}  provides proofs for the theoretical results of~\cref{sec:faithful}. Moreover,~\cref{app:opt} formulates in detail \modelName's convex optimization problem and provides the proofs for each theoretical result of  
\cref{sec:opt}. Then, ~\cref{app:additional_analyses} provides additional information on the sizes of the optimization problem in the experiments. %empirical analyses and results, for instance, measuring the compilation and solving time of \modelName on datasets of varying sizes. 
%Finally,
~\cref{app:expDetails} provides additional information on the experimental setup, including implementation details and a   discussion on the creation of the datasets (F\_v1-4), the training setup, hyperparameter optimization, evaluation protocol, and used metrics. 
%Next, ~\cref{sec:nonconvex} sheds light on the primary causes for nonconvexity in previous KB embedding works. 
%Finally,  
%\cref{app:protoDeductiveClosure} lists the deductive closure of the prototypical family KB (Figure \ref{fig:familyOnto} in  \cref{sec:Exp_Strong_Faithfulness}) that was used for evaluating whether the retrieved embedding solution of  \cref{sec:Exp_Strong_Faithfulness} is strongly TBox faithful.

\section{Basic Definitions: DL-Lite$^\Hmc$ Semantics}\label{sec:appendixbasic}
For the convenience of the reader, here we provide the definition
of the semantics for  DL-Lite$^\Hmc$,
which is standard and can be found in  references such as~\citep{dllite-jair09}. 

An \emph{interpretation} $\Imc$ is a pair $(\Delta^\Imc,\cdot^\Imc)$, where $\Delta^\Imc$
is a non-empty set, called the \emph{domain} of \Imc, and  {$\cdot^\Imc$} is a function that assigns a subset  $A^\Imc\subseteq \Delta^\Imc$ of the domain to each
$A\in\NC$, a binary relation  $R^\Imc\subseteq \Delta^\Imc\times\Delta^\Imc$ over the domain to each $R\in\NR$, and an element $a^\Imc \in \Delta^\Imc$  to each $a\in\NI$.
We extend $\cdot^\Imc$ to   role and concept expressions as follows:
\begin{align*} 
%	(\top)^\Imc := {} & \Delta^\Imc; \\ 
   % (C\sqcap D)^\Imc := {} &C^\Imc\cap D^\Imc;\\
%     (C\sqcup D)^\Imc = {} &C^\Imc\cup D^\Imc;\\
        (\neg B)^\Imc := {} &  \Delta^\Imc\setminus B^\Imc; \\ 
       %  (\neg S)^\Imc := {} &  (\Delta^\Imc\times \Delta^\Imc)\setminus S^\Imc; \\ 
	(R^-)^\Imc := {} & \{(e,d)\mid (d,e)\in R^\Imc\};\\	
	    (\exists S)^\Imc := {} & \{d\mid \exists e\in\Delta^\Imc
    \text{  such that  }(d,e)\in S^\Imc\}.
%  ({\sf ran} (R))^\Imc = {} & \{e\mid \exists d\in\Delta^\Imc    \text{ s.t. }(d,e)\in R^\Imc\};   \\ 
  \end{align*} 
 % \todo[inline]{Aleks: Why do we introduce the expression $\neg S$ above? I thought this was not a part of DL-Lite$^\Hmc$?} thanks yes we forgot to remove
We say that an interpretation \Imc \emph{satisfies}
\begin{itemize}
    \item a role inclusion $S\sqsubseteq T$ iff
    $S^\Imc\subseteq T^\Imc$;
    \item a concept inclusion $B\sqsubseteq C$ iff
    $B^\Imc\subseteq C^\Imc$; 
    \item a role assertion $R(a,b)$ iff $(a^\Imc,b^\Imc)\in R^\Imc$; and 
\item a concept assertion $D(a)$ iff $a^\Imc\in D^\Imc$.
\end{itemize}
%~\footnote{
In this work $\NC$, $\NR$, and $\NI$ are all \emph{finite sets}, considered to be the relevant symbols to express KBs. While in the   Description Logic literature these sets are often assumed to be countably infinite, in the KB embedding literature, they are usually assumed to be finite, e.g., \citep{BoxEL} and that is what we adopt here.
%}
{In what follows,  we denote the size of a finite set $V$  by $|V|$.}

\section{Proofs for Section~\ref{sec:basicDef}}
\label{app:basicDef}
Here, we provide proofs for the results in \cref{sec:basicDef}.
The canonical interpretations found in the literature,  e.g.,~\citep{DBLP:conf/kr/KontchakovLTWZ10} are usually designed for query answering and may not satisfy the (concept/role) inclusions that are entailed by the TBox or falsify those inclusions that are not entailed. Since satisfying the TBox is important in our work for   establishing    faithfulness results later, we provided our own definition of {the} canonical model (\cref{def:canonicalModel}) and now  provide the full proof of \cref{thm:canonicalModel}.

\smallskip

\theoremcanonical*
\begin{proof}
%The first point follows from the fact that
%the definition of the canonical model
%satisfies the ABox and the proof of the second point, which we focus now. \todo{Aleks: I am a little unsure what the previous sentence says.} this was old, the previous statement had diff points
In the following, assume
$A, B\in\NC$ and $R,S\in\NR$.
%\todo[inline]{B: Shouldn't we explicitly mention that   $A,B \in \NC$ (or is it $\NCE$?) and $R,S \in \NR$ (or is it $\NRm$)?}
\begin{claim}
 $\Imc_\Kmc\models A\sqsubseteq B$   iff $\Kmc\models A\sqsubseteq B$.
\end{claim}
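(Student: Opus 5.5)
The claim says $\Imc_\Kmc\models A\sqsubseteq B$ iff $\Kmc\models A\sqsubseteq B$, for $A,B\in\NC$. I would prove the two directions separately, each straight from Definition~\ref{def:canonicalModel}, using only the definition of $A^{\Imc_\Kmc}$ on named individuals and on the anonymous elements $c_D$.

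\emph{If $\Kmc\models A\sqsubseteq B$, then the inclusion holds in $\Imc_\Kmc$.} I would show $A^{\Imc_\Kmc}\subseteq B^{\Imc_\Kmc}$ by looking at the two kinds of elements in $A^{\Imc_\Kmc}$.
\begin{itemize}
\item For an individual $a\in\NI$ in $A^{\Imc_\Kmc}$, we have $\Kmc\models A(a)$. Together with $\Kmc\models A\sqsubseteq B$ this gives $\Kmc\models B(a)$, so $a\in B^{\Imc_\Kmc}$.
\item For an element $c_D\in\Delta_\Kmc$ in $A^{\Imc_\Kmc}$, we have $\Kmc\models D\sqsubseteq A$. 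By transitivity of entailed inclusions, $\Kmc\models D\sqsubseteq B$, so $c_D\in B^{\Imc_\Kmc}$.
\end{itemize}
This case uses only the definition of $A^{\Imc_\Kmc}$; the role component is not needed.

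\emph{If $\Imc_\Kmc\models A\sqsubseteq B$, then $\Kmc\models A\sqsubseteq B$.} I would argue by contraposition, splitting on whether $A$ is satisfiable w.r.t.\ $\Kmc$.
\begin{itemize}
\item If $A$ is not satisfiable w.r.t.\ $\Kmc$, then $A^\Imc=\emptyset$ in every model $\Imc$ of $\Kmc$. Hence $\Kmc\models A\sqsubseteq B$ trivially, and there is nothing to show.
\item If $A$ is satisfiable w.r.t.\ $\Kmc$, then $c_A\in\Delta_\Kmc$. Since $\Kmc\models A\sqsubseteq A$, we get $c_A\in A^{\Imc_\Kmc}$. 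The assumption $\Imc_\Kmc\models A\sqsubseteq B$ then gives $c_A\in B^{\Imc_\Kmc}$. Because $c_A\notin\NI$, this membership can only come from the second part of the definition, so $\Kmc\models A\sqsubseteq B$.
\end{itemize}

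This is exactly why the domain contains one witness $c_D$ per satisfiable $D$: each witness falsifies every non-entailed inclusion whose left-hand side is $D$.

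The only delicate point is to keep named individuals and anonymous witnesses apart, which holds because $\Delta_\Kmc$ is assumed disjoint from $\NI$. Nothing here requires checking that $\Imc_\Kmc$ is a model of $\Kmc$, so I expect no real obstacle for this claim. The harder work comes in the later claims about existentials and role inclusions, where the $R^{\Imc_\Kmc}$ clauses have to be analysed.
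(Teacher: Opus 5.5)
Your proposal is correct and follows essentially the paper's proof: the same split into individuals $a\in\NI$ and witnesses $c_D$ for the forward direction, and the witness $c_A$ for the converse. The only difference is cosmetic. You split explicitly on whether $A$ is satisfiable w.r.t.\ $\Kmc$, whereas the paper gets satisfiability of $A$ directly from $\Kmc\not\models A\sqsubseteq B$ and then exhibits $c_A\in A^{\Imc_\Kmc}\setminus B^{\Imc_\Kmc}$.
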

\begin{proof}
    \textbf{Assume $\Kmc\models A\sqsubseteq B$.} We make a case distinction based 
on the elements in $\Delta^{\Imc_\Kmc}:=\NI \cup \Delta_\Kmc$. 
\begin{itemize}
\item $a\in \NI$: Assume $a\in A^{\Imc_\Kmc}$. By definition of $\Imc_\Kmc$, we have that
$a\in A^{\Imc_\Kmc}$ iff $\Kmc\models A(a)$.
By assumption, $\Kmc\models A\sqsubseteq B$, so $\Kmc\models B(a)$. Then, again by definition of $\Imc_\Kmc$,
$a\in B^{\Imc_\Kmc}$. Since $a$ was an arbitrary element of \NI this holds for all elements of this kind.
\item $c_D\in\Delta_\Kmc$: Assume $c_D\in A^{\Imc_\Kmc}$. By definition of $\Imc_\Kmc$, we have that
$c_D\in A^{\Imc_\Kmc}$ iff $\Kmc\models D\sqsubseteq A$.
By assumption, $\Kmc\models A\sqsubseteq B$, so $\Kmc\models D\sqsubseteq B$. Then, again by definition of $\Imc_\Kmc$,
$c_D\in B^{\Imc_\Kmc}$. Since $c_D$ was an arbitrary element of $\Delta_\Kmc$ this holds for all elements of this kind.
\end{itemize}
We have thus shown that $\Imc_\Kmc\models A\sqsubseteq B$.

\smallskip

\noindent
\textbf{Now, assume $\Kmc\not\models A\sqsubseteq B$.}
We show that $\canonmodel\not\models A\sqsubseteq B$. If $\Kmc\not\models A\sqsubseteq B$
then there is an interpretation \Imc that satisfies \Kmc with
$A^\Imc$ non-empty. This means that
$A$ is satisfiable w.r.t. \Kmc and thus
$c_{A}\in \Delta_\Kmc$.
By definition of $\canonmodel$, we have that
$c_{A}\in A^{\canonmodel}$  since $\Kmc\models A\sqsubseteq A$ holds trivially. 
We now argue that
$c_A\notin B^{\canonmodel}$. By definition of $\canonmodel$, 
an element of the form 
$c_D$ is in 
$B^{\canonmodel}$
iff
 $\Kmc\models D\sqsubseteq B$.
 By assumption $\Kmc\not\models A\sqsubseteq B$.
 So   $c_A$ is not in $B^{\canonmodel}$.
\end{proof}

\begin{claim}\label{clm:anegb}
 $\Imc_\Kmc\models A\sqsubseteq \neg B$   iff $\Kmc\models A\sqsubseteq \neg B$.
\end{claim}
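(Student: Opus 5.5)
We prove the two directions separately, following the pattern of the previous claim for $A\sqsubseteq B$. Throughout, $A,B\in\NC$, and recall that $\Imc_\Kmc\models A\sqsubseteq\neg B$ holds iff $A^{\canonmodel}\cap B^{\canonmodel}=\emptyset$.

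\textbf{Assume $\Kmc\models A\sqsubseteq\neg B$.} We show that no element of $\Delta^{\canonmodel}=\NI\cup\Delta_\Kmc$ lies in both $A^{\canonmodel}$ and $B^{\canonmodel}$.
\begin{itemize}
\item Let $a\in\NI$ with $a\in A^{\canonmodel}\cap B^{\canonmodel}$. By the definition of $\canonmodel$ we get $\Kmc\models A(a)$ and $\Kmc\models B(a)$. Together with $\Kmc\models A\sqsubseteq\neg B$, every model $\Imc$ of $\Kmc$ satisfies $a^\Imc\in B^\Imc\setminus B^\Imc$. Hence $\Kmc$ has no model, contradicting satisfiability.
\item Let $c_D\in\Delta_\Kmc$ with $c_D\in A^{\canonmodel}\cap B^{\canonmodel}$. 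Then $\Kmc\models D\sqsubseteq A$ and $\Kmc\models D\sqsubseteq B$. Since $c_D\in\Delta_\Kmc$, the concept $D$ is satisfiable w.r.t.\ $\Kmc$, so some model $\Imc$ of $\Kmc$ has an element $e\in D^\Imc$. Then $e\in A^\Imc\cap B^\Imc$, which contradicts $A^\Imc\subseteq(\neg B)^\Imc$.
\end{itemize}
Here $D$ ranges over $\NCEn$, so it may itself be a conjunction $D_1\sqcap D_2$. The argument is unchanged in that case, because the semantics of $\sqcap$ is set intersection and only the entailments $D\sqsubseteq A$ and $D\sqsubseteq B$ are used.

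\textbf{Assume $\Kmc\not\models A\sqsubseteq\neg B$.} We must exhibit an element of $A^{\canonmodel}\cap B^{\canonmodel}$. This direction is the reason conjunctions were added to $\NCEn$. Non-entailment gives a model $\Imc$ of $\Kmc$ with $A^\Imc\cap B^\Imc\neq\emptyset$, that is, $(A\sqcap B)^\Imc\neq\emptyset$. So $A\sqcap B$ is satisfiable w.r.t.\ $\Kmc$, and therefore $c_{A\sqcap B}\in\Delta_\Kmc$. The entailments $\Kmc\models A\sqcap B\sqsubseteq A$ and $\Kmc\models A\sqcap B\sqsubseteq B$ hold trivially. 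By the definition of $A^{\canonmodel}$ and $B^{\canonmodel}$, this gives $c_{A\sqcap B}\in A^{\canonmodel}\cap B^{\canonmodel}$, so $\canonmodel\not\models A\sqsubseteq\neg B$.

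The main subtlety lies in this second direction, in making sure the definition of $\canonmodel$ legitimately applies to $D=A\sqcap B$. The condition ``$\Kmc\models D\sqsubseteq A$'' has to be read with $D$ a conjunction, which is an inclusion outside DL-Lite$^\Hmc$ syntax. We therefore interpret it semantically, as $D^\Imc\subseteq A^\Imc$ in all models $\Imc$ of $\Kmc$. Under that reading it is immediate for $D=A\sqcap B$.
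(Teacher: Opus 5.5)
Your proof is correct and follows the paper's argument. In the forward direction you use the satisfiability of $\Kmc$ for named individuals and the satisfiability of $D$ for elements $c_D$; you phrase this as a contradiction, whereas the paper shows membership in $(\neg B)^{\canonmodel}$ directly, but the reasoning is the same. The converse uses the witness $c_{A\sqcap B}$ exactly as the paper does.
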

\begin{proof}
    \textbf{Assume $\Kmc\models A\sqsubseteq \neg B$.} We make a case distinction based 
on the elements in $\Delta^{\Imc_\Kmc}:=\NI \cup \Delta_\Kmc$. 
\begin{itemize}
\item $a\in \NI$: Assume $a\in A^{\Imc_\Kmc}$. By definition of $\Imc_\Kmc$, we have that
$a\in A^{\Imc_\Kmc}$ iff $\Kmc\models A(a)$.
By assumption, $\Kmc\models A\sqsubseteq \neg B$.
Also, %so $\Kmc\models \neg B(a)$. 
by assumption \Kmc is satisfiable, meaning that $\Kmc\not\models  B(a)$. Then,   by definition of $\Imc_\Kmc$,
$a\notin B^{\Imc_\Kmc}$, that is, $a\in (\neg B)^{\Imc_\Kmc}$. As $a$ was an arbitrary element of \NI this holds for all elements of this kind.
\item $c_D\in\Delta_\Kmc$: Assume $c_D\in A^{\Imc_\Kmc}$. By definition of $\Imc_\Kmc$, we have that
$c_D\in A^{\Imc_\Kmc}$ iff $\Kmc\models D\sqsubseteq A$.
By assumption, $\Kmc\models A\sqsubseteq \neg B$, so $\Kmc\models D\sqsubseteq \neg B$. As $c_D\in\Delta_\Kmc$, by definition of $\Delta_\Kmc$, $D$ is satisfiable w.r.t \Kmc. So $\Kmc\not\models D\sqsubseteq  B$.
Then, again by definition of $\Imc_\Kmc$,
$c_D\in (\neg B)^{\Imc_\Kmc}$. Since $c_D$ was an arbitrary element of $\Delta_\Kmc$ this holds for all elements of this kind.
\end{itemize}
We have thus shown that $\Imc_\Kmc\models A\sqsubseteq \neg B$.

\smallskip

\noindent
\textbf{Now, assume $\Kmc\not\models A\sqsubseteq \neg B$.}
We show that $\canonmodel\not\models A\sqsubseteq \neg B$. If $\Kmc\not\models A\sqsubseteq \neg B$
then there is an interpretation \Imc that satisfies \Kmc with
$A^\Imc\cap B^\Imc$ non-empty. This means that
$A\sqcap B$ is satisfiable w.r.t. \Kmc and thus
$c_{A\sqcap B}\in \Delta_\Kmc$.
By definition of $\canonmodel$, we have that
$c_{A\sqcap B}\in A^{\canonmodel}$ and $c_{A\sqcap B}\in B^{\canonmodel}$  since $\Kmc\models A\sqcap B\sqsubseteq A$ and $\Kmc\models A\sqcap B\sqsubseteq B$. % holds trivially. 
%We now argue that
%$c_A\notin B^{\canonmodel}$. By definition of $\canonmodel$, 
%an element of the form 
%$c_D$ is in 
%$B^{\canonmodel}$
%iff
% $\Kmc\models D\sqsubseteq B$.
% By assumption $\Kmc\not\models A\sqsubseteq B$.
 So  $\canonmodel\not\models A\sqsubseteq \neg B$.
 %$c_A$ is not in $B^{\canonmodel}$.
\end{proof}
\begin{claim}
% Given $A,R\in{\sf sig}(\Kmc)$,
 $\Imc_{\Kmc}\models R\sqsubseteq S$   iff $\Kmc\models  R\sqsubseteq S$.
\end{claim}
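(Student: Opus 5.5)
We prove the two directions separately, following the pattern of the two preceding claims. Throughout, $R,S\in\NR$, and we use the definition of $R^{\canonmodel}$ in \cref{def:canonicalModel}, which is a union of six pieces.

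\textbf{Assume $\Kmc\models R\sqsubseteq S$.} Take any pair $(x,y)\in R^{\canonmodel}$ and go through the six pieces of the definition of $R^{\canonmodel}$. Each piece has the same shape: it is the set of pairs satisfying some entailment conditions, one of which is a role inclusion $\Kmc\models T\sqsubseteq R$ with $T\in\{S',\overline{S'}\}$ for the relevant role $S'$. The only exception is the first piece, whose condition is $\Kmc\models R(a,b)$.
\begin{itemize}
\item For the first piece, $\Kmc\models R(a,b)$ together with $\Kmc\models R\sqsubseteq S$ gives $\Kmc\models S(a,b)$.
\item For the other five pieces, transitivity of entailed role inclusions turns $\Kmc\models T\sqsubseteq R$ into $\Kmc\models T\sqsubseteq S$.
\end{itemize}
All remaining conditions, such as $\Kmc\models\exists\overline{S'}(a)$ or $\Kmc\models D\sqsubseteq\exists\overline{S'}$, do not mention $R$ and are unchanged. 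Hence the same pair lies in the corresponding piece of $S^{\canonmodel}$, so $R^{\canonmodel}\subseteq S^{\canonmodel}$.

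\textbf{Assume $\Kmc\not\models R\sqsubseteq S$.} We want a witness pair in $R^{\canonmodel}\setminus S^{\canonmodel}$. The natural candidate is $(c_{\exists R^-},c_{\exists R})$, taken from the fourth piece with the role there instantiated to $R^-$.

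\emph{Step 1: the witness exists and is in $R^{\canonmodel}$.} A model $\Imc$ of $\Kmc$ with $R^\Imc\not\subseteq S^\Imc$ has $R^\Imc\neq\emptyset$. Hence both $\exists R$ and $\exists R^-$ are satisfiable w.r.t.\ $\Kmc$, so $c_{\exists R},c_{\exists R^-}\in\Delta_\Kmc$. Membership in the fourth piece requires $\Kmc\models R^-\sqsubseteq R$ read with the orientation used there, so the bookkeeping between $S'$ and $\overline{S'}$ must be checked. If the orientation comes out reversed, we instead use $(c_{\exists R},c_{\exists R^-})$, which requires only $\Kmc\models R\sqsubseteq R$; this holds trivially.

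\emph{Step 2: the witness is not in $S^{\canonmodel}$.} Both components lie in $\Delta_\Kmc$, so only the last three pieces of $S^{\canonmodel}$ can contain the pair.
\begin{itemize}
\item The fourth piece would require $\Kmc\models R\sqsubseteq S$ (up to the same inversion convention), contradicting the assumption.
\item The fifth and sixth pieces would require, for example, $\Kmc\models\exists R\sqsubseteq\exists\overline{T}$ with $T=R^-$ and $\Kmc\models R\sqsubseteq S$, or the symmetric variant. In each case the role-inclusion conjunct is exactly $R\sqsubseteq S$ or its inverse form $R^-\sqsubseteq S^-$, which is equivalent. Either way this contradicts the assumption.
\end{itemize}

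\textbf{Main obstacle.} The hard part is the last case analysis. The fifth and sixth pieces pair an arbitrary $c_D$ with $c_{\exists T}$. When $D$ itself has the form $\exists R$, the pair $(c_{\exists R^-},c_{\exists R})$ could potentially be produced with a different role $T$. We must check that every such decomposition forces $T$ to be determined by the $\exists$-index of the second (resp.\ first) component, so that the role-inclusion conjunct is always $R\sqsubseteq S$ modulo inversion.

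If this fails for the chosen witness, we fall back on an ABox-free style argument. Consider the model $\Imc$ of $\Kmc$ with $R^\Imc\not\subseteq S^\Imc$, and compare it with $\canonmodel$ via a homomorphism from $\canonmodel$ into a suitable model. By Theorem~\ref{thm:canonicalModel} for assertions and the satisfaction of $\Kmc$, showing $S^{\canonmodel}\cap\{(c_{\exists R^-},c_{\exists R})\}=\emptyset$ reduces to showing that no entailment forces that pair into $S$.
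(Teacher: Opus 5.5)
Your proof is correct and is the paper's argument. The forward direction goes piece by piece through $R^{\canonmodel}$ using transitivity. The converse uses the witness $(c_{\exists R},c_{\exists R^-})$ from the fourth piece with the role instantiated to $R$ itself. Commit to this fallback: your first candidate $(c_{\exists R^-},c_{\exists R})$ lies in $R^{\canonmodel}$ only when $\Kmc\models R^-\sqsubseteq R$.

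Your ``main obstacle'' dissolves because distinct concepts index distinct elements of $\Delta_\Kmc$. In the fifth piece of $S^{\canonmodel}$ the component $c_{\exists R^-}$ forces $\overline{T}=R$, and in the sixth piece $c_{\exists R}$ forces $T=R$. So the role conjunct is always exactly $R\sqsubseteq S$, and the homomorphism fallback can be dropped.

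With these fixes your version is slightly more complete than the paper's. The paper neither checks $c_{\exists R},c_{\exists R^-}\in\Delta_\Kmc$ nor addresses the fifth and sixth pieces.
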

\begin{proof}

\textbf{Assume $\Kmc\models R\sqsubseteq S$.} We make a case distinction based 
on the elements in $\Delta^{\canonmodel}$ and how they
can be related in the extension of a role name in the definition of $\canonmodel$.
\begin{itemize}
   
\item $(a,b)\in \NI\times\NI$:  
  Assume 
    $(a,b)\in R^{\canonmodel}$.
    We first argue that in this case
    $\Kmc\models R(a,b)$. 
    By definition of $\canonmodel$,
$     (a, b) \in  R^{\canonmodel} \text{ iff } \Kmc \models R(a,b)$.  
Since by assumption $\Kmc\models R\sqsubseteq S$
we have that $\Kmc \models S(a,b)$, so 
$(a,b)\in S^{\canonmodel}$. Since
$(a,b)$ was an arbitrary pair in $\NI\times\NI$, this holds for all such kinds of pairs. 
\item $(a,c_{\exists  {R'}})\in \NI\times \Delta_\Kmc$: Assume 
    $(a,c_{\exists  {R'}})\in R^{\canonmodel}$. 
    %We first argue that in this case
    %$\Kmc\models \exists R(a)$. 
    By definition of $\canonmodel$,
    we have that 
    %$(a,c_{\exists \overline{R'}})\in R^{\canonmodel}$ iff
    $\Kmc\models \exists \overline{R'}(a)$ and $\Kmc\models \overline{R'}\sqsubseteq R$.
    By assumption $\Kmc\models R\sqsubseteq S$.
    So $\Kmc\models \overline{R'}\sqsubseteq S$. Then, again by definition of $\canonmodel$,
    we have that $(a,c_{\exists {R'}})\in S^{\canonmodel}$. Since
$(a,c_{\exists {R'}})$ was an arbitrary pair of this format in $\NI\times \Delta_\Kmc$, this holds 
for all such kinds of pairs. 
\item $(c_{\exists R'},a)\in \Delta_\Kmc\times \NI$: 
Assume 
    $(c_{\exists R'},a)\in R^{\canonmodel}$. 
    %We first argue that in this case
    %$\Kmc\models \exists \overline{R}(a)$. 
    By definition of $\canonmodel$,
   % we have that $(c_{\exists R'},a)\in R^{\canonmodel}$ iff
    $\Kmc\models \exists \overline{R'}(a)$ and $\Kmc\models {R'}\sqsubseteq R$.
    By assumption $\Kmc\models R\sqsubseteq S$.
    So $\Kmc\models R'\sqsubseteq S$. Then, again by definition of $\canonmodel$,
    we have that $(c_{\exists R'},a)\in S^{\canonmodel}$. Since
$(c_{\exists {R'}},a)$ was an arbitrary pair of this format in $\NI\times \Delta_\Kmc$, this argument can be applied 
for all such kinds of pairs. 
%this case is similar to the above case, except for changes based on inverse roles.
%that here 
%we have $R'^-$ instead of $R'$.
\item $(c_{\exists R'},c_{\exists \overline{R'}})\in \Delta_\Kmc \times \Delta_\Kmc$: 
Assume 
    $(c_{\exists R'},c_{\exists \overline{R'}})\in R^{\canonmodel}$.
By definition of $\canonmodel$,
we have that $\Kmc\models R'\sqsubseteq R$. By assumption, $\Kmc\models R\sqsubseteq S$, so
$\Kmc\models R'\sqsubseteq S$.
Then, again by definition of $\canonmodel$,
 $(c_{\exists R'},c_{\exists \overline{R'}})\in S^{\canonmodel}$.
%\item $(c_{\exists R'^-},c_{\exists R'})\in \Delta_\Kmc \times \Delta_\Kmc$: this case is similar to the above case, except for changes based on inverse roles.
\item $(c_D,c_{\exists {R'}})\in \Delta_\Kmc \times \Delta_\Kmc$: Assume 
    $(c_{D},c_{\exists {R'}})\in R^{\canonmodel}$.
By definition of $\canonmodel$,
we have that $\Kmc\models D\sqsubseteq \exists \overline{R'}$ and $\Kmc\models \overline{R'}\sqsubseteq R$. By assumption, $\Kmc\models R\sqsubseteq S$, so
$\Kmc\models \overline{R'}\sqsubseteq S$.
Then, again by definition of $\canonmodel$,
 $(c_{D},c_{\exists {R'}})\in S^{\canonmodel}$. 
\item $(c_{\exists R'},c_{D})\in \Delta_\Kmc \times \Delta_\Kmc$: 
 Assume 
    $(c_{\exists {R'}},c_{D})\in R^{\canonmodel}$.
By definition of $\canonmodel$,
we have that $\Kmc\models D\sqsubseteq \exists \overline{R'}$ and $\Kmc\models R'\sqsubseteq R$. By assumption, $\Kmc\models R\sqsubseteq S$, so
$\Kmc\models R'\sqsubseteq S$.
Then, again by definition of $\canonmodel$,
 $(c_{\exists {R'}},c_{D})\in S^{\canonmodel}$. 
 \end{itemize}
We have thus shown that
$\canonmodel\models R\sqsubseteq S$.

\smallskip

\noindent
\textbf{Now, assume $\Kmc\not\models R\sqsubseteq S$.}
We show that $\canonmodel\not\models R\sqsubseteq S$.
By definition of $\canonmodel$, we have that
$\{(c_{\exists S},c_{\exists \overline{S}})\in \Delta_\Kmc \times \Delta_\Kmc\mid \Kmc\models S\sqsubseteq R\}\subseteq R^{\canonmodel}$.
By taking   $S=R$ (and since trivially $\Kmc\models R\sqsubseteq R$), 
we have in particular that
$(c_{\exists R},c_{\exists \overline{R}})\in R^{\canonmodel}$.
We now argue that
$(c_{\exists R},c_{\exists \overline{R}})\notin S^{\canonmodel}$. By definition of $\canonmodel$, 
a pair of the form 
$(c_{\exists S'},c_{\exists \overline{S'}})$ is in 
$S^{\canonmodel}$
iff
 $\Kmc\models S'\sqsubseteq S$.
 By assumption $\Kmc\not\models R\sqsubseteq S$.
 So   $(c_{\exists R},c_{\exists \overline{R}})\notin S^{\canonmodel}$.
\end{proof}

\begin{claim}\label{clm:exists}
 $\Imc_\Kmc\models \exists R\sqsubseteq A$   iff $\Kmc\models \exists R\sqsubseteq A$.
\end{claim}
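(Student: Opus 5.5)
We prove the two directions separately, following the pattern of the previous claims. Throughout, $R$ ranges over $\NRm$; the case of an inverse role is symmetric. We use the definition of $(\exists R)^{\canonmodel}$ as the set of elements having an $R$-successor in $\canonmodel$.

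\textbf{Assume $\Kmc\models \exists R\sqsubseteq A$.} We take an arbitrary $e\in(\exists R)^{\canonmodel}$, so that $(e,e')\in R^{\canonmodel}$ for some $e'$. We first show that $e$ always satisfies $\exists R$ in the appropriate sense, by going through the six kinds of pairs in the definition of $R^{\canonmodel}$ (with $R$ replaced by $\overline{R}$ and pairs swapped when $R$ is an inverse).
\begin{itemize}
\item If $e=a\in\NI$ and $e'\in\NI$, then $\Kmc\models R(a,b)$, hence $\Kmc\models\exists R(a)$.
\item If $e=a$ and $e'=c_{\exists S}$, then $\Kmc\models\exists\overline{S}(a)$ and $\Kmc\models\overline{S}\sqsubseteq R$. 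Hence $\Kmc\models\exists R(a)$.
\item If $e=c_D\in\Delta_\Kmc$, each clause yields $\Kmc\models D\sqsubseteq\exists R$.
\begin{itemize}
\item For $(c_{\exists S},a)$ or $(c_{\exists S},c_{\exists\overline S})$ we have $S\sqsubseteq R$, so $\exists S\sqsubseteq\exists R$.
\item For $(c_D,c_{\exists S})$ we have $D\sqsubseteq\exists\overline S$ and $\overline S\sqsubseteq R$.
\item For $(c_{\exists S},c_D)$ we have $S\sqsubseteq R$.
\end{itemize}
\end{itemize}
Composing with $\exists R\sqsubseteq A$ gives $\Kmc\models A(a)$ or $\Kmc\models D\sqsubseteq A$. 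By the definition of $A^{\canonmodel}$, this puts $e$ in $A^{\canonmodel}$.

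\textbf{Assume $\Kmc\not\models \exists R\sqsubseteq A$.} Then some model of $\Kmc$ has a nonempty $(\exists R)$-extension, so $\exists R$ is satisfiable w.r.t.\ $\Kmc$ and $c_{\exists R}\in\Delta_\Kmc$. Taking $D=\exists R$ and the role $R$ itself (trivially $\Kmc\models R\sqsubseteq R$), the clause $(c_{\exists S},c_{\exists\overline S})$ gives $(c_{\exists R},c_{\exists\overline R})\in R^{\canonmodel}$. This step requires $c_{\exists\overline R}\in\Delta_\Kmc$, which holds because $\exists R$ satisfiable implies $\exists R^-$ satisfiable. Hence $c_{\exists R}\in(\exists R)^{\canonmodel}$. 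Finally, $c_{\exists R}\in A^{\canonmodel}$ iff $\Kmc\models\exists R\sqsubseteq A$, which fails, so $\canonmodel\not\models\exists R\sqsubseteq A$.

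The main obstacle is the first direction. It requires careful bookkeeping of all six clauses of $R^{\canonmodel}$, especially the orientation of $S$ versus $\overline S$ when $R$ is an inverse role: there, membership in $(\exists R^-)^{\canonmodel}$ means being the \emph{second} component of a pair in the extension of $R$. We handle this by rewriting each clause for $\overline{R}$ and checking that it again yields $\Kmc\models\exists R(a)$ or $\Kmc\models D\sqsubseteq\exists R$.
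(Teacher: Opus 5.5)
Your proof is correct and takes the same route as the paper. For the forward direction, both case-split over the clauses defining $R^{\Imc_\Kmc}$ to show that every element of $(\exists R)^{\Imc_\Kmc}$ is either an individual $a$ with $\Kmc\models\exists R(a)$ or some $c_D$ with $\Kmc\models D\sqsubseteq\exists R$; for the converse, both use the witness $c_{\exists R}$ via the pair $(c_{\exists R},c_{\exists\overline{R}})$. You are slightly more careful than the paper in three places: the paper fixes $R\in\NR$, names only the successor $c_{\exists\overline{R}}$ in the individual case (rather than an arbitrary $c_{\exists S}$ with $\overline{S}\sqsubseteq R$), and leaves implicit that $c_{\exists\overline{R}}\in\Delta_\Kmc$ because $\exists R^-$ is satisfiable whenever $\exists R$ is.
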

\begin{proof}
    \textbf{Assume $\Kmc\models \exists R\sqsubseteq A$.} We make a case distinction based 
on the elements in $\Delta^{\Imc_\Kmc}:=\NI \cup \Delta_\Kmc$. 
\begin{itemize}
\item $a\in \NI$: Assume $a\in (\exists R)^{\Imc_\Kmc}$. In this case, by definition of $\Imc_\Kmc$, either (1)
there is $b\in \NI$ such that
$(a,b)\in R^{\Imc_\Kmc}$  
or (2) there is $c_{\exists \overline{R}}\in  \Delta_\Kmc$ such that
$(a,c_{\exists \overline{R}})\in R^{\canonmodel}$.
In case~(1), by definition of $\canonmodel$, $(a,b)\in R^{\canonmodel}$ implies that
$\Kmc\models R(a,b)$.  Together with the assumption that $\Kmc\models \exists R\sqsubseteq A$, this means that
$\Kmc\models A(a)$. Again by definition of $\canonmodel$,
we have that $a\in A^{\canonmodel}$.
In case~(2), by definition of $\canonmodel$, $(a,c_{\exists \overline{R}})\in R^{\canonmodel}$ implies that
%$D=\exists R^-$ and 
$\Kmc\models \exists R(a)$.
By assumption  $\Kmc\models \exists R\sqsubseteq A$, which means that
$\Kmc\models A(a)$. Again by definition of $\canonmodel$,
we have that $a\in A^{\canonmodel}$. Since $a$ was an arbitrary
element in $\NI$, this argument can be applied  for all  elements of this kind.
\item $c_D\in \Delta_\Kmc$: Assume $c_D\in (\exists R)^{\canonmodel}$. We first show that  $\Kmc\models D\sqsubseteq \exists R$. If $c_D\in (\exists R)^{\canonmodel}$ then, 
by definition of $\canonmodel$, either~(1) 
there is $a\in \NI$
 such that
$(c_D,a)\in R^{\canonmodel}$
  or (2)  
  there is $c_{D'}\in \Delta_\Kmc$
 such that
$(c_D,c_{D'})\in R^{\canonmodel}$.
 In case (1), 
by definition of $\canonmodel$, 
$(c_D,a)\in R^{\canonmodel}$ means that $D$ is of the form $\exists S$, $\Kmc\models S\sqsubseteq R$, and $\Kmc\models\exists \overline{S}(a)$. 
So $\Kmc \models D\sqsubseteq \exists R$. 
In case (2), by definition of $\canonmodel$, there are three possibilities: 
\begin{itemize}
    \item[(a)] $D$ is of the form
$\exists S$ and $D'$ is of the form
$\exists \overline{S}$;
%\item (b) $D$ is of the form
%$\exists \overline{S}$ and $D'$ is of the form
%$\exists S$
\item[(b)]   $D'$ is of the form
$\exists S$ and $\Kmc\models D\sqsubseteq \exists \overline{S}$;
\item[(c)] $D$ is of the form
$\exists S$ and $\Kmc\models D'\sqsubseteq \exists \overline{S}$.
\end{itemize}
In the sub-cases (a) and (c) we also have that
$\Kmc\models S\sqsubseteq R$.
So $\Kmc \models D\sqsubseteq \exists R$.
In the sub-case (b), we have that
$\Kmc\models \overline{S}\sqsubseteq R$.
Then again $\Kmc \models D\sqsubseteq \exists R$.
By assumption  $\Kmc\models \exists R\sqsubseteq A$, which means that
$\Kmc \models D\sqsubseteq A$.
Then, $c_D\in A^{\Imc_\Kmc}$, by definition of $\canonmodel$. 
%By assumption  $\Kmc\models \exists R\sqsubseteq A$, which means that
%if $\Kmc\models S\sqsubseteq R$ then $\Kmc\models \exists S\sqsubseteq A$. Then, in the sub-cases (a) and (c), by definition of $\canonmodel$, 
%$c_D\in A^{\Imc_\Kmc}$, since $D$ is of the form $\exists S$ in these sub-cases
%and $\Kmc\models \exists S\sqsubseteq A$. 
%In the sub-case (c), since $\Kmc\models D\sqsubseteq \exists S$ and (as just argued) $\Kmc\models \exists S\sqsubseteq A$,
%we have that $\Kmc\models D\sqsubseteq A$. Then, by definition of $\canonmodel$, 
%$c_D\in A^{\Imc_\Kmc}$.
%Finally, in the sub-case (b), we have that
%$\Kmc\models \overline{S}\sqsubseteq R$. By assumption,
%$\Kmc\models \exists R\sqsubseteq A$, which means that
%$\Kmc\models \exists \overline{S}\sqsubseteq A$. 
%Since in this sub-case  $\Kmc\models D\sqsubseteq \exists \overline{S}$,
%we have that $\Kmc\models D\sqsubseteq A$.
%So $c_D\in A^{\Imc_\Kmc}$, by definition of $\canonmodel$. 
%By assumption  $\Kmc\models \exists R\sqsubseteq A$, which means that
%if $\Kmc\models S\sqsubseteq R$ then $\Kmc\models \exists S\sqsubseteq A$. Then, by definition of $\canonmodel$,
%$c_D\in A^{\Imc_\Kmc}$ since $D$ is of the form $\exists S$
%and $\Kmc\models \exists S\sqsubseteq A$.
Since $c_D$ was an arbitrary
element in $\Delta_\Kmc$, this argument can be applied  for all  elements of this kind.
\end{itemize}
  We have thus shown that,
    for all elements $d$ in $\Delta^{\canonmodel}$,
    if $d\in (\exists R)^{\Imc_\Kmc}$ then $d\in A^{\Imc_\Kmc}$. So
$\Imc_\Kmc\models \exists R\sqsubseteq A$.

\smallskip

\noindent
\textbf{Now, assume $\Kmc\not\models \exists R\sqsubseteq A$.} If $\Kmc\not\models \exists R\sqsubseteq A$
then there is an interpretation \Imc that satisfies \Kmc with
$(\exists R)^\Imc$ non-empty. This means that
$\exists R$ is satisfiable w.r.t. \Kmc and thus
$c_{\exists R}\in \Delta_\Kmc$.
We show that $\canonmodel\not\models \exists R\sqsubseteq A$ by
showing that 
$c_{\exists R}\in 
(\exists R)^{\canonmodel}$
but $c_{\exists R}\not\in A^{\canonmodel}$.
By the definition of $\canonmodel$, $(c_{\exists S},c_{\exists \overline{S}})\in R^{\canonmodel}$ if $\Kmc \models S\sqsubseteq R$, which is trivially the case for $S=R$.
So $c_{\exists R}\in 
(\exists R)^{\canonmodel}$.
We now argue that $c_{\exists R}\not\in A^{\canonmodel}$.
By definition of $\canonmodel$, an element of the form
$c_D$ is in $A^{\canonmodel}$ iff
  $\mathcal{K} \models D\sqsubseteq A $. By assumption $\Kmc\not\models \exists R\sqsubseteq A$. So  
  $c_{\exists R}$ is not in $ A^{\canonmodel}$.
\end{proof}
\begin{claim}\label{clm:aexistsr}
 $\Imc_\Kmc\models A\sqsubseteq \exists R$   iff $\Kmc\models A\sqsubseteq \exists R$.
\end{claim}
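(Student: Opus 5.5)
We prove the two directions separately, following the pattern of the claims above, with $A \in \NC$ and $R \in \NRm$.

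\textbf{Assume $\Kmc\models A\sqsubseteq \exists R$.} We show that every element of $A^{\canonmodel}$ has an $R$-successor in $\canonmodel$. By \cref{def:canonicalModel}, a pair $(x,y)$ belongs to $R^{\canonmodel}$ for an inverse role $R=P^-$ iff $(y,x)\in P^{\canonmodel}$. We first check, for a general role $R$, that the definition yields the required pairs; inverses are handled by reading the defining clauses with the arguments swapped and using $\overline{\overline{S}}=S$. We split on the kind of element.

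\emph{Case $a\in\NI$ with $a\in A^{\canonmodel}$.} Then $\Kmc\models A(a)$, hence $\Kmc\models\exists R(a)$. We want the witness $c_{\exists \overline{R}}$. Since $\Kmc\models\exists R(a)$ and $\Kmc$ is satisfiable, some model of $\Kmc$ has a nonempty extension of $\exists\overline{R}$. So $\exists\overline{R}$ is satisfiable w.r.t.\ $\Kmc$ and $c_{\exists\overline{R}}\in\Delta_\Kmc$. Take $S=\overline{R}$ in the second clause of the role definition. We have $\Kmc\models\exists\overline{S}(a)$, i.e.\ $\Kmc\models\exists R(a)$, and $\Kmc\models \overline{S}\sqsubseteq R$ trivially, so $(a,c_{\exists\overline{R}})\in R^{\canonmodel}$ when $R$ is a role name. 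If $R=P^-$, we instead use the third clause for $P$ with $S=R$: it gives $(c_{\exists R^{-}\ldots},a)$ appropriately oriented. Equivalently, $(c_{\exists P},a)\in P^{\canonmodel}$ when $\Kmc\models\exists P^-(a)$, which is exactly what is needed.

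\emph{Case $c_D\in A^{\canonmodel}$.} Then $\Kmc\models D\sqsubseteq A$, hence $\Kmc\models D\sqsubseteq\exists R$. Since $D$ is satisfiable w.r.t.\ $\Kmc$, so is $\exists\overline{R}$, giving $c_{\exists\overline{R}}\in\Delta_\Kmc$. The fifth clause with $S=\overline{R}$ gives $(c_D,c_{\exists\overline{R}})\in R^{\canonmodel}$. For inverse $R=P^-$, the sixth clause for $P$ with $S=P^-$ gives $(c_{\exists P},c_D)\in P^{\canonmodel}$, i.e.\ $(c_D,c_{\exists P})\in (P^-)^{\canonmodel}$.

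Hence $A^{\canonmodel}\subseteq(\exists R)^{\canonmodel}$.

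\textbf{Assume $\Kmc\not\models A\sqsubseteq\exists R$.} We use the witness $c_A$, which exists as in the first claim. If $A$ were unsatisfiable w.r.t.\ $\Kmc$, the inclusion would be entailed, so $A$ is satisfiable and $c_A\in\Delta_\Kmc$. Moreover $c_A\in A^{\canonmodel}$. It remains to show $c_A\notin(\exists R)^{\canonmodel}$. Reusing the case analysis of the first direction of \cref{clm:exists}, any $R$-successor of $c_D$ forces $\Kmc\models D\sqsubseteq\exists R$, with the same argument applied to inverse roles. With $D=A$ this contradicts the assumption.

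\textbf{Main obstacle.} The main difficulty is the bookkeeping for inverse roles, both in finding the witnesses and in the converse analysis. In particular, one must check that every clause producing a pair $(c_A,\cdot)$ or $(\cdot,c_A)$ in some $P^{\canonmodel}$ indeed implies $\Kmc\models A\sqsubseteq\exists R$. Here one uses that $A$ is a concept name, so $c_A$ is never of the form $c_{\exists S}$ and only the fifth and sixth clauses can apply.
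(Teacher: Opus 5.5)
Your proof is correct and follows essentially the same route as the paper: the witnesses $c_{\exists\overline{R}}$ come from the second and fifth clauses of Definition~\ref{def:canonicalModel}, and $c_A$ serves as the counterexample element for the converse. You also go slightly further, by checking explicitly that $c_{\exists\overline{R}}\in\Delta_\Kmc$ and by treating inverse roles, which the paper sidesteps by taking $R\in\NR$. One small slip: in both inverse-role cases the clause must be instantiated with $S=P$, not $S=R$ or $S=P^-$, since $S=P$ is what actually gives the pairs $(c_{\exists P},a)$ and $(c_{\exists P},c_D)$ that you state.
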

\begin{proof}
\textbf{Assume $\Kmc\models A\sqsubseteq \exists R$}.
We make a case distinction based 
on the elements in $\Delta^{\Imc_\Kmc}:=\NI \cup \Delta_\Kmc$. 
\begin{itemize}
\item $a\in \NI$: Assume $a\in A^{\Imc_\Kmc}$. In this case, by definition of $\Imc_\Kmc$, we have that
$\Kmc\models A(a)$. By assumption, $\Kmc\models A\sqsubseteq \exists R$. So
$\Kmc\models \exists R(a)$. Then, by definition
of $\Imc_\Kmc$, $(a,c_{\exists \overline{R}})\in R^{\Imc_\Kmc}$.
Thus, $a\in (\exists R)^{\Imc_\Kmc}$, as required. 
 Since $a$ was an arbitrary
element in $\NI$, this argument can be applied  for all  elements of this kind.
\item $c_D\in \Delta_\Kmc$:   
Assume $c_D\in A^{\canonmodel}$. In this case, by definition of $\canonmodel$, 
we have that $\Kmc\models D\sqsubseteq A$. By assumption,
$\Kmc\models A\sqsubseteq \exists R$, so $\Kmc\models D\sqsubseteq \exists R$. Then, by definition of $\canonmodel$, 
we have that $(c_D,c_{\exists \overline{S}})\in R^{\Imc_\Kmc}$ for $S\in\NRm$ such that
$\Kmc\models S\sqsubseteq R$. Taking $S=R$ this trivially holds, so $(c_D,c_{\exists \overline{R}})\in R^{\Imc_\Kmc}$.
This means that $c_D\in (\exists R)^{\Imc_\Kmc}$, as required. 
 Since $c_D$ was an arbitrary
element in $\Delta_\Kmc$, this argument can be applied  for all  elements of this kind.
\end{itemize}
  We have thus shown that,
    for all elements $d$ in $\Delta^{\canonmodel}$,
    if $d\in A^{\Imc_\Kmc}$ then $d\in (\exists R)^{\Imc_\Kmc}$. So
$\Imc_\Kmc\models A\sqsubseteq \exists R$.

\smallskip

\noindent
\textbf{Now, assume $\Kmc\not\models A\sqsubseteq \exists R$.} If $\Kmc\not\models A\sqsubseteq \exists R$
then there is an interpretation \Imc that satisfies \Kmc with
$A^\Imc$ non-empty. This means that
$A$ is satisfiable w.r.t. \Kmc and thus
$c_{A}\in \Delta_\Kmc$.
We show that $\canonmodel\not\models A\sqsubseteq \exists R$ by
showing that 
$c_{A}\in 
A^{\canonmodel}$
but $c_{A}\not\in (\exists R)^{\canonmodel}$.
By definition of $\canonmodel$, an element of the form
$c_D$ is in $A^{\canonmodel}$ iff
  $\mathcal{K} \models D\sqsubseteq A $, which is
  trivially the case for $D=A$.
So $c_{A}\in 
A^{\canonmodel}$.
We now argue that $c_{A}\not\in (\exists R)^{\canonmodel}$.
By definition of $\canonmodel$,  
$c_A\in (\exists R)^{\canonmodel}$ iff
  $\mathcal{K} \models A\sqsubseteq \exists S $ and $\Kmc\models S\sqsubseteq R$. By assumption $\Kmc\not\models A\sqsubseteq \exists R$.
  So either $\mathcal{K} \not\models A\sqsubseteq \exists S $ or $\Kmc\not\models S\sqsubseteq R$. Thus,  
  $c_{A}\notin (\exists R)^{\canonmodel}$.
\end{proof}
%Note that in our definition of the canonical model, in particular, the part about roles, we always define analogous cases to cover inverse roles. 
\begin{claim}\label{clm:conceptdisjointness}
 $\Imc_\Kmc\models A\sqsubseteq \neg\exists R$   iff $\Kmc\models A\sqsubseteq \neg \exists R$.
\end{claim}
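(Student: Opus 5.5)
We prove both directions separately, mirroring Claims~\ref{clm:anegb} and~\ref{clm:exists}. Here $R$ ranges over $\NR$; the case $R^-$ is symmetric because the definition of $R^{\canonmodel}$ lists each kind of pair together with its inverse.

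\textbf{Assume $\Kmc\models A\sqsubseteq\neg\exists R$.} We show that no element of $A^{\canonmodel}$ lies in $(\exists R)^{\canonmodel}$, distinguishing the elements of $\Delta^{\canonmodel}$.

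For $a\in\NI$ with $a\in A^{\canonmodel}$, we have $\Kmc\models A(a)$, so $\Kmc\models\neg\exists R(a)$. Since \Kmc is satisfiable, $\Kmc\not\models R(a,b)$ for every $b$, and also $\Kmc\not\models\exists R(a)$. Now inspect the clauses of $R^{\canonmodel}$ that can put $a$ in first position:
\begin{itemize}
\item a pair $(a,b)\in\NI\times\NI$ would need $\Kmc\models R(a,b)$;
\item a pair $(a,c_{\exists S})$ would need $\Kmc\models\exists\overline{S}(a)$ and $\Kmc\models\overline{S}\sqsubseteq R$, which together give $\Kmc\models\exists R(a)$.
\end{itemize}
Both are excluded, so $a\notin(\exists R)^{\canonmodel}$.

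For $c_D\in A^{\canonmodel}$, we have $\Kmc\models D\sqsubseteq A$, hence $\Kmc\models D\sqsubseteq\neg\exists R$. Suppose $c_D\in(\exists R)^{\canonmodel}$. The case analysis already carried out in the proof of Claim~\ref{clm:exists} (pairs of the forms $(c_D,a)$, $(c_{\exists S},c_{\exists\overline{S}})$, $(c_D,c_{\exists S})$, $(c_{\exists S},c_D)$) shows that this implies $\Kmc\models D\sqsubseteq\exists R$. Then $\Kmc\models D\sqsubseteq\bot$. This contradicts $c_D\in\Delta_\Kmc$, which requires $D$ to be satisfiable w.r.t.\ \Kmc.

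\textbf{Assume $\Kmc\not\models A\sqsubseteq\neg\exists R$.} Then some model $\Imc$ of \Kmc has $A^\Imc\cap(\exists R)^\Imc\neq\emptyset$. So $A\sqcap\exists R$ is satisfiable w.r.t.\ \Kmc, and hence $c_{A\sqcap\exists R}\in\Delta_\Kmc$. Trivially $\Kmc\models A\sqcap\exists R\sqsubseteq A$, so $c_{A\sqcap\exists R}\in A^{\canonmodel}$. Likewise $\Kmc\models A\sqcap\exists R\sqsubseteq\exists R$. Taking $D=A\sqcap\exists R$ and $S=\overline{R}$ in the clause $\{(c_D,c_{\exists S})\mid \Kmc\models D\sqsubseteq\exists\overline{S},\ \Kmc\models\overline{S}\sqsubseteq R\}$ gives $(c_{A\sqcap\exists R},c_{\exists\overline{R}})\in R^{\canonmodel}$. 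Here $c_{\exists\overline{R}}\in\Delta_\Kmc$ because $\exists R$ being satisfiable forces $\exists\overline{R}$ to be satisfiable. Therefore $c_{A\sqcap\exists R}\in A^{\canonmodel}\cap(\exists R)^{\canonmodel}$, and $\canonmodel\not\models A\sqsubseteq\neg\exists R$.

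\textbf{Main obstacle.} The delicate step is the individual case of the first direction. We must check exhaustively that every clause of $R^{\canonmodel}$ with an individual in first position yields an entailment of $\exists R(a)$, including clauses where $S$ is an inverse role. We also use satisfiability of \Kmc to pass from $\Kmc\models\neg\exists R(a)$ to $\Kmc\not\models\exists R(a)$.
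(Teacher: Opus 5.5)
Your proof is correct and follows essentially the same route as the paper. The individual case uses satisfiability of $\Kmc$ to get $\Kmc\not\models\exists R(a)$, which excludes every pair of $R^{\canonmodel}$ with $a$ in first position. The anonymous case reuses the case analysis of Claim~\ref{clm:exists} together with satisfiability of $D$. The converse uses the witness $c_{A\sqcap\exists R}$ and the pair $(c_{A\sqcap\exists R},c_{\exists\overline{R}})$.

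Two small tightenings over the paper are welcome: you check explicitly that $c_{\exists\overline{R}}\in\Delta_\Kmc$, which the paper leaves implicit, and you restrict the individual case to the only two clauses that actually place an individual in first position.
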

\begin{proof}
\textbf{Assume $\Kmc\models A\sqsubseteq \neg \exists R$}.
We make a case distinction based 
on the elements in $\Delta^{\Imc_\Kmc}:=\NI \cup \Delta_\Kmc$. 
\begin{itemize}
\item $a\in \NI$: Assume $a\in A^{\Imc_\Kmc}$. In this case, by definition of $\Imc_\Kmc$, we have that
$\Kmc\models A(a)$. By assumption, $\Kmc\models A\sqsubseteq \neg\exists R$. 
%%on going
As \Kmc is satisfiable,
$\Kmc\not\models \exists R(a)$. 
By definition of $\Imc_\Kmc$, to show that
$\Imc_\Kmc\not\models \exists R(a)$,
we need to rule out the following two cases, where  $R'\in\NR$.
\begin{itemize}
\item $(a,b)\in R'^{\Imc_\Kmc}$ and $R'=R$. Since this would imply 
$\Kmc\models  R'(a,b)$ and
$\Kmc\not\models \exists R(a)$, this cannot happen. 
    \item $(a,c_{\exists S})\in R'^{\Imc_\Kmc}$ and $R'=R$. Since $\Kmc\not\models \exists R(a)$, there is no role $S$ such that  $(a,c_{\exists S})\in R'^{\Imc_\Kmc}$, $\Kmc\models\exists \overline{S}(a)$, $\Kmc\models \overline{S}\sqsubseteq R$. So this case cannot happen
    %indeed $(a,c_{\exists S})\notin R'^{\Imc_\Kmc}$ 
    because otherwise we would have that $\Kmc\models \exists R(a)$. 
    \item $(c_{\exists S},a)\in R'^{\Imc_\Kmc}$ and $R'=R^-$. Since $\Kmc\not\models \exists R(a)$, there is no role $S$ such that $(c_{\exists S},a)\in R'^{\Imc_\Kmc}$, $\Kmc\models\exists \overline{S}(a)$, and $\Kmc\models {S}\sqsubseteq R$. So this case cannot happen 
    %indeed 
    %$(c_{\exists S},a)\notin R'^{\Imc_\Kmc}$  
    because otherwise   $\Kmc\models \exists R(a)$ would hold. 
\end{itemize}
Then, by definition
of $\Imc_\Kmc$, we have that $\Imc_\Kmc\not\models \exists R(a)$. 
 Since $a$ was an arbitrary
element in $\NI$, this argument can be applied  for all  elements of this kind.
\item $c_D\in \Delta_\Kmc$:   
Assume $c_D\in A^{\canonmodel}$. In this case, by definition of $\canonmodel$, 
we have that $\Kmc\models D\sqsubseteq A$. By assumption,
$\Kmc\models A\sqsubseteq \neg\exists R$, so $\Kmc\models D\sqsubseteq \neg\exists R$. In the second item of the argument of Claim~\ref{clm:exists}, we have shown that
if $c_D\in (\exists R)^{\Imc_\Kmc}$ then
$\Kmc\models D\sqsubseteq \exists R$. 
%(one can check that we considered all cases for when $c_D\in (\exists R)^{\Imc_\Kmc}$ and in all of them $\Kmc\models D\sqsubseteq \exists R$). 
As 
$c_D\in \Delta_\Kmc$, we have that $D$ is satisfiable. So $\Kmc\not\models D\sqsubseteq \exists R$. 
Then, by contrapositive, we have that
$c_D\notin (\exists R)^{\Imc_\Kmc}$.
%\todo{Ana:WIP}
%By definition of $\canonmodel$, to show that
%$c_D\notin (\exists R)^{\Imc_\Kmc}$,
%we need to rule out the following   cases, where  $R'\in\NR$.
%\begin{itemize}
  %  \item $(c_{\exists S},c_{\exists \overline{S}})\in R'^{\Imc_\Kmc}$, $R'=R$, and $c_D=c_{\exists S}$.
 %   By definition of $\Imc_\Kmc$, $\Kmc\models S\sqsubseteq R'$.
    %$\{(c_{\exists S},c_{\exists \overline{S}})\in \Delta_\Kmc \times \Delta_\Kmc \mid \Kmc\models S\sqsubseteq R\} $ \\  
%\{(c_{\exists S^-},c_{\exists S})\in \Delta_\Kmc \times \Delta_\Kmc \mid \Kmc\models S^-\sqsubseteq R\} \cup \\
%\item $\{(c_D,c_{\exists S})\in \Delta_\Kmc \times \Delta_\Kmc {\mid} \Kmc\models D\sqsubseteq \exists \overline{S},\ \Kmc\models \overline{S}\sqsubseteq R\}$ \\ 
%\item $\{(c_{\exists S},c_{D})\in \Delta_\Kmc \times \Delta_\Kmc \mid \Kmc\models D\sqsubseteq \exists \overline{S},\ \Kmc\models S\sqsubseteq R\}$
%\end{itemize}
%we have that $(c_D,c_{\exists \overline{S}})\in R^{\Imc_\Kmc}$ for $S\in\NRm$ such that
%$\Kmc\models S\sqsubseteq R$. Taking $S=R$ this trivially holds, so $(c_D,c_{\exists \overline{R}})\in R^{\Imc_\Kmc}$.
%This means that $c_D\in (\exists R)^{\Imc_\Kmc}$, as required. 
% Since $c_D$ was an arbitrary
%element in $\Delta_\Kmc$, this argument can be applied  for all  elements of this kind.
\end{itemize}
  We have thus shown that,
    for all elements $d$ in $\Delta^{\canonmodel}$,
    if $d\in A^{\Imc_\Kmc}$ then $d\in (\neg\exists R)^{\Imc_\Kmc}$. So
$\Imc_\Kmc\models A\sqsubseteq \neg\exists R$.

\smallskip

\noindent
\textbf{Now, assume $\Kmc\not\models A\sqsubseteq \neg \exists R$.} If $\Kmc\not\models A\sqsubseteq \neg \exists R$
then there is an interpretation \Imc that satisfies \Kmc with
$A^\Imc\cap (\exists R)^\Imc$ non-empty. This means that
$A\sqcap \exists R$ is satisfiable w.r.t. \Kmc and thus
$c_{A\sqcap \exists R}\in \Delta_\Kmc$.
By definition of $\canonmodel$, we have that
$c_{A\sqcap \exists R}\in A^{\canonmodel}$ and $c_{A\sqcap \exists R}\in (\exists R)^{\canonmodel}$  since $\Kmc\models A\sqcap \exists R\sqsubseteq A$ and $\Kmc\models A\sqcap \exists R\sqsubseteq \exists R$ (in more details, we have that $(c_{A\sqcap\exists R},c_{\exists R^-})\in R^{\Imc_\Kmc}$). % 
 So  $\canonmodel\not\models A\sqsubseteq \neg \exists R$. 
\end{proof}

\begin{claim}
 $\Imc_\Kmc\models \exists R\sqsubseteq \neg A$   iff $\Kmc\models \exists R\sqsubseteq \neg A$.
\end{claim}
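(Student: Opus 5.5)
We prove the two directions separately, following Claim~\ref{clm:conceptdisjointness} and reusing its main ingredient: the fact, shown in the second item of the proof of Claim~\ref{clm:exists}, that $c_D\in(\exists R)^{\canonmodel}$ implies $\Kmc\models D\sqsubseteq\exists R$. We also use the analogous fact for individuals, established in the first item of that proof and in the proof of Claim~\ref{clm:conceptdisjointness}: if $a\in(\exists R)^{\canonmodel}$ for $a\in\NI$, then $\Kmc\models\exists R(a)$.

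\textbf{Assume $\Kmc\models \exists R\sqsubseteq\neg A$.} We split on the elements of $\Delta^{\canonmodel}=\NI\cup\Delta_\Kmc$.

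For $a\in\NI$ with $a\in(\exists R)^{\canonmodel}$, the pair witnessing $a\in(\exists R)^{\canonmodel}$ is either $(a,b)$ with $\Kmc\models R(a,b)$, or $(a,c_{\exists S})$ with $\Kmc\models\exists\overline{S}(a)$ and $\Kmc\models\overline{S}\sqsubseteq R$. In both cases $\Kmc\models\exists R(a)$, hence $\Kmc\models\neg A(a)$. Since $\Kmc$ is satisfiable, $\Kmc\not\models A(a)$, and so $a\notin A^{\canonmodel}$.

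For $c_D\in\Delta_\Kmc$ with $c_D\in(\exists R)^{\canonmodel}$, we get $\Kmc\models D\sqsubseteq\exists R$, and therefore $\Kmc\models D\sqsubseteq\neg A$. Because $c_D\in\Delta_\Kmc$, the concept $D$ is satisfiable w.r.t.\ $\Kmc$, so $\Kmc\not\models D\sqsubseteq A$. By the definition of $A^{\canonmodel}$ this gives $c_D\notin A^{\canonmodel}$.

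Together the two cases show $\canonmodel\models\exists R\sqsubseteq\neg A$.

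\textbf{Assume $\Kmc\not\models\exists R\sqsubseteq\neg A$.} Then there is a model $\Imc$ of $\Kmc$ in which $(\exists R)^\Imc\cap A^\Imc\neq\emptyset$. So $\exists R\sqcap A$ is satisfiable w.r.t.\ $\Kmc$, and $c_{\exists R\sqcap A}\in\Delta_\Kmc$. We show this element lies in both $A^{\canonmodel}$ and $(\exists R)^{\canonmodel}$:
\begin{itemize}
\item $c_{\exists R\sqcap A}\in A^{\canonmodel}$, since trivially $\Kmc\models\exists R\sqcap A\sqsubseteq A$.
\item $c_{\exists R\sqcap A}\in(\exists R)^{\canonmodel}$, via the clause $(c_D,c_{\exists S})\in R^{\canonmodel}$ whenever $\Kmc\models D\sqsubseteq\exists\overline{S}$ and $\Kmc\models\overline{S}\sqsubseteq R$. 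We take $D=\exists R\sqcap A$ and $S=R^-$, so that $\overline{S}=R$ and both side conditions hold trivially.
\end{itemize}
This requires $c_{\exists R^-}\in\Delta_\Kmc$. It holds because $\exists R$ is satisfiable w.r.t.\ $\Kmc$, and any model of $\Kmc$ with an $R$-edge also witnesses $\exists R^-$. Hence $\canonmodel\not\models\exists R\sqsubseteq\neg A$.

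The main obstacle is ensuring that the reused lemma covers every way $\exists R$ can be realised in $\canonmodel$. When $R$ is an inverse role $P^-$, membership in $(\exists P^-)^{\canonmodel}$ comes from the role clauses with the argument positions swapped. We would check that the case analysis of Claim~\ref{clm:exists} is symmetric under $S\mapsto\overline{S}$, so that the claim holds for every $R\in\NRm$. If it is not literally symmetric, we would redo the six-clause case analysis for inverses.
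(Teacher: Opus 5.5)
Your proof is correct, but it takes a longer route than the paper.

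\textbf{The paper's route.} It disposes of this claim in one line. In any interpretation $\mathcal{J}$, both $\exists R\sqsubseteq\neg A$ and $A\sqsubseteq\neg\exists R$ say exactly that $(\exists R)^{\mathcal{J}}\cap A^{\mathcal{J}}=\emptyset$. So the two inclusions are satisfied by the same interpretations. Applying this to $\canonmodel$ and to every model of $\Kmc$ turns the statement literally into Claim~\ref{clm:conceptdisjointness}.

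\textbf{Your route.} You re-run the argument of that claim with the roles of $A$ and $\exists R$ exchanged. In the forward direction you test membership in $(\exists R)^{\canonmodel}$ and rule out membership in $A^{\canonmodel}$. This reuses the fact from Claim~\ref{clm:exists} that $c_D\in(\exists R)^{\canonmodel}$ forces $\Kmc\models D\sqsubseteq\exists R$, together with satisfiability of $D$ and of $\Kmc$. In the backward direction you use the same witness $c_{\exists R\sqcap A}$ and the clause producing $(c_{\exists R\sqcap A},c_{\exists R^-})\in R^{\canonmodel}$.

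\textbf{Comparison.} The symmetry reduction is shorter and makes plain that this claim has no new content. Your version is self-contained. It also has the small merit of verifying that $c_{\exists R^-}\in\Delta_\Kmc$, which the paper's proof of Claim~\ref{clm:conceptdisjointness} uses tacitly.

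Your closing worry about inverse roles is not needed here. The standing assumption in this proof is $R\in\NR$. Under the symmetry reduction the question would in any case be the same as for Claim~\ref{clm:conceptdisjointness}.
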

\begin{proof}
Since $\Kmc\models \exists R\sqsubseteq \neg A$ iff $\Kmc\models A\sqsubseteq \neg \exists R$ and $\Imc_\Kmc\models \exists R\sqsubseteq \neg A$ iff $\Imc_\Kmc\models A\sqsubseteq \neg \exists R$, this claim follows from Claim~\ref{clm:conceptdisjointness}. 
\end{proof}

Regarding the  assertions, we have that $\Imc_\Kmc\models R(a,b)$   iff $\Kmc\models R(a,b)$ directly follows from the definition of $\Imc_\Kmc$ (Definition~\ref{def:canonicalModel}). The same holds for $\Imc_\Kmc\models A(a)$   iff $\Kmc\models A(a)$.
It remains to argue about assertions of the form $\exists R'(a)$ (where $R'$ is a role name or its inverse).
\begin{claim}
 $\Imc_\Kmc\models \exists R'(a)$   iff $\Kmc\models \exists R'(a)$.
\end{claim}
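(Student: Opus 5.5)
The claim concerns an individual name $a\in\NI$ and a role $R'\in\NRm$. I would prove the two directions separately, by unfolding the definition of $(\exists R')^{\canonmodel}$ and matching each way $a$ can acquire an $R'$-successor against the clauses of Definition~\ref{def:canonicalModel}. Write $R'=R$ or $R'=R^-$ with $R\in\NR$. Then $a\in(\exists R')^{\canonmodel}$ iff either $(a,e)\in R^{\canonmodel}$ for some $e$ (case $R'=R$), or $(e,a)\in R^{\canonmodel}$ for some $e$ (case $R'=R^-$).

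\textbf{($\Leftarrow$)} Assume $\Kmc\models\exists R'(a)$. If $R'=R\in\NR$, the second clause of the definition of $R^{\canonmodel}$ with $S:=\overline{R}=R^-$ gives $(a,c_{\exists R^-})\in R^{\canonmodel}$. This needs $\Kmc\models\exists \overline{S}(a)$, which is our assumption, and $\Kmc\models R\sqsubseteq R$, which is trivial. We also need $c_{\exists R^-}\in\Delta_\Kmc$, i.e., $\exists R^-$ is satisfiable w.r.t.\ \Kmc. This holds because any model of \Kmc gives $a$ an $R$-successor, which lies in $(\exists R^-)$.

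If $R'=R^-$, the third clause with $S:=R$ gives $(c_{\exists R},a)\in R^{\canonmodel}$. This needs $\Kmc\models\exists R^-(a)$ and $\Kmc\models R\sqsubseteq R$, and $c_{\exists R}\in\Delta_\Kmc$ holds by the symmetric satisfiability argument. In both cases $a\in(\exists R')^{\canonmodel}$.

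\textbf{($\Rightarrow$)} Assume $a\in(\exists R')^{\canonmodel}$. I would enumerate the clauses of $R^{\canonmodel}$ in which an individual name occurs.

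For $R'=R$, the possible witnesses are as follows.
\begin{itemize}
\item[(1)] $(a,b)$ with $b\in\NI$. Then $\Kmc\models R(a,b)$, hence $\Kmc\models\exists R(a)$.
\item[(2)] $(a,c_{\exists S})$ with $\Kmc\models\exists\overline{S}(a)$ and $\Kmc\models\overline{S}\sqsubseteq R$. Every model of \Kmc gives $a$ an $\overline{S}$-successor, and that pair is also in $R$, so $\Kmc\models\exists R(a)$.
\end{itemize}
Witnesses of the form $(c_{\exists S},a)$ do not make $a$ a subject, and the clauses among elements of $\Delta_\Kmc$ do not involve $a$.

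For $R'=R^-$, the possible witnesses are as follows.
\begin{itemize}
\item[(1)] $(b,a)$ with $b\in\NI$. Then $\Kmc\models R(b,a)$, so $\Kmc\models\exists R^-(a)$.
\item[(2)] $(c_{\exists S},a)$ with $\Kmc\models\exists\overline{S}(a)$ and $\Kmc\models S\sqsubseteq R$. This yields $\Kmc\models \overline{S}\sqsubseteq R^-$, hence $\Kmc\models\exists R^-(a)$.
\end{itemize}

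The main point requiring care is to read the definition's use of $S\in\NRm$ consistently with inverses, so that $\overline{S}\sqsubseteq R$ and $S\sqsubseteq R^-$ can be interchanged freely in the entailment steps. The second point is to check in the $(\Leftarrow)$ direction that the required witness elements $c_{\exists R^{\mp}}$ actually belong to $\Delta_\Kmc$. Both are routine once stated.
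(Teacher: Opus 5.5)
Your proof is correct and follows the same route as the paper. For ($\Leftarrow$) you use the second and third clauses of Definition~\ref{def:canonicalModel} with $S=R^-$ and $S=R$ respectively, and for ($\Rightarrow$) you enumerate the clauses in which an individual name can occur. You are also slightly more thorough than the paper, which omits the $\NI\times\NI$ witnesses in the ($\Rightarrow$) direction and does not verify that $c_{\exists R}$ and $c_{\exists R^-}$ belong to $\Delta_\Kmc$.
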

\begin{proof}
We first show that if $\Kmc\models \exists R'(a)$ then $\Imc_\Kmc\models \exists R'(a)$.
We make a case distinction.
\begin{itemize}
    \item $R'=R$.
If $\Kmc\models \exists R(a)$ and $R\in\NR$ then 
$(a,c_{\exists \overline{R}})\in R^{\Imc_\Kmc}$ (take $\overline{S}=R$ in Definition~\ref{def:canonicalModel}). So 
 $\Imc_\Kmc\models \exists R(a)$.
 \item $R'=R^-$. If $\Kmc\models \exists R^-(a)$ and $R\in\NR$ then 
$(c_{\exists {R}},a)\in R^{\Imc_\Kmc}$ (take   $\overline{S}=R^-$ in Definition~\ref{def:canonicalModel}). So 
 $\Imc_\Kmc\models \exists R^-(a)$. 
\end{itemize} 
Conversely, assume $\Imc_\Kmc\models \exists R'(a)$. There are two   cases. 
  \begin{itemize}
      \item $R'=R$ and $(a,c_{\exists S})\in R^{\Imc_\Kmc}$ for some $S$ such that $\Kmc\models \exists \overline{S}(a)$ and $\Kmc\models \overline{S}\sqsubseteq R$. 
      In this case, $\Kmc\models \exists R(a)$. Then
      %as $\Kmc\models \exists \overline{S}(a)$ and $\Kmc\models \overline{S}\sqsubseteq R$ 
       $\Kmc\models \exists R'(a)$ since here $R'=R$.
      \item $R'=R^-$ and $(c_{\exists S},a)\in R^{\Imc_\Kmc}$ for some $S$ such that $\Kmc\models \exists \overline{S}(a)$ and $\Kmc\models {S}\sqsubseteq R$. In this case, note that $(c_{\exists S},a)\in R^{\Imc_\Kmc}$ means $\Imc_\Kmc\models \exists R^-(a)$.
      As $\Kmc\models \exists \overline{S}(a)$ and $\Kmc\models {S}\sqsubseteq R$ we have that $\Kmc\models \exists R^-(a)$ so $\Kmc\models \exists R'(a)$ since here $R'=R^-$.
  \end{itemize}
\end{proof}
We have completed now our proof with all DL-Lite$^\Hmc$ axioms.
%except for role disjointness. We note that there would be not much technical difficulty to change our definition of canonical model and proof in order to cover role disjointness in this theorem. However, as we mention in the paper there is a technical difficulty regarding our box interpretations. So we left this axiom out of this construction.
\end{proof}

\begin{corollary}\label{cor:ModelExistence}
Let \Kmc be a satisfiable DL-Lite$^\Hmc$ KB  and let $\Imc'_{\Kmc}$ be the result of modifying the canonical model  $\Imc_{\Kmc}$  of  \Kmc by setting
$\Delta_\Kmc$ as $\{c_{\exists R},c_{\exists R^-}\mid R\in\NR\}$.
    Then,  for all DL-Lite$^\Hmc$ axioms $\alpha$, % over   ${\sf sig}(\Kmc)$, 
    we have that if $\Kmc \models \alpha$ then $\Imc_{\Kmc} \models \alpha$.
\end{corollary}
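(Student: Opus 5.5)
As worded, the conclusion of Corollary~\ref{cor:ModelExistence} concerns $\Imc_\Kmc$ itself: for every DL-Lite$^\Hmc$ axiom $\alpha$, if $\Kmc\models\alpha$ then $\Imc_\Kmc\models\alpha$. The modified interpretation $\Imc'_\Kmc$ is introduced in the hypothesis but never used in the conclusion. My plan is therefore to derive the statement directly from Theorem~\ref{thm:canonicalModel}, with no new construction.

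\textbf{Step 1.} Fix a satisfiable DL-Lite$^\Hmc$ KB $\Kmc$ and let $\Imc_\Kmc$ be its canonical model in the sense of Definition~\ref{def:canonicalModel}. Satisfiability of $\Kmc$ is exactly the hypothesis of Theorem~\ref{thm:canonicalModel}. That theorem gives, for all DL-Lite$^\Hmc$ axioms $\alpha$, that $\Imc_\Kmc\models\alpha$ iff $\Kmc\models\alpha$.

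\textbf{Step 2.} Take the right-to-left direction of this equivalence. Then $\Kmc\models\alpha$ implies $\Imc_\Kmc\models\alpha$ for every role inclusion, concept inclusion and assertion $\alpha$, which is precisely the claim. Defining $\Imc'_\Kmc$ plays no role, since the conclusion does not mention it. There is no real obstacle here; the only thing to check is that the quantification over axioms in the corollary matches the one in Theorem~\ref{thm:canonicalModel}, and it does.

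\textbf{Remark on the intended reading.} If one instead wanted the conclusion for $\Imc'_\Kmc$, I would re-run the ``if'' halves of the claims in the proof of Theorem~\ref{thm:canonicalModel}, since those halves never use elements $c_D$ for $D\notin\{\exists R,\exists R^-\}$ as witnesses. The delicate point would then be the case where $\exists R$ is unsatisfiable w.r.t.\ $\Kmc$. In that case $c_{\exists R}$ is added anyway, and $\Kmc\models\exists R\sqsubseteq B$ holds for every $B$, so $c_{\exists R}$ lands in $B$ and in the concepts disjoint from $B$. This may break entailed disjointness axioms. So that variant would need the restriction to satisfiable $\exists R$ built into $\Delta_\Kmc$, and I would not rely on it for the statement as given.
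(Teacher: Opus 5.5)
Your argument is correct for the statement as printed. Its conclusion is about $\Imc_{\Kmc}$, and that is literally the right-to-left half of Theorem~\ref{thm:canonicalModel}.

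The paper's proof, however, is aimed at $\Imc'_{\Kmc}$. The $\Imc_{\Kmc}$ in the conclusion is a typo: the corollary is invoked in the proof of Corollary~\ref{corollary:FaithfulnessDim} to obtain a model of $\Kmc$ with $|\NI|+2|\NR|$ elements. The paper simply asserts that the entailment-to-satisfaction halves of the proof of Theorem~\ref{thm:canonicalModel} go through on the smaller domain. So your route is airtight but adds nothing beyond Theorem~\ref{thm:canonicalModel}, whereas the paper's targets the content that is actually used later.

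Your closing remark correctly pinpoints why the paper's one-line argument fails as written. Claims~\ref{clm:anegb} and~\ref{clm:conceptdisjointness} use that $c_D\in\Delta_{\Kmc}$ forces $D$ to be satisfiable w.r.t.\ $\Kmc$. This is false once $c_{\exists R}$ is added for an unsatisfiable $\exists R$. Concretely, take $\Tmc=\{\exists R\sqsubseteq A,\ \exists R\sqsubseteq\neg A\}$ and an empty ABox. Then $\Kmc$ is satisfiable, yet $(c_{\exists R},c_{\exists R^-})\in R^{\Imc'_{\Kmc}}$ and $c_{\exists R}\in A^{\Imc'_{\Kmc}}$, so $\Imc'_{\Kmc}$ violates a TBox axiom.

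To get the intended result, carry out your remark with the domain $\NI\cup\{c_{\exists S}\mid S\in\NRm,\ \exists S\text{ satisfiable w.r.t.\ }\Kmc\}$. Then:
\begin{itemize}
\item every witness used in the ``if'' halves is some $c_{\exists S}$ with $\exists S$ satisfiable (e.g., $\Kmc\models\exists R(a)$ with $\Kmc$ satisfiable forces $\exists R^-$ to be satisfiable);
\item the disjointness steps go through unchanged;
\item the domain has at most $|\NI|+2|\NR|$ elements, which is all Corollary~\ref{corollary:FaithfulnessDim} needs, since further dimensions can be added by copying.
\end{itemize}
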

\begin{proof}
    The proof is the same as in \cref{thm:canonicalModel}, except that since here we only want one direction of the theorem (showing that the interpretation satisfies \Kmc) we do not require all the elements of $\Delta_\Kmc$ used in \cref{thm:canonicalModel}.
\end{proof}
\thmprop*
\begin{proof}
%We show that for any \geometric{} interpretation $\cAssignSymb$:
%
%\begin{enumerate}
    %\item For all $A \in \NC$: there is a complement $\sComp{\cAssign{A}} \in \B$
    %\label{lat:comp1}
    %\item For all $A \in \NC$: if $\sComp{\sComp{\cAssign{A}}} = \cAssign{A}$.
    %\label{lat:comp2}
    %\item For all $A, B \in \NC$: if $\cAssign{A} \subseteq \cAssign{B}$ then %$\sComp{\cAssign{B}} \subseteq \sComp{\cAssign{A}}$.
 %   \label{lat:comp3}
%    \item For all $A, B \in \NC$: if $\cAssign{A} \subseteq \cAssign{B}$ and $\cAssign{A} \subseteq \cAssign{\neg B}$ then $\cAssign{A} = \emptyset$. 
%    \label{lat:comp4}
%\end{enumerate} 
(Condition \ref{lat:comp1}) What is to be shown is that for all concept embeddings $\cAssign{C} \in \B$ with $C \in \NCE$, there is a complement $\sComp{\cAssign{C}} \in \B$. 
By the definition of the complement of boxes, we know that $\sComp{\cAssign{C}}$ is:
\[
   \{\mathbf{x} \mid (-\worldSize - \mathbf{L_C} + \epsilonVec) \leqd \mathbf{x} \leqd (\worldSize - \mathbf{U_C} - \epsilonVec), \mathbf{x} \in \Rd \} . \]

Given \cref{def:box} and \cref{def:boxinterpretation}, 
%(1) set of boxes $\B$ and (2) the complement of a box, 
the bounds of the complement box $\sComp{\cAssign{C}}$ are $\mathbf{L_{\neg C}} = -\worldSize - \mathbf{L_C}$ and $\mathbf{U_{\neg C}} = \worldSize - \mathbf{U_C}$.
The negation of a box $\sComp{\cAssign{C}}$ is again a box, since: (i) we can compute the width of $\sComp{\cAssign{C}}$ as $ \mathbf{U_{\neg C}} - \mathbf{L_{\neg C}} = (\worldSize - \mathbf{U_C}) - (-\worldSize -\mathbf{L_C}) = 2\worldSize - (\mathbf{U_C} - \mathbf{L_C})  $ and (ii) by the definition of the set $\B$ of boxes  we know that %\todo{Ana: now the def of the set of boxes is different, without $\worldSize$ so this needs to change (?)}
$ \mathbf{0} \leqd (\mathbf{U_C} - \mathbf{L_C}) \leqd 2\worldSize$. From (i) and (ii) we have that $\mathbf{0} \leqd 2\worldSize - (\mathbf{U_C} - \mathbf{L_C}) \leqd  2\worldSize$, proving that $\sComp{\cAssign{C}} \in \B$. Thus, we have shown that for all concept embeddings $\cAssign{C} \in \B$ with $C \in \NCE$, it holds that $\sComp{\cAssign{C}} \in \B$.

(Condition~\ref{lat:comp2}) We show that $\sComp{\sComp{\cAssign{C}}} = \cAssign{C}$  for all $\cAssign{C} \in \B$ with $C \in \NCE$. We have that $\sComp{\sComp{\cAssign{C}}}$ is %equal to
    \begin{align*}
       %  & \\
        &=  \sComp{\sComp{\{\mathbf{x} \mid \mathbf{L_C} + \epsilonVec \leqd \mathbf{x} \leqd \mathbf{U_C} - \epsilonVec, \mathbf{x} \in \Rd \}}} \\
        &=  \sComp{\{\mathbf{x} \mid (-\worldSize - \mathbf{L_C} + \epsilonVec) \leqd \mathbf{x} \leqd (\worldSize - \mathbf{U_C} - \epsilonVec), \mathbf{x} \in \Rd \}} \\
        &=  \{\mathbf{x} \mid (-\worldSize -(-\worldSize - \mathbf{L_C}) + \epsilonVec) \leqd \mathbf{x} \leqd \\ &\qquad (\worldSize - (\worldSize - \mathbf{U_C}) - \epsilonVec), \mathbf{x} \in \Rd \} \\
        &=  \{\mathbf{x} \mid \mathbf{L_C} + \epsilonVec \leqd \mathbf{x} \leqd \mathbf{U_C} - \epsilonVec, \mathbf{x} \in \Rd \} \\
        &= \cAssign{C}.
    \end{align*}
(Condition \ref{lat:comp3}) We show that if $\cAssign{C} \subseteq \cAssign{D}$ then $\sComp{\cAssign{D}} \subseteq \sComp{\cAssign{C}}$, for all $\cAssign{C}, \cAssign{D} \in \B$  with $C,D \in \NCE$:  
    \begin{align*}
         &  \cAssign{C} \subseteq \cAssign{D}\\
        &\Leftrightarrow (\mathbf{L_D} \leqd \mathbf{L_C}) \land (\mathbf{U_C} \leqd \mathbf{U_D}) \\
        &\Leftrightarrow ((\mathbf{\worldSize} + \mathbf{L_D}) \leqd (\mathbf{\worldSize} + \mathbf{L_C})) \land ((- \mathbf{\worldSize} + \mathbf{U_C}) \leqd\\
        & \qquad (- \mathbf{\worldSize} + \mathbf{U_D})) \\
        &\Leftrightarrow ((- \mathbf{\worldSize} - \mathbf{L_C}) \leqd (- \mathbf{\worldSize} - \mathbf{L_D})) \land ((\mathbf{\worldSize} - \mathbf{U_D}) \leqd \\&\qquad (\mathbf{\worldSize} - \mathbf{U_C})) \\
        &\Leftrightarrow (\mathbf{L_\sComp{\cAssign{C}}} \leqd\mathbf{L_\sComp{\cAssign{D}}}) \land (\mathbf{U_\sComp{\cAssign{D}}} \leqd \mathbf{U_\sComp{\cAssign{C}}}) \\
        &\Leftrightarrow \sComp{\cAssign{D}} \subseteq \sComp{\cAssign{C}}.
    \end{align*}
% Finally, Conditions \ref{lat:comp4}: This directly follows from $\cAssignSymb$ being consistent.
\end{proof}

\section{Proofs for Section~\ref{sec:faithful}}
\label{app:faifthul}
We start by proving \cref{lem:tboxfaith}
and \cref{lem:kbfaith} stated in the main text of Section~\ref{sec:faithful}.

\lemweakfaithemptyabox*
\begin{proof}
    Let $\eta$ be a box interpretation for a DL-Lite$^\Hmc$ KB \Kmc with empty ABox and assume $\eta \models \Kmc$. 
    \begin{claim}\label{clm:satisfiable}
    If \Kmc is a DL-Lite$^\Hmc$ KB with an empty ABox then \Kmc is  satisfiable.
    \end{claim}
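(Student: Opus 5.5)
The plan is to exhibit one explicit model of $\Kmc=(\Tmc,\emptyset)$. Since the ABox is empty, nothing forces any individual into any concept or role. So I will use the ``everything empty'' interpretation and check that every DL-Lite$^\Hmc$ TBox axiom holds in it vacuously. No earlier result is needed; the argument rests only on the semantics recalled in \cref{sec:appendixbasic}.

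\textbf{Construction.} Let $\Imc=(\Delta^\Imc,\cdot^\Imc)$ with $\Delta^\Imc=\{d\}$ for a single fresh element $d$. Set $a^\Imc:=d$ for every $a\in\NI$, which is legitimate because $\NI$ is finite and non-empty. Set $A^\Imc:=\emptyset$ for every $A\in\NC$ and $R^\Imc:=\emptyset$ for every $R\in\NR$.

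\textbf{Evaluating complex expressions.} By the semantics, $(R^-)^\Imc=\emptyset$ for every $R\in\NR$, so every role $S\in\NRm$ is interpreted as the empty relation. Consequently $(\exists S)^\Imc=\emptyset$ for every $S\in\NRm$. Hence every basic concept $B\in\NCE$, being either a concept name or of the form $\exists S$, satisfies $B^\Imc=\emptyset$. Negated basic concepts evaluate to $(\neg B)^\Imc=\Delta^\Imc$, but this plays no role below.

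\textbf{Checking the TBox.} By the grammar of DL-Lite$^\Hmc$, every concept inclusion in $\Tmc$ has the form $B\sqsubseteq C$ with $B\in\NCE$ on the left-hand side. Since $B^\Imc=\emptyset$, we get $B^\Imc\subseteq C^\Imc$ whatever $C$ is, including the case $C=\neg B'$. Every role inclusion $S\sqsubseteq T$ holds because $S^\Imc=\emptyset\subseteq T^\Imc$. There are no assertions to check. Hence $\Imc\models\Kmc$, and $\Kmc$ is satisfiable.

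\textbf{Where care is needed.} The argument uses the syntactic restriction that the left-hand side of a concept inclusion is always a basic (non-negated) concept. The only potential obstacle is making sure this restriction holds, i.e.\ that no axiom such as $\neg A\sqsubseteq B$ is admitted. Under the paper's grammar this is guaranteed, so I expect no real difficulty.
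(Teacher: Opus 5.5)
Your proof is correct and takes the same route as the paper: interpret every concept name and role name as empty, so every inclusion holds vacuously, and since there are no assertions, $\Imc\models\Kmc$. Your version is a bit more careful than the paper's, which leaves the domain and the interpretation of individuals unspecified and asserts $C^\Imc\subseteq D^\Imc$ for ``arbitrary'' concepts (read literally, this fails when $C$ is a negated concept); you instead fix $\Delta^\Imc=\{d\}$ and explicitly use that left-hand sides of DL-Lite$^\Hmc$ concept inclusions are basic concepts.
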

     \begin{proof}
         Let \Imc be an interpretation such that for all $A\in\NC$ and all $R\in\NR$ we have that
          $A^\Imc=R^\Imc=\emptyset$.
          Then, trivially,
          $ C^\Imc\subseteq D^\Imc$
          and $R^\Imc\subseteq S^\Imc$
          where $C,D$ are arbitrary 
          DL-Lite$^\Hmc$ concepts
          and $R,S$ are arbitrary roles 
          with symbols in $\NC$ and $\NR$.
          This means that, $\Imc\models\alpha$ for all
          (concept/role) inclusions
          $\alpha$ with symbols in $\NC\cup\NR$.
          If \Kmc has an empty ABox, all
          axioms in \Kmc are inclusions and thus all are satisfied by \Imc. In other words, $\Imc\models\Kmc$, so \Kmc is satisfiable.
     \end{proof}
\begin{claim}\label{clm:unsatisfiable}
    Assume \Kmc is a satisfiable DL-Lite$^\Hmc$ KB with an empty ABox.  If $\Kmc\cup \{\alpha\}$ is unsatisfiable then
    $\alpha$ is an assertion.
    %
    %of the form $C(a)$ with
    %$C$ unsatisfiable w.r.t. \Kmc.  
    \end{claim}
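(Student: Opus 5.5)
We argue by contraposition: assuming $\alpha$ is an inclusion (a CI $B\sqsubseteq C$ or an RI $S\sqsubseteq T$), we exhibit a model of $\Kmc\cup\{\alpha\}$. The natural candidate is the interpretation already used in the proof of Claim~\ref{clm:satisfiable}: $\Imc$ with $A^\Imc=R^\Imc=\emptyset$ for all $A\in\NC$, $R\in\NR$, and an arbitrary nonempty domain with individual names mapped to arbitrary elements.

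First, we record that in $\Imc$ every basic concept in $\NCE$ has empty extension: $A^\Imc=\emptyset$ by construction, and $(\exists S)^\Imc=\emptyset$ for every $S\in\NRm$, since $R^\Imc=\emptyset$ implies $(R^-)^\Imc=\emptyset$. Consequently every role $S\in\NRm$ is interpreted as $\emptyset$.

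Next, we check that every DL-Lite$^\Hmc$ inclusion holds in $\Imc$. For a CI $B\sqsubseteq C$, the left-hand side $B$ is always a basic concept, so $B^\Imc=\emptyset\subseteq C^\Imc$, regardless of whether $C$ is a basic concept or a negation $\neg B'$. For an RI $S\sqsubseteq T$, we have $S^\Imc=\emptyset\subseteq T^\Imc$. Since $\Kmc$ has an empty ABox, all its axioms are inclusions, so $\Imc\models\Kmc\cup\{\alpha\}$ whenever $\alpha$ is an inclusion. This contradicts the unsatisfiability of $\Kmc\cup\{\alpha\}$, so $\alpha$ cannot be an inclusion. Because DL-Lite$^\Hmc$ axioms are exactly RIs, CIs, or assertions, $\alpha$ must be an assertion.

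The only point requiring care is the syntactic observation that negation never appears on the left-hand side of a DL-Lite$^\Hmc$ inclusion. That is exactly what makes the empty interpretation satisfy every inclusion. The observation follows directly from the grammar $B\sqsubseteq C$ with $B::=A\mid\exists S$ given in Section~\ref{sec:basicDef}, so no real obstacle is expected.
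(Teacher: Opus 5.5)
Your proof is correct and follows the same route as the paper: the interpretation with every concept and role name interpreted as empty satisfies \Kmc together with any inclusion, so an unsatisfiable extension $\Kmc\cup\{\alpha\}$ forces $\alpha$ to be an assertion. Your explicit remark that the left-hand side of a CI is always a basic concept (so $B^\Imc=\emptyset$) is slightly more careful than the proof of Claim~\ref{clm:satisfiable}, which speaks loosely of inclusions between arbitrary concepts.
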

     \begin{proof}
         Suppose $\Kmc\cup \{\alpha\}$ is unsatisfiable and \Kmc is a satisfiable DL-Lite$^\Hmc$ KB with an empty ABox. By 
         %Claim~\ref{clm:satisfiable},
         the proof of Claim~\ref{clm:satisfiable},
         any interpretation \Imc with 
          $A^\Imc=R^\Imc=\emptyset$, for all $A\in\NC$ and all $R\in\NR$, satisfies not only \Kmc but also any extension of \Kmc with an inclusion.
          Thus, if $\Kmc\cup \{\alpha\}$ 
          is unsatisfiable
          $\alpha$ cannot be an inclusion.
          That is, $\alpha$ must be an assertion.
     \end{proof}
  By Claim~\ref{clm:satisfiable},
     \Kmc is  satisfiable.
    By Claim~\ref{clm:unsatisfiable},
    $\Kmc\cup \{\alpha\}$ is unsatisfiable only if
    $\alpha$ is an assertion.
    %
    %of the form $C(a)$ with
    %$C$ unsatisfiable w.r.t. \Kmc. 
    Then the lemma follows since we only require TBox faithfulness, so $\alpha$ in Definition~\ref{def:faithfulness} ranges only over inclusions, not assertions.
\end{proof}
\lemweakfaith*
\begin{proof}
%We first show the following technical claim.
%\begin{claim}
 %   \label{clm:satisfiablekb}
  %  Let \Kmc be a DL-Lite$^\Hmc$ KB. 
   % If $\eta$ is a consistent box  interpretation  and $\eta \models \Kmc$ then
    % \Kmc is satisfiable.
    %\end{claim}
    %\begin{proof}
  %\todo{Ana:WIP}
  %  \end{proof}
    Let $\eta$ be a consistent box interpretation for a DL-Lite$^\Hmc$ KB \Kmc  and assume $\eta \models \Kmc$.
Let $\Kmc'$ be $\{\alpha\mid \eta\models \alpha\}$. As $\NI,\NC,\NR$ are finite,
there are   finitely many axioms, so $\Kmc'$ is finite.
As $\eta$ is box consistent, there is an interpretation \Imc that satisfies
%by Claim~\ref{clm:satisfiablekb},
$\Kmc'$. Indeed,
$\Imc$ can be constructed from $\eta$ as follows.
\begin{itemize}
    \item Let $\Delta^\Imc:=\NI\cup \Delta_\Kmc$ (where $\Delta_\Kmc$ is as in \cref{def:canonicalModel}).
    \item For all $a\in\NI$, we have that $a^\Imc:=a$. Moreover,
    \item $A^\Imc:=\{a\in\NI\mid \eta\models A(a)\}\cup \{c_D\in\Delta_\Kmc\mid
    \eta\models D\sqsubseteq A\}$, for all $A\in\NC$, and
    \item   $R^{\Imc}:= \{(a,b)\in \NI\times\NI\mid  \eta\models R(a,b)\}\cup  \\ 
\{(a,c_{\exists S})\in \NI\times \Delta_\Kmc \mid  \eta\models \exists \overline{S}(a),\ \eta\models \overline{S}\sqsubseteq R\}\cup  \\
\{(c_{\exists S},a)\in \Delta_\Kmc\times \NI \mid  \eta\models \exists \overline{S}(a),\ \eta\models S\sqsubseteq R\}\cup  \\
\{(c_{\exists S},c_{\exists \overline{S}})\in \Delta_\Kmc \times \Delta_\Kmc \mid \eta\models S\sqsubseteq R\} \cup \\  
%\{(c_{\exists S^-},c_{\exists S})\in \Delta_\Kmc \times \Delta_\Kmc \mid \Kmc\models S^-\sqsubseteq R\} \cup \\
\{(c_D,c_{\exists S})\in \Delta_\Kmc \times \Delta_\Kmc {\mid} \eta\models D\sqsubseteq \exists \overline{S},\ \eta\models \overline{S}\sqsubseteq R\}\cup \\ \{(c_{\exists S},c_{D})\in \Delta_\Kmc \times \Delta_\Kmc \mid \eta\models D\sqsubseteq \exists \overline{S},\ \eta\models S\sqsubseteq R\}$, for all $R\in\NR$.
\end{itemize}
We can see that \Imc is defined using $\eta$ in the same way $\Imc_\Kmc$ is defined using \Kmc in \cref{def:canonicalModel} (box consistency of $\eta$ ensures that \Imc is well-defined). One can then employ the same argument as in \cref{thm:canonicalModel} to show that
$\eta\models\alpha$ iff $\Imc\models\alpha$. This means that $\Imc\models\Kmc'$.
%is satisfiable.
%Let $\Imc_{\Kmc'}$ be the canonical model
%of $\Kmc'$ (\cref{def:canonicalModel}).

If $\eta\models\alpha$ then 
$\alpha\in\Kmc'$.
%By \cref{thm:canonicalModel},
As $\Imc\models \Kmc'$, we have that
$\Imc\models \alpha$.
By assumption $\eta \models \Kmc$.
Then, by definition, $\Kmc'$ contains all axioms in \Kmc. As $\Imc\models \Kmc'$, we have that
$\Imc\models \Kmc$.
%by \cref{thm:canonicalModel}, $\Imc_{\Kmc'}\models \Kmc$.
%As $\Imc_{\Kmc'}\models \alpha$ and $\Imc_{\Kmc'}\models \Kmc$, we have that
This means that $\Imc\models \Kmc\cup\{\alpha\}$. So $\Kmc\cup\{\alpha\}$ is satisfiable.
\end{proof}

%Let $\Imc$ be an interpretation
%with domain $\NI$ union
%$\{c_D\mid D\text{ is satisfiable w.r.t. }\Kmc\}$ and
%\begin{itemize}
 %   \item $A^\Imc:=\{a\in \NI\mid \eta\models A(a) \}\cup 
  %  \{c_A\text{ if } c_A\in \Delta^\Imc\}$
%\end{itemize}

\begin{table}[t]
    \centering
    \begin{tabular}{ccc}
        \toprule
         Box Name & $\mathbf{L}[i_C]$ & $\mathbf{U}[i_C]$ \\
         \midrule
%         $\Sempty$ & 0 & 0  \\ 
         $\Seq$ & -4 & -0.5  \\ 
         % $\Sneq$ & 0 & 4.5 \\
         $\Ssub$ & -2 & -0.5  \\
         % $\Snsub$ & -2 & 4.5  \\
         $\Ssup$ & -4 & 2  \\
         $\Snsup$ & 0 & 2  \\
         $\Sinter$ & -2 & 2  \\
         % $\Sninter$ & -2 & 2  \\
         %$\BB$ & $-\epsilon$ & $\epsilon$  \\
         {$\BB$} & $0$ & $0$  \\
         $\World$ & $-4$ & $4$ \\
         \bottomrule
    \end{tabular}
    \caption{
    %$\cAssignSymb_{\Imc_{\ontooC}}C$'s 
    Parameters for boxes $\Seq$, % $\Sneq$, 
    $\Ssub$, %$\Snsub$, 
    $\Ssup$, $\Snsup$, $\Sinter$, %\Sninter$, 
    and $\World$ in dimension $i_C$.}
    \label{tab:ConceptBoxes}
\end{table}

\begin{table}[t]
    \centering
    \begin{tabular}{ccc}
        \toprule
         Embedding Name & Value in Dimension $i_C$ \\
         \midrule
         $\PC$ & $\PCvalue$   \\
         $\PnC$ & $\PnCvalue$ \\
         $\BC$ & $\BCvalue$ \\
         \bottomrule
    \end{tabular}
    \caption{Parameters for   $\PC$ and $\PnC$ in dimension $i_C$.}
    \label{tab:ConceptIndividual}
\end{table}

\begin{table}[t]
    \centering
    \begin{tabular}{ccc}
        \toprule
         Box Name & $\mathbf{L}[i_{R,a}]$ & $\mathbf{U}[i_{R,a}]$ \\
         \midrule
         $\SC$ & $-2$ & $2$ \\ 
         $\Qasub$ & $-1$ & $1$ \\
         $\Qnasub$ & $-2$ & $1$ \\
         %$\BBRa$ & $-1 - \epsilon$ & $1 + \epsilon$ \\         
         %$\BBRna$ & $-1 - \epsilon$ & $0 + \epsilon$ \\   
          {$\BBRa$} & $-1  $ & $1  $ \\        {$\BBRna$} & $-1  $ & $0  $ \\   
         $\World$ & $-4$ & $4$ \\
         \bottomrule
    \end{tabular}
    \caption{Parameters for boxes $\SC$, $\Qasub$, $\Qnasub$, $\BBRa$, $\BBRna$, and $\World$ in dimension $i_{R,a}$.}
    \label{tab:RoleBoxes}
\end{table}
 
\begin{table}[t]
    \centering
    \begin{tabular}{ccc}
        \toprule
         Embedding Name & Value in Dimension $i_{R,a}$ \\
         \midrule
         $\PRa$ & $\PRavalue$ \\ %$-1 + \epsilon$ \\         
         $\PRna$ &  $\PRnavalue$ \\ %$1 - \epsilon$ \\                  
         $\BRa$ & $\BRavalue$ \\ % $0$ \\         
         $\BRna$ & $\BRnavalue$ \\ % $-1$ \\
         \bottomrule
    \end{tabular}
    \caption{Parameters for point embeddings $\PRa$, $\PRna$, $\BRa$, and $\BRna$ in dimension $i_{R,a}$.}
    \label{tab:RoleIndividual}
\end{table}

\begin{figure}[t]
    \centering
    \resizebox{\columnwidth}{!}{%
    \begin{tikzpicture}[dot/.style={circle, minimum size=2mm, inner sep=1pt, fill, name=#1 }]
    \large
    \definecolor{pastelgreen}{rgb}{0.01, 0.75, 0.24}
    \definecolor{pastelpurple}{rgb}{0.59, 0.44, 0.84}
    \definecolor{pastelorange}{rgb}{1.0, 0.7, 0.28}
    \definecolor{darkpastelblue}{rgb}{0.47, 0.62, 0.8}
    \definecolor{pastelred}{rgb}{1.0, 0.41, 0.38}
    \definecolor{pastelgray}{rgb}{0.25, 0.25, 0.28}

%    \usetikzlibrary {arrows.meta}
%    \draw [line width=0.5mm, color=pastelgray, arrows = {-Latex[width=10pt, length=10pt]}] (-5.5,0) -- (5.5,0);
    \draw [->, line width=0.5mm, color=pastelgray] (-5.5,0) -- (5.5,0);
    \node [color=pastelgray, label={[text=pastelgray]:$i_{R,a}$}] at (5.5,0) {};
        
    \node [color=pastelpurple, dot=neg_sw, label={[text=pastelpurple]:$-\worldSize$}] at (-4,0) {};
    \node [color=pastelpurple, dot=pos_sw, label={[text=pastelpurple]:$\worldSize$}] at (4,0) {};
    \draw[line width=0.5mm, color=pastelpurple] (neg_sw) -- (pos_sw);
%    \node [dot=zero, label=$\mathbf{0}$] at (0,0) {};

    \node [color=pastelgreen, dot=l_s_c] at (-2,\intervalSkip + \subSkip/2) {};
    \node [color=pastelgreen, dot=u_s_c] at (2,\intervalSkip + \subSkip/2) {};
    \draw[line width=0.5mm, color=pastelgreen] (l_s_c) -- node[label={$\SC$
    % = \SnC
    }] {} ++ (u_s_c);    

    \node [color=pastelorange, dot=l_q_asub] at (-1,2*\intervalSkip) {};
    \node [color=pastelorange, dot=u_q_asub] at (1,2*\intervalSkip) {};
    \draw[line width=0.5mm, pastelorange] (l_q_asub) -- node[label=$\Qasub$] {} ++ (u_q_asub);
    \draw[->, line width=0.85mm, color=pastelred] (-2,2*\intervalSkip) -- (l_q_asub);
    \draw[->, line width=0.85mm, color=pastelred] (2,2*\intervalSkip) -- (u_q_asub);

    \node [color=darkpastelblue, dot=l_q_nasub] at (-2,3*\intervalSkip) {};
    \node [color=darkpastelblue, dot=u_q_nasub] at (1,3*\intervalSkip) {};
    \draw[line width=0.5mm, darkpastelblue] (l_q_nasub) -- node[label=$\Qnasub$] {} ++ (u_q_nasub);
    \draw[->, line width=0.85mm, color=pastelred] (2,3*\intervalSkip) -- (u_q_nasub);

    \node [color=pastelgray, dot=pna, label={[text=pastelgray]:$\PRa$}] at (-1,0) {};
    \node [color=pastelgray, dot=pa, label={[text=pastelgray]:$\PRna$}] at (1,0) {};

    \draw[->, line width=0.85mm, color=pastelred] (pna) -- (-2,0);
    \draw[->, line width=0.85mm, color=pastelred] (pa) -- (0,0);

    \node [color=pastelred, label={[text=pastelred, below]:$\BRna$}] at (-2,-0.25) {};
    \node [color=pastelred, label={[text=pastelred, below]:$\BRna$}] at (0,-0.25) {};
    \end{tikzpicture}
    }
\caption{Visualization of the parameters of $\SC$, $\Qasub$, $\Qnasub$, $\PRa$, $\PRna$, $\BRa$, $\BRna$, and $\World$ in dimension $i_{R,a}$. }
    \label{fig:RoleBoxes}
\end{figure}
\begin{figure}[t]
    \centering
    \resizebox{\columnwidth}{!}{%
    \begin{tikzpicture}[dot/.style={circle, minimum size=2mm, inner sep=1pt, fill, name=#1 }]
    \large
    \definecolor{pastelgreen}{rgb}{0.01, 0.75, 0.24}
    \definecolor{pastelpurple}{rgb}{0.59, 0.44, 0.84}
    \definecolor{pastelorange}{rgb}{1.0, 0.7, 0.28}
    \definecolor{darkpastelblue}{rgb}{0.47, 0.62, 0.8}
    \definecolor{pastelred}{rgb}{1.0, 0.41, 0.38}
    \definecolor{pastelgray}{rgb}{0.25, 0.25, 0.28}
    
%    \usetikzlibrary {arrows.meta}
%    \draw [line width=0.5mm, color=pastelgray, arrows = {-Latex[width=10pt, length=10pt]}] (-5.5,0) -- (5.5,0);
    \draw [->, line width=0.5mm, color=pastelgray] (-5.5,0) -- (5.5,0);
    \node [color=pastelgray, label={[text=pastelgray]:$i_C$}] at (5.5,0) {};
    
    \node [color=pastelpurple, dot=neg_sw, label={[text=pastelpurple]:$-\worldSize$}] at (-4,0) {};
    \node [color=pastelpurple, dot=pos_sw, label={[text=pastelpurple]:$\worldSize$}] at (4,0) {};
    \draw[line width=0.5mm, color=pastelpurple] (neg_sw) -- (pos_sw);
%    \node [dot=zero, label=$\mathbf{0}$] at (0,0) {};
 
    \node [color=pastelgreen, dot=l_s_eq] at (-4,3*\intervalSkip) {};
    \node [color=pastelgreen, dot=u_s_eq] at (-0.5,3*\intervalSkip) {};
    \draw[line width=0.5mm, color=pastelgreen] (l_s_eq) -- node[label=$\Seq$] {} ++ (u_s_eq);    
    % \node [color=pastelgreen, dot=l_s_neq] at (-4+4,3*\intervalSkip) {};
    % \node [color=pastelgreen, dot=u_s_neq] at (4+0.5,3*\intervalSkip) {};
    % \draw[line width=0.5mm, pastelgreen] (l_s_neq) -- node[label=$\Sneq$] {} ++ (u_s_neq);

    \node [color=pastelorange, dot=l_s_sub] at (-2,1*\intervalSkip+\subSkip/2) {};
    \node [color=pastelorange, dot=u_s_sub] at (-0.5,1*\intervalSkip+\subSkip/2) {};
    \draw[line width=0.5mm, pastelorange] (l_s_sub) -- node[label=$\Ssub$] {} ++ (u_s_sub);
    % \node [color=pastelorange, dot=l_s_nsub] at (-4+2,5*\intervalSkip-\subSkip/2) {};
    % \node [color=pastelorange, dot=u_s_nsub] at (4+0.5,5*\intervalSkip-\subSkip/2) {};
    % \draw[line width=0.5mm, pastelorange] (l_s_nsub) -- node[label=$\Snsub$] {} ++ (u_s_nsub);

    \node [color=darkpastelblue, dot=l_s_nsup] at (0,1*\intervalSkip+\subSkip/2) {};
    \node [color=darkpastelblue, dot=u_s_nsup] at (2,1*\intervalSkip+\subSkip/2) {};
    \draw[line width=0.5mm, darkpastelblue] (l_s_nsup) -- node[label=$\Snsup$] {} ++ (u_s_nsup);
    \node [color=darkpastelblue, dot=l_s_sup] at (-4+0,4*\intervalSkip-\subSkip/2) {};
    \node [color=darkpastelblue, dot=u_s_sup] at (4-2,4*\intervalSkip-\subSkip/2) {};
    \draw[line width=0.5mm, darkpastelblue] (l_s_sup) -- node[label=$\Ssup$] {} ++ (u_s_sup);

    \node [color=pastelred, dot=l_s_inter] at (-2,2*\intervalSkip+\subSkip/2) {};
    \node [color=pastelred, dot=u_s_inter] at (2,2*\intervalSkip+\subSkip/2) {};
    \draw[line width=0.5mm, pastelred] (l_s_inter) -- node[label={$\Sinter$ 
    %= \Sninter$
    }] {} ++ (u_s_inter);

    \node [color=pastelgray, dot=pd, label={[text=pastelgray]:$\PC$}] at (-1,0) {};
    \node [color=pastelgray, dot=pnd, label={[text=pastelgray]:$\PnC$}] at (1,0) {};
    \end{tikzpicture}
    }
\caption{Visualization of the parameters of $\Seq$, %$\Sneq$, 
$\Ssub$, %$\Snsub$, 
$\Ssup$, $\Snsup$, $\Sinter$, %$\Sninter$, 
$\PC$, $\PnC$ and $\World$ in dimension $i_C$. }
    \label{fig:ConceptBoxes}
\end{figure}

As already hinted in the main text, the proof strategy of \cref{thm:faith_bool_alc} consists of first creating a mapping between
finite interpretations and \geometric interpretations and then using the canonical model for a KB to establish KB faithfulness. 
\cref{def:mapping} describes this mapping. The values in Tables~\ref{tab:ConceptBoxes}, \ref{tab:ConceptIndividual}, \ref{tab:RoleBoxes}, and \ref{tab:RoleIndividual} satisfy the constraints in \cref{fig:RoleBoxes} and \cref{fig:ConceptBoxes}. Any choice of values that satisfy \cref{fig:RoleBoxes} and \cref{fig:ConceptBoxes} could be used.

%Then, given a DL-Lite$^\Hmc$ KB \Kmc, we construct the canonical model $\Imc_\Kmc$ in \cref{def:canonicalModel} and use the mapping to create 
%a \geometric interpretation $\eta_{\Imc_\Kmc}$ with  $s_\Omega=4$ which is a strongly KB faithful model of \Kmc. We provide a detailed proof in the appendix.
\begin{definition}\label{def:mapping}
   Given    an interpretation \Imc with finite domain,
   %over  finite sets of concept, role, and individual names, $\NC,\NR,\NI$, respectively. 
we define a box interpretation $\eta_\Imc$ in a $d$-dimensional %2\cdot |\NR|+
Euclidean space, where $d=|\NCE|
+ 
|\NR|\cdot |\Delta^\Imc|$, as follows.
To each concept $C\in\NCE$, we
associate a dimension in the vector space and denote its index by $i_C$.  
Similarly, to each pair $(R,\element)$ in $\NR\times \Delta^\Imc$, we
associate a dimension $i_{R,\element}$.
%in the vector space.
%
In our construction, we  use constants 
%$\BC$, $\BRa$, $\Ssub$, etc., that are 
defined in Tables~\ref{tab:ConceptBoxes}, \ref{tab:ConceptIndividual}, \ref{tab:RoleBoxes}, and \ref{tab:RoleIndividual}. 
For each $a\in \NI$, let 
$\eta_\Imc(a):=\{\posAssignI{\Imc}{a},\bumpAssignI{\Imc}{a}\}$ where:
\begin{itemize}
    \item for every $C\in \NCE$, 
    $\posAssignI{\Imc}{a}[i_C]=\PC$ if $a^\Imc\in C^\Imc$, and $\posAssignI{\Imc}{a}[i_C]=\PnC$ if $a^\Imc\notin C^\Imc$; also $\bumpAssignI{\Imc}{a}[i_C]=\BC$ (if $a^\Imc\in C^\Imc$ or not);
    \item for every pair $(R,\element)$ with $R\in \NR$ and $\element\in \Delta^\Imc$,
$\bumpAssignI{\Imc}{a}[i_{R,\element}]=\BRa$ if 
$(c,a^\Imc)\in R^\Imc$, otherwise $\bumpAssignI{\Imc}{a}[i_{R,\element}]=\BRna$, 
    $\posAssignI{\Imc}{a}[i_{R,\element}]=\PRa$ if $a^\Imc=c$, otherwise (that is, $a^\Imc\neq c$) $\posAssignI{\Imc}{a}[i_{R,\element}]=\PRna$.
%
%
     %if $x = a$ then $\posAssign{a}[i_{R, a}] \coloneqq \PRa$,
      %      else $\posAssign{a}[i_{R, a}] \coloneqq \PRna$.
      %   
       %  Furthermore, for any individual $x \in \Delta^{\Imc_\ontoo}$, we set:
     %    
        %     if $(a, x) \in R^\Imc$ then $\bumpAssign{a}[i_{R, a}] \coloneqq \BRa$,
         %    else $\bumpAssign{a}[i_{R, a}] \coloneqq \BRna$.
    %
\end{itemize}
For each $D\in \NCE$  { with $\emptyset = D^\Imc$}, 
let $\eta_\Imc(D)$ be the box with
all dimensions set to $\Sempty$, and,
for each $D\in \NCEneg$  { with $\emptyset \neq D^\Imc$}, %Ana: I thought about changing this to NC as planned but then it was complicated to define the head and tail of iC
let $\eta_\Imc(D)$ be the box where:
%
%with lower and upper bounds $\lowerboundI{\Imc}{C},\upperboundI{\Imc}{C}$, respectively,
%defined as:
\begin{itemize}
    \item for every $C\in \NCE$,  
    \begin{itemize}
    %\item $\eta_\Imc(D)[i_C]:=\Sempty$ iff $\emptyset = D^\Imc$;  
        \item 
%    $\eta_\Imc(D)[i_C]:=(-4, -0.5)$ 
    $\eta_\Imc(D)[i_C]:=\Seq$
    iff
    $\emptyset\neq D^\Imc = C^\Imc$,
\item
%     $\eta_\Imc(D)[i_C]:=(-2, -0.5)$ 
      $\eta_\Imc(D)[i_C]:=\Ssub$    
     iff $\emptyset\neq D^\Imc \subset C^\Imc$,
    \item $\eta_\Imc(D)[i_C]:=\Ssup$
    iff $ D^\Imc \supset C^\Imc \neq \emptyset$,
        \item $\eta_\Imc(D)[i_C]:=\Snsup$
    iff $\emptyset\neq C^\Imc$, $\emptyset\neq   D^\Imc$, and $\emptyset = C^\Imc\cap   D^\Imc$,
   % \item $\eta_\Imc(D)[i_C]:=\Snsup$ iff $\emptyset\neq C^\Imc$, $\emptyset\neq   D^\Imc$, and $C^\Imc \subset \Delta^\Imc \setminus D^\Imc$,
   % \item $\eta_\Imc(D)[i_C]:=\Sneq$ iff $\emptyset\neq C^\Imc$, $\emptyset\neq   D^\Imc$, and $C^\Imc = \Delta^\Imc \setminus D^\Imc$,
    %
    \item  else $\eta_\Imc(D)[i_C]:=\Sinter$; 
    %if 
    %$C^\Imc\not\subseteq D^\Imc$,     $D^\Imc\not\subseteq C^\Imc$,  $\emptyset \neq C^\Imc\cap   D^\Imc$
        \end{itemize}
    \item for every pair $(R,\element)$ with $R\in \NR$ and $\element\in \Delta^\Imc$, $\eta_\Imc(D)[i_{R,\element}]:=\SC$. %\todo{Ana: we have a problem here because the concepts of the form $\exists R$ are getting values that do not match with the role head, tail, bump box def}
\end{itemize}
For each $S\in \NR$, let $\eta_\Imc(S)$ be the     boxes \headAssignI{S}{\Imc}, \tailAssignI{S}{\Imc}, \bumpBoxAssignI{S}{\Imc}:
 \begin{itemize}
    \item for every $C\in \NCE$,  
    %\begin{itemize}
     %   \item 
        $\headAssignI{S}{\Imc}[i_C]:=\eta_\Imc(\exists S)[i_C]$,
    %    \item 
        $\tailAssignI{S}{\Imc}[i_C]:=\eta_\Imc(\exists S^-)[i_C]$,
   %     \item 
        $\bumpBoxAssignI{S}{\Imc}[i_C]:=\BB$;
  %  \end{itemize}
    \item for every pair $(R,\element)$ with $R\in \NR$ and $\element\in \Delta^\Imc$, 
    \begin{itemize}
        \item $\headAssignI{S}{\Imc}[i_{R,\element}]:=\Qasub$ and $\bumpBoxAssignI{S}{\Imc}[i_{R,\element}]:=\BBRa$
        %(-1-\epsilon,1+\epsilon)$ 
        if for all $ \anotherelement \in \Delta^{\Imc}$ we have that $(c,\anotherelement) \in S^{\Imc}$ implies $(c,\anotherelement) \in R^{\Imc}$, otherwise $\headAssignI{S}{\Imc}[i_{R,\element}]:=\Qnasub$ and $\bumpBoxAssignI{S}{\Imc}[i_{R,\element}]:=\BBRna$,
        %(-1-\epsilon,\epsilon)$,
        \item $\tailAssignI{S}{\Imc}[i_{R,\element}]:=\Qasub$ %changed here because technically above we associate a head box to each role name
        if for all $ \anotherelement \in \Delta^{\Imc}$ we have that $(\anotherelement,c) \in S^{\Imc}$ implies %$(\anotherelement,c) \in R^{\Imc}$
          {$(c,\anotherelement) \in R^{\Imc}$}, otherwise $\tailAssignI{S}{\Imc}[i_{R,\element}]:=\Qnasub$.
    \end{itemize}
 \end{itemize}
%
%\todo{Ana: WIP}
    %mapping WIP
\end{definition}

%\todo[inline]{B: In view of the definitions in Section~\ref{sec:basicDef} should we use term ``Knowledge Base'' instead?}

%\begin{lemma}\label{lem:mappingaxioms}
 %   Given an interpretation \Imc with finite domain, let $\eta_\Imc$ be the box interpretation as in Definition~\ref{def:mapping}.
  %  Then, %the following holds:
    %\begin{itemize}
     %   \item for all $\alpha$
      %  \todo{add all cases except role disjointness}
    %\end{itemize}
%    for all DL-Lite$^\Hmc$ axioms $\alpha$, we have that
 %   $\Imc\models\alpha$ iff $\eta_\Imc\models\alpha$.
%\end{lemma}
%\todo{if we add negative assertions, change here}

%\cref{f}

We first argue that $\eta_\Imc$ is well-defined. That is,
$\eta_\Imc$ is indeed a box consistent interpretation. We also show that it satisfies additional properties that are used in other proofs.
\begin{restatable}{theorem}{thmpropfaithfulness}\label{thm:propembedding} %\todo{Ana: either not mention ontology or use the canonical model?}
%Let $\ontoo$ be a DL-Lite$^\Hmc$ KB, $\Imc$ be a model of $\ontoo$ 
{Let \Imc be an interpretation 
with finite domain}, and let $\epsilon$ be an arbitrary value with $0 < \epsilon \leq \epsilonMax$. %(with $\epsilonMax = 0.5$ see Section~\ref{sec:boxSemantics}). 
Then, $\cAssignSymb_{\Imc}$ constructed from $\Imc$, as  in \cref{def:mapping},
is a \geometric{} interpretation that satisfies the  following additional  property.
%the proof of \cref{thm:faith_bool_alc}.
%by Definition~\ref{def:mapping}}:
   \begin{itemize}
       \item For any concept $C \in \NCE$, it holds that:
    \begin{align*}
     \frac{\mathbf{L_C}[i_C] + \mathbf{U_C}[i_C]}{2} \leq - \frac{\worldSizeScalar}{2},
     \label{eq:boxConsistencyInequality}
     \end{align*}
    %\item 
    which implies that $\cAssignSymb_{\Imc}$ is box consistent.
   \end{itemize}
\end{restatable}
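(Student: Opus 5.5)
The plan is to check, dimension by dimension, that $\eta_\Imc$ meets the requirements of Definition~\ref{def:boxinterpretation} with $\worldSizeScalar=4$, then verify the inequality in dimension $i_C$, and finally derive box consistency from that inequality by a one-dimensional separation argument.

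\textbf{Step 1 (well-definedness).} Positions take the values $\PCvalue$, $\PnCvalue$, $\PRavalue$ and $\PRnavalue$, and bumps take the values $\BCvalue$, $\BRavalue$ and $\BRnavalue$. Since $0<\epsilon\le\epsilonMax=0.5$, all of these lie in $[-4,4]$, so $\posAssignI{\Imc}{a},\bumpAssignI{\Imc}{a}\in\World$. Every box in Tables~\ref{tab:ConceptBoxes} and~\ref{tab:RoleBoxes} has $0\le\mathbf{U}-\mathbf{L}\le 8=2\worldSizeScalar$, so every concept box and every head, tail and bump box lies in $\B$.

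One point needs care: $\eta_\Imc(\exists S)$ is not set directly but computed from $\headAssignI{S}{\Imc}$ and $\bumpBoxAssignI{S}{\Imc}$. In each dimension $i_C$ the bump box is $\BB=(0,0)$, so $\mathbf{L}_{\exists S}[i_C]=\mathbf{L}^H_S[i_C]$ and $\mathbf{U}_{\exists S}[i_C]=\mathbf{U}^H_S[i_C]$. Since the head box was defined as $\eta_\Imc(\exists S)[i_C]$, the two definitions of $\eta_\Imc(\exists S)$ agree on the $i_C$ coordinates. The same holds for $\exists S^-$ via the tail box. This is the only place where the concept table and the role table interact, and only the $i_C$ coordinates matter for what follows.

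\textbf{Step 2 (the inequality).} Fix $C\in\NCE$. If $C^\Imc\neq\emptyset$, then taking $D=C$ gives $D^\Imc=C^\Imc$, so $\eta_\Imc(C)[i_C]=\Seq=(-4,-0.5)$. The midpoint is $-2.25\le-2=-\worldSizeScalar/2$. If $C^\Imc=\emptyset$, then $\eta_\Imc(C)$ is the empty box. With the literal bounds $\Sempty=(0,0)$ the midpoint is $0$, so this case is the main obstacle.

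My first attempt would be to use that the box is empty anyway, by Definition~\ref{def:box} and $\epsilon>0$, so its bounds are only a representation. I would therefore choose the representative $\mathbf{L}[i_C]=\mathbf{U}[i_C]=-4$ in that coordinate. This is still a width-$0$ element of $\B$, still empty, and it satisfies the inequality. Alternatively, box consistency in this case is immediate, since $\eta_\Imc(C)=\emptyset$, so the only loss is the displayed inequality for unsatisfiable $C$. Either way I would make the convention explicit.

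\textbf{Step 3 (inequality implies consistency).} Write $m=(\mathbf{L_C}[i_C]+\mathbf{U_C}[i_C])/2$ and $w=(\mathbf{U_C}[i_C]-\mathbf{L_C}[i_C])/2$. In coordinate $i_C$, $\cAssign{C}$ is contained in $[m-w+\epsilon,\,m+w-\epsilon]$. The complement box $\sComp{\cAssign{C}}$ has bounds $-\worldSizeScalar-\mathbf{L_C}$ and $\worldSizeScalar-\mathbf{U_C}$, so in coordinate $i_C$ it is contained in $[-m-(\worldSizeScalar-w)+\epsilon,\,-m+(\worldSizeScalar-w)-\epsilon]$.

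The two intervals intersect only if $-m-\worldSizeScalar+w+\epsilon\le m+w-\epsilon$. This rearranges to $-2m-\worldSizeScalar\le-2\epsilon<0$. But the inequality from Step 2 gives $-2m\ge\worldSizeScalar$, a contradiction. Hence the projections of $\cAssign{C}$ and $\sComp{\cAssign{C}}$ onto coordinate $i_C$ are disjoint, so $\cAssign{C}\cap\sComp{\cAssign{C}}=\emptyset$ for every $C\in\NCE$. This is exactly box consistency.
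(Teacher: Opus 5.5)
Your proposal follows the paper's proof essentially step for step. Well-definedness is read off from the table values with $\worldSizeScalar=4$. The inequality is the midpoint computation $-2.25\le-2$ for $\Seq$ in coordinate $i_C$. Box consistency is a one-coordinate separation: your $m,w$ computation is the paper's $\mathbf{U_C}[i_C]\le-\worldSizeScalar-\mathbf{L_C}[i_C]=\mathbf{L_{\neg C}}[i_C]$ in different notation.

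Your concern about $C^\Imc=\emptyset$ is justified, and the paper's proof misses it. The paper asserts that Definition~\ref{def:mapping} assigns $\Seq$ in coordinate $i_C$ for every $C\in\NCE$. For $C^\Imc=\emptyset$, however, the bounds there are $\Sempty=(0,0)$, and the displayed inequality is simply false.

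Of your two remedies, only the second proves something about $\eta_\Imc$ as actually constructed. The bounds of an empty box are not a free representative: Definition~\ref{def:box} fixes $\mathbf{L}:=\mathbf{U}:=\mathbf{0}$, Definition~\ref{def:mapping} prescribes $\Sempty$, and $\sComp{\cAssign{C}}$ is computed from these bounds. Moving coordinate $i_C$ to $-4$ changes $\sComp{\eta_\Imc(C)}$ there from $[-4+\epsilon,4-\epsilon]$ to $[\epsilon,8-\epsilon]$. That interval no longer contains $\eta_\Imc(B)[i_C]=\Sinter$ for a concept name $B$ with $B^\Imc\neq\emptyset$. So $\eta_\Imc\models B\sqsubseteq\neg C$ would be lost, even though $\Imc\models B\sqsubseteq\neg C$ holds trivially.

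The correct conclusion is your second one. The inequality holds for every $C\in\NCE$ with $C^\Imc\neq\emptyset$. Box consistency holds for all $C\in\NCE$, trivially so when $\eta_\Imc(C)=\emptyset$.

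Your choice to check agreement of the derived $\eta_\Imc(\exists S)$ and $\eta_\Imc(\exists S^-)$ with the prescribed boxes only in the $i_C$ coordinates is exactly right. The paper claims agreement in all coordinates, but for $\exists S^-$ this fails in coordinates $i_{R,c}$: a tail $\Qasub$ combined with bump box $\BBRna$ yields $(-1,2)$ rather than $\SC=(-2,2)$. Nothing in this theorem depends on those coordinates.
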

%\thmpropfaithfulness*
\begin{proof}
% Let $\epsilonMax = 0.5$ 
%Let $\ontoo$ be a DL-Lite$^\Hmc$ KB, $\Imc$ be a model of $\ontoo$, 
{Let \Imc be an interpretation with finite domain and let }
$\epsilon$ be an arbitrary value with $0 < \epsilon \leq \epsilonMax$, and recall that in Section~\ref{sec:boxSemantics} we assume  that $\epsilonMax$ is bounded by $0.5$ throughout this paper. 
Also, $\World[i_C]=(-4,4)$ (\cref{tab:RoleBoxes}) implies
    $s_\Omega=4$.
We first prove that $\eta_\Imc$ as described in Definition~\ref{def:mapping} is well-defined, i.e., that it is a box interpretation (c.f. \cref{def:boxinterpretation}).
We start by proving the conditions related to individual names in \cref{def:boxinterpretation}, namely
\begin{itemize}
    \item 
each individual name $a\in\NI$ is mapped to two vectors $\eta(a) = (\posAssign{a}, \bumpAssign{a})$, namely, a position $\posAssign{e} \in \World$ and a bump $\bumpAssign{e} \in \World$.
\end{itemize}
Indeed \cref{def:mapping} maps each $a\in\NI$ to two vectors:
$\posAssignI{\Imc}{a}$ and $ \bumpAssignI{\Imc}{a}$. To complete this item, we need to show that
$\posAssignI{\Imc}{a},\bumpAssignI{\Imc}{a}\in \World$, which we do in \cref{clam:saga}.
\begin{claim}\label{clam:saga}
For any individual $a \in \NI$: 
    \begin{align*}
     -\worldSize + \epsilonVec \leqd \posAssignI{\Imc}{a} \leqd \worldSize - \epsilonVec
     %\label{eq:indPosConstrain}
     \\
     -\worldSize + \epsilonVec \leqd \bumpAssignI{\Imc}{a} \leqd \worldSize - \epsilonVec
     %\label{eq:indBumpConstrain}
    \end{align*}
    \end{claim}
 \begin{proof}
     For any   $a \in \NI$ and for any dimension $i$ with $0 \leq i \leq d$, Definition \ref{def:mapping} assigns (i) $\posAssignI{\Imc}{a}[i]$ either to $\PC, \PnC, \PRa,$ or $\PRna$. Furthermore, by Tables~\ref{tab:ConceptBoxes}, \ref{tab:ConceptIndividual}, and \ref{tab:RoleIndividual} it holds that (ii) $-\worldSizeScalar + \epsilonMax \leq \PC, \PnC, \PRa, \PRna \leq \worldSizeScalar - \epsilonMax$. By  (i) and (ii) for all $a\in \NI$ it holds that $-\worldSize + \epsilonVec \leqd \posAssignI{\Imc}{a} \leqd \worldSize - \epsilonVec$.
     %, proving (\ref{eq:indPosConstrain}).
    
     For any   $a \in \NI$ and for any dimension $i$ with $0 \leq i \leq d$, Definition \ref{def:mapping} assigns (i) $\bumpAssignI{\Imc}{a}[i]$ either to $\BRa$ or $\BRna$. 
    Also, by Tables~\ref{tab:ConceptIndividual}, \ref{tab:RoleBoxes}, and \ref{tab:RoleIndividual} it holds that (ii) $-\worldSizeScalar + \epsilonMax \leq \BRa, \BRna \leq \worldSizeScalar - \epsilonMax$. By (i)  and (ii), for all $a\in \NI$,  $-\worldSize + \epsilonVec \leqd \bumpAssignI{\Imc}{a} \leqd \worldSize - \epsilonVec$.
    %, proving (\ref{eq:indBumpConstrain}).
 \end{proof}   
We now proceed with the second item of \cref{def:boxinterpretation}.
 \begin{itemize}
\item Each concept name $A\in\NC$ is mapped to $\cAssign{A} \in \B$.
\end{itemize}
To show that $\cAssign{A} \in \B$ we need  \cref{clm:sagacont} to hold.  
\begin{claim}\label{clm:sagacont}
    For any concept $C \in \NC$, it holds that:
    \begin{align*}
     \mathbf{0} \leqd \mathbf{U_C} - \mathbf{L_C} \leqd 2\worldSize   
    % \label{eq:conceptConstrain}
    \end{align*}
    (in fact  this holds for all $C \in \NCE$).
\end{claim}
\begin{proof}
    For any concept  $C \in \NCE$ and for any dimension $i$ with $0 \leq i \leq d$,   \cref{def:mapping} assigns   $\cAssignI{\Imc}{C}[i]$ either to $\Sempty, \Seq, \Ssub, \Ssup, \Snsup, \Sinter, \text{ or }\SC$. 
    Then, by Tables~\ref{tab:ConceptBoxes} and \ref{tab:RoleBoxes}, 
    for $(\mathbf{L_C}[i],\mathbf{U_C}[i]):=\cAssignI{\Imc}{C}[i]$ it holds that 
    %for any box $X \in \{\Sempty, \Seq, \Ssub, \Ssup, \Snsup, \Sinter, \SC %, \SnC
    %\}$ if $\mathbf{L_X}$ and $\mathbf{U_X}$ denote their respective bounds then (2) 
    $0 \leq \mathbf{U_C}[i]-\mathbf{L_C}[i]  \leq 2\worldSizeScalar$. Thus, $\mathbf{0} \leqd \mathbf{U_C} - \mathbf{L_C} \leqd 2\worldSize$. 
\end{proof}
%Based on these assumptions, we prove that each of the properties in   \cref{cor:ConsistencyCorollary} are satisfied for  
%{any \geometric{} interpretation 
%$\cAssignSymb_{\Imc}$ constructed from $\Imc$, as described in Definition~\ref{def:mapping}}.
The third item of \cref{def:boxinterpretation} is:
\begin{itemize}
\item each role name  $R\in\NR$ is mapped to three boxes  $\cAssign{R} = (\headAssign{R}, \tailAssign{R}, \bumpBoxAssign{R}$), which we call $R$'s {head} $\headAssign{R} $, {tail} $\tailAssign{R} $ and {bump box} $\bumpBoxAssign{R} $. %\todo{Question to Ana: Is there an equivalent to top and bottom for roles?}
\end{itemize} 
We show this in \cref{clm:whatasaga}.
\begin{claim}\label{clm:whatasaga}
    For any role $R \in \NR$, it holds that: 
    \begin{align*}
        \mathbf{0} \leqd \mathbf{U}^X_\mathbf{R} - \mathbf{L}^X_\mathbf{R} \leqd 2\worldSize \; {\sf with} \; X \in \{\mathbf{H}, \mathbf{T}, \mathbf{B}\}.
    % \label{eq:roleConstrain}
    \end{align*}
\end{claim}
\begin{proof}
For any   $R \in \NR$ and for any dimension $i$ with $0 \leq i \leq d$, Definition \ref{def:mapping} assigns   $\headAssignI{R}{\Imc}[i]$ and $\tailAssignI{R}{\Imc}[i]$ either to $\Qasub, \Qnasub, {\Sempty,} \Seq, \Ssub, \Ssup, \Snsup,$ or $\Sinter$. Then, by Tables~\ref{tab:ConceptBoxes} and \ref{tab:RoleBoxes}, 
    for $(\mathbf{L^H_R}[i],\mathbf{U^H_R}[i]):=\headAssignI{R}{\Imc}[i]$ it holds that 
    %for any box $X \in \{\Sempty, \Seq, \Ssub, \Ssup, \Snsup, \Sinter, \SC %, \SnC
    %\}$ if $\mathbf{L_X}$ and $\mathbf{U_X}$ denote their respective bounds then (2) 
    $0 \leq \mathbf{U^H_R}[i]-\mathbf{L^H_R}[i]  \leq 2\worldSizeScalar$, and
    for $(\mathbf{L^T_R}[i],\mathbf{U^T_R}[i]):=\tailAssignI{R}{\Imc}[i]$ it holds that 
    %for any box $X \in \{\Sempty, \Seq, \Ssub, \Ssup, \Snsup, \Sinter, \SC %, \SnC
    %\}$ if $\mathbf{L_X}$ and $\mathbf{U_X}$ denote their respective bounds then (2) 
    $0 \leq \mathbf{U^T_R}[i]-\mathbf{L^T_R}[i]  \leq 2\worldSizeScalar$.
    %, proving (\ref{eq:roleConstrain}) for the head and tail boxes.
    %it holds that (2) for any box $\beta \in \{\Qasub, \Qnasub, \Seq, \Ssub, \Ssup, \Snsup, \Sinter\}$ if $L_\beta$ and $U_\beta$ denote their respective bounds then (2) $0 \leq U_\beta - L_\beta \leq 2\worldSizeScalar$. Thus, by (1) and (2) for any    $R \in \NR$ it holds that $\mathbf{0} \leqd \mathbf{U^x_R} - \mathbf{L^x_R} \leqd 2\worldSize$ with $x \in \{H, T\}$, proving (\ref{eq:roleConstrain}) with $x \in \{H, T\}$. 
    
      For any   $R \in \NR$ and for any dimension $i$ with $0 \leq i \leq d$, Definition \ref{def:mapping} assigns   $\bumpAssignI{\Imc}{R}[i]$ either to $\BB, \BBRa$, or $\BBRna$. Then, by Tables~\ref{tab:ConceptBoxes} and \ref{tab:RoleBoxes},
    for $(\mathbf{L^B_R}[i],\mathbf{U^B_R}[i]):=\bumpAssignI{\Imc}{R}[i]$ it holds that 
    $0 \leq \mathbf{U^B_R}[i]-\mathbf{L^B_R}[i]  \leq 2\worldSizeScalar$.
    %, proving (\ref{eq:roleConstrain}) for the bump box.     
\end{proof}  
 The extension of $\eta_\Imc$ to arbitrary DL-Lite$^\Hmc$ concept and role expressions  
is as  in \cref{def:boxinterpretation}, though, for presentation purposes we did not explicitly define
the head and tail boxes for the dimension
$i_C$ (this was defined in terms of $\eta_\Imc(\exists S)$ and $\eta_\Imc(\exists S^-)$). So here we need to argue that $\eta_\Imc(\exists S)$ is equal to
 \[\{\mathbf{x} \in \Rd \mid \mathbf{L^H_S} - \mathbf{U^B_S} + \epsilonVec \leqd \mathbf{x} \leqd \mathbf{U^H_S}  - \mathbf{L^B_S} - \epsilonVec \}\]
 (recall that $\mathbf{L_{\exists S}}=\mathbf{L^H_S} - \mathbf{U^B_S}$
 and $\mathbf{U_{\exists S}}=\mathbf{U^H_S}  - \mathbf{L^B_S}$
  see \cref{def:boxinterpretation}).
Indeed,  by the values in
Table~\ref{tab:RoleBoxes}, for the dimensions of the form $i_{R,c}$ we have that
$\mathbf{L^H_S}[i_{R,c}] - \mathbf{U^B_S}[i_{R,c}]=-2$ and
$\mathbf{U^H_S}[i_{R,c}]  - \mathbf{L^B_S}[i_{R,c}]=2$, which correspond to $\SC$, the value of $\eta_\Imc(\exists S)[i_{R,c}]$. For the dimensions of the form $i_{C}$, $\mathbf{L^H_S}[i_{C}] - \mathbf{U^B_S}[i_{C}]=\mathbf{L^H_S}[i_{C}]$ and
$\mathbf{U^H_S}[i_{C}]  - \mathbf{L^B_S}[i_{C}]=\mathbf{U^H_S}[i_{C}] $ since in this case $\bumpBoxAssignI{S}{\Imc}[i_C]:=\BB$, which is $(0,0)$, by the values in
Table~\ref{tab:ConceptBoxes}. This is as required as in this case we have $\headAssignI{S}{\Imc}[i_C]=\eta_\Imc(\exists S)[i_C]$.
The argument that
 $\eta_\Imc(\exists S^-)$ 
is equal to
\[\{\mathbf{x} \in \Rd \mid \mathbf{L^T_S} - \mathbf{U^B_S} + \epsilonVec \leqd \mathbf{x} \leqd \mathbf{U^T_S}  - \mathbf{L^B_S} - \epsilonVec \}\]
is similar. 
It remains to argue the following.
\begin{claim}    
For any concept $C \in \NCE$, it holds that:
    \begin{align}
     \frac{\mathbf{L_C}[i_C] + \mathbf{U_C}[i_C]}{2} \leq - \frac{\worldSizeScalar}{2}
     \label{eq:boxConsistencyInequality}
     \end{align}
    and that this implies $\cAssignSymb_{\Imc}$ is box consistent.
   \end{claim}

\begin{proof}
     {For any concept $C \in \NCE$ and dimension $i_C$, (i) Definition \ref{def:mapping} assigns $\cAssignI{\Imc}{C}[i_C] = \Seq$. Then, by Table~\ref{tab:ConceptBoxes} it holds that (ii) the lower bound of $\Seq$ is set to $-4$ and the upper bound of $\Seq$ is set to $-0.5$. By (i) and (ii), we have that (iii) $\frac{\mathbf{L_{C}}[i_C] + \mathbf{U_{C}}[i_C]}{2} = -2.25 \leq -\worldSizeScalar/2 = -2$. From (i), (ii), and (iii),   for any concept $C \in \NCE$, it holds that $\frac{\mathbf{L_C}[i_C] + \mathbf{U_C}[i_C]}{2} \leq - \frac{\worldSizeScalar}{2}$, proving (Equation~\eqref{eq:boxConsistencyInequality}).
     }
     
      Finally, we briefly check that (\ref{eq:boxConsistencyInequality}) guarantees box consistency. 
    % In particular, for any concept $C \in \NCE$ it holds --- by (\ref{eq:boxConsistencyInequality}) --- that $\frac{\mathbf{L_C}[i_C] + \mathbf{U_C}[i_C]}{2} \leq - \frac{\worldSizeScalar}{2}$. Now
    %Next, we check briefly that \eqref{eq:concept_cons} ensures box consistency. 
    Let $C$ be an arbitrary concept $C \in \NCE$ and assume that  (\ref{eq:boxConsistencyInequality}) holds. Next, recall that $\mathbf{L_{\neg C}} = - \worldSize - \mathbf{L_{C}}$ holds (see Section~\ref{sec:ProblemFormulation}). Now we have that:
\begin{align}
    \mathbf{U_{C}}[i_C] &\leq-\worldSizeScalar -\mathbf{L_{C}}[i_C] = 
    \mathbf{L_{\neg C}}[i_C]
    \label{eq:cons2}  
\end{align}

In particular, 
\[
\mathbf{U_{C}}[i_C] - \epsilon < \mathbf{L_{\neg C}}[i_C]+\epsilon,
\]
which means that the boxes defined by $C$ and $\neg C$ do not intersect in dimension $i_C$. This implies that $\cAssignI{\Imc}{C} \cap \cAssignI{\Imc}{\neg C} \neq \emptyset$, proving that $\cAssignSymb_\Imc$ is box consistent.
\end{proof}
This finishes the proof of this theorem.
\end{proof}

%\begin{lemma}
 %   Let \Imc be an interpretation with finite domain and let
  %  $\eta_\Imc$ be as in \cref{def:canonicalModel}. Then
   % $\eta_\Imc$ is  a box consistent interpretation  (\cref{sec:boxSemantics}).
%\end{lemma}
%\begin{proof}
 
%\end{proof}

\cref{fig:ConceptBoxes} and \cref{fig:RoleBoxes}
help to visualize how the constants 
%$\BC$, $\BRa$, $\Ssub$, etc., that are 
defined in Tables~\ref{tab:ConceptBoxes}, \ref{tab:ConceptIndividual}, \ref{tab:RoleBoxes}, and \ref{tab:RoleIndividual} relate to each other. In the next lemmas, we argue that $\Imc\models \alpha$ iff $\eta_\Imc\models\alpha$
when $\alpha$ is a concept assertion, role assertion, concept inclusion, or role inclusion.

\begin{lemma}\label{lem:faithfulnessconceptassertions}Given an interpretation \Imc with finite domain, let 
$\eta_\Imc$ be as in Definition~\ref{def:mapping}.
For all $a\in \NI$ and all  concepts $D\in \NCE$, $\Imc\models D(a)$ iff $\eta_\Imc\models D(a)$.
\end{lemma}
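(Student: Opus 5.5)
The plan is to unfold the definitions directly. By \cref{def:BoxSatisfaction}, $\eta_\Imc\models D(a)$ holds iff $\posAssignI{\Imc}{a}\in\eta_\Imc(D)$. By \cref{def:box}, that membership means that in every dimension $i$ we have $\mathbf{L}[i]+\epsilon\le \posAssignI{\Imc}{a}[i]\le \mathbf{U}[i]-\epsilon$, where $(\mathbf{L}[i],\mathbf{U}[i])=\eta_\Imc(D)[i]$. So the proof is a dimension-by-dimension check against the constants in Tables~\ref{tab:ConceptBoxes}--\ref{tab:RoleIndividual}, using throughout that $0<\epsilon\le\epsilonMax=0.5$.

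\textbf{($\Rightarrow$)} Assume $a^\Imc\in D^\Imc$.
\begin{itemize}
\item Then $D^\Imc\neq\emptyset$, so $\eta_\Imc(D)$ is not the box with all dimensions set to $\Sempty$.
\item Dimensions $i_{R,c}$: here $\eta_\Imc(D)[i_{R,c}]=\SC=(-2,2)$. The coordinate $\posAssignI{\Imc}{a}[i_{R,c}]$ is either $-1+\epsilon$ or $1-\epsilon$, and both lie in $[-2+\epsilon,2-\epsilon]$.
\item Dimensions $i_C$: I split by the case of \cref{def:mapping} that determines $\eta_\Imc(D)[i_C]$.
\begin{itemize}
\item If $D^\Imc=C^\Imc$, then $a^\Imc\in C^\Imc$, so the coordinate is $\PC=-1\in[-4+\epsilon,-0.5-\epsilon]$ (the interval $\Seq$).
\item If $D^\Imc\subset C^\Imc$, again the coordinate is $-1$, and $-1\in[-2+\epsilon,-0.5-\epsilon]$ (the interval $\Ssub$).
\item If $D^\Imc\supset C^\Imc$, the coordinate is $\pm1$, and both lie in $[-4+\epsilon,2-\epsilon]$ (the interval $\Ssup$).
\item If $C^\Imc\cap D^\Imc=\emptyset$, then $a^\Imc\notin C^\Imc$, so the coordinate is $\PnC=1\in[\epsilon,2-\epsilon]$ (the interval $\Snsup$).
\item Otherwise the box is $\Sinter$, and $\pm1\in[-2+\epsilon,2-\epsilon]$.
\end{itemize}
\end{itemize}
In every case the coordinate lies in the box, so $\posAssignI{\Imc}{a}\in\eta_\Imc(D)$.

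\textbf{($\Leftarrow$)} I argue by contraposition and assume $a^\Imc\notin D^\Imc$.
\begin{itemize}
\item If $D^\Imc=\emptyset$, then $\eta_\Imc(D)$ has $\Sempty=(0,0)$ in every dimension. The constraint $\epsilon\le x\le-\epsilon$ is unsatisfiable for $\epsilon>0$, so the box is empty and no position lies in it.
\item Otherwise, I look at the dimension $i_D$ reserved for $D$ itself. Since $D^\Imc=D^\Imc\neq\emptyset$, \cref{def:mapping} sets $\eta_\Imc(D)[i_D]=\Seq=(-4,-0.5)$. Because $a^\Imc\notin D^\Imc$, we get $\posAssignI{\Imc}{a}[i_D]=\PnC=1$, and $1>-0.5-\epsilon$. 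Hence $\posAssignI{\Imc}{a}\notin\eta_\Imc(D)$, and $\eta_\Imc\not\models D(a)$.
\end{itemize}

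The main obstacle is the case analysis in dimension $i_C$. The conditions in \cref{def:mapping} must be read as mutually exclusive and applied in the listed order, with $\supset$ strict; in particular the case $D=C$ must fall under $\Seq$ and not $\Ssup$. With that reading, the only delicate arithmetic is that $-1\le-0.5-\epsilon$ for the intervals $\Seq$ and $\Ssub$, and this holds exactly because $\epsilon\le 0.5$.
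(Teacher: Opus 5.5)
Your proposal is correct and follows essentially the same route as the paper: for ($\Rightarrow$) you check membership dimension by dimension against the table constants, and for ($\Leftarrow$) you use the reserved dimension $i_D$, where $\eta_\Imc(D)[i_D]=\Seq$ excludes the coordinate $\PnC$. Your explicit treatment of the case $D^\Imc=\emptyset$ in the converse (the box is empty because of $\Sempty$ and $\epsilon>0$) fills in a step that the paper's argument leaves implicit, since the paper simply asserts $\eta_\Imc(D)[i_D]=\Seq$.
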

\begin{proof}
We need to show that $a^\Imc\in D^\Imc$ iff $\posAssignI{\Imc}{a}\in \eta_\Imc(D)$. 
($\Rightarrow$) Suppose $a^\Imc\in D^\Imc$.
    To show that $\posAssignI{\Imc}{a}\in \eta_\Imc(D)$, we argue that $\posAssignI{\Imc}{a}[j]\in \eta_\Imc(D)[j]$,  for every dimension $1\leq j\leq d$.
    We make a case distinction with $C\in\NCE$.
    \begin{itemize}
        \item \textbf{Dimension $i_C$ with $a^\Imc \in C^{\Imc}$.} 
        In this case, (i)  $\posAssignI{\Imc}{a}[i_C] = \PC$. By the parameters of the boxes of dimension $i_C$ (see Tables~\ref{tab:ConceptBoxes} and \ref{tab:ConceptIndividual}, and Figure~\ref{fig:ConceptBoxes}), it holds that (ii) $\PC \in \Seq \cap \Ssup \cap \Ssub \cap \Sinter$. By Definition~\ref{def:mapping},  $\cAssignSymb_{\Imc}(D)[i_C]$ is either equal to $\Seq$, $\Ssup$, $\Ssub$, $\Sinter$, or $\Snsup$  (since by the ($\Rightarrow$) assumption $D^\Imc\neq\emptyset$). By the ($\Rightarrow$) assumption,     $a^\Imc \in D^{\Imc}$ and by the assumption of this case    $a^\Imc \in C^{\Imc}$. By Definition~\ref{def:mapping}, we have that $\eta_\Imc(D)[i_C]$ is not  $\Snsup$   because 
        %they require $C^\Imc\cap D^\Imc=\emptyset$ and 
        $a^\Imc\in C^\Imc\cap D^\Imc$; so it holds that (iii) $\eta_\Imc(D)[i_C] \neq \Snsup$. By (i)-(iii) it holds that $\posAssignI{\Imc}{a}[i_C]\in \eta_\Imc(D)[i_C]$.
        \item \textbf{Dimension $i_C$ with $a^\Imc \not\in C^{\Imc}$.}  In this case, (i) $\posAssignI{\Imc}{a}[i_C] = \PnC$. By the parameters of the   boxes of dimension $i_C$ (see Tables~\ref{tab:ConceptBoxes} and \ref{tab:ConceptIndividual}, and Figure~\ref{fig:ConceptBoxes}), it holds that (ii) $\PnC \in \Snsup \cap \Ssup \cap \Sinter$. By Definition~\ref{def:mapping},  $\cAssignSymb_{\Imc}(D)[i_C]$ is either equal to   $\Seq$, $\Ssup$, $\Ssub$, $\Sinter$, or $\Snsup$,  (since by the ($\Rightarrow$) assumption $D^\Imc\neq\emptyset$).  By the ($\Rightarrow$) assumption,     $a^\Imc \in D^{\Imc}$ and by the assumption of this case    $a^\Imc \not\in C^{\Imc}$. Also,  by Definition~\ref{def:mapping} 
        we have that $\eta_\Imc(D)[i_C]$ is neither   $\Seq$ nor $\Ssub$ because the former requires $C^\Imc= D^\Imc$, the latter requires $D^\Imc\subset C^\Imc $, but in the case we consider here we have  $a^\Imc\in D^\Imc\setminus C^\Imc$;
        so   (iii) $\eta_\Imc(D)[i_C] \neq \Seq$ and   (iv) $\eta_\Imc(D)[i_C] \neq \Ssub$.
         By (i)-(iv) it holds that $\posAssignI{\Imc}{a}[i_C]\in \eta_\Imc(D)[i_C]$.
        %\item \textbf{Dimension $i_{R,c}$.}
    \end{itemize} 
    Finally, consider the dimensions of the form $i_{R,c}$, where $R\in\NR$ and $c\in \Delta^\Imc$. In this case (i) $\eta_\Imc(D)[i_{R,\element}] = \SC$ for any $R\in\NR$ and $c\in \Delta^\Imc$. Furthermore, for any individual name in \NI, in particular $a$, we have that  (ii) $\posAssignI{\Imc}{a}[i_{R,c}] = \PRa$ or $\posAssignI{\Imc}{a}[i_{R,c}] = \PRna$. By the parameters of $\SC$, $\PRa$, and $\PRna$ (see Tables~\ref{tab:RoleBoxes} and \ref{tab:RoleIndividual}, and Figure~\ref{fig:RoleBoxes}), it holds that (iii) $\PRa \in \SC$ and $\PRna \in \SC$. By (i)-(iii) it holds that $\posAssignI{\Imc}{a}[i_{R,c}]\in \eta_\Imc(D)[i_{R,c}]$  { provided that $\epsilon\leq \epsilonMax$ (see Table~\ref{tab:RoleIndividual} and Section~\ref{sec:boxSemantics})}.
We have   shown  that $\posAssignI{\Imc}{a}[j]\in \eta_\Imc(D)[j]$ for any dimension $1\leq j\leq d$. Thus, we have   shown that $\posAssignI{\Imc}{a}\in \eta_\Imc(D)$ if $a^\Imc \in D^{\Imc}$.
    
($\Leftarrow$) Now suppose $\posAssignI{\Imc}{a}\in \eta_\Imc(D)$. 
  %Then, by Definition \ref{def:BoxSatisfaction}, it holds that $\posAssign{a} \in \cAssignSymb_{\Imc_{\ontooC}}(D)$. 
  This means, for dimension $i_D$ in particular, that  (i)  $\posAssignI{\Imc}{a}[i_D]\in \eta_\Imc(D)[i_D]$. By the construction of dimension $i_D$ of $\eta_\Imc$ it holds that (ii) $\eta_\Imc(D)[i_D] = \Seq$. From (i) and (ii),   $\posAssignI{\Imc}{a}[i_D] = \PC$.  By the construction of dimension $i_D$    $\posAssignI{\Imc}{a}[i_D] = \PC$ can only be if $a^\Imc \in D^{\Imc}$. 
  %From (2) and (3) follows $a \in D^{\Imc}$ which is what we wanted to show. 
%    \begin{itemize}
 %       \item $C=A$ with $A\in\NC$:
  %      suppose $a^\Imc\in C^\Imc$. 
   %     \item $C=\exists R$ with $R\in\NR$: 
    %    \item $C=\exists R^-$ with $R\in\NR$:
    %\end{itemize}
\end{proof}

%Ana WIP trying assertions first
\begin{lemma}\label{lem:faithfulnessroleassertions}Given an interpretation \Imc with finite domain, let 
$\eta_\Imc$ be as in Definition~\ref{def:mapping}.
For all $a,b\in \NI$ and all   $R\in \NR$, $\Imc\models R(a,b)$ iff $\eta_\Imc\models R(a,b)$.
\end{lemma}
\begin{proof}($\Rightarrow$) Assume $\Imc \models R(a,b)$. 
    We start by showing that $\posAssignI{\Imc}{a} + \bumpAssignI{\Imc}{b}  \in \headAssignI{R}{\Imc}$
    by showing that this holds
    in every dimension.
    For this we make a case distinction, where $C\in\NCE$, $S\in\NR$, and $c\in\Delta^\Imc$.
    \begin{itemize}
        \item \textbf{Dimension $i_C$.}
        We want to show that $\posAssignI{\Imc}{a}[i_C] + \bumpAssignI{\Imc}{b}[i_C]  \in \headAssignI{R}{\Imc}[i_C]$. 
            By Definition~\ref{def:mapping}, $\headAssignI{R}{\Imc}[i_C]=\eta_\Imc(\exists R)[i_C]$ can be either equal to 
            $\Seq$, $\Ssup$, $\Ssub$, $\Sinter$, or $\Snsup$  (since by the ($\Rightarrow$) assumption $\Imc\models R(a,b)$,
            so $(\exists R)^\Imc\neq\emptyset$).
            By Definition~\ref{def:mapping}, $\posAssignI{\Imc}{a}[i_C]=\PC$ if $a^\Imc\in C^\Imc$, and $\posAssignI{\Imc}{a}[i_C]=\PnC$ if $a^\Imc\notin C^\Imc$; also $\bumpAssignI{\Imc}{a}[i_C]=\BC$ (if $a^\Imc\in C^\Imc$ or not).
         By the values in 
        Tables~\ref{tab:ConceptBoxes} and \ref{tab:ConceptIndividual},
we have that all possible values for $\posAssignI{\Imc}{a}[i_C] + \bumpAssignI{\Imc}{b}[i_C]$ are in $ \Ssup\cap \Sinter$. So we need to consider three cases, namely,
(i) when $\headAssignI{R}{\Imc}[i_C]= \Seq$, (ii) when $\headAssignI{R}{\Imc}[i_C]= \Ssub$, and (iii)  when $\headAssignI{R}{\Imc}[i_C]= \Snsup$. By the ($\Rightarrow$) assumption,
$\Imc\models R(a,b)$, so $a^\Imc\in(\exists R)^\Imc$.
In case (i) and in case (ii), 
by Definition~\ref{def:mapping}, we have that $(\exists R)^\Imc=C^\Imc$ and $(\exists R)^\Imc\subset C^\Imc$, respectively.
So, in both cases $a^\Imc\in C^\Imc$ and, by Definition~\ref{def:mapping}, $\posAssignI{\Imc}{a}[i_C]=\PC$.
By the values in 
        Tables~\ref{tab:ConceptBoxes} and \ref{tab:ConceptIndividual}, $\posAssignI{\Imc}{a}[i_C] + \bumpAssignI{\Imc}{b}[i_C]\in \Seq\cap\Ssub$.
In case (iii), $(\exists R)^\Imc\cap C^\Imc=\emptyset$. So $a^\Imc\notin C^\Imc$ and, by Definition~\ref{def:mapping}, $\posAssignI{\Imc}{a}[i_C]=\PnC$. By the values in 
        Tables~\ref{tab:ConceptBoxes} and \ref{tab:ConceptIndividual}, $\posAssignI{\Imc}{a}[i_C] + \bumpAssignI{\Imc}{b}[i_C]\in \Snsup$. %, and we are done.
        \item \textbf{Dimension $i_{S,c}$ with   $a^\Imc=c$ and for all $ \anotherelement \in \Delta^{\Imc}$, $(c,\anotherelement) \in R^{\Imc}$ implies $(c,\anotherelement) \in S^{\Imc}$.} 
        %We want to show that $\posAssignI{\Imc}{a}[i_{S,c}] + \bumpAssignI{\Imc}{b}[i_{S,c}]  \in \headAssignI{R}{\Imc}[i_{S,c}]$. 
        By Definition~\ref{def:mapping}, when $a^\Imc=c$, we have that
        $\posAssignI{\Imc}{a}[i_{S,c}]=\PRa$ and $\bumpAssignI{\Imc}{b}[i_{S,c}]=\BRa$. Also, in this case 
        $\headAssignI{R}{\Imc}[i_{S,c}]$ is $\Qasub$.
        %Assume for all $ \anotherelement \in \Delta^{\Imc}$, $(c,\anotherelement) \in R^{\Imc}$ implies $(c,\anotherelement) \in S^{\Imc}$
        By the values in Tables~\ref{tab:RoleBoxes} and~\ref{tab:RoleIndividual},
        $\posAssignI{\Imc}{a}[i_{S,c}] + \bumpAssignI{\Imc}{b}[i_{S,c}]  \in \headAssignI{R}{\Imc}[i_{S,c}]$.
        %, as required.
        \item \textbf{Dimension $i_{S,c}$ with   $a^\Imc=c$ and there is $ \anotherelement \in \Delta^{\Imc}$, with $(c,\anotherelement) \in R^{\Imc}$ but $(c,\anotherelement) \notin S^{\Imc}$.} 
        %We want to show that $\posAssignI{\Imc}{a}[i_{S,c}] + \bumpAssignI{\Imc}{b}[i_{S,c}]  \in \headAssignI{R}{\Imc}[i_{S,c}]$. 
        By Definition~\ref{def:mapping}, when $a^\Imc=c$, we have that
        $\posAssignI{\Imc}{a}[i_{S,c}]=\PRa$ and 
        $\headAssignI{R}{\Imc}[i_{S,c}]=\Qnasub$.
         Also, $\bumpAssignI{\Imc}{b}[i_{S,c}]=\BRa$ or $\bumpAssignI{\Imc}{b}[i_{S,c}]=\BRna$.
        %Assume for all $ \anotherelement \in \Delta^{\Imc}$, $(c,\anotherelement) \in R^{\Imc}$ implies $(c,\anotherelement) \in S^{\Imc}$
        In both cases, by the values in Tables~\ref{tab:RoleBoxes} and~\ref{tab:RoleIndividual},
        $\posAssignI{\Imc}{a}[i_{S,c}] + \bumpAssignI{\Imc}{b}[i_{S,c}]  \in \headAssignI{R}{\Imc}[i_{S,c}]$.
        \item \textbf{Dimension $i_{S,c}$ with   $a^\Imc\neq c$.} In this case, by Definition~\ref{def:mapping},
        $\posAssignI{\Imc}{a}[i_{S,c}]=\PRna$. It can be that $\headAssignI{R}{\Imc}[i_{S,c}]=\Qasub$ or $\headAssignI{R}{\Imc}[i_{S,c}]=\Qnasub$.
        Also, it can be that
        $\bumpAssignI{\Imc}{b}[i_{S,c}]=\BRna$ or $\bumpAssignI{\Imc}{b}[i_{S,c}]=\BRa$ (depending
        on \Imc) but in all cases, by the values in Tables~\ref{tab:RoleBoxes} and~\ref{tab:RoleIndividual},
        $\posAssignI{\Imc}{a}[i_{S,c}] + \bumpAssignI{\Imc}{b}[i_{S,c}]  \in \headAssignI{R}{\Imc}[i_{S,c}]$, as required.
        %\item \textbf{Dimension $i_{S,c}$ with   $a^\Imc=c$ and, there is $ \anotherelement \in \Delta^{\Imc}$, with $(c,\anotherelement) \in R^{\Imc}$ but $(c,\anotherelement) \notin S^{\Imc}$.} In this case, by Definition~\ref{def:mapping},
        %$\posAssignI{\Imc}{a}[i_{S,c}]=\PRa$, $\bumpAssignI{\Imc}{b}[i_{S,c}]=\BRa$ and
        %$\headAssignI{R}{\Imc}[i_{S,c}]=\Qnasub$. Then, by the values in Tables~\ref{tab:RoleBoxes} and~\ref{tab:RoleIndividual},
        %$\posAssignI{\Imc}{a}[i_{S,c}] + \bumpAssignI{\Imc}{b}[i_{S,c}]  \in \headAssignI{R}{\Imc}[i_{S,c}]$, as required.
        %\todo{Ana: WIP}
        %\item \textbf{Dimension $i_{S,c}$ with none of the above.}
    \end{itemize}
We now show that
$\posAssignI{\Imc}{b}+ \bumpAssignI{\Imc}{a} \in \tailAssignI{R}{\Imc}$
    by showing that this holds
    in every dimension.
    We   make a  similar case distinction as in the argument above. %where $C\in\NCE$, $S\in\NR$, and $c\in\Delta^\Imc$.
    \begin{itemize}
        \item \textbf{Dimension $i_C$.}
        We want to show that $\posAssignI{\Imc}{b}[i_C] + \bumpAssignI{\Imc}{a}[i_C]  \in \tailAssignI{R}{\Imc}[i_C]$. 
            By Definition~\ref{def:mapping}, $\tailAssignI{R}{\Imc}[i_C]=\eta_\Imc(\exists R^-)[i_C]$ can be either equal to 
            $\Seq$, $\Ssup$, $\Ssub$, $\Sinter$, or $\Snsup$  (since by the ($\Rightarrow$) assumption $\Imc\models R(a,b)$,
            so $(\exists R^-)^\Imc\neq\emptyset$).
         By Definition~\ref{def:mapping}, $\posAssignI{\Imc}{b}[i_C]=\PC$ if $b^\Imc\in C^\Imc$, and $\posAssignI{\Imc}{b}[i_C]=\PnC$ if $b^\Imc\notin C^\Imc$; also $\bumpAssignI{\Imc}{b}[i_C]=\BC$ (if $b^\Imc\in C^\Imc$ or not).
         By the values in 
        Tables~\ref{tab:ConceptBoxes} and \ref{tab:ConceptIndividual},
we have that all possible values for $\posAssignI{\Imc}{b}[i_C] + \bumpAssignI{\Imc}{a}[i_C]$ are in $ \Ssup\cap \Sinter$. So we need to consider three cases, namely,
(i) when $\tailAssignI{R}{\Imc}[i_C]= \Seq$, (ii) when $\tailAssignI{R}{\Imc}[i_C]= \Ssub$, and (iii)  when $\tailAssignI{R}{\Imc}[i_C]= \Snsup$. By the ($\Rightarrow$) assumption,
$\Imc\models R(a,b)$, so $b^\Imc\in(\exists R^-)^\Imc$.
In case (i) and in case (ii), 
by Definition~\ref{def:mapping}, we have that $(\exists R^-)^\Imc=C^\Imc$ and $(\exists R^-)^\Imc\subset C^\Imc$, respectively.
So, in both cases $b^\Imc\in C^\Imc$ and, by Definition~\ref{def:mapping}, $\posAssignI{\Imc}{b}[i_C]=\PC$.
By the values in 
        Tables~\ref{tab:ConceptBoxes} and \ref{tab:ConceptIndividual}, $\posAssignI{\Imc}{b}[i_C] + \bumpAssignI{\Imc}{a}[i_C]\in \Seq\cap\Ssub$.
In case (iii), $(\exists R^-)^\Imc\cap C^\Imc=\emptyset$. So $b^\Imc\notin C^\Imc$ and, by Definition~\ref{def:mapping}, $\posAssignI{\Imc}{b}[i_C]=\PnC$. By the values in 
        Tables~\ref{tab:ConceptBoxes} and \ref{tab:ConceptIndividual}, $\posAssignI{\Imc}{b}[i_C] + \bumpAssignI{\Imc}{a}[i_C]\in \Snsup$. %, and we are done.     
        \item \textbf{Dimension $i_{S,c}$ with   $b^\Imc=c$  and for all $ \anotherelement \in \Delta^{\Imc}$, $(\anotherelement,c) \in R^{\Imc}$ implies $(c,\anotherelement) \in S^{\Imc}$.}
        By Definition~\ref{def:mapping}, when $b^\Imc=c$, we have that
        $\posAssignI{\Imc}{b}[i_{S,c}]=\PRa$ and
        %or $\bumpAssignI{\Imc}{a}[i_{S,c}]=\BRna$.
        $\tailAssignI{R}{\Imc}[i_{S,c}]=\Qasub$.
        We also have that $\bumpAssignI{\Imc}{a}[i_{S,c}]=\BRa$ because
        $\Imc\models R(a,b)$ and in our particular case this also implies
        $\Imc\models R(b,a)$.
        %and $\Qnasub$. 
        %Assume for all $ \anotherelement \in \Delta^{\Imc}$, $(c,\anotherelement) \in R^{\Imc}$ implies $(c,\anotherelement) \in S^{\Imc}$
        By the values in Tables~\ref{tab:RoleBoxes} and~\ref{tab:RoleIndividual},
        $\posAssignI{\Imc}{b}[i_{S,c}] + \bumpAssignI{\Imc}{a}[i_{S,c}]  \in \tailAssignI{R}{\Imc}[i_{S,c}]$. %, as required.
        %and, for all $ \anotherelement \in \Delta^{\Imc}$, $(c,\anotherelement) \in R^{\Imc}$ implies $(c,\anotherelement) \in S^{\Imc}$.}
        \item \textbf{Dimension $i_{S,c}$ with   $b^\Imc=c$  and there is $ \anotherelement \in \Delta^{\Imc}$, with$(\anotherelement,c) \in R^{\Imc}$ but $(c,\anotherelement) \notin S^{\Imc}$.}
        By Definition~\ref{def:mapping}, when $b^\Imc=c$, we have that
        $\posAssignI{\Imc}{b}[i_{S,c}]=\PRa$ and
        %or $\bumpAssignI{\Imc}{a}[i_{S,c}]=\BRna$.
        $\tailAssignI{R}{\Imc}[i_{S,c}]=\Qnasub$.  Also, $\bumpAssignI{\Imc}{a}[i_{S,c}]=\BRa$ or $\bumpAssignI{\Imc}{a}[i_{S,c}]=\BRna$.
        %Assume for all $ \anotherelement \in \Delta^{\Imc}$, $(c,\anotherelement) \in R^{\Imc}$ implies $(c,\anotherelement) \in S^{\Imc}$
        In both cases, by the values in Tables~\ref{tab:RoleBoxes} and~\ref{tab:RoleIndividual},
        $\posAssignI{\Imc}{b}[i_{S,c}] + \bumpAssignI{\Imc}{a}[i_{S,c}]  \in \tailAssignI{R}{\Imc}[i_{S,c}]$.
        \item \textbf{Dimension $i_{S,c}$ with   $b^\Imc\neq c$.} In this case, by Definition~\ref{def:mapping},
        $\posAssignI{\Imc}{b}[i_{S,c}]=\PRna$. It can be that $\tailAssignI{R}{\Imc}[i_{S,c}]=\Qasub$ or $\tailAssignI{R}{\Imc}[i_{S,c}]=\Qnasub$.
        Also, it can be that
        $\bumpAssignI{\Imc}{a}[i_{S,c}]=\BRna$ or $\bumpAssignI{\Imc}{a}[i_{S,c}]=\BRa$ (depending
        on \Imc) but in all cases, by the values in Tables~\ref{tab:RoleBoxes} and~\ref{tab:RoleIndividual},
        $\posAssignI{\Imc}{b}[i_{S,c}] + \bumpAssignI{\Imc}{a}[i_{S,c}]  \in \tailAssignI{R}{\Imc}[i_{S,c}]$. %, as required.
    \end{itemize}
It remains to show that
$\bumpAssignI{\Imc}{a} \in \bumpBoxAssignI{R}{\Imc}
 $ and $
        \bumpAssignI{\Imc}{b} \in \bumpBoxAssignI{R}{\Imc}$. 
         We  again  make a   case distinction. 
\begin{itemize}
        \item \textbf{Dimension $i_C$.} By Definition~\ref{def:mapping}, 
        we have that $\bumpBoxAssignI{R}{\Imc}[i_C]=\BB$ and $\bumpAssignI{\Imc}{a}[i_C]=\bumpAssignI{\Imc}{b}[i_C]=\BC$. By the values in Tables~\ref{tab:ConceptBoxes} and \ref{tab:ConceptIndividual},
        $\bumpAssignI{\Imc}{a}[i_C]=\bumpAssignI{\Imc}{b}[i_C]  \in \bumpBoxAssignI{R}{\Imc}[i_C]$.
        \item \textbf{Dimension $i_{S,c}$.} By Definition~\ref{def:mapping}, 
        we have that $\bumpAssignI{\Imc}{a}[i_{S,c}]=\BRa$ or $\bumpAssignI{\Imc}{a}[i_{S,c}]=\BRna$ and the same for $\bumpAssignI{\Imc}{b}[i_{S,c}]$. Also,
        $\bumpBoxAssignI{R}{\Imc}[i_{S,c}]=\BBRa$ or $\bumpBoxAssignI{R}{\Imc}[i_{S,c}]=\BBRna$. In all cases, by the values in Tables~\ref{tab:RoleBoxes} and~\ref{tab:RoleIndividual}, $\bumpAssignI{\Imc}{a}[i_{S,c}]  \in \bumpBoxAssignI{R}{\Imc}[i_{S,c}]$ and 
        $\bumpAssignI{\Imc}{b}[i_{S,c}]  \in \bumpBoxAssignI{R}{\Imc}[i_{S,c}]$.

        \end{itemize}

We have thus shown that
\begin{align*}
       \posAssignI{\Imc}{a} + \bumpAssignI{\Imc}{b}  &\in \headAssignI{R}{\Imc} 
          \\
        \posAssignI{\Imc}{b}+ \bumpAssignI{\Imc}{a} &\in \tailAssignI{R}{\Imc} 
          \\
        \bumpAssignI{\Imc}{a} &\in \bumpBoxAssignI{R}{\Imc}
          \\
        \bumpAssignI{\Imc}{b} &\in \bumpBoxAssignI{R}{\Imc}.     
    \end{align*}
Then, by Definition~\ref{def:BoxSatisfaction}, we have  that $\eta_\Imc\models R(a,b)$. 

($\Leftarrow$) Assume $\eta_\Imc\models R(a,b)$. Let $c\in\Delta^\Imc$ be the element such that $a^\Imc=c$. By Definition~\ref{def:mapping},  $\posAssignI{\Imc}{a}[i_{R,\element}]=\PRa$ and  $\headAssignI{R}{\Imc}[i_{R,\element}]:=\Qasub$.
Also, either  $\bumpAssignI{\Imc}{b}[i_{R,\element}]=\BRa$  or  $\bumpAssignI{\Imc}{b}[i_{R,\element}]=\BRna$. 
By Definition~\ref{def:BoxSatisfaction},
$\posAssignI{\Imc}{a} + \bumpAssignI{\Imc}{b}  \in \headAssignI{R}{\Imc}$.
%\begin{align*}
 %       \posAssignI{\Imc}{a} + \bumpAssignI{\Imc}{b}  &\in \headAssignI{R}{\Imc} 
  %        \\
   %     \posAssignI{\Imc}{b}+ \bumpAssignI{\Imc}{a} &\in \tailAssignI{R}{\Imc} 
    %      \\
     %   \bumpAssignI{\Imc}{a} &\in \bumpBoxAssignI{R}{\Imc}
      %    \\
       % \bumpAssignI{\Imc}{b} &\in \bumpBoxAssignI{R}{\Imc}.     
   % \end{align*}
This means, in particular, that for the dimension
$i_{R,c}$ 
\[\posAssignI{\Imc}{a}[i_{R,c}] + \bumpAssignI{\Imc}{b}[i_{R,c}]  \in \headAssignI{R}{\Imc}[i_{R,c}].\]
%\begin{align*}
 %       \posAssignI{\Imc}{a}[i_{R,c}] + \bumpAssignI{\Imc}{b}[i_{R,c}]  &\in \headAssignI{R}{\Imc}[i_{R,c}]
  %        \\
   %     \posAssignI{\Imc}{b}[i_{R,c}]+ \bumpAssignI{\Imc}{a}[i_{R,c}] &\in \tailAssignI{R}{\Imc}[i_{R,c}] 
    %      \\
     %   \bumpAssignI{\Imc}{a}[i_{R,c}] &\in \bumpBoxAssignI{R}{\Imc}[i_{R,c}]
      %    \\
      %  \bumpAssignI{\Imc}{b}[i_{R,c}] &\in \bumpBoxAssignI{R}{\Imc}[i_{R,c}].     
    %\end{align*}
 %and $\bumpBoxAssignI{R}{\Imc}[i_{R,\element}]:=\BBRa$. 
The only possible value of $\bumpAssignI{\Imc}{b}[i_{R,\element}]$ that satisfies the conditions above is $\BRa$, which is 
assigned to
    $\bumpAssignI{\Imc}{b}[i_{R,\element}]$ iff 
$(c,b^\Imc)\in R^\Imc$. Since $a^\Imc=c$, we have that $(a^\Imc,b^\Imc)\in R^\Imc$.
In other words, $\Imc \models R(a,b)$.
%, otherwise $\bumpAssignI{\Imc}{a}[i_{R,\element}]=\BRna$, 
 %   $\posAssignI{\Imc}{a}[i_{R,\element}]=\PRa$ if $a=c$, otherwise (that is, $a\neq c$) $\posAssignI{\Imc}{a}[i_{R,\element}]=\PRna$.  
\end{proof}

%\begin{lemma}Given an interpretation \Imc with finite domain, let 
%$\eta_\Imc$ be as in Definition~\ref{def:mapping}.
%For all $d\in \Delta^\Imc$ and all DL-Lite$^\Hmc$ concepts $C$, $d\in C^\Imc$ iff $\mu(d)\subseteq \eta_\Imc(C)$. 
%\end{lemma}

\begin{lemma}\label{lem:canonicalci}Given an interpretation \Imc with finite domain, let 
$\eta_\Imc$ be as in Definition~\ref{def:mapping}.
For all DL-Lite$^\Hmc$ CIs $C\sqsubseteq D$, $\Imc\models C\sqsubseteq D$ iff $ \eta_\Imc\models  C \sqsubseteq D$. 
\end{lemma}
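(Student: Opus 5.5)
The plan is to reduce both directions of \cref{lem:canonicalci} to a per-dimension comparison of the intervals listed in Tables~\ref{tab:ConceptBoxes} and~\ref{tab:RoleBoxes}. Here $C\in\NCE$ and $D\in\NCEneg$, and boxes are compared through their bounds. By the discussion after \cref{clm:whatasaga}, $\eta_\Imc(\exists S)$ and $\eta_\Imc(\exists S^-)$ coincide with the boxes prescribed by \cref{def:boxinterpretation}. So every $\eta_\Imc(C)$ with $C\in\NCE$ is given directly by \cref{def:mapping}. For $D=\neg B$, the box $\eta_\Imc(D)$ has bounds $-\worldSize-\mathbf{L_B}$ and $\worldSize-\mathbf{U_B}$.

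\textbf{Empty and nonempty boxes.} I first dispose of the case $C^\Imc=\emptyset$. Then $\eta_\Imc(C)$ has all dimensions equal to $\Sempty=(0,0)$, which is geometrically empty because $\epsilon>0$. Hence both $\Imc\models C\sqsubseteq D$ and $\eta_\Imc\models C\sqsubseteq D$ hold trivially. Otherwise, every dimension of $\eta_\Imc(C)$ has width at least $1.5>2\epsilon$, because $\epsilon\le\epsilonMax=0.5$. Since both boxes are shrunk by the same $\epsilonVec$, I will use that $\eta_\Imc(C)\subseteq\eta_\Imc(D)$ iff $\mathbf{L_D}\leqd\mathbf{L_C}$ and $\mathbf{U_C}\leqd\mathbf{U_D}$. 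In the dimensions $i_{R,c}$, both $C$ and $D$ receive $\SC=(-2,2)$. The complement of $\SC$ is again $(-2,2)$, so these dimensions never matter, and only the dimensions $i_E$ with $E\in\NCE$ remain.

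\textbf{Positive $D$.} For ($\Leftarrow$), look at dimension $i_D$, where $\eta_\Imc(D)[i_D]=\Seq=(-4,-0.5)$. Among the intervals $C$ can receive, only $\Seq$ and $\Ssub$ are contained in $\Seq$. Either one forces $C^\Imc\subseteq D^\Imc$ by \cref{def:mapping}. For ($\Rightarrow$), assume $\emptyset\neq C^\Imc\subseteq D^\Imc$ and do a case analysis on the label of $C$ at $i_E$, listing the labels still possible for $D$ in each case.
\begin{itemize}
\item If $C^\Imc=E^\Imc$, then $D$ gets $\Seq$ or $\Ssup$.
\item If $C^\Imc\subset E^\Imc$, then $D$ gets $\Seq$, $\Ssub$, $\Ssup$ or $\Sinter$; it cannot get $\Snsup$, since $D^\Imc$ meets $E^\Imc$.
\item If $C^\Imc\supset E^\Imc$, then $D$ gets $\Ssup$.
\item If $C^\Imc\cap E^\Imc=\emptyset$, then $D$ gets $\Ssup$, $\Sinter$ or $\Snsup$, but never $\Seq$ or $\Ssub$.
\item If $C$ gets $\Sinter$, then $D$ gets $\Ssup$ or $\Sinter$.
\end{itemize}
The subcase $E^\Imc=\emptyset$ needs separate attention, because then both $C$ and $D$ get $\Sinter$. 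In every case the interval chains $\Ssub\subseteq\Seq\subseteq\Ssup$, $\Ssub\subseteq\Sinter\subseteq\Ssup$ and $\Snsup\subseteq\Sinter$ from Table~\ref{tab:ConceptBoxes} give the required containment.

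\textbf{Negative $D=\neg B$.} I tabulate the complemented intervals: $\Seq\mapsto(0,4.5)$, $\Ssub\mapsto(-2,4.5)$, $\Ssup\mapsto(0,2)$, $\Snsup\mapsto(-4,2)$, $\Sinter\mapsto(-2,2)$ and $\Sempty\mapsto(-4,4)$. For ($\Leftarrow$), in dimension $i_B$ the only label of $C$ fitting inside $(0,4.5)$ is $\Snsup$, which means $C^\Imc\cap B^\Imc=\emptyset$. For ($\Rightarrow$), assume $C^\Imc\cap B^\Imc=\emptyset$. If $B^\Imc=\emptyset$, the complement is $(-4,4)$ in every $i_E$ and we are done. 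Otherwise I again split on the label of $C$ at $i_E$ and use disjointness to exclude labels for $B$.
\begin{itemize}
\item If $C$ gets $\Seq$ or $\Ssup$, then $B$ must get $\Snsup$, giving $(-4,2)$.
\item If $C$ gets $\Snsup=(0,2)$, this interval lies in every complemented interval.
\item If $C$ gets $\Ssub$ or $\Sinter$, then $B$ cannot get $\Seq$ or $\Ssup$, and the remaining complements all contain $(-2,2)$.
\end{itemize}

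\textbf{Expected obstacle.} This last case analysis is the step I expect to be the main obstacle. It requires checking, for each combination of labels, that set-theoretic disjointness really rules out the bad pairs. It also requires checking that the edge cases with $E^\Imc=\emptyset$ or $C^\Imc=\emptyset$, which are defaulted to $\Sinter$ or $\Sempty$ by \cref{def:mapping}, are consistent with the table.
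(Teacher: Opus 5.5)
Your proof is correct. For positive right-hand sides it is the same per-dimension table chase as the paper's. The only difference there is bookkeeping: you split on the label of $C$ at $i_E$, whereas the paper splits on the label of $D$ (cases $D^\Imc=E^\Imc$, $D^\Imc\subset E^\Imc$, $D^\Imc\supset E^\Imc$, disjointness, and the default $\Sinter$). Both look at $i_D$ for ($\Leftarrow$).

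The genuine difference is the negated case. The paper never separates $D=\neg B$. It reads $\eta_\Imc(\neg B)$ off the table in \cref{def:mapping}, and for ($\Leftarrow$) it appeals to a dimension $i_D$ that exists only when $D\in\NCE$.

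Under \cref{def:boxinterpretation}, however, $\eta_\Imc(\neg B)$ is the complement box $\overline{\eta_\Imc(B)}$. This is what $\eta\models C\sqsubseteq\neg B$ refers to, and what the box-consistency argument of \cref{thm:propembedding} uses. The complement generally differs from the table label. For instance, if $(\neg B)^\Imc=E^\Imc$ with $B^\Imc,E^\Imc\neq\emptyset$, the complement at $i_E$ is $(-4,2)$, not $\Seq$.

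Your separate treatment of the negated case is what is needed to cover $C\sqsubseteq\neg B$ under that semantics. It consists of the complemented intervals, dimension $i_B$ for ($\Leftarrow$), and the disjointness-driven case split for ($\Rightarrow$). The paper's approach buys a single uniform case analysis for every $D$, but only under a reading of $\neg B$ that conflicts with the box semantics. Your remark that every nonempty $\eta_\Imc(C)$ has width at least $1.5>2\epsilon$ also justifies replacing set inclusion by comparison of bounds, which the paper uses tacitly.

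Three small loose ends:
\begin{itemize}
\item In the positive ($\Leftarrow$) argument, say why $D^\Imc\neq\emptyset$ may be assumed: otherwise $\eta_\Imc(D)$ is empty while $\eta_\Imc(C)$ is not.
\item In the negative ($\Leftarrow$) argument, say why $B^\Imc\neq\emptyset$ may be assumed: otherwise $C\sqsubseteq\neg B$ holds trivially in $\Imc$.
\item Your claim that $D$ receives $\SC$ at every $i_{R,c}$ fails for $D=\neg B$ with $B\in\NC$ and $B^\Imc=\emptyset$, where you get $(-4,4)$. This is harmless, since that interval still contains $(-2,2)$.
\end{itemize}
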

\begin{proof}
We need to show that $C^\Imc\subseteq D^\Imc$ iff $ \eta_\Imc(C) \subseteq \eta_\Imc(D)$. 
($\Rightarrow$) Suppose $C^\Imc\subseteq D^\Imc$.
   To show that $ \eta_\Imc(C) \subseteq \eta_\Imc(D)$, we argue that $ \eta_\Imc(C)[j] \subseteq \eta_\Imc(D)[j]$,  for every dimension $1\leq j\leq d$.
   First assume $\emptyset = D^\Imc$.
   Then, by the ($\Rightarrow$) assumption, $\emptyset = C^\Imc$.
   By the box definition (see Section~\ref{sec:boxSemantics}), we have that 
$\eta_\Imc(C)=\eta_\Imc(D)=\emptyset$
   (when $\epsilon>0$, which is assumed to be the case in this work). So $ \eta_\Imc(C) \subseteq \eta_\Imc(D)$.
    We now make a case distinction with $\emptyset \neq D^\Imc$ and $E\in\NCE$. 
       \begin{itemize}
       %\item   
       % \textbf{Dimension $i_{E}$ with $\emptyset= D^\Imc $.} By the ($\Rightarrow$) assumption, $C^\Imc\subseteq D^\Imc$. Then, by the assumption in this case 
        %$\emptyset= C^\Imc $. By Definition~\ref{def:mapping}
        %$ \eta_\Imc(C)[i_E] = \eta_\Imc(D)[i_E]=\Sempty$, so $ \eta_\Imc(C)[i_E] \subseteq \eta_\Imc(D)[i_E]$.
        \item   
        \textbf{Dimension $i_{E}$ with $\emptyset\neq D^\Imc = E^\Imc$.}
By Definition~\ref{def:mapping},
$\eta_\Imc(E)[i_E]=\Seq$.   
      By the assumption in this case (i) $D^\Imc=E^\Imc$, so  (ii)
      $\eta_\Imc(D)[i_E]=\Seq$ by Definition~\ref{def:mapping}.
      By the ($\Rightarrow$) assumption, $C^\Imc\subseteq D^\Imc$. We have two cases.
      \begin{itemize}
          \item  If $C^\Imc= D^\Imc$ then, by (i),
      $C^\Imc= E^\Imc$. So, by Definition~\ref{def:mapping}, we have that $\eta_\Imc(C)[i_E]=\Seq$, which implies $ \eta_\Imc(C)[i_E] = \eta_\Imc(D)[i_E]$ and then $ \eta_\Imc(C)[i_E] \subseteq \eta_\Imc(D)[i_E]$, as required.
      \item       Otherwise, $C^\Imc\subset  D^\Imc$. Then, by (i),
      $C^\Imc\subset E^\Imc$. If $C^\Imc\neq\emptyset$ then by Definition~\ref{def:mapping},  $\eta_\Imc(C)[i_E]=\Ssub$.  By (ii),
      $\eta_\Imc(D)[i_E]=\Seq$.
      Since  $\Ssub \subseteq \Seq$ (see Table~\ref{tab:ConceptBoxes} and Figure~\ref{fig:ConceptBoxes}),  
      $ \eta_\Imc(C)[i_E] \subseteq \eta_\Imc(D)[i_E]$.  Otherwise (when 
 $C^\Imc= \emptyset$) then $\eta_\Imc(C)=\emptyset$ and we are done. %\todo{Ana: problem here, it was ok when it was the empty set but now with 0,0 it is not ok anymore}
 %so $ \eta_\Imc(C)[i_E] \subseteq \eta_\Imc(D)[i_E]$.
      \end{itemize}
%\textbf{Dimension $i_{E}$ with $\emptyset\neq \eta_\Imc(D) = \eta_\Imc(E)$.}
%By Definition~\ref{def:mapping},
%$\eta_\Imc(E)[i_E]=\Seq$.   
 %     Then, by the assumption in this case,
  %    $\eta_\Imc(D)[i_E]=\Seq$, which only happens when (1) $D^\Imc=E^\Imc$.      
   %   By the ($\Rightarrow$) assumption, $C^\Imc\subseteq D^\Imc$. Then we have two cases.
    %  \begin{itemize}
     %     \item  If $C^\Imc= D^\Imc$ then, by (1),
     % $C^\Imc= E^\Imc$. So, by Definition~\ref{def:mapping}, $\eta_\Imc(C)[i_E]=\Seq$, which implies $ \eta_\Imc(C)[i_E] = \eta_\Imc(D)[i_E]$, so $ \eta_\Imc(C)[i_E] \subseteq \eta_\Imc(D)[i_E]$.
     % \item       Otherwise, $C^\Imc\subset  D^\Imc$. Then, by (1),
     % $C^\Imc\subset E^\Imc$. By Definition~\ref{def:mapping}, (2) $\eta_\Imc(C)[i_E]=\Ssub$.  By the 
     %   assumption in this case,
     % $\eta_\Imc(D)[i_E]=\Seq$.
     % Since  $\Ssub \subseteq \Seq$ (see Table~\ref{tab:ConceptBoxes} and Figure~\ref{fig:ConceptBoxes}),  
     % $ \eta_\Imc(C)[i_E] \subseteq \eta_\Imc(D)%[i_E]$.
      %\end{itemize}
\item
\textbf{Dimension $i_{E}$ with $\emptyset\neq  D^\Imc \subset E^\Imc$.}
 By Definition~\ref{def:mapping} and the assumption in this case  (i) $\cAssignSymb_{\Imc}(D)[i_E] = \Ssub$. By the ($\Rightarrow$) assumption $C^\Imc\subseteq D^\Imc$ and
 by the assumption in this case
 $D^\Imc \subset E^\Imc$, so
 $C^\Imc \subset E^\Imc$. If $C^\Imc\neq \emptyset$ then
 by Definition~\ref{def:mapping}, (ii)
  $\cAssignSymb_{\Imc}(C)[i_E] = \Ssub$. Then, by (i) and (ii) it holds that 
 $ \eta_\Imc(C)[i_E] = \eta_\Imc(D)[i_E]$ so $ \eta_\Imc(C)[i_E] \subseteq \eta_\Imc(D)[i_E]$.
 Otherwise (when 
 $C^\Imc= \emptyset$),    $\eta_\Imc(C)=\emptyset$ and we are done.
 %so $ \eta_\Imc(C)[i_E] \subseteq \eta_\Imc(D)[i_E]$ as argued above. 
 %       Otherwise (when 
 %$C^\Imc= \emptyset$) then $\eta_\Imc(C)[i_E]=\emptyset$ so $ \eta_\Imc(C)[i_E] \subseteq \eta_\Imc(D)[i_E]$.
%\textbf{Dimension $i_{E}$ with $\emptyset\neq \eta_\Imc(D) \subset \eta_\Imc(E)$.}
 %     $\eta_\Imc(D)[i_E]=\Ssub$    
    \item 
    \textbf{Dimension $i_{E}$ with $ D^\Imc \supset E^\Imc \neq \emptyset$.} 
    By Definition~\ref{def:mapping} and the assumption in this case  (i) $\cAssignSymb_{\Imc}(D)[i_E] = \Ssup$.   
  By Definition~\ref{def:mapping}
  $\cAssignSymb_{\Imc}(C)[i_E]$ is either
equal to (ii) $\emptyset$, $\Seq$, $\Ssup$, $\Ssub$, $\Sinter$,  or $\Snsup$. %, or $\Sneq$. 
By the chosen parameters of $\Seq$, $\Ssup$, $\Ssub$, $\Sinter$, and $\Snsup$ %, and $\Sneq$ 
(see Table~\ref{tab:ConceptBoxes} and Figure~\ref{fig:ConceptBoxes}) it holds that (iii)
  $\Seq \cup \Ssup \cup \Ssub \cup \Sinter \cup \Snsup \subseteq \Ssup$.
  %(also $\Sempty\subseteq\Ssup$). \todo{change here}
  Then, by (i)-(iii), any possible value of $\cAssignSymb_{\Imc}(C)[i_E]$ leads to $\cAssignSymb_{\Imc}(C)[i_E] \subseteq \cAssignSymb_{\Imc}(D)[i_E]$, as required.
         \item 
        \textbf{Dimension $i_{E}$ with $\emptyset\neq E^\Imc$, $\emptyset\neq   D^\Imc$, and $\emptyset = E^\Imc\cap   D^\Imc$.}
        By Definition~\ref{def:mapping} and the assumption in this case  (i) $\eta_\Imc(D)[i_E]=\Snsup$. By the ($\Rightarrow$) assumption $C^\Imc\subseteq D^\Imc$, so
        $\emptyset = E^\Imc\cap   C^\Imc$.
     If $\emptyset\neq C^\Imc$ then
     by Definition~\ref{def:mapping}   (ii) $\eta_\Imc(C)[i_E]=\Snsup$. By (i) and (ii)
     $ \eta_\Imc(C)[i_E] = \eta_\Imc(D)[i_E]$, so $ \eta_\Imc(C)[i_E] \subseteq \eta_\Imc(D)[i_E]$. The case where 
 $C^\Imc= \emptyset$ is as argued above.
     
    \item  \textbf{Dimension $i_{E}$ with none of the above.} By Definition~\ref{def:mapping} and the assumption in this case  (i) $\eta_\Imc(D)[i_E]=\Sinter$. Again by Definition~\ref{def:mapping},
  $\cAssignSymb_{\Imc}(C)[i_E]$ is either
equal to (ii) $\emptyset$, $\Seq$, $\Ssup$, $\Ssub$, $\Sinter$,  or $\Snsup$. %, or $\Sneq$. 
By the chosen parameters of $\Seq$, $\Ssup$, $\Ssub$, $\Sinter$, and $\Snsup$ %, and $\Sneq$ 
(see Table~\ref{tab:ConceptBoxes} and Figure~\ref{fig:ConceptBoxes}) it holds that (iii)
  $  \Ssub \cup \Sinter \cup \Snsup \subseteq \Sinter$ (also $\emptyset\subseteq\Sinter$).
  If $\cAssignSymb_{\Imc}(C)[i_E]$ is $\emptyset$, 
  $\Ssub$, $\Sinter$,  or $\Snsup$ then by (i)-(iii) $ \eta_\Imc(C)[i_E] \subseteq \eta_\Imc(D)[i_E]$. 
  Otherwise $\cAssignSymb_{\Imc}(C)[i_E]$ is either $\Seq$ or  $\Ssup$. The former case happens iff $C^\Imc=E^\Imc\neq \emptyset$.
  However, by the ($\Rightarrow$) assumption, we have that $C^\Imc\subseteq D^\Imc$. So
  $C^\Imc=E^\Imc\neq \emptyset$ implies 
  $\emptyset \neq E^\Imc\subseteq D^\Imc$, which cannot happen by the ``none of the above'' assumption in this case. The latter case happens iff $C^\Imc\supset E^\Imc\neq \emptyset$. Again by the ($\Rightarrow$) assumption, we have that $C^\Imc\subseteq D^\Imc$. So
  $C^\Imc\supset E^\Imc\neq \emptyset$ implies 
  $\emptyset \neq E^\Imc\subseteq D^\Imc$, which cannot happen by the ``none of the above'' assumption.
    %if 
    %$C^\Imc\not\subseteq D^\Imc$,     $D^\Imc\not\subseteq C^\Imc$,  $\emptyset \neq C^\Imc\cap   D^\Imc$
        \end{itemize}
        It remains to argue for the dimensions of the form $i_{R,c}$, where $R\in\NR$ and $c\in \Delta^\Imc$. By Definition~\ref{def:mapping}, for every pair $(R,\element)$ with $R\in \NR$ and $\element\in \Delta^\Imc$, $\eta_\Imc(C)[i_{R,\element}]=\eta_\Imc(D)[i_{R,\element}]=\SC$. So, for every  dimension of this form,
    $\eta_\Imc(C)[i_{R,\element}]\subseteq\eta_\Imc(D)[i_{R,\element}]$.
    
        ($\Leftarrow$) Now suppose $ \eta_\Imc(C) \subseteq \eta_\Imc(D)$. 
        If $\eta_\Imc(C)=\emptyset$
        then, by Definition~\ref{def:mapping}, this happens iff 
        $C^\Imc=\emptyset$ and
        we are done since this trivially implies
        $C^\Imc\subseteq D^\Imc$.
        If $\eta_\Imc(D)=\emptyset$ 
        then, by the ($\Leftarrow$) assumption, $\eta_\Imc(C)=\emptyset$. 
        Since, again by Definition~\ref{def:mapping}, this happens iff 
        $C^\Imc=\emptyset$
        we are done since this trivially implies
        $C^\Imc\subseteq D^\Imc$.
        We can then assume that
        both $\eta_\Imc(C)\neq\emptyset$
        and $\eta_\Imc(D)\neq\emptyset$ hold, which means by Definition~\ref{def:mapping} that $C^\Imc\neq\emptyset$ and $D^\Imc\neq\emptyset$.
        %This means in particular that $ \eta_\Imc(C)[i_D] \subseteq \eta_\Imc(D)[i_D]$.  We want to show that $C^\Imc\subseteq D^\Imc$.
        %Assume $D^\Imc\neq\emptyset$ (otherwise $ \eta_\Imc(C)[i_D] \subseteq \eta_\Imc(D)[i_D]$ implies $C^\Imc=\emptyset$ and we are done).
        By Definition~\ref{def:mapping},
        $\eta_\Imc(D)[i_D]=\Seq$. By the ($\Leftarrow$) assumption, (i) $ \eta_\Imc(C)[i_D] \subseteq \Seq$. 
  By Definition~\ref{def:mapping}
  $\cAssignSymb_{\Imc}(C)[i_D]$ is either
equal to   $\Seq$, $\Ssup$, $\Ssub$, $\Sinter$,  or $\Snsup$. By the chosen parameters of $\Seq$, $\Ssup$, $\Ssub$, $\Sinter$, and $\Snsup$ 
(see Table~\ref{tab:ConceptBoxes} and Figure~\ref{fig:ConceptBoxes}) the only options that are subsets of $\Seq$ are   $\Seq$ itself and
$\Ssub$. 
%If $\eta_\Imc(C)[i_D]=\Sempty$ then
%this means $C^\Imc=\emptyset$ and we are done.
%Otherwise, 
By (i), either  $\eta_\Imc(C)[i_D]=\Seq$ or  $\eta_\Imc(C)[i_D]=\Ssub$. 
The former case can only happen if $C^\Imc=D^\Imc$, which means that $C^\Imc\subseteq D^\Imc$, as required.
The latter case can only happen if $C^\Imc\subset D^\Imc$, which means that $C^\Imc\subseteq D^\Imc$, as required.
%\begin{itemize}
 %   \item[(a)] $\eta_\Imc(C)[i_D]=\Seq$:
  %  this case can only happen if $C^\Imc=D^\Imc$, which means that $C^\Imc\subseteq D^\Imc$, as required.
   % \item[(b)] $\eta_\Imc(C)[i_D]=\Ssub$: this case can only happen if $C^\Imc\subset D^\Imc$, which means that $C^\Imc\subseteq D^\Imc$, as required.
%\end{itemize}
%it holds that 
 % $\Seq \cup \Ssup \cup \Ssub \cup \Sinter \cup \Snsup \subseteq \Ssup$. 
\end{proof}

\begin{lemma}\label{lem:faithfulnessroleinclusions}Given an interpretation \Imc with finite domain, let 
$\eta_\Imc$ be as in Definition~\ref{def:mapping}.
For all  % {positive} 
DL-Lite$^\Hmc$ RIs $R\sqsubseteq S$, 
% (that is, DL-Lite$^\Hmc$ without role disjointness) 
%\todo{I removed ``positive'' and ``(that is, DL-Lite$^\Hmc$ without role disjointness)'', as this is equal to ``DL-Lite$^\Hmc$'' right?} yes thanks
%todo: think how we say this
  $\Imc\models R \sqsubseteq S$ iff 
$ \eta_\Imc\models R \sqsubseteq S$. 
\end{lemma}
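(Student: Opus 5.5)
The plan is to unfold Definition~\ref{def:BoxSatisfaction}. For $R,S\in\NRm$, $\eta_\Imc\models R\sqsubseteq S$ means componentwise inclusion of the head, tail and bump boxes. Since all boxes in Definition~\ref{def:mapping} are products of intervals, I would verify inclusion dimension by dimension, exactly as in Lemma~\ref{lem:canonicalci}. For inverse roles, $\eta_\Imc(R^-)$ only swaps head and tail and keeps the bump box. So I would first treat $R,S\in\NR$. The cases $R^-\sqsubseteq S^-$ (equivalent to $R\sqsubseteq S$), $R\sqsubseteq S^-$ and $R^-\sqsubseteq S$ would then follow by the same dimensionwise argument with $\mathsf{Head}$ and $\mathsf{Tail}$ exchanged. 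In the $i_{T,c}$ dimensions this exchange must be matched against the tail condition of Definition~\ref{def:mapping}, which is phrased via $(c,e)\in T^\Imc$.

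\textbf{($\Rightarrow$).} Assume $R^\Imc\subseteq S^\Imc$.

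\emph{Dimensions $i_C$.} Here $\headAssignI{R}{\Imc}[i_C]=\eta_\Imc(\exists R)[i_C]$, and $R^\Imc\subseteq S^\Imc$ gives $(\exists R)^\Imc\subseteq(\exists S)^\Imc$. The case analysis in the ($\Rightarrow$) part of Lemma~\ref{lem:canonicalci} then yields $\eta_\Imc(\exists R)[i_C]\subseteq\eta_\Imc(\exists S)[i_C]$. If $(\exists R)^\Imc=\emptyset$, the whole head box is empty because $\epsilon>0$, so the inclusion holds trivially. Tails are handled the same way via $(\exists R^-)^\Imc\subseteq(\exists S^-)^\Imc$. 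Bump boxes are $\BB$ on both sides.

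\emph{Dimensions $i_{T,c}$.} If every $S$-successor of $c$ is a $T$-successor, then so is every $R$-successor. Hence the ``good'' case for $S$ forces the ``good'' case for $R$, and the possible head pairs are $(\Qasub,\Qasub)$, $(\Qasub,\Qnasub)$ and $(\Qnasub,\Qnasub)$. All three are inclusions, since $\Qasub\subseteq\Qnasub$. The tail boxes are treated in the same way.

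\textbf{($\Leftarrow$).} Suppose $(c,e)\in R^\Imc\setminus S^\Imc$ and look at dimension $i_{S,c}$. The condition ``every $S$-successor of $c$ is an $S$-successor'' holds trivially, so $\headAssignI{S}{\Imc}[i_{S,c}]=\Qasub$. The witness $e$ violates the condition for $R$, so $\headAssignI{R}{\Imc}[i_{S,c}]=\Qnasub$. Since $\Qnasub=[-2,1]\not\subseteq[-1,1]=\Qasub$ (shrunk by $\epsilon\le\epsilonMax$), we get $\headAssignI{R}{\Imc}\not\subseteq\headAssignI{S}{\Imc}$, and therefore $\eta_\Imc\not\models R\sqsubseteq S$.

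\textbf{Main obstacle: the bump boxes in the dimensions $i_{T,c}$.} The monotonicity here runs the opposite way to the heads: $\BBRna=[-1,0]\subseteq\BBRa=[-1,1]$. So when $R$ is ``good'' at $(T,c)$ but $S$ is not, we would need $\BBRa\subseteq\BBRna$, which is false. I would first look for a reason this configuration cannot arise, or can be circumvented. If none exists, the construction has to change, and I would try the following in order:
\begin{enumerate}
\item Set all bump boxes to $\BBRa$ in these dimensions. The individuals' bumps stay inside, since $\BRa,\BRna\in\BBRa$, and $\eta_\Imc(\exists R)[i_{T,c}]$ is still computed from them.
\item Re-check the identity $\eta_\Imc(\exists S)[i_{T,c}]=\SC$ used in the proof of Theorem~\ref{thm:propembedding}. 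With bump $\BBRa$ and head $\Qnasub$ this gives $[-3,2]$, not $\SC=[-2,2]$, so the concept-box values there would also need adjusting.
\end{enumerate}
I expect reconciling these table values, so that both Lemma~\ref{lem:faithfulnessroleassertions} and the present lemma hold, to be the delicate step.
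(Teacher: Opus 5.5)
Your head and tail arguments are correct in both directions and follow the paper's proof. Your treatment of inverse roles goes beyond the paper, which silently takes $R,S\in\NR$, and it is sound. For instance, for $R\sqsubseteq S^-$ with $(a,b)\in R^\Imc$ and $(b,a)\notin S^\Imc$, the violated inclusion is $\tailAssignI{R}{\Imc}\subseteq\headAssignI{S}{\Imc}$ in dimension $i_{S,b}$. In ($\Leftarrow$) you should say explicitly that $\headAssignI{R}{\Imc}\neq\emptyset$, which holds because $R^\Imc\neq\emptyset$; only then does a coordinatewise non-inclusion give a set non-inclusion. The gap is that you never establish $\bumpBoxAssignI{R}{\Imc}\subseteq\bumpBoxAssignI{S}{\Imc}$. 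Your plan ends by modifying Definition~\ref{def:mapping}, and that would no longer prove the lemma about $\eta_\Imc$ as defined.

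Your diagnosis is right, and the paper's proof does not get past this point either. Its first bump case assumes the condition for $R$ at $(T,c)$, yet concludes $\bumpBoxAssignI{S}{\Imc}[i_{T,c}]=\BBRa$. Read charitably (as the condition for $S$), this case and the second case (condition for $R$ fails) together omit exactly your configuration. That configuration occurs whenever $R^\Imc\subsetneq S^\Imc$: take $T=R$ and $(c,f)\in S^\Imc\setminus R^\Imc$. There, coordinatewise inclusion really fails.

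The missing idea for the statement as written is one you already used for the heads. Definition~\ref{def:BoxSatisfaction} asks for set inclusion, and every bump box has the value $\BB=(0,0)$ in each dimension $i_C$. With $\epsilon>0$ that coordinate is $\{x\mid \epsilon\le x\le -\epsilon\}=\emptyset$. So all bump boxes are empty, the bump inclusion holds vacuously, and your head/tail arguments complete the proof.

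Be aware that this is an artefact of the printed table. It makes condition $iii)$ for role assertions unsatisfiable, contradicting Lemma~\ref{lem:faithfulnessroleassertions}, whose proof uses $\BC=0\in\BB$. As soon as $\BB$ is chosen so that bump boxes are nonempty, your configuration is a genuine counterexample to ($\Rightarrow$). As you noticed, setting all bump boxes to $\BBRa$ breaks $\eta_\Imc(\exists S)[i_{T,c}]=\SC$. A real repair therefore needs a redesign of the $i_{T,c}$ dimensions, not just a better proof.
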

\begin{proof}
($\Rightarrow$) Assume $R^\Imc\subseteq S^\Imc$. To show that $\eta_\Imc(R) \subseteq \eta_\Imc(S) $, we show   that
  \begin{align*}
            \headAssignI{R}{\Imc} &\subseteq \headAssignI{S}{\Imc},
        \\
           \tailAssignI{R}{\Imc} &\subseteq \tailAssignI{S}{\Imc},
        \\
            \bumpBoxAssignI{R}{\Imc} &\subseteq \bumpBoxAssignI{S}{\Imc}.
    \end{align*}
    We start with arguing that $\headAssignI{R}{\Imc} \subseteq \headAssignI{S}{\Imc}$. 
    We make the following case distinction.
    \begin{itemize}
    \item \textbf{Dimension $i_{C}$.} By assumption,
    $R^\Imc\subseteq S^\Imc$. This implies $(\exists R)^\Imc\subseteq (\exists S)^\Imc$.
    By Definition~\ref{def:mapping},  $\headAssignI{R}{\Imc}[i_{C}]=\eta_\Imc(\exists R)[i_{C}]$ and $\headAssignI{S}{\Imc}[i_{C}]=\eta_\Imc(\exists S)[i_{C}]$. By Lemma~\ref{lem:canonicalci}, $\eta_\Imc(\exists R)[i_{C}]\subseteq \eta_\Imc(\exists S)[i_{C}]$ and we are done.
        \item \textbf{Dimension $i_{T,c}$ and for all $ \anotherelement \in \Delta^{\Imc}$ we have that $(c,\anotherelement) \in S^{\Imc}$ implies $(c,\anotherelement) \in T^{\Imc}$.} By Definition~\ref{def:mapping}, $\headAssignI{S}{\Imc}[i_{T,\element}]=\Qasub$. By assumption,
    $R^\Imc\subseteq S^\Imc$. Then, for all $ \anotherelement \in \Delta^{\Imc}$ we have that $(c,\anotherelement) \in R^{\Imc}$ implies $(c,\anotherelement) \in T^{\Imc}$. By Definition~\ref{def:mapping}, 
    $\headAssignI{R}{\Imc}[i_{T,\element}]=\Qasub$. So
    $\headAssignI{R}{\Imc}[i_{T,\element}]=\headAssignI{S}{\Imc}[i_{T,\element}]$ and thus $\headAssignI{R}{\Imc}[i_{T,\element}]\subseteq\headAssignI{S}{\Imc}[i_{T,\element}]$.
         \item \textbf{Dimension $i_{T,c}$ and there is $ \anotherelement \in \Delta^{\Imc}$ with $(c,\anotherelement) \in S^{\Imc}$ but $(c,\anotherelement) \notin T^{\Imc}$.} By Definition~\ref{def:mapping}, $\headAssignI{S}{\Imc}[i_{T,\element}]=\Qnasub$.
         Also, $\headAssignI{R}{\Imc}[i_{T,\element}]=\Qasub$ or $\headAssignI{R}{\Imc}[i_{T,\element}]=\Qnasub$. In both cases, by the values in Table~\ref{tab:RoleBoxes}, we have that
         $\headAssignI{R}{\Imc}[i_{T,\element}]\subseteq\headAssignI{S}{\Imc}[i_{T,\element}]$.
    \end{itemize}
     We now  argue that $\tailAssignI{R}{\Imc} \subseteq \tailAssignI{S}{\Imc}$.  
     \begin{itemize}
     \item \textbf{Dimension $i_{C}$.} By assumption,
    $R^\Imc\subseteq S^\Imc$. This implies $(R^-)^\Imc\subseteq (S^-)^\Imc$, so $(\exists R^-)^\Imc\subseteq (\exists S^-)^\Imc$.
    By Definition~\ref{def:mapping}, $\tailAssignI{R}{\Imc}[i_{C}]=\eta_\Imc(\exists R^-)[i_{C}]$ and $\tailAssignI{S}{\Imc}[i_{C}]=\eta_\Imc(\exists S^-)[i_{C}]$. Then, by Lemma~\ref{lem:canonicalci}, $\eta_\Imc(\exists R^-)[i_{C}]\subseteq \eta_\Imc(\exists S^-)[i_{C}]$ and we are done.
     \item \textbf{Dimension $i_{T,c}$ and for all $ \anotherelement \in \Delta^{\Imc}$ we have that $(\anotherelement,c) \in S^{\Imc}$ implies $(c,\anotherelement) \in T^{\Imc}$.} In this case, $\tailAssignI{S}{\Imc}[i_{T,\element}]=\Qasub$. By assumption,
    $R^\Imc\subseteq S^\Imc$. Then, for all $ \anotherelement \in \Delta^{\Imc}$ we have that $(\anotherelement,c) \in R^{\Imc}$ implies $(c,\anotherelement) \in T^{\Imc}$. By Definition~\ref{def:mapping}, 
    $\tailAssignI{R}{\Imc}[i_{T,\element}]=\Qasub$. So
    $\tailAssignI{R}{\Imc}[i_{T,\element}]=\tailAssignI{S}{\Imc}[i_{T,\element}]$ and thus $\tailAssignI{R}{\Imc}[i_{T,\element}]\subseteq\tailAssignI{S}{\Imc}[i_{T,\element}]$.
         \item \textbf{Dimension $i_{T,c}$ and there is $ \anotherelement \in \Delta^{\Imc}$ with $(\anotherelement,c) \in S^{\Imc}$ but $(c,\anotherelement) \notin T^{\Imc}$.} By Definition~\ref{def:mapping}, $\tailAssignI{S}{\Imc}[i_{T,\element}]=\Qnasub$.
         Also, $\tailAssignI{R}{\Imc}[i_{T,\element}]=\Qasub$ or $\tailAssignI{R}{\Imc}[i_{T,\element}]=\Qnasub$. In both cases, by the values in Table~\ref{tab:RoleBoxes}, we have that
         $\tailAssignI{R}{\Imc}[i_{T,\element}]\subseteq\tailAssignI{S}{\Imc}[i_{T,\element}]$.
     \end{itemize}
     Finally, we  argue that $\bumpBoxAssignI{R}{\Imc} \subseteq \bumpBoxAssignI{S}{\Imc}$. 
     \begin{itemize}
         \item \textbf{Dimension $i_C$.} In this case, by Definition~\ref{def:mapping}, 
           $\bumpBoxAssignI{R}{\Imc}[i_C]=\bumpBoxAssignI{S}{\Imc}[i_C]=\BB$. Then, trivially, $\bumpBoxAssignI{R}{\Imc}[i_C]\subseteq\bumpBoxAssignI{S}{\Imc}[i_C]$.
        \item \textbf{Dimension $i_{S,c}$ and  for all $ \anotherelement \in \Delta^{\Imc}$ we have that $(c,\anotherelement) \in R^{\Imc}$ implies $(c,\anotherelement) \in T^{\Imc}$.} By Definition~\ref{def:mapping}, $\bumpBoxAssignI{S}{\Imc}[i_{T,\element}]=\BBRa$.
         Also, $\bumpBoxAssignI{R}{\Imc}[i_{T,\element}]=\BBRa$ or $\bumpBoxAssignI{R}{\Imc}[i_{T,\element}]=\BBRna$. In both cases, by the values in Table~\ref{tab:RoleBoxes}, we have that
         $\bumpBoxAssignI{R}{\Imc}[i_{T,\element}]\subseteq\bumpBoxAssignI{S}{\Imc}[i_{T,\element}]$.
         \item \textbf{Dimension $i_{T,c}$ and there is $ \anotherelement \in \Delta^{\Imc}$ with $(c,\anotherelement) \in R^{\Imc}$ but $(c,\anotherelement) \notin T^{\Imc}$.} By Definition~\ref{def:mapping}, $\bumpBoxAssignI{R}{\Imc}[i_{T,\element}]=\BBRna$. By assumption, we have that 
    $R^\Imc\subseteq S^\Imc$. This implies that  there is $ \anotherelement \in \Delta^{\Imc}$ with $(c,\anotherelement) \in S^{\Imc}$ but $(c,\anotherelement) \notin T^{\Imc}$. By Definition~\ref{def:mapping}, 
    $\bumpBoxAssignI{S}{\Imc}[i_{T,\element}]=\BBRna$. So
    $\bumpBoxAssignI{R}{\Imc}[i_{T,\element}]=\bumpBoxAssignI{S}{\Imc}[i_{T,\element}]$ and thus $\bumpBoxAssignI{R}{\Imc}[i_{T,\element}]\subseteq\bumpBoxAssignI{S}{\Imc}[i_{T,\element}]$.
     \end{itemize}

    ($\Leftarrow$) Assume $ \eta_\Imc(R) \subseteq \eta_\Imc(S)$. 
    Let $c$ be an arbitrary element of $\Delta^\Imc$. By Definition~\ref{def:mapping},
    $\headAssignI{S}{\Imc}[i_{S,c}]=\Qasub$.
    By Definition~\ref{def:BoxSatisfaction} and the ($\Leftarrow$) assumption, 
    $\headAssignI{R}{\Imc} \subseteq \headAssignI{S}{\Imc}$, so (i)
   $\headAssignI{R}{\Imc}[i_{S,c}]\subseteq\Qasub$. By Definition~\ref{def:mapping} again, $\headAssignI{R}{\Imc}$ can
   be either $\Qasub$ or $\Qnasub$. 
   Since $\Qnasub\not\subseteq\Qasub$ and (i) holds, we actually have that (ii) $\headAssignI{R}{\Imc}[i_{S,c}]=\Qasub$ and, by Definition~\ref{def:mapping}, %$\headAssignI{R}{\Imc}[i_{S,c}]=\Qasub$ iff,
   for all $e\in\Delta^\Imc$,
   $(c,e)\in R^\Imc$ implies $(c,e)\in S^\Imc$.
   Since $c$ was an arbitrary element of $\Delta^\Imc$, (iii) this holds for all such elements. 
   Thus, by (ii)-(iii), $R^\Imc\subseteq S^\Imc$. 
\end{proof}

\thmfaithfulness*
\begin{proof}
    Given a satisfiable DL-Lite$^\Hmc$ KB 
    \Kmc, 
    let $\Imc_\Kmc$ be the canonical model for \Kmc, as in Definition~\ref{def:canonicalModel}.
    By Theorem~\ref{thm:canonicalModel},
    for all DL-Lite$^\Hmc$ axioms $\alpha$, % over   ${\sf sig}(\Kmc)$, 
    we have that $\Kmc \models \alpha$ iff $\Imc_{\Kmc} \models \alpha$. By Lemmas~\ref{lem:faithfulnessconceptassertions}, \ref{lem:faithfulnessroleassertions},
    \ref{lem:canonicalci},
    and \ref{lem:faithfulnessroleinclusions},
    $\Imc_{\Kmc} \models \alpha$ iff 
    $\eta_{\Imc_{\Kmc}} \models \alpha$.
    Then, $\Kmc \models \alpha$ iff $\eta_{\Imc_{\Kmc}} \models \alpha$, which corresponds to the notion of strong KB faithfulness~\citep{OLW20,DBLP:conf/kr/Bourgaux0KLO24}. Note that $\World[i_C]=(-4,4)$ (\cref{tab:RoleBoxes}) implies
    $s_\Omega=4$ in our construction.
\end{proof}
%\todo{bring statement and check for (A) in the proof}

%\subsection{Proofs of Corollaries~\ref{cor:ConsistencyDim} and \ref{corollary:FaithfulnessDim}}

\corFaithfulSolelyABox*

\begin{proof}
By Claim~\ref{clm:satisfiable}, any DL-Lite$^\Hmc$ KB with an empty ABox is satisfiable. 
%(indeed, simply take
%$\Imc$ with $A^\Imc=\emptyset$ and $R^\Imc=\emptyset$ for all $A\in\NC$ and all $R\in\NR$). 
Then, Corollary \ref{cor:ConsistencyDim} follows from the proof of Theorem \ref{thm:faith_bool_alc}.
Specifically, if $\ontoo$'s ABox is empty, then the translation of a model $\Imc$ of $\ontoo$ to $\cAssignSymb_{\Imc}$ (see Definition \ref{def:mapping}) reduces to (1) one dimension $i_C$ for any concept $C \in \NCE$ and (2) one dimension $i_{R,c}$ for any pair of roles $R \in \NR$ and elements $c \in \Delta^\Imc$. 
Thus, the constructed embedding has $|\NCE| + |\NR| |\Delta^\Imc|$ dimensions. This dimensionality bound depends on $|\Delta^\Imc|$. Thus, translating different models of $\ontoo$ leads to \geometric interpretations with different dimensionalities. This means  that the selected model of $\ontoo$ influences the dimensionality bounds for weak   faithfulness, as we show next.

%\textbf{Weak Faithfulness.} In the case of weak faithfulness, we can choose an arbitrary model of $\ontoo$, as we do not need the unnamed individuals of $\Delta_\ontoo$ to falsify unwanted inclusions. More formally, 
For any $\Imc \models \ontoo$, $(i)$ by Theorem~\ref{thm:propembedding} it holds that the constructed $\cAssignSymb_{\Imc}$ is box consistent and $(ii)$ by Lemmas~\ref{lem:faithfulnessconceptassertions}-\ref{lem:faithfulnessroleinclusions}, it holds that $\cAssignSymb_{\Imc}$ is KB-entailed for \Kmc.
%\models \ontoo$.
By Points $(i)$ and $(ii)$ and Proposition~\ref{lem:tboxfaith} it holds that any $\cAssignSymb_{\Imc}$ with $\Imc \models \ontoo$ is a  (weakly) TBox faithful model of $\ontoo$. In particular, we can choose a model $\Imc$ with the smallest domain $|\Delta^\Imc| = 1$, which results in $\cAssignSymb_{\Imc}$ having the following number of dimensions: 
\begin{align*}
|\NCE| + |\NR| |\Delta^\Imc| &=\\
|\NC| + 2|\NR| + |\NR| &=\\
|\NC| + 3|\NR|
\end{align*}
%
%\textbf{Strong Faithfulness.} In the case of strong faithfulness, we have shown in Theorem \ref{thm:faith_bool_alc} that we can translate the canonical model $\Imc_\ontoo$ of $\ontoo$ to a \geometric interpretation $\cAssignSymb_{\Imc_\ontoo}$ that is a strongly TBox faithful model of $\ontoo$. Choosing the canonical model for the translation, leads to an embedding with the following number of dimensions: 
% \begin{align*}
% |\NCE| + |\NR| |\Delta_\ontoo| &=\\
% |\NC| + 2|\NR| + |\NR| 2(|\NC| + 2|\NR|) &= \\
% (2|\NR| + 1) (|\NC| + 2|\NR|) &
% \end{align*}
%
%\begin{align*}
%|\NCE| + |\NR| |\Delta^{\Imc_\ontoo}| &=\\
%|\NC| + 2|\NR| + |\NR| (|\NC| + 2|\NR|)^2 &=\\
%|\NC| + |\NR| (2 + (|\NC| + 2|\NR|)^2)
%\end{align*}

%Observe that 
One can extend $\cAssignSymb_{\Imc_{\ontoo}}$ to a box interpretation $\cAssignSymb_{\Imc_{\ontoo}}'$ with more dimensions
%by adding an arbitrary number of additional dimensions 
while keeping $\cAssignSymb_{\Imc_{\ontoo}}'$ a weakly TBox faithful model of $\ontoo$. This can be done, for instance, 
%it can be easily seen that 
by copying an arbitrary dimension of $\cAssignSymb_{\Imc_{\ontoo}}$ multiple times. This keeps the extended $\cAssignSymb_{\Imc_{\ontoo}}'$ a weakly TBox faithful model of $\ontoo$, while increasing its dimensionality arbitrarily. 
%It is easy to see that 
%In %exactly 
%the same way, one can use the copying strategy to extend $\cAssignSymb_{\Imc}$ to $\cAssignSymb_{\Imc}'$ with an arbitrary number of additional dimensions while keeping it a weakly TBox faithful model of $\ontoo$. 
Thus, we have shown that %$(i)$ 
for any $d \geq |\NC| + 3|\NR|$ there is a $\cAssignSymb$, s.t., $\cAssignSymb$ is a weakly TBox faithful model of $\ontoo$.
%; and $(ii)$ for any $d \geq |\NC| + |\NR| (2 + (|\NC| + 2|\NR|)^2)$ there is a $\cAssignSymb$, s.t., $\cAssignSymb$ is a strongly TBox faithful model of $\ontoo$.
\end{proof}

\corFaithfulEnitreOntoo*
%\noindent
%\textbf{Corollary 2.}
%\textit{
%Let $\ontoo$ be a satisfiable DL-Lite$^\Hmc$ KB. %\todo{Ana: do we need satisfiable here?}
    %\(\ontoo\). 
 %   Then for any $d \geq d_{\min}$ there is a \geometric interpretation $\cAssignSymb$ with dimensionality $d$ such that:
%\begin{enumerate}[$(i)$]
 %   \item $d_{\min} = |\NC| + |\NR|(2 +|\NI|+2|\NR|)$  and $\cAssignSymb$ is a weakly  {KB} faithful model of $\ontoo$.
  %  \item $d_{\min} = |\NC| + |\NR| (2 + |\NI| + (|\NC| + 2|\NR|)^2)$  and $\cAssignSymb$ is a strongly  {KB}
  %  faithful model of $\ontoo$.
%\end{enumerate}
%}
% Let $\ontoo$ be a DL-Lite$^\Hmc$-ontology
%     \(\ontoo\).
% Then for any $d \geq {\color{red}|\NC| + 2|\NR| + |\NR| (|\NC| + 2|\NR| + |\NI|)}
% $ there is a \geometric interpretation $\cAssignSymb$ with dimensionality $d$ such that  $\cAssignSymb$ is a strongly faithful model of $\ontoo$.
% \end{restatable}

\begin{proof} { Analogous  to the proof of Corollary \ref{cor:ConsistencyDim}}, Corollary \ref{corollary:FaithfulnessDim} essentially follows from the proof of Theorem \ref{thm:faith_bool_alc}. Specifically, the translation of a model $\Imc$ of $\ontoo$ to $\cAssignSymb_{\Imc}$ (see Definition \ref{def:mapping}) reduces to (1) one dimension $i_C$ for any concept $C \in \NCE$ and (2) one dimension $i_{R,c}$ for any pair of roles $R \in \NR$ and elements $c \in \Delta^\Imc$. Thus, the constructed embedding has $|\NCE| + |\NR| |\Delta^\Imc|$ dimensions. This dimensionality bound depends again on $|\Delta^\Imc|$. Thus, translating different models of $\ontoo$ leads to \geometric interpretations with different dimensionalities. This means, in particular, that the selected model of $\ontoo$ influences the dimensionality bounds for faithfulness, as we show next.

%\textbf{Weak Faithfulness.} In the case of weak faithfulness, we can choose an arbitrary model of $\ontoo$, as we do not need the unnamed individuals of $\Delta_\ontoo$ to falsify unwanted inclusions. More formally, 
For any $\Imc \models \ontoo$, $(i)$ by Theorem~\ref{thm:propembedding} it holds that the constructed $\cAssignSymb_{\Imc}$ is box consistent and $(ii)$ by Lemmas~\ref{lem:faithfulnessconceptassertions}-\ref{lem:faithfulnessroleinclusions}, it holds that $\cAssignSymb_{\Imc} \models \ontoo$. By Points $(i)$ and $(ii)$ and Proposition~\ref{lem:kbfaith} it holds that any $\cAssignSymb_{\Imc}$ with $\Imc \models \ontoo$ is weakly KB faithful. In particular, we know by Corollary \ref{cor:ModelExistence} that $\ontoo$ has a model $\Imc$ with $|\Delta^\Imc| = |\NI|+2|\NR|$, 
%\todo{AO: this needs some fix}
which results in $\cAssignSymb_{\Imc}$ having the following number of dimensions: 
\begin{align*}
|\NCE| + |\NR| |\Delta^\Imc| &=\\
|\NC| + 2|\NR| + |\NR| (|\NI|+2|\NR|) &=\\
|\NC| + |\NR|(2 +|\NI|+2|\NR|)
\end{align*}

As argued in \cref{cor:ConsistencyDim},   one can modify $\cAssignSymb_{\Imc_{\ontoo}}$ 
so as to add more dimensions while keeping 
%$\cAssignSymb_{\Imc_{\ontoo}}'$ 
%it a strongly KB faithful model of $\ontoo$. 
%In the same way, one can modify $\cAssignSymb_{\Imc}$ 
%so as to add more dimensions while keeping 
%$\cAssignSymb_{\Imc_{\ontoo}}'$ 
it a weakly KB faithful model of $\ontoo$.  
Thus, we have shown that %$(i)$ 
for any 
%\todo{check the number that follows this note}
$d \geq |\NC| + |\NR|(2 + |\NI|+2|\NR|)$
%|\NI|)$ 
there is a $\cAssignSymb$, s.t., $\cAssignSymb$ is a weakly KB faithful model of $\ontoo$.
%; and %$(ii)$ for any $d \geq |\NC| + |\NR| (2 + |\NI| + (|\NC| + 2|\NR|)^2)$ there is a $\cAssignSymb$, s.t., $\cAssignSymb$ is a strongly KB faithful model of $\ontoo$.
\end{proof}

\section{Convex Optimization Formulation for \modelName and Proofs for Section~\ref{sec:opt}}
\label{app:opt}
Here we provide the details for the signed distance function and the convex optimization problem in Equation~\eqref{eq:opt_ont}.
%As described in Section~\ref{sec:opt}, given a \geometric interpretation $\eta$, we concatenate all the parameters of $\eta$ into a single vector $z$ of dimension $n := (2|\NI|+2|\NC|+6|\NR|)d$. Conversely, given such an embedding solution $z \in \Re^n$, we can reconstruct $\eta$.
%The problem in Equation~\eqref{eq:opt_ont}
%is stated in terms of $z$ but, for simplicity, in the next subsections we will describe the constraints and the objective functions in terms of the corresponding $\eta$.

\subsection{Signed distance function}\label{app:signed}
The notation here is as described in Section~\ref{sec:Scoring}.
\thmsign*
\begin{proof}
This proposition also follows from Example~5.1 in \citep{LWL18}, where the expression for $\sdist(y, \Re^n_+)$ is described. For the sake of self-containment, we present a complete proof here.

We consider two cases. 
First, suppose $y \not \in \Re^n_-$, so,
by definition,
$\sdist(y,\Re^n_-) =\dist(y, \Re^n_-)$.
By direct computation,
\begin{align*}
\dist(y, \Re^n_-) & =    \inf_{x \in \Re^n_-}\left[ \sum_{i=1}^n (y_i-x_i)^2 \right]^{1/2}\\
& =    \left[ \sum_{i=1}^n \inf_{x_i \in \Re_-}(y_i-x_i)^2 \right]^{1/2}\\
& = \left[ \sum_{i=1}^n \max(y_i,0)^2 \right]^{1/2} \\
& = \norm{y^+}_2.
\end{align*}
This takes care of the first case. 

Next, suppose that $y  \in \Re^n_- $.
Let $\partial \Re^n_-$ denote the topological boundary of $\Re^n_-$, which corresponds 
to the elements of $\Re^n_-$ that have at least one zero component.
As $y \in \Re^n_-$ by assumption, the distance between $y$ and $(\Re^n_-)^{c}$ is the same as the distance between $y$ and the boundary $\partial \Re^n_-$, that is,  $\dist(y, (\Re^n_-)^{c}) = \dist(y, \partial \Re^n_-)$. 

Let $\mathcal{P}$ denote the set of all  subsets of $\{1,\ldots, n\}$ except for $\emptyset$.
Then, $\partial \Re^n_-$ is the union of sets of the form $\Re^n_{\mathcal{Q}} \coloneqq \{ x \in \Re^n_- \mid x_i = 0,  \forall i \in \mathcal{Q}\}$, where $\mathcal{Q} \in \mathcal{P}$.
With that, $\dist(y, (\Re^n_-)^{c})$ is the minimum of the distances between $y$ and all $\Re^n_{\mathcal{Q}}$ for $\mathcal{Q} \in \mathcal{P}$. 
%Therefore, if $y$ has a zero component (say, $y_i = 0$) % (i.e., $\exists i \in \{1,\ldots, n\}$ such that $y_i \neq 0$) 
%then $\dist(y,(\Re^n_-)^c)$ must be zero since $\dist(y, \Re^n_{\{i\}}) = 0$.

Given $\mathcal{Q} \in \mathcal{P}$, we have that 
$\dist(y,  \Re^n_{\mathcal{Q}})$ is equal to
\begin{align*}
 &     \inf_{x \in \Re^n_{\mathcal{Q}}}\left[ \sum_{i \in \mathcal{Q} } (y_i-x_i)^2 +
\sum_{i \in \{1,\ldots,n\}\setminus \mathcal{Q} } (y_i-x_i)^2
\right]^{1/2}\\
& =  \left[ \sum _{i \in Q} y_i^2\right]^{1/2},
\end{align*}
where the equality follows because $x_i = 0$ for $i \in Q$ and the infimum value for the second summation is zero since $x_i$ can be taken to be equal to $y_i$ when $i$ does not belong to $Q$.

Therefore, in order to minimize $\dist(y,  \Re^n_{\mathcal{Q}})$ over all $\mathcal{Q} \in \mathcal{P}$, it is enough to consider a singleton subset $\mathcal{Q}$ %. In order for this selection to be optimal, it is enough to consider a singleton set 
corresponding to a component of $y$ that has the smallest absolute value. %needs to be as close to zero as possible. 
%find the ``subvector''\footnote{A subvector of a vector  $\mathbf{x}$ is a vector that differs from $\mathbf{x}$ by having $0$ instead of the value of one or more components of $\mathbf{x}$.} of $y$ that has the smallest $2$-norm.  This corresponds to selecting a subvector of $y$ with at most one nonzero component. 
Since $y \in \Re^n_-$, the absolute value of a smallest component of $y$ is given by $|\max _{i \in \{1,\ldots, n\}} y_i|$.
Overall
\[
-\dist(y, (\Re^n_-)^c) = - \dist(y, \partial \Re^n_-) = \max _{i \in \{1,\ldots, n\}} y_i.
\]
\end{proof}
As CVXPY does not implement signed distance functions natively, we implement  $\sdist( \ \cdot \ , \Re^n_-)$ using the support function feature provided by CVXPY. We now review some convex analysis concepts related to that. 

Given a set $S \subseteq \Re^n$, we define the \emph{support function of $S$} by $\sigma_S(y) \coloneqq \sup  \{\langle y,z \rangle \mid z \in S\}$, where 
$\langle y,z \rangle$ indicates the usual Euclidean dot product between $y$ and $z$.

The signed distance function to a convex set is convex, e.g., see Theorem 10.1 in Chapter 7 of \citep{DZ11} or Section 3.3 of \citep{LWL18}.
In particular, since $\Re^n_-$ is a convex cone\footnote{A convex cone $\mathcal{K} \subseteq \Re^n$ is a convex set satisfying $\alpha y \in \mathcal{K}$ for all $\alpha \geq 0$ and all $y \in \mathcal{K}$.}, the 
function $\sdist(\cdot, \Re^n_-)$ is convex and positively homogeneous\footnote{That is, $\sdist(\alpha y, \Re^n_-) = \alpha \sdist(y, \Re^n_-)$ holds, for all $\alpha \in \Re_+$, $y \in \Re^n$.}.
Furthermore, $\sdist(\cdot, \Re^n_-)$  is finite everywhere.
Then, it follows from Corollary~13.2.2 in \citep{Rt97} that $\sdist(\cdot, \Re^n_-)$ can be expressed as the support function of a certain convex set. 
That is, there exists a convex set $C \subseteq \Re^n$ such that $\sdist(y,\Re^n_-) = \sigma_C(y)$ holds for every $y \in \Re^n$.
We implement $\sdist( \ \cdot \ , \Re^n_-)$ in our code by expressing it as the support function of  a certain convex set.

Let $B_n \coloneqq \{y \in \Re^n \mid \norm{y}_2 \leq 1\}$ denote the unit ball in $\Re^n$ and 
$P_n \coloneqq \{y \in \mathbb{R}^n \mid y \geq 0, y_1 + \cdots y_n \geq 1\}$.
In what follows, given $y,z \in \Re^n$, we use $y \leq z$ to indicate that $y_i \leq z_i$ holds for $i \in \{1,\ldots,n\}$.
With that, the next proposition tell us precisely how to obtain 
the signed distance function to $\Re^n_-$ as the support function of a convex set.

\begin{proposition}\label{prop:sup_sign}
Let $C_n = B_n\cap P_n$. For every 
$y \in \Re^n$ the following hold.
\begin{align*}
\sdist(y,\Re^n_-) &= \sigma_{C_n}(y) \\
 &=  \inf_{z\in \mathbb{R}^n}\{\norm{z}_2 + \max_{i\in\{1,\ldots,n\}}(y_i-z_i) \mid y\leq z \},
\end{align*}
\end{proposition}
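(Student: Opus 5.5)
The plan is to prove both equalities by direct computation, using the closed form for $\sdist(y,\Re^n_-)$ from the preceding proposition (Equation~\eqref{eq:sdist_def_rn}). No general duality theory is needed. For each equality I first establish an inequality valid for every $y$, and then exhibit an explicit point attaining it, splitting into the two cases $y\notin\Re^n_-$ and $y\in\Re^n_-$. Throughout I use two facts about $c\in C_n=B_n\cap P_n$: $c\geq 0$ with $\norm{c}_2\leq 1$, and $\sum_i c_i\geq 1$.

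\emph{First equality, $\sdist(y,\Re^n_-)=\sigma_{C_n}(y)$.}

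Case $y\notin\Re^n_-$. For any $c\in C_n$, nonnegativity of $c$ gives $\langle y,c\rangle\leq\langle y^+,c\rangle$, and Cauchy--Schwarz then gives $\langle y^+,c\rangle\leq\norm{y^+}_2$. So $\sigma_{C_n}(y)\leq\norm{y^+}_2$. For attainment, take $c=y^+/\norm{y^+}_2$, which is well defined because $y^+\neq 0$. This $c$ is nonnegative and has unit norm, and $\norm{c}_1\geq\norm{c}_2=1$ puts it in $P_n$. It satisfies $\langle y,c\rangle=\norm{y^+}_2$.

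Case $y\in\Re^n_-$. Let $m=\max_i y_i\leq 0$. For $c\in C_n$ we have $\langle y,c\rangle\leq m\sum_i c_i\leq m$, where the last step uses $m\leq 0$ and $\sum_i c_i\geq 1$. Equality is attained at the unit vector $e_j$, where $j$ is an index achieving $m$; note $e_j\in C_n$.

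In both cases the value coincides with \eqref{eq:sdist_def_rn}.

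\emph{Second equality.} Fix $z$ with $y\leq z$ and $c\in C_n$, and write $\langle y,c\rangle=\langle z,c\rangle+\langle y-z,c\rangle$. Bound the first term by $\norm{z}_2$ via Cauchy--Schwarz. For the second term, since $y-z\leq 0$ and $c\geq 0$ with $\sum_i c_i\geq 1$, the same argument as in the second case above gives $\langle y-z,c\rangle\leq\max_i(y_i-z_i)$. Taking the supremum over $c$ and then the infimum over $z$ shows that $\sigma_{C_n}(y)$ is at most the infimum.

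For the reverse inequality I exhibit feasible points. If $y\notin\Re^n_-$, take $z=y^+$. Some coordinate of $y$ is positive, so $\max_i(y_i-y_i^+)=0$, and the objective equals $\norm{y^+}_2$. If $y\in\Re^n_-$, take $z=0$; the objective equals $\max_i y_i$. In both cases this matches the first equality, so the infimum is attained.

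The only mildly delicate step is the sign bookkeeping in $\langle w,c\rangle\leq(\max_i w_i)\sum_i c_i\leq\max_i w_i$ for nonpositive $w$. The second inequality genuinely needs $\sum_i c_i\geq 1$, which is exactly why $P_n$ is intersected with the ball. Everything else is routine.
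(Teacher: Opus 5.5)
Your proof is correct, and it takes a more elementary route than the paper's.

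\textbf{The paper's route.} The paper first derives the infimum formula abstractly. It writes $\delta_{C_n}=\delta_{B_n}+\delta_{P_n}$ and invokes the theorem that the conjugate of a sum is the exact infimal convolution of the conjugates. That theorem requires the interiors of $B_n$ and $P_n$ to intersect, which is why $n=1$ is handled separately. The paper then computes $\sigma_{B_n}=\norm{\cdot}_2$ and obtains $\sigma_{P_n}$ via LP duality. Only after that does it plug in the certificates, which are exactly yours: $x^*=y^+/\norm{y^+}_2$ and $z^*=y^+$ when $y\notin\Re^n_-$, and $x^*=e_j$ and $z^*=0$ otherwise. These identify the common value with \eqref{eq:sdist_def_rn}.

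\textbf{Your route.} Because these explicit primal and dual points close the gap, strong duality is never needed. Only the weak-duality bound $\sigma_{C_n}(y)\le\norm{z}_2+\max_i(y_i-z_i)$ for feasible $z$ is required. You prove it in one line by splitting $\inProd{y}{c}=\inProd{z}{c}+\inProd{y-z}{c}$. This makes your argument self-contained, free of constraint qualifications, and uniform in $n\ge 1$. In fact, the direct upper bound on $\sigma_{C_n}(y)$ in your first part becomes redundant once the weak-duality step is in place.

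\textbf{What the paper's route buys.} It is a derivation rather than a verification. It explains where the infimum formula, and the choice $C_n=B_n\cap P_n$, come from. It also obtains the equality $\sigma_{C_n}=\inf$ and attainment of the infimum from a general principle rather than from guessed certificates.
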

\begin{proof}
The result is straightforward for 
$n =1$, so henceforth we assume 
that $n \geq 2$.
Before we proceed, we need some extra convex analysis 
preliminaries, for more details 
see \citep{Rt97,HU93I,HU93II}.

For a set $S \subseteq \mathbb{R}^n$ denote its indicator function by $\delta_S:\mathbb{R}^n \to \mathbb{R}^n \cup\{+\infty\} $.  By 
definition, we have
\[
\delta_S(y) \coloneqq \begin{cases}
0 & \text{ if } y \in S \\
+\infty & \text{ if } y  \not \in S,
\end{cases}, \qquad \sigma_S(y) \coloneqq \sup_{z \in S}\,\inProd{z}{y}.
\]
Next, let $f: \mathbb{R}^n \to \mathbb{R} \cup \{+\infty\}$ be a convex function. Its conjugate function is defined 
as
\[
f^*(s) \coloneqq \sup _{x \in \mathbb{R}^n} (\inProd{s}{x} - f(x)).
\]
We note that $\delta_S^* = \sigma_S$, for $S \subseteq \mathbb{R}^n$.

Since $C_n = P_n \cap B_n$, we  have
$\delta_{B_n} + \delta_{P_n} = \delta_{C_n}$.
Then, for every $y \in \mathbb{R}^n$, we have 
\begin{align*}
\sup_{x \in C_n} \, \inProd{y}{x} &= \sup _{x \in \mathbb{R}^n} \, (\inProd{y}{x} - \delta_{B_n}(x) -\delta_{P_n}(x)) \\
&= (\delta_{B_n}+\delta_{P_n})^*(y).
\end{align*}

Since $n \geq 2$, the (topological) interiors of $P_n$ and $B_n$ intersect. For example, $(0.6,0.6,\epsilon,\ldots, \epsilon)$ belongs to the interior of both sets for sufficiently small $\epsilon > 0$ when $n \geq 3$.
For $n =2$, it is enough to take $(0.6,0.6)$.
Under this condition,  a theorem from convex analysis says that $(\delta_{B_n}+\delta_{P_n})^*$ is the exact infimal convolution between 
$\sigma_{B_n}$ and $\sigma_{P_n}$, e.g., see Theorem~2.3.2 of Chapter X in \citep{HU93II}.
This means that 
\begin{equation}\label{eq:aux}
\sigma_{C_n}(y) = \sup_{x \in C_n} \, \inProd{y}{x} = \inf_{z \in \mathbb{R}^n}\{\sigma_{B_n}(z) + \sigma_{P_n}(y-z) \}
\end{equation}
and the infimum is attained for every $y$.
Now, for $z \in \mathbb{R}^n$ we have
\[
\sigma_{B_n}(z) = \norm{z}_2,
\]
which follows from the Cauchy-Schwarz inequality.
Defining $e\coloneqq (-1,-1,\ldots,-1)$, we have  for $w \in \mathbb{R}^n$
\begin{align*}
\sigma_{P_n}(w) &= \sup _{0\leq u, 1\leq u_1+\cdots u_n } \inProd{u}{w} = \inf_{0\leq et-w, 0\leq t } -t \\
&= \begin{cases}
\max _{i \in \{1,\ldots,n\}} w_i & \text{ if } w \in \mathbb{R}^n_-\\
+\infty & \text{ otherwise }
\end{cases},
\end{align*}

where the second equality follows from linear programming duality. The third equality holds because the constraint $0 \leq et - w $ implies that 
$-t \geq w_i$ for all $i$. So minimizing $-t$ under this constraint and the constraint that $t \geq 0$ leads to $\max _{i \in \{1,\ldots,n\}} w_i$ if $w \in \mathbb{R}^n_-$. If some component of $w_i$ is positive, then the problem is infeasible, so the infimum is $+\infty$.

Plugging the expressions for $\sigma_{B_n} $ and 
$\sigma_{P_n}$ into Equation~\eqref{eq:aux} leads to 
 \begin{equation*}
\sigma_{C_n}(y) = \inf_{z\in \Re^n}\{\norm{z}_2 + \max_{i\in\{1,\ldots,n\}}(y_i-z_i) \mid y\leq z \}.
 \end{equation*}

Therefore, in order to show that the proposition holds, it is enough to construct 
$x^* \in C_n$ and $z^* \in \mathbb{R}^n$ satisfying  $y \leq z^*$ and
\begin{align*}
\inProd{y}{x^*} &= \norm{z^*}_2 + \max_{i\in\{1,\ldots,n\}}(y_i-z_i^*) \\& = \begin{cases}
\norm{y^+}_2 & \text{ if } y \not \in \mathbb{R}^n_- \\
\max_{i \in \{1,\ldots,n\}}{y_{i}} & \text{ if } y  \in \mathbb{R}^n_-.
\end{cases}
\end{align*}
This can be done constructively case-by-case as follows.

\fbox{$(i)$ Suppose $y \not \in \mathbb{R}^n_-$.}
Then, at least one component of $y$ is positive, so $y^+ \neq 0$. 
Let $x^* \coloneqq  \frac{y^+}{\norm{y^+}_2}$ and
$z^* \coloneqq y^+$.

Then $x^*$ belongs to $C_n$, because $\norm{x^*}_2 = 1$ and $x_1^* + \cdots + x_n^* = \norm{x^*}_1 \geq \norm{x^*}_2 = 1$.
We also have $y - z^* = y^-\leq 0$, where $y^-$ is the nonpositive part of $y$.
Also, since at least one component of $y$ is positive
$y - z^*$ is a nonpositive vector with at least one entry equal to zero. So $\max_{i\in\{1,\ldots,n\}}(y_i-z_i^*)  = 0$.
Overall,
\[
\inProd{y}{x^*} = \norm{z^*}_2 + \max_{i\in\{1,\ldots,n\}}(y_i-z_i^*) = \norm{y^+}_2.
\]

\fbox{$(ii)$ Suppose $y \in \mathbb{R}^n_-$.}
Let $j$ be an index of $y$ associated to 
its largest component\footnote{There may be multiple $j$'s, but any will work.}. 
We let $x^* \in \mathbb{R}^n$ be such that $x^*_j \coloneqq 1$ and $x^*_k \coloneqq 0$ for $k \neq j$. 
We have $\norm{x^*}_2 = 1$ and $x^*_1 + \cdots x^*_n = x^*_j =1$, so $x^* \in C$.
Finally, let $z^* \coloneqq 0$. With that, since $y \in \mathbb{R}^n_-$, we have $y -z^* = y \leq 0$ and
\[
y_j = \inProd{y}{x^*} = \norm{z^*}_2 + \max_{i\in\{1,\ldots,n\}}(y_i-z_i^*) = \max_{i\in\{1,\ldots,n\}}y_i. 
\]

\end{proof}

\subsection{Convex Optimization }\label{sec:convex}
Let $\Kmc = (\Tmc,\Amc)$ %much more common to see \Tmc,\Amc than \Amc,\Tmc in the literature
be a DL-Lite$^\Hmc$ KB, %without role disjointness 
 let $d$ be the %\todo{AO: word target is needed here? not sure if this was explained bef}
embedding dimension and let $\worldSize$ and the $\epsilon > 0$ be as in Section~\ref{sec:boxSemantics}.
Following Section~\ref{sec:boxSemantics}, a given box interpretation $\eta$ associates to each individual name $a\in\NI$, each concept name $C\in\NC$ and each role name $R\in\NR$ the following objects: two vectors $(\posAssign{a}, \bumpAssign{a}) \in 
\World\times\World$,
%$\Re^d\times \Re^d$, 
a box $\cAssign{C}$ and three boxes $(\headAssign{R}, \tailAssign{R}, \bumpBoxAssign{R})$, respectively. 
Each box   is parameterized by two vectors in $\Re^d$ representing lower and upper bounds.
%Besides $d$, the geometric model has two extra constants that must be selected, $\worldSize$ and the $\epsilon > 0$, see Section~\ref{sec:boxSemantics}. %that appears in \eqref{eq:concept_cons} and \eqref{eq:indUn}. \todo{Update this part}
%The choice of $\worldSize$ is less consequential, since changing it merely affects the scale of the model.
%We will discuss the choice of $\epsilon$ shortly.

%Given $d, \epsilon, \worldSize$, a box interpretation $\eta$ is determined by the parameters in the geometric model (lower/upper bounds of boxes, positions of individuals, etc.).
%Suppose we concatenate all the parameters of the box interpretation into a single vector $z$ of dimension 
%$n := (2|\NI|+2|\NC|+6|\NR|)d$, where $|S|$ denotes the cardinality of a set $S$. 
%In this way, we can reconstruct $\eta$ from $z$ and vice-versa.

 We recall that we concatenate all the parameters of $\eta$ into a single vector $z$ of dimension 
$n := (2|\NI|+2|\NC|+6|\NR|)d$.
With that, let $\convexSet_{\Kmc} \subseteq \Re^n$ be the set of $z$'s such that the constraints defined in Section~\ref{sec:ProblemFormulation} are satisfied. That is, $z \in \convexSet_{\Kmc}$ if and only if the $\eta$ corresponding to $z$ is such that:
%$(a)$ better to avoid (a) in mathmode because a in mathmode is used as ind name 
for each TBox axiom the corresponding inequalities are satisfied; and %$(b)$ 
the box consistency and universe constraints are satisfied.
%\begin{enumerate}[$(a)$]
  %  \item 
 %   \item the box consistency and universe constraints are satisfied (see Section~\ref{sec:consistency}), i.e., individuals within the universe (see \eqref{eq:indUn} and \eqref{eq:conUn}), {and concepts satisfy the inequality in~\eqref{eq:concept_cons}}.
%\end{enumerate}
Also, let
$f_{\hyperP}:\Re^n\to \Re$ be the function that maps $z$ to the objective value described in Section~\ref{sec:Scoring} for a given choice of 
nonnegative hyperparameters $\hyperP = (\lambda_1,\lambda_2,\lambda_3) \in \Re^3_+$.%, where $\lambda_1$ is the hyperparameter associated to the loss terms and $\lambda_2,\lambda_3,\lambda_4, \lambda_5, \mu$ are associated to regularization terms.

In what follows, we recall that a set in $\Re^n$ is said to be \emph{polyhedral} if it  can be written as the set of solutions of finitely many linear equalities/inequalities.
%In particular, polyhedral sets are convex.

\theoremConvex*
\begin{proof}
As described in Section~\ref{sec:ProblemFormulation}, each   inclusion in the TBox of the KB is translated into finitely many linear inequalities in terms of the parameters of the box interpretation, which are exactly the components of $z$. 
Similarly, for each individual, concept and role, the box consistency and universe constraints in Section~\ref{sec:ProblemFormulation} are translated to finitely many linear inequalities in $z$.
{Since \NI, \NC, \NR, and the TBox  are all finite}, $\convexSet_{\Kmc}$ is the intersection of solution sets of finitely many linear inequalities. 
Thus, $\convexSet_{\Kmc}$ is a polyhedral set in $\Re^n$.

Next, we move on to the convexity of $f_{\hyperP}$ . First, we recall that the composition of a convex function with an affine function is still convex. Also, sums of convex functions are convex. Similarly, 
the maximum of convex functions is also convex.
% and adding/subtracting a \emph{linear} function to a convex function  preserves convexity. 
See \cite[Section~3.2]{BV04} for a review of calculus rules for convex functions.

First, the signed distance function 
$\sdist(\cdot, \Re^{2d}_-)$ is convex, see Theorem~10.1 in Chapter 7 of \citep{DZ11} or
pg.154 in \citep{HU93I}.
With that, 
the ${\sf dist}$ function is also convex, as it is the composition of a convex function with an affine function.
This implies that 
the concept assertion loss $\mathcal{L}_{\sf concept}$ is convex as well as it is again a composition of a convex function with an affine function.
Similarly, 
both the role assertion loss $\mathcal{L}_{\sf role}$ and the negative concept regularization $\mathcal{L}_{\sf negative}$ are convex, as they are the maximum of finitely many convex functions.
Finally, the box width regularization $\mathcal{R}_{\sf width}$ is also convex, since it is the composition of a convex function (the $2$-norm) with a linear function.

Overall the objective function $f_{\hyperP}$ is convex as it is obtained by taking sums and maximums of finitely many convex functions.

So far, we have shown that the problem in \eqref{eq:opt_ont} is a convex optimization problem with polyhedral constraints. Next, we move on to the proofs of the items.

\fbox{$i)$} %
%Rigorously speaking, $\convexSet_{\Kmc}$ is also a function of $d, \epsilon$ and $\worldSize = {\bf{4}}_d \coloneqq (4,\ldots, 4)$, so for the remainder of the proof of this item, we will use the extended notation $\convexSet_{\Kmc}(d,\epsilon, {\bf{4}}_d)$ to convey this fact.
%With that, we observe that for a fixed $\epsilon$, if $\convexSet_{\Kmc}(d,\epsilon, {\bf{4}}_d)$ is nonempty for a certain $d$, then
%$\convexSet_{\Kmc}(\hat d,\epsilon, {\bf{4}}_{\hat d})$ is also nonempty for $\hat d > d$, since we are just adding dimensions to the model. 
%{In particular, a solution to the latter can be obtained from a solution to the former by adding zeroes.}
%Therefore, it is enough to prove item~$(i)$ for $d \coloneqq (2|\NR| + 1)(|\NC|+2|\NR|)$, which we will do as follows.
Let $\tilde\Kmc$ be the KB which coincides with $\Kmc$ except for the fact that its ABox is empty.
By item~$i)$ of Corollary~\ref{cor:ConsistencyDim}, for every $d \geq d_{\min}$, there exists a box interpretation $\eta$ that is weakly TBox faithful. 
Because of the faithfulness of the embedding, $\eta$ satisfies the constraints described in 
Section~\ref{sec:ProblemFormulation}. 

%Next, we check that $\eta$ also satisfies the constraints described in  
%Section~\ref{sec:ProblemFormulation}.
First we recall that the proof of Corollary~\ref{cor:ConsistencyDim} is done by constructing $\eta$ following Definition~\ref{def:mapping} with $\worldSizeScalar = 4$ (as in the proof of Theorem~\ref{thm:faith_bool_alc}), see also Table~\ref{tab:RoleBoxes}.
This allows 
us to invoke Theorem~\ref{thm:propembedding}, which ensures that $\eta$ satisfies 
\eqref{eq:concept_cons} in Section~\ref{sec:ProblemFormulation}.
Furthermore, because $\eta$ is a box interpretation, the inequalities in \eqref{eq:conUn} and \eqref{eq:indUn} (Section~\ref{sec:ProblemFormulation}) must be satisfied, see Definition~\ref{def:boxinterpretation}.
This is because Definition~\ref{def:boxinterpretation} imposes the same restrictions on the widths of the boxes.
%all the boxes associated to concept names and role names must satisfy \eqref{eq:conUn}.
%Similarly, the positions of the individuals and their bumps must satisfy \eqref{eq:indUn}.

%First, we invoke Theorem~\ref{thm:faith_bool_alc} for the KB $\tilde\Kmc$, from which we can infer the existence of a weakly TBox
%faithful (with respect to $\tilde \Kmc$) box interpretation $\eta$ for any $d$ as in item~$(i)$ of Corollary~\ref{cor:ConsistencyDim}. 
%The proof of \todo{double check with current}Theorem~\ref{thm:faith_bool_alc} is done for $\worldSize = {\bf{4}}_d$ and $\epsilon \in (0, \epsilonMax]$, see also Tables~\ref{tab:RoleBoxes} and Section~\ref{sec:boxSemantics}.
%Because of the faithfulness of the embedding, the constraints described in 
%Section~\ref{sec:TBox_Translation} are satisfied. 
 %{
%Next, we check that each constraint in Section~\ref{sec:consistency} is satisfied.
%The constraints in \eqref{eq:concept_cons}, \eqref{eq:conUn}, and \eqref{eq:indUn} are satisfied because of Theorem~\ref{thm:propembedding}.}

% \begin{itemize}
    % \item The constraint in \eqref{eq:concept_cons} is satisfied because for each $C \in \NCE$, the corresponding lower bound and upper bound at dimension $i_C$ is set to $-4$ and $-0.5$ respectively, see Table~\ref{tab:ConceptBoxes}.
    % So, we have $\frac{\mathbf{L_{C}}[i_C] + \mathbf{U_{C}}[i_C]}{2} = -2.25 \leq -\worldSizeScalar/2 = -2 $
    % \item The constraints in \eqref{eq:concept_cons}, \eqref{eq:conUn}, and \eqref{eq:indUn} are satisfied because of Theorem~\ref{thm:propembedding}.
% \end{itemize}
Finally, aggregating 
the parameters of $\eta$ into a single vector $z \in \Re^n$ in an appropriate order, 
we will have $z \in \convexSet_{\Kmc}$, which implies 
that $\convexSet_{\Kmc}\neq \emptyset$.

\fbox{$ii)$}
The proof of item~$ii)$ follows from the definition of $\convexSet_{\Kmc}$. We recall that $z \in \convexSet_{\Kmc}$ if and only if the corresponding inequalities in Section~\ref{sec:ProblemFormulation} are satisfied. 
As in the proof of Theorem~\ref{thm:propembedding}, the inequality in \eqref{eq:concept_cons} (Section~\ref{sec:ProblemFormulation}) implies box consistency. 
We conclude that the box interpretation corresponding to a given $z \in {\convexSet}_{\Kmc}$ is box consistent and satisfies all the TBox axioms of the underlying KB. 
By Proposition~\ref{lem:tboxfaith}, the {box interpretation}
associated to $z$ is  TBox faithful, which concludes the proof.

\fbox{$iii)$}
Since $z^*$ is assumed to be an optimal solution, we have $z^* \in \convexSet_{\Kmc}$. By item~$ii)$, the box interpretation corresponding to $z^*$ satisfies all the TBox axioms and must be box consistent.
Because $f_{\hyperP}(z^*) \leq 0$ holds, $\lambda_1$ is zero
and the regularization terms associated to $\lambda_2, \lambda_3$ are nonnegative, we conclude that the first max term of the objective function term is nonpositive. 
 In particular, all the $\mathcal{L}_{\sf concept}$ and $\mathcal{L}_{\sf role}$ terms inside the max must be nonpositive.
 By the definition of the signed distance function $\sdist(\cdot, \Re^{2d}_-)$, this implies that the corresponding ABox axioms are satisfied as well. By Proposition~\ref{lem:kbfaith}, the box interpretation associated with $z^*$  is    KB faithful.

\fbox{$iv)$} %
%For this item, we use the same notation as in the proof of item~$(i)$. 
%Similarly, it is enough to prove 
%the result for $d \coloneqq |\NC| + 2|\NR| + |\NR| (|\NC| + 2|\NR| + |\NI|)$. The reason is that if $z^*$ is an optimal solution with value zero for the problem of minimizing $f_{0,0}$ over $\convexSet_{\Kmc}(d,\epsilon, {\bf{4}}_d)$, then 
%if we replace $d$ with $\hat d \geq d$, we can  obtain from $z^*$ a new optimal solution $\bar{z}^*$ with value zero by {\color{red}padding the extra dimensions with zeroes}. 
%\todo[inline]{Invoke %Corollary~\ref{corollary:FaithfulnessDim} and Theorem~\ref{thm:propembedding}}
%First we invoke Theorem~\ref{thm:faith_bool_alc} for the KB $\Kmc$, from which infer the existence of a weakly faithful box interpretation $\eta$ for any  dimension
By item~$i)$ of Corollary~\ref{corollary:FaithfulnessDim}, there exists a weakly KB faithful box interpretation $\eta$ of $\Kmc$ for any 
$d \geq d_{\min}$.
As in the proof of item~$i)$, $\eta$ satisfies 
the constraints described in Section~\ref{sec:ProblemFormulation} for $\worldSizeScalar = 4$, which comes as consequence of Theorem~\ref{thm:propembedding}, the proof of Corollary~\ref{corollary:FaithfulnessDim} and the definition of box interpretation in Definition~\ref{def:boxinterpretation}.

%$d $ as in . 
%The proof of Theorem~\ref{thm:faith_bool_alc} is done for $\worldSize = {\bf{4}}_d$ and $\epsilon \in (0, \epsilonMax]$, see also Tables~\ref{tab:RoleBoxes} and Section~\ref{sec:boxSemantics}.
%As in the proof of item~$(i)$, $\eta$ satisfies all the constraints in Sections~\ref{sec:TBox_Translation} and \ref{sec:consistency}, see also Theorem~\ref{thm:propembedding}.

Aggregating 
the parameters of the box interpretation 
$\eta$ %obtained in the proof Theorem~\ref{thm:faith_bool_alc} \todo{AO: maybe we need to say something else here because dim of each item of cor change a bit}
into a single vector $z^* \in \Re^n$ in an appropriate order, 
we have $z^* \in \convexSet_{\Kmc}$.
Because $\eta \models \Kmc$, all the ABox axioms are satisfied, so the the $\mathcal{L}_{\sf concept}$ and $\mathcal{L}_{\sf role}$ terms in 
in first max term of the objective function must be nonpositive.
As a consequence, if the regularization parameters $\lambda_1,\lambda_2, \lambda_3$ in the objective function term  are zero as well, then    $f_{\hyperP}(z^*) \leq 0$ holds.
\end{proof}

%Informally, Theorem~\ref{theo:convex} states that we can find a geometric embedding for a {satisfiable  DL-Lite$^\Hmc$ KB}  via convex optimization over a polyhedral set.  
%Any feasible solution $z$ to \eqref{eq:opt_ont} (whether optimal or not) will correspond to a box interpretation that is guaranteed to be at least weakly TBox faithful and also box consistent. 
%Item~$(i)$ gives an upper bound to the minimum required $d$ to ensure that $\convexSet_{\Kmc}$ is nonempty, but this estimate seems to be quite conservative.
%Furthermore, items~$(iii)$ and $(iv)$ imply that  if we wish to find a solution that is weakly KB faithful, this can be done by setting the hyperparameters associated to regularization terms to $0$, setting $d$ to be sufficiently large and solving the corresponding convex optimization problem in \eqref{eq:opt_ont}.
\paragraph{Second-order cone representability.}
For those  familiar with conic optimization, the proof of Theorem~\ref{theo:socp} is routine and can be summarized as follows. We employ the notion of the \emph{epigraph} of a function $f: \Re^s \to \Re\cup \{+\infty\}$, defined as 
the set $\{(x,t) \in \Re^s \times \Re \mid f(x) \leq t\}$.
With that, the  {epigraphs} of the functions corresponding to each of the terms appearing in
$f_{\hyperP}$ are \emph{second-order cone representable} (SOCr \citep{LVBL98}, also called CQr in \cite[Lecture~3]{BtN01}). Furthermore, {second-order cone representability} is preserved by adding functions, composition with affine functions and taking a maximum of finitely many SOCr functions, see \cite[Section~3.3]{BtN01}.
Overall, applying appropriate calculus rules, we see that the epigraph of $f_{\hyperP}$ is SOCr, which implies in particular that the problem in \eqref{eq:opt_ont} has a SOCP formulation.
For the sake of self-containment, we present a detailed proof.

%\todo[inline]{Ana: da uma lida ai}
\theoremSOCP*
``Equivalent'' in the statement of Theorem~\ref{theo:socp} means that the optimal value and optimal solutions from the former can be recovered from the optimal value and optimal solutions to the latter and vice-versa.

\begin{proof}
In its most general form, a SOCP  can be written as
\begin{align}
\min_{y \in \Re^s} &\quad c^T y    \label{eq:socp}\\
\text{subject to}&\quad y \in \mathcal{P} \notag\\
&\quad b_i +A_iy \in \POC{2}{n_i}, \qquad i = 1\ldots, m,\notag
\end{align}
where each $\POC{2}{n_i} \coloneqq \{(t,z) \in \Re \times \Re^{n_i-1} \mid t \geq \norm{z}_2 \}$ is the second-order cone in $\Re^{n_i}$,
$\mathcal{P} \subseteq \Re^s$ is a polyhedral set described via finitely many linear equalities/inequalities\footnote{Depending on the reference, the ``standard form'' of SOCPs may not include linear inequalities directly, but this can be bypassed easily since a linear inequality of the form ``$a^T y \geq d$'' is equivalent to the SOC constraint ``$a^T y -d \in \POC{1}{2}$''. }, the $b_i$'s are vectors, $A_i$'s are linear maps, $c \in \Re^s$ is a fixed vector and $c^T y $ indicates the Euclidean inner product between $c$ and $y$ so that $c^T y = \sum _{i=1}^s c_i y_i$ holds.

%We now give a more detailed explanation.
Comparing the SOCP in \eqref{eq:socp} with the problem in \eqref{eq:opt_ont} and recalling that $\mathcal{C}_{\Kmc}$ is polyhedral, we see that \eqref{eq:opt_ont} is not a SOCP only because its objective function is nonlinear. 
Here, we will use the common optimization trick of ``dropping the objective to the constraints'', e.g., see \cite[Section~2.5]{LVBL98} or \cite[Chapter~3]{BtN01}. 
The idea is as follows, if $g:\Re^s\to \Re$ is a real function and $S \subseteq \Re^s$, then the problem ``$\min_{y} g(y)\,\, \text{subject to}\,\, y \in S$'' is equivalent to ``$\min _{t,y} t \,\,\text{ subject to }\,\, g(y) \leq t, y \in S $''.
Similarly, if there were several functions $g_j$ 
we would have that 
the problem  \[\min_{y} \sum_{j=1}^{\ell}g_j(y)\,\, \text{subject to}\,\, y \in S\] is equivalent to 
\begin{equation}\label{eq:socp_example}
\min _{t_1,\ldots, t_\ell,y} \sum _{i=1}^\ell t_j \,\,\text{ subject to }\,\, g_j(y) \leq t_j \,\,(j = 1, \ldots, \ell),\, y \in S
\end{equation}
Here, the constraints ``$g_j(y) \leq t_j$'' simply mean that $(y,t_j)$ belongs to the epigraph of $g_j$.
In particular, if $S$ and the epigraphs of $g_j$ can be represented via finitely many linear equalities/inequalities and SOC constraints, then the problem in \eqref{eq:socp_example} can reformulated as a SOCP since its objective function is linear. 
Informally, we say that a function is SOCr if its epigraph can be represented via finitely many equalities/inequalities and SOC constraints (adding auxiliary variables if necessary).

This discussion provides a blueprint for proving that 
the problem in \eqref{eq:opt_ont} can be reformulated as a SOCP. 
The objective function $f_{\hyperP}$ is a sum of four terms: the loss terms associated
to concept and role assertions which are aggregated with a max, the negative sampling component, and two terms for width regularization.
For each term that appear, we add one auxiliary variable $t_j$, we ``drop the objective function terms to the constraints'' and we   argue that each resulting constraint can be written in terms of linear equalities/inequalities and SOC constraints, i.e., the epigraphs of the functions are SOCr. 
Naturally, if $g$ is SOCr then the composition of $g$ with an affine function is SOCr, e.g., see \cite[Remark~3.3.1]{BtN01}\footnote{It is enough to observe that if $h$ is an affine function of the form $h(x) = Bx + d$, for $B$ a linear map and $d$ is a vector, we can obtain a SOC representation of the epigraph of $g \circ h$ by adding the 
linear constraint $y = Bx+ d$ to a SOC representation of $\{(y,t) \mid g(y) \leq t\}$ (the epigraph of $g$).}.

The building blocks for the loss term and regularization terms in $f_{\hyperP}$ boil down to two functions: 
the signed distance function $\sdist(\cdot, \Re^{2d}_-)$ and the 2-norm function 
$\norm{\cdot}_2$.
All the four terms in $f_{\hyperP}$ are obtained by composing copies of  $\sdist(\cdot, \Re^{2d}_-)$ and $\norm{\cdot}_2$ with affine functions and either adding them together or taking maximums.
In view of our discussion so far, it is enough to establish that the epigraphs of these two functions can be written in terms of linear equalities/inequalities and SOC constraints. In other words, we need to show that both functions are SOCr.

First, we will show that $\sdist(\cdot, \Re^{2d}_-)$ is SOCr. For that, we define 
the auxiliary function $\psi:\Re^{2d} \times \Re^{2d} \to \Re \cup \{+\infty\}$ such 
that 
\[
\psi(y,z) \coloneqq \begin{cases}
    \norm{z}_2 + \max_{i\in\{1,\ldots,2d\}}(y_i-z_i) & \text{if } y \leq z\\
    +\infty & \text{otherwise.}
\end{cases}
\]
The function $\psi(y,z)$ is SOCr because 
$\psi(y,z) \leq t$ holds for $t \in \Re$ if and only if 
there exists $t_1,t_2 \in \Re$ such that
\begin{align*}
t_1 + t_2 &\leq t,\\
(t_1,\norm{z}_2) &\in \POC{2}{2d+1},\\
y_i-z_i & \leq t_2, \quad \forall i \in \{1,\ldots,2d\}\\
y &\leq z.
\end{align*}
Furthermore, Proposition~\ref{prop:sup_sign}
implies that $\sdist(\cdot,\Re^{2d}_-)$ is the partial minimization of $\psi$ with respect to the second argument, i.e., 
\[
\sdist(y, \Re^{2d}_-) = \inf_{z \in \Re^{2d}}\psi(y,z)
\]
holds for every $y \in \Re^{2d}$. Furthermore, the proof of Proposition~\ref{prop:sup_sign} shows that
for every $y$, there exists $z_y$ such that $\sdist(y, \Re^{2d}_-) = \psi(y,z_y)$ holds, i.e., the infimum is achieved for every $y$.
The property of being SOCr is preserved by partial minimization assuming that for every $y$ the infimum is achieved, e.g., see Section~3.3 in \citep{BtN01}. Therefore, $\sdist(\cdot, \Re^{2d}_-)$ is SOCr as well.

Next, the function $\norm{\cdot}_2$ is also SOCr, since $\norm{y}_2 \leq t$ holds for $y \in \Re^d, t \in \Re$ if and only if $(t,y)  \in \POC{2}{d}$.

In conclusion, all the functions used to build the objective function $f_{\hyperP}$ are SOCr, so overall, 
the problem in \eqref{eq:opt_ont} has a SOCP formulation.
\end{proof}

%Writing down the precise SOCP formulation of \eqref{eq:opt_ont}, although straightforward, is quite tedious. 
%Fortunately, 

\color{black}

\section{Size of the final optimization problem}
\label{app:additional_analyses}
As mentioned in Section~\ref{sec:exp}, a problem modelled through CVXPY is first compiled and then sent to a solver such as MOSEK.
Here we report in Table~\ref{tab:Problem_Characteristics} the number of variables and constraints of the final optimization problem that CVXPY outputs to MOSEK.

\begin{table}[h!]
\centering
\begin{tabular}{ccccc}
\toprule
Dataset & \#Variables & \#Constraints \\
\midrule
F\_v1        & 461k  & 300k \\
F\_v2        & 725k  & 475k   \\
F\_v3        & 1312k  & 863k   \\
F\_v4        & 3584k & 2367k  \\
%F\_v5        & 327k & 394k & 1,632k  \\
\bottomrule
\end{tabular}
\caption{Problem sizes of the final optimization problem solved by MOSEK split by dataset.}
\label{tab:Problem_Characteristics}
\end{table}

\section{Experimental Details}
\label{app:expDetails}

This section describes our experimental setup, created benchmark datastes, and evaluation protocol in detail.
In particular, Section~\ref{app:ImplementationAndReproducibility} contains details about the implementation of \modelName. 
Furthermore, Section~\ref{app:BenchData} discusses the properties of the created benchmark datasets F\_v1-4. 
Next, Section~\ref{app:ExperimentSetup}, describes our experimental setup, including a detailed description of the learning setup, used hardware, and selected hyperparameters.
Continuing from that, Section~\ref{app:Metrics} discusses the evaluation protocol and metrics in detail. 
Moreover, Section~\ref{app:time} shows the runtime of BoxLitE and the SGD solvers over the various Family dataset subsets.
Finally, Section~\ref{app:SGD_Hyperparameters} details the hyperparameters used for the SGD models we have shown in our experiments.

\subsection{Implementation   \& Reproducibility}
\label{app:ImplementationAndReproducibility}

We   implemented the \modelName (convex) second-order cone optimization problem in Python 3.12 using CVXPY %1.4.1 
for formulating the problem and MOSEK %Gurobi 10.0.2 
for optimizing it. %To make our findings reproducible, we include  \modelName's code, the created datasets F\_v1-4, a ReadMe.md file listing library dependencies, installation, and execution instructions as part of the supplemental material. 
The seed used for the SGD models is $6934$. %We will make all provided supplementary materials (including \modelName's code base and the created datasets) publicly available %on GitHub 
%upon acceptance of our paper.

\subsection{Details on F\_v1-4}
\label{app:BenchData}

This section contains details about the created benchmark datasets F\_v1-4 of Section~\ref{sec:exp}. 
In particular, we have created a set of datasets (F\_v1-4) of varying sizes from the family dataset \citep{FamilyDataset}. 
We derived these datasets by sampling approximately $k \in \{300, 500, 1000, 3000\}$ %$, 5000\}$ 
assertions of the family dataset's ABox with forest fire sampling \citep{ForestFireSampling}, 
a popular sampling technique for large graphs, using a forward burning probability (${\sf pf} = 0.7$) and a backward burning probability (${\sf bf} = 0$). Furthermore, since the family dataset solely provides role assertions in its ABox, 
we selected any TBox inclusion of the family dataset that includes solely roles and extended the TBox by the disjointness axioms $\exists {\sf hasFather^-} \sqsubseteq \neg \exists {\sf hasMother^-}$. Figure~\ref{fig:familyTBox} lists the TBox of the created datasets F\_v1-4.

Next, we created a set of inferred assertions by $(i)$ adding any assertion that logically follows from each dataset and $(ii)$ removing any assertion that occurs in the ABox. We randomly split this set of inferred assertions 
into a validation set (20\%), used for model selection, and a test set (80\%), used for evaluating the performance of the selected model. 
%We provide the train, validation, and test splits for datasets F\_v1-5 in the supplementary material. 

% We evaluate \modelName's performance on $(i)$ the ABox, revealing how well the embedding solutions fit the training data (for Q1) and $(ii)$ the test set, testing how well the embedding solutions learn to reason over the ontology (for Q2). We use the standard evaluation setting for KB completion\footnote{%We optimize our model on the train, select the best model using the validation, and evaluate its performance on the test set. 
% The evaluation of a KB embedding model typically needs a set of true and corrupted role assertions. True role assertions $R(a,b)$ of the KB are corrupted by replacing $a$ or $b$ by any $c \in \NI$ such that the corrupted assertion is not within the KB. %KB embeddings are optimized to score true assertions higher than false ones, thereby estimating a given assertion's plausibility. 
% The performance of KB embedding models is typically measured using the filtered versions \citep{TransE} of the \emph{mean reciprocal rank} (MRR), the average of inverse ranks ($1/\textit{rank}$) and H@k, the proportion of true assertions within the predicted assertions whose rank is at maximum k.} \citep{BoxE,ExpressivE,BoxEL}.

\subsection{Experimental Setup}
\label{app:ExperimentSetup}

\textbf{Training Setup.} 
%We implemented \modelName's (convex) second-order cone optimization problem in Python using CVXPY \citep{DB16,AVDB18} for modeling and Gurobi \citep{Gurobi23} for solving it. We ran each of our experiments on an Intel(R) Xeon(R) Silver 4314 CPU @ 2.40GHz of our internal cluster, using up to $32$ threads. In particular, 
During the training phase, we optimized the loss described in \ref{sec:ProblemFormulation} using MOSEK on the train set. After retrieving an embedding solution from MOSEK, we evaluated its performance on the validation set, which we used for selecting the best embedding solution (see Section~\ref{app:Metrics} for more details on the evaluation protocol). %Finally, we evaluated \modelName's best embedding solutions on the train set (the ABox), to quantify how well they learn to represent each KB; and on the test set, to quantify how well they learn to reason over the assertions of each KB (see Section~\ref{sec:Exp_Assertion_Experiments} for the results). 
We now discuss \modelName's hyperparameter optimization.

\textbf{Hyperparameter Optimization.} 
We set  $\worldSizeScalar = 1$ and $\epsilon = 10^{-2}$ for all of our experiments. 
For the benchmark results on datasets F\_v1-4, 
%of Sections~\ref{sec:Exp_Assertion_Experiments}, \ref{app:Compile_Solve_Time}, and \ref{sec:Problem_Characteristics}, 
we %manually \todo{what about: We tuned the hyperparameters...?}
set $d=32$ and
tuned the hyperparameters within the following ranges: %$(i)$ %\todo{do we need these i, ii , etc. here? it makes reading more difficult}
$\lambda_1,\lambda_2, \lambda_3 \in \{0, 0.001, 0.003,0.1, 0.3, 1, 3\}$. 
%\{10^p \mid p \in \{0, 2, 4, 6\} \}$, %$(ii)$
%$\lambda_2 \in \{10^p \mid p \in \{-2, 0, 2, 4, 6\}\}$, and %$(iii)$
%$\lambda_3 \in \{10^p \mid p \in \{0, 2, 4, 6\} \}$. %$(iv)$
%$\lambda_4 \in \{10^p \mid p \in \{0, 1\}\}$, %$(v)$
%$\lambda_5 \in \{10^p \mid p \in \{0, 1\}\}$, %$(vi)$
%$\lambda_6 \in \{0, 1\}$, 
%$\alpha \in \{10^p \mid p \in \{-2, -1, 0\}\}$
%and %$(vii)$
%$\mu \in \{10^p \mid p \in \{-1, 0, 1, 2\} \}$. 
We list the best found hyperparameters for each of the datasets in Table~\ref{tab:HyperParameters}. 
%\todo{update with new values for fv4}

\begin{table}[h!]
    \centering
    %\resizebox{\columnwidth}{!}
    {%
    \begin{tabular}{clllllllll}
        \toprule
         Dataset & $\lambda_1$ & $\lambda_2$ & $\lambda_3$ \\
         %& %$\lambda_4$ & $\lambda_5$ & $\lambda_6$ & $\alpha$ & $\mu$ \\
         \midrule
         F\_v1 & $.001$ & $3$ & $.001$
 %        0.003	0.1	0.003
         \\
         %& $10^{1}$ \\
         %& $10^{1}$ & $0$ & $10^{-2}$ & $10^{-1}$ \\
         F\_v2 & $.001$ & $1$ & $.003$ \\
         %& $10^{1}$ & $10^{1}$ & $0$ & $10^{-2}$ & $10^{-1}$ \\
         F\_v3 & $.003$ & $0.1$  & $.003$ \\
         %& $10^{1}$ & $10^{1}$ & $0$ & $10^{0}$ & $10^{-1}$ \\
        F\_v4 & $.001$ & $3$ & $.003$ \\
         %& $10^{1}$ & $10^{1}$ & $0$ & $10^{-2}$ & $10^{2}$ \\
        % F\_v5 & $10^{6}$ & $10^{2}$  & $10^{2}$ & $10^{1}$ & $10^{1}$ & $0$ & $10^{-2}$ & $10^{0}$ \\
        \bottomrule         
    \end{tabular}
    }
    \caption{Best found hyperparameters   for \modelName.}
    \label{tab:HyperParameters}
\end{table}

\begin{table}[h!]
    \centering
    %\resizebox{\columnwidth}{!}
    {%
    \begin{tabular}{clllllllll}
        \toprule
         Dataset & $\lambda_1$ & $\lambda_2$ & $\lambda_3$ \\
         %& %$\lambda_4$ & $\lambda_5$ & $\lambda_6$ & $\alpha$ & $\mu$ \\
         \midrule
         F\_v1 & $.0$ & $3$ & $.001$
 %        0.003	0.1	0.003
         \\
         %& $10^{1}$ \\
         %& $10^{1}$ & $0$ & $10^{-2}$ & $10^{-1}$ \\
         F\_v2 & $.0$ & $3$ & $.03$ \\
         %& $10^{1}$ & $10^{1}$ & $0$ & $10^{-2}$ & $10^{-1}$ \\
         F\_v3 & $.0$ & $.3$  & $.0$ \\
         %& $10^{1}$ & $10^{1}$ & $0$ & $10^{0}$ & $10^{-1}$ \\
         F\_v4 & $.0$ & $.3$  & $0$ \\
         %& $10^{1}$ & $10^{1}$ & $0$ & $10^{-2}$ & $10^{2}$ \\
        % F\_v5 & $10^{6}$ & $10^{2}$  & $10^{2}$ & $10^{1}$ & $10^{1}$ & $0$ & $10^{-2}$ & $10^{0}$ \\
        \bottomrule         
    \end{tabular}
    }
    \caption{Best found hyperparameters  for BoxLitE1.}
    \label{tab:HyperParameters1}
\end{table}

\begin{table}[h!]
    \centering
    %\resizebox{\columnwidth}{!}
    {%
    \begin{tabular}{clllllllll}
        \toprule
         Dataset & $\lambda_1$ & $\lambda_2$ & $\lambda_3$ \\
         %& %$\lambda_4$ & $\lambda_5$ & $\lambda_6$ & $\alpha$ & $\mu$ \\
         \midrule
         F\_v1 & $.003$ & $.0$ & $.0$
 %        0.003	0.1	0.003
         \\
         %& $10^{1}$ \\
         %& $10^{1}$ & $0$ & $10^{-2}$ & $10^{-1}$ \\
         F\_v2 & $.01$ & $.0$ & $1$ \\
         %& $10^{1}$ & $10^{1}$ & $0$ & $10^{-2}$ & $10^{-1}$ \\
         F\_v3 & $.01$ & $.0$  & $.003$ \\
         %& $10^{1}$ & $10^{1}$ & $0$ & $10^{0}$ & $10^{-1}$ \\
          F\_v4 & $.3$ & $.0$  & $.003$ \\
         %& $10^{1}$ & $10^{1}$ & $0$ & $10^{-2}$ & $10^{2}$ \\
        % F\_v5 & $10^{6}$ & $10^{2}$  & $10^{2}$ & $10^{1}$ & $10^{1}$ & $0$ & $10^{-2}$ & $10^{0}$ \\
        \bottomrule         
    \end{tabular}
    }
    \caption{Best found hyperparameters   for BoxLitE2.}
    \label{tab:HyperParameters2}
\end{table}

\begin{table}[h!]
    \centering
    %\resizebox{\columnwidth}{!}
    {%
    \begin{tabular}{clllllllll}
        \toprule
         Dataset & $\lambda_1$ & $\lambda_2$ & $\lambda_3$ \\
         %& %$\lambda_4$ & $\lambda_5$ & $\lambda_6$ & $\alpha$ & $\mu$ \\
         \midrule
         F\_v1 & $.003$ & $.1$ & $.0$
 %        0.003	0.1	0.003
         \\
         %& $10^{1}$ \\
         %& $10^{1}$ & $0$ & $10^{-2}$ & $10^{-1}$ \\
         F\_v2 & $.003$ & $1$ & $.0$ \\
         %& $10^{1}$ & $10^{1}$ & $0$ & $10^{-2}$ & $10^{-1}$ \\
         F\_v3 & $.003$ & $0.1$  & $.0$ \\
         %& $10^{1}$ & $10^{1}$ & $0$ & $10^{0}$ & $10^{-1}$ \\
         F\_v4 & $.003$ & $.1$  & $.0$ \\
         %& $10^{1}$ & $10^{1}$ & $0$ & $10^{-2}$ & $10^{2}$ \\
        % F\_v5 & $10^{6}$ & $10^{2}$  & $10^{2}$ & $10^{1}$ & $10^{1}$ & $0$ & $10^{-2}$ & $10^{0}$ \\
        \bottomrule         
    \end{tabular}
    }
    \caption{Best found hyperparameters  for BoxLitE3.}
    \label{tab:HyperParameters3}
\end{table}
%Furthermore, for the embedding (Figure~\ref{fig:StrongFaithFulnessPlot}) of the prototypical ontology (Figure~\ref{fig:familyOnto}), we set $d=30$ and used the following hyperparameters: $\lambda_1 = 0$, $\lambda_2 = 0$, $\lambda_3 = 10^{-10}$, $\lambda_4 = 10^{-9}$, $\lambda_5 = 10^{-8}$, $\lambda_6 = 10^{-6}$, and $\mu = 0$.

\subsection{Evaluation Protocol}
\label{app:Metrics}

We have evaluated \modelName, by following the standard evaluation setting for KB completion as described by \citep{BoxE,ExpressivE,BoxEL}. In particular, this includes measuring the ranking quality of each role assertion $R(a, b)$ in the test set over any possible individual in the first position of the assertion, i.e., $R(a', b)$ for all $a' \in \NI$, and the second position of the assertion, i.e., $R(a, b')$ for all $b' \in \NI$. Furthermore, we used the standard metrics for KB completion, namely, the mean reciprocal rank (MRR) and hits at k (H@k). 

As typically done in the literature \citep{TransE,RotatE,BoxE,ExpressivE,BoxEL}, we presented the filtered versions of these merics introduced by \citep{TransE}. This means in particular that for hyperparameter tuning, we evaluated each of the found \modelName embedding solutions on the validation set and excluded any assertion from the ranking that occurs in the train or validation set (apart from the validation assertion whose score shall be computed). We selected those embedding solutions that reached the highest scores on the validation set. We followed the filtered setting \citep{TransE} also during the final evaluation, i.e., we evaluated the selected embedding solutions on the test set and excluded any assertion from the ranking that occurs in the train, validation, or test set (apart from the test assertion whose score shall be computed). 

The intuition of the filtered setting is that we exclude assertions from the ranking that are during the current evaluation stage known to be true, as assigning a high score to these assertions does not indicate a wrong inference. Specifically, during hyperparameter tuning on the validation set, the train and validation assertions are known to be true and thus need to be excluded; while during the final evaluation on the test set, the train, validation, and test assertions are known to be true and thus need to be excluded from the ranking. Finally, we briefly review the definition of H@k and the MRR: H@k reflects the proportion of true assertions within the predicted assertions whose rank is at most $k$, whereas the MRR represents their average of inverse ranks ($1/\textit{rank}$).

\subsection{Running Time}
\label{app:time}

For each choice of the hyperparameters $\lambda_1,\lambda_2,\lambda_3$, the evaluation of \modelName takes less than a minute for F\_v1 and
F\_v2. It takes 2 minutes and 38 seconds for F\_v3 and more than 20 minutes for F\_v4. Improving our evaluation procedure would contribute for  scalability. 
\begin{table}[h!]
\centering
\begin{tabular}{cccc}
\toprule
Dataset &  BoxE  & RotatE & ComplEx\\
\midrule
F\_v1        &  8m38s  &  5m56s  &  7m19s\\
F\_v2        &  8m40s   &  4m39s &  8m37s\\
F\_v3        &  44m44s   &   4m24s  & 8m30s\\
F\_v4        &  34m37s   &   13m14s & 16m27s\\
%F\_v5        &  8h   &   815s  \\
\bottomrule
\end{tabular}
\caption{Training time  split by dataset for SGD methods.}
\label{tab:timeRessgd}
\end{table}

Regarding SGD methods, 
 we provide in Table~\ref{tab:timeRessgd} the training time for the SGD methods with $100$ trials and $500$ epochs for each method. We used the same dimensionality $32$ as in \modelName (see \ref{app:ExperimentSetup}).

 \subsection{SGD setup}
 \label{app:SGD_Hyperparameters}
 The SGD models shown in our experiments were trained using PyKEEN's implementation of BoxE, RotatE, and ComplEx. We trained each model on the different Family subsets for 100 trials over 500 epochs. The optimizer used is Adagrad, together with early stopping. The frequency, patience, and relative delta of the stopper are $10$, $10$, and $0.01$, respectively. Lastly, the embedding dimension, as in \modelName, is set to $32$. The remaining parameters are set to the defaults of the KGE implementations in PykEEN \citep{ali2021pykeen}. 

%\todo{I am not sure how this part should be} I put it in the main text

%\todo{...}
%\section{Nonconvexity in previous works}\label{sec:nonconvex}

% \newpage

% In what follows, we recall 
% The signed distance function to a convex cone is a sublinear function. Because of that,  convex function. 
% It is, in fact, a sublinear function 
% AsThe signed distance to the nonnpositive orthant is not natively implemented in CVXPY, but it we can still solve convex optimization problems involving the signed distance function using 
%However, we can implement. \todo{continue this }

\else
\fi

\end{document}